\newcommand{\numpapers}{d}
\newcommand{\hyperparam}{\lambda}
\newcommand{\similarity}{S}
\newcommand{\numrev}{n}
\newcommand{\proxy}{h}
\newcommand{\blocksize}{q}
\newcommand{\rankfunction}{\sigma}
\newcommand{\paperset}{[d]}
\newcommand{\reviewerset}{[n]}
\newcommand{\numbids}{g}
\newcommand{\ordering}{\pi}
\newcommand{\bidfunction}{f}
\newcommand{\randombid}{\mathcal{B}}
\newcommand{\realbid}{b}
\newcommand{\history}{\mathcal{H}}
\newcommand{\paperindex}{j}
\newcommand{\revindex}{i}
\newcommand{\gainfunction}{\gamma_p}
\newcommand{\gainfunctionrev}{\gamma_r}
\newcommand{\bidprob}{p}
\newcommand{\super}{\texttt{SUPER}\text{$^*$}\xspace}
\newcommand{\bidbase}{\texttt{BID}\xspace}
\newcommand{\simbase}{\texttt{SIM}\xspace}
\newcommand{\randbase}{\texttt{RAND}\xspace}
\newcommand{\optbase}{\texttt{OPT}\xspace}
\newcommand{\algbase}{\texttt{ALG}\xspace}
\newcommand{\fpo}{\texttt{FindPaperOrder}\xspace}
\newcommand{\fpoe}{\texttt{FindPaperOrderEfficient}\xspace}
\newcommand{\gain}{\mathcal{G}}
\newcommand{\numblocks}{m}
\newcommand{\extravar}{k}
\newcommand{\optvar}{x}
\newcommand{\weight}{w}
\newcommand{\heuristicf}{h}
\newcommand{\sortvar}{\alpha}
\newcommand{\gainminrevs}{r}
\newcommand{\symgroup}{\Pi}
\newcommand{\reals}{\mathbb{R}}
\newcommand{\Exs}{\mathbb{E}}
\newcommand{\simscalar}{s}
\newcommand{\bidn}{1}
\newcommand{\bidnp}{2}
\newcommand{\constant}{c}
\newcommand{\diagset}{\mathcal{D}}
\newcommand{\diagsetp}{\mathcal{D}}
\newcommand{\permleft}{P}
\newcommand{\permright}{\widetilde{P}}
\newcommand{\permclass}{\mathcal{P}}
\newcommand{\blockm}{B}
\newcommand{\reviewers}{reviewers} 
\newcommand{\reviewer}{reviewer}
\newtheorem{property}{Property}
\newtheorem{lemma}{Lemma}
\newtheorem{theorem}{Theorem}
\newtheorem{corollary}{Corollary}
\newtheorem{proposition}{Proposition}
\newtheorem{definition}{Definition}
\renewcommand\paragraph{\@startsection{paragraph}{4}{\z@}%
{2mm}%
{-1em}%
{\normalfont\normalsize\bfseries}} 
\DeclareMathOperator*{\argmax}{arg\,max}
\newcounter{alg}
\date{} 
\title{A SUPER* Algorithm to Optimize Paper Bidding in Peer Review}
\author{
  \textbf{Tanner Fiez} \\
  University of Washington \\
  \textup{fiezt@uw.edu}
  \and
  \textbf{Nihar B.~Shah} \\
  Carnegie Mellon University \\
  \textup{nihars@cs.cmu.edu}
  \and 
  \textbf{Lillian Ratliff} \\
  University of Washington \\
  \textup{ratliffl@uw.edu}
}
\begin{document}

\maketitle

\begin{abstract}
A number of applications involve  sequential arrival of users, and require showing each user an ordering of items. 
A prime example (which forms the focus of this paper) is the bidding process in conference peer review where reviewers enter the system sequentially, each reviewer needs to be shown the list of submitted papers, and the reviewer then ``bids'' to review some papers. The order of the papers shown has a significant impact on the bids due to primacy effects.  In deciding on the ordering of papers to show, there are two competing goals: (i) obtaining sufficiently many bids for each paper, and (ii) satisfying reviewers by showing them relevant items. 
In this paper, we begin by developing a framework to study this problem in a principled manner. We present an algorithm called \super, inspired by the A$^{\ast}$ algorithm, for this goal. 
Theoretically, we show a local optimality guarantee of our algorithm and prove that popular baselines are considerably suboptimal. Moreover, under a community model for the similarities, we prove that \super is near-optimal whereas the popular baselines are considerably suboptimal. 
In experiments on real data from ICLR 2018 and synthetic data, we find that \super considerably outperforms baselines deployed in existing systems, consistently reducing the number of papers with fewer than requisite bids by 50-75\% or more, and is also robust to various real world complexities. 
\end{abstract}

\addtocontents{toc}{\setcounter{tocdepth}{-10}}

\section{Introduction}
\label{sec:intro}
It is well known that peer review is essential for ensuring the quality and scientific value of research
~\citep{black:1998aa, thurner:2011aa,bianchi2015three}. 
A fundamental challenge in peer review is matching or assigning papers to qualified and willing {\reviewers}. 
Common methods to deal with this problem often rely on access to a \emph{similarity matrix} containing scores for each paper-{\reviewer} pair expressing the estimated match quality between them. The similarity matrix is often obtained using feature-based or profile-based matching mechanisms that leverage keywords and available {\reviewer} publications~\citep{charlin2013toronto,price2017computational}. A number of automated methods to match papers with reviewers using similarity scores have been proposed that optimize objectives such as cumulative similarity or fairness notions~\citep{karimzadehgan2008multi,garg:2010aa,tang2010expertise,long2013good,stelmakh:2018aa}. 

 A shortcoming of automating the paper matching process stems from the failure to actively incorporate {\reviewers} within the paper assignment phase of the review process. 
 The outright dependence on the similarity scores can be problematic since the preferences of {\reviewers} can change frequently and the similarity scores themselves can be noisy. 
 Bidding has emerged as an important mechanism for aiding in and improving the peer review process under the guise that active engagement of the {\reviewer} leads to assignments more aligned with their preferences and hence, enhanced review quality~\citep{di:2005aa}.

In typical peer review process, when the bidding process opens, {\reviewers} enter the system in an arbitrary sequential order. Upon entering, a list of papers is shown to them and they are asked to place bids on papers they would prefer to review. Following the bidding process, bids can be incorporated into the reviewer-paper assignment mechanism. 
It is known that the order of papers presented to {\reviewers} in the bidding stage can greatly impact the number of bids that a paper receives~\citep{cabanac:2013aa}. 
From the perspective of the platform, there are two competing goals: (i)  ensure that each paper has a sufficient number of bids, and (ii) ensure individual {\reviewer} satisfaction by showing relevant papers. 

With regard to goal (i), the platform  aims to select a display order for each {\reviewer} such that at the end of the bidding process, each paper has at least a certain number of bids. The main objective of ensuring a minimum number of bids on each paper is to improve review quality for all papers~\citep{shah:2018aa}. The well-documented primacy effect~\citep{murphy2006primacy} suggests that papers shown on top of the ordering are the ones on which {\reviewers} are more likely to bid. 
Consequently, this objective strongly suggests that papers with few bids should be placed higher in the list. 
Indeed,~\citet{cabanac:2013aa} make the following remark: 
 \begin{quotation}
 \noindent
 \emph{``It is advised to counterbalance order effects during the bidding phase of peer review by promoting the submissions with fewer bids to
potential referees. This manipulation intends to better share bids out among submissions in order to attract qualified referees for all submissions.
''} 
\end{quotation}

With regard to goal (ii), the platform aims to display `well-matched' papers to each {\reviewer}. 
That is, the set of papers to be displayed is composed of papers on which the {\reviewer} is most likely to bid. 
There are several reasons to select well-matched papers.
It is generally assumed that reviewers are more likely to place bids on papers they are qualified to review~\citep{rodriguez:2007aa}. Furthermore, reviewers that place positive bids on papers are more likely to give a review with high confidence and voice sharp opinions of acceptance or rejection that help guide final decisions on papers~\citep{cabanac:2013aa}. A number of comprehensive surveys also indicate that a primary motivation of reviewers is the ability to help and contribute to the work of colleagues~\citep{mulligan2013peer, ware2008peer}. Failing to display relevant papers to reviewers can result in several unintended negative consequences. If irrelevant papers are shown early in the order to a reviewer, it may cause the {\reviewer} to opt-out and disengage with the system even if further down the list there was an option that they would have happily bid on. Similarly, a poorly selected ordering may result in significantly fewer bids from a {\reviewer}. 

Competing objectives of this form are not unique to peer review systems and they appear in a number of applications. A fitting example is an intermediary between distinct user groups that seeks to facilitate interactions and satisfy each party. For example, in online labor markets, the platform must ensure each job obtains a sufficient number of applicants and that workers are presented with tasks they are qualified enough for to be considered. Similarly, in online e-commerce marketplaces, as the platform decides how to show products to users, there is a definite trade-off between satisfying merchants offering products that need to be sold and users that want to be shown relevant items. 
In this paper, we maintain peer review as a running example and comment further on relevant applications of our work in a concluding discussion section.

In peer review, it is recognized that actively engaging reviewers in the paper assignment process via bidding can greatly improve the review process. If administered inadequately, bidding can in fact have a significant negative impact on the quality of the review process. 
 In the words of~\citet{rodriguez:2007aa}, 
 \begin{quotation}
 \noindent
 \emph{``Since bidding is the preliminary component of the manuscript-to-referee matching algorithm, sloppy bidding can have dramatic effects on which referees actually review which submissions.''}
 \end{quotation} 
A study on the 2016 Neural Information Processing Systems (NeurIPS) conference revealed the distribution of bids arising from a typical bidding process leaves significant challenges to match papers with reviewers~\citep{shah:2018aa}. It was observed that a considerable number of reviewers do not place a sufficient number of bids and papers commonly fail to obtain as many bids as the number of reviewers needed. 
This phenomenon is detailed in \cite{shah:2018aa} amongst the 3,200 reviewers and 2,400 papers.
 \begin{quotation}
 \noindent
 \emph{``Moreover, there are 148 reviewers with no (positive or negative) bids and 1201 reviewers with at most 2 positive bids... We thus observe that a large number of reviewers do not even provide positive bids amounting to the number of papers they would review. As a consequence of the low number of bids by reviewers, we are left with 278 papers with at most 2 positive bids and 816 papers with at most 5 positive bids... There is thus a significant fraction of papers with fewer positive bids than the number of requisite reviewers.''
}
 \end{quotation} 
The study also found that there were 1090 papers with no positive bids from the area chairs. 
The inability to elicit meaningful bidding information in NeurIPS is far from an aberration. 
In a study of the 2005 Joint Conference on Digital Libraries, 146 out of the 264 submissions did not obtain any bids~\citep{rodriguez:2007aa}. 
The shortfalls of existing bidding systems shift the onus of the reviewing assignments away from the participants and to the paper matching mechanisms.

Despite the importance of the bidding process in peer review, there is not yet much fundamental research on the problem of optimizing the display order during the bidding process, and much less so in consideration of the two objectives identified in this paper. 
In practice, the display order is typically determined via heuristics such as a fixed ordering (e.g., order of submission), or in decreasing order of the relevance of the papers to that reviewer, or in increasing order of the number of bids received by the paper until then.

A key reason that bidding can fail is that papers are suboptimally displayed to the {\reviewers}. Consider a paper that is not an ideal match for any {\reviewer} in the system. If papers are ranked for display simply by how well-matched they are to {\reviewers}, this particular paper may be shown far down in the ranking for each reviewer and hence, not receive many, if any, bids. The risk of this scenario is elevated for interdisciplinary research, 
which is know to face significant 
impediments as a consequence of the lack of ideally matched peers~\citep{travis1991new, porter1985peer}.

On the other hand, if papers are inversely ranked by the number of bids they have obtained, then papers with fewer bids are more likely to be shown higher on the list regardless of how well-matched they are to any particular {\reviewer}. This display order may cause {\reviewer} dissatisfaction, which in the worst case could result in zero bids. Similarly,  ordering heuristics that are based on a fixed baseline may lead to bias in the review process. Indeed,  in the report of a study of $42$ peer-reviewed conferences in Computer Science, it was observed that under a fixed ordering (based on the submission time), the number of bids on papers is heavily influenced by the order of submission times of the papers~\citep{cabanac:2013aa}. 
It was concluded that the later the paper is submitted, the fewer bids it will receive.

Given the flaws of existing peer review bidding systems, we study the important problem of selecting the ordering of papers to display to each arriving {\reviewer} in a principled manner.

\subsection{Our contributions}
 The key {contributions} of this paper are summarized as follows. 

\paragraph*{Problem identification and formulation (Section~\ref{sec:problem_setup}).} 
The bidding process is highly consequential, yet one of the most understudied components of the conference peer-review process. We identify a key source of unfairness and inefficiency in the bidding process, and develop principled methods to address it. 
A key challenge is suitably formalizing the peer review bidding process, for which to the best of our knowledge there are no prior formulations.  
We formulate an objective function that captures the competing goals of the platform while reflecting the underlying decision-making process of {\reviewers}. The framework developed in this paper to analyze the problem is an important step toward future improvements on bidding systems.  

\paragraph*{Algorithm design (Section~\ref{sec:algorithm}).} 
We present a sequential decision-making algorithm called $\super$ to address this problem. The algorithm takes as input the ``similarities'' between each reviewer-paper pair and the bids made by all past reviewers, and outputs the ordering of papers to show to any current reviewer.  

\paragraph*{Theoretical results (Section~\ref{sec:theoretical_properties}).} 
We show two sets of theoretical results. We first consider a notion of `local' performance: the performance with respect to a single reviewer. We prove that \super is locally optimal whereas all popular baselines are considerably suboptimal. Our second set of theoretical results are based on a community model, where we prove that \super is near-optimal (globally) and all popular baselines are considerably suboptimal.

\paragraph*{Experiments on real and synthetic data (Section~\ref{sec:simulations}).} 
We run extensive experiments using similarity scores from ICLR 2018 and on synthetic data. The experiments reveal that the \super algorithm outperforms all popular baselines. For instance, it consistently reduces the number of papers with fewer than requisite bids by 50-75\% while maintaining individual reviewer satisfaction. In addition, we see that \super is very robust to model mismatches and complexities of the real-world review process.  

~\\The code for the algorithm is available at  {\href{https://github.com/fiezt/Peer-Review-Bidding}{\tt github.com/fiezt/Peer-Review-Bidding}}.

\subsection{Related Work}

The paper ordering problem for the bidding process in peer review bears a strong resemblance to the learning to rank problem~\citep{singh2019policy,yadav2019fair,svore2011learning,momma2019multiobjective,aslanyan2019position,cao2007learning}.
 Typically, the goal of learning to rank is to learn an overall ranking of items via  supervised methods or by querying users, where the latter provides further information on the relative ranking of items.
 In peer review, the objective of finding a ranking most suitable for an arriving reviewer during the bidding process is analogous to learning to rank methods that consider the utility of rankings for users along with the impact on the items being ranked~\citep{singh2019policy,yadav2019fair}. Moreover, the bidding model considered in this work is motivated from that which is commonly adopted in learning to rank models~\citep{aslanyan2019position}. 

As formulated in this paper, the goal for the design of the bidding process in peer review is to optimize for multiple criteria reflecting the objectives of the reviewers and the papers, respectively. This is not unlike the methods of~\cite{singh2019policy} and~\cite{yadav2019fair}, which consider a fairness objective in combination with a ranking quality objective, or the multi-objective learning to rank problems studied by~\cite{svore2011learning} and~\cite{momma2019multiobjective}. In the works of~\citet{singh2019policy} and~\cite{yadav2019fair}, the objective of ensuring fairness is encoded as a constraint in the optimization problems. Similarly,~\cite{svore2011learning} optimize a linear combination of ranking measures referred to as a `graded measure' and~\cite{momma2019multiobjective} convert a constrained optimization problem into an unconstrained problem by penalizing constraint violations in the objective. In each of the aforementioned works, the ranking measures are separable in the arriving users, meaning that the contribution of any individual user to the overall objective is independent of the other users. 

The problem of paper ordering in peer review given multiple objectives is also abstractly similar to online recommendation systems similarly facing competing objectives~\citep{rodriguez2012multiple,agarwal2011click,jambor2010opt}. However, a prevailing approach is to convert the multi-objective problem to a constrained optimization problem~\citep{rodriguez2012multiple,jambor2010opt}. Both the approach of incorporating objectives as constraints in the optimization problem formulations and combining objectives in a linear fashion is considered by~\citet{agarwal2011click}. Analogous to the learning to rank problem, the objectives are separable in the users.

The objective in the peer review problem as formulated in this paper presents unique challenges not addressed in the aforementioned works on learning to rank and recommendation systems. Notably, it is not separable between the reviewers since it depends on the number of bids on each paper after each reviewer has arrived and placed bids on the papers. 
Being applicable to more general multi-criteria settings, our approach to the design of the bidding processes in peer review may also be applied to the learning to rank problem. This is a direction worthy of further study.

Our work also contributes to a growing literature on improving various aspects of the peer review process
 such as reviewer assignment~\citep{charlin2013toronto,garg:2010aa,lian2018conference, stelmakh:2018aa,kobren19localfairness}, biases~\citep{tomkins2017reviewer,stelmakh2019testing}, subjectivity~\citep{noothigattu2018choosing}, miscalibration~\citep{roos2012statistical,wang2018your}, strategic behavior~\citep{aziz2019strategyproof,xu2018strategyproof}, and others~\citep{church2005reviewing,wing2011hypercriticality,nips14experiment,shah2017design,kang2018dataset,jecmen2020manipulation,ding2020privacy,stelmakh2020catch,stelmakh2020resubmissions}. The present paper addresses the bidding process in conference peer review, which has largely been unexplored in past literature. The concurrent work of~\cite{meirmarket}, which appeared after an initial  workshop version of our work~\citep{fiez2019super}, is the only work besides our own that we are aware of to focus on methods for improving bidding in peer review. However, their approach is to design a market for bidding, which is entirely different from ours.

\section{Problem Formulation}
\label{sec:problem_setup}
Consider $\numpapers \geq 2$ papers and $\numrev\geq 2$ reviewers indexed as $\{1,\ldots,\numpapers\}$ and $\{1,\ldots,\numrev\}$ respectively.\footnote{Henceforth, for any positive integer $\kappa$, we will use the standard shorthand $[\kappa]$ to denote the set $\{1,\ldots,\kappa\}$.} 
For each reviewer-paper pair, we have access to a \emph{similarity score} that captures the similarity between the reviewer and the paper.
We use the notation $\similarity_{\revindex, \paperindex} \in [0,1]$ to denote the given similarity between any reviewer $\revindex \in [\numrev]$ and paper $\paperindex \in [\numpapers]$.
A higher similarity score indicates a greater relevance of the paper to that reviewer. There are several systems in use today that compute similarities~\citep{price2010subsift, charlin2013toronto}, and in our work, we  treat them as being given.

In the bidding period, reviewers sequentially arrive into the system and place bids on the papers. In our work, for any reviewer and paper, we only consider the existence of a bid or not, and do not consider the possibility of multiple bidding options. 
We assume for simplicity that all $\numrev$ reviewers arrive exactly once, and that a reviewer arrives after the previous reviewer has completed their bidding.\footnote{However, in Section~\ref{sec:iclr_simulations},
we show that our algorithm is empirically robust to violations of these assumptions. } We do not make any assumptions on the arrival order of the reviewers. 
The problem is to determine the ordering of papers to show each reviewer on arrival in the interest of influencing the papers they decide to bid on while ensuring individual satisfaction. When deciding the paper ordering for any reviewer, the bids made by all reviewers who arrived in the past along with the paper orderings presented to them are known, but the bids made by the current or future reviewers are unknown. Let $\symgroup_\numpapers$ denote the set of all possible $\numpapers!$ permutations of the $\numpapers$ papers. In what follows, for any reviewer $\revindex \in \reviewerset$, we let $\ordering_\revindex \in \symgroup_\numpapers$  denote the ordering (permutation) of the papers shown to reviewer $\revindex$. We also use the notation $\ordering_\revindex(\paperindex)$ to denote the position of paper $\paperindex \in \paperset$ in the ordering $\ordering_\revindex$. 

\paragraph*{Gain function (objective).} 
Any algorithm to determine the ordering of papers must trade-off between two competing objectives: ensuring each paper receives a sufficient number of bids and ensuring each reviewer gets to see relevant papers early in the ordering. A combination of the objectives comprise our ``gain function,'' which is the objective we aim to optimize. We begin by discussing each objective component.

\textbf{Paper-side gain:} The paper-side gain is associated with a 
given function $\gainfunction:\reals_{\geq0}\rightarrow \reals_{\geq0}$. 
At the end of the entire bidding process, the paper-side gain $\gain_p$ is
\begin{equation*}
\gain_{p} = \sum_{\paperindex \in \paperset} \gainfunction(\numbids_{\paperindex}),
\end{equation*}
where $\numbids_{\paperindex}$ is the number of bids received by paper $\paperindex \in \paperset$.
We assume the function $\gainfunction$ is non-decreasing and concave.
The non-decreasing property represents an improved gain
if there are more bids, and the concavity property captures
diminishing returns.\footnote{Our algorithm easily adapts to paper-side gains that may also be a function of the similarity scores of the reviewers who bid; for example, a higher gain for bids from expert reviewers. We omit this detail for sake of brevity.}
An example of a choice for the paper-side gain is the square-root function $\gainfunction(\optvar) = \sqrt{\optvar}$. This function is increasing, smooth, and captures the diminishing returns property. The reader may keep this function in mind as a running example for concreteness. A second example is $\gainfunction(\optvar) = \min\{\optvar, \gainminrevs\}$ for a given parameter $\gainminrevs \geq 1$, which emphasizes having at least $\gainminrevs$ bids per paper. 

\textbf{Reviewer-side gain:} 
This objective captures the desideratum that the reviewers should be shown papers with high relevance early in the paper ordering. 
The reviewer-side gain is associated with some predetermined function $\gainfunctionrev: [\numpapers] \times [0,1] \rightarrow \reals_{\geq0}$. Given this function, the reviewer-side gain $\gain_r$ is defined as:
\begin{equation*}
\gain_{r} =  \sum_{\revindex \in \reviewerset} \sum_{\paperindex \in [\numpapers]}  \gainfunctionrev(\ordering_{\revindex}(\paperindex),\similarity_{\revindex,\paperindex}). 
\end{equation*}
The function $\gainfunctionrev$ is assumed to be non-increasing in the position (its first argument) and non-decreasing in the similarity (its second argument). One example choice of this function, which the reader may choose to keep in mind as a running example, is the  Discounted Cumulative Gain or DCG used commonly in data mining~\citep{jarvelin2000ir}. In our setting, the function is given by
\begin{equation}
\gainfunctionrev(\ordering_{\revindex}(\paperindex),\similarity_{\revindex,\paperindex}) =  \frac{2^{\similarity_{\revindex, \paperindex}}-1}{\log_2(\ordering_{\revindex}(\paperindex)+1)},
\label{eq:dcg}
\end{equation}
where we have set the ``relevance'' parameter in DCG to be the similarity $\similarity_{\revindex, \paperindex}$.

\textbf{Overall gain function:} 
Finally, we assume there is a trade-off parameter $\hyperparam \geq 0$, chosen by the program chairs, which trades off between these two objectives so that the overall gain function is given by
\begin{equation}
\gain = \gain_{p} + \hyperparam \gain_{r}.
\label{EqnDefnOverallgain}
\end{equation}
The goal is to determine the orderings of papers to show each reviewer to maximize the expected overall gain, $\Exs \big[ \gain \big]$, 
where the expectation is taken over the randomness in the bids made by the reviewers (see reviewer bidding model below) and any randomness in the algorithm.\footnote{For the pedantic reader, a (deterministic) algorithm is a mapping from $[0,1]^{\numrev \times \numpapers} \times {([\numrev]\times 2^{[\numrev]} \times \{0,\ldots,\numrev\}^\numpapers)}^\numrev$ to  $(\symgroup_\numpapers)^\numrev$. In this representation, the first input argument is the similarity matrix. The second argument represents, for each of the arriving reviewers, the identity of the current reviewer, the identities of the past reviewers, and the number of bids so far for each paper. The output space is simply an ordering of the $\numpapers$ papers for each reviewer. A stochastic algorithm outputs a probability distribution over the output space.} 

\paragraph*{Reviewer bidding model.} 
An important aspect of any system that displays a list to users is the presence of primacy effects. In the context of our problem, the primacy effect means a reviewer is more likely to bid on a paper shown at the top of the list rather than later~\citep{murphy2006primacy}. A second aspect of bidding is that a reviewer is more likely to bid on papers with greater similarity, although the reviewer may not bid on exactly the papers with the highest similarity since the similarities are noisy representations of their reviewing interests. 

Thus in order to model reviewer bidding, we revert to literature on position-based click models that have a nearly identical setting (where clicks are analogous to our bids). We model the bidding via a given function $\bidfunction : [\numpapers]\times [0, 1] \rightarrow [0,1]$, where $\bidfunction(\ordering_{\revindex}(\paperindex), \similarity_{\revindex, \paperindex})$ is non-increasing in the position that a paper is shown (the first argument) and non-decreasing in the similarity score (the second argument).  Any reviewer $\revindex \in \reviewerset$ bids on paper $\paperindex \in \paperset$ independently with probability
\begin{equation*}
\bidprob_{\revindex, \paperindex} = \bidfunction(\ordering_{\revindex}(\paperindex), \similarity_{\revindex, \paperindex}).
\end{equation*}
As a running example throughout the paper, note that in position-based click models, the click probability decomposes into a product of relevance and position bias~\citep{chuklin2015click}. Moreover, the literature considers the click probability to decay logarithmically 
as a function of the position~\citep{aslanyan2019position}. 
The translation of these models into our setting gives rise to the example bidding function  
\begin{equation}
\bidfunction(\ordering_{\revindex}(\paperindex), \similarity_{\revindex, \paperindex}) = \frac{\similarity_{\revindex, \paperindex}}{\log_2(\ordering_{\revindex}(\paperindex)+1)}.
\label{eq:bidding_func_example}
\end{equation} 

\paragraph*{Baselines.} 
We consider the following three methods of ordering papers as baselines.

{\bf Random baseline}~(\randbase): A commonplace practice~\citep{cabanac:2013aa} is to show papers to reviewers in some fixed order, such as in order of submission of the papers. As a baseline, we consider a better variant of this practice, in which each reviewer is shown an independently and randomly selected paper ordering.

{\bf Similarity baseline}~(\simbase): A second common practice, followed in several conference management systems today, is to order the papers according to their similarities. In other words, any reviewer $\revindex \in \reviewerset$ is shown the papers in order of the values in $\{\similarity_{\revindex,\paperindex}\}_{\paperindex\in \paperset}$ (where the paper with maximum similarity is shown at the top, and so on). Any ties are broken by showing papers with fewer bids higher, and further ties are broken uniformly at random.

{\bf Bid baseline}~(\bidbase): A third baseline shows papers to greedily optimize the minimum bid count. Each reviewer is shown papers in increasing order of the number of bids received so far (from the reviewers who arrived previously). Any ties are broken in favor of the paper with a higher similarity, and further ties are broken uniformly at random.

\section{Algorithm}
\label{sec:algorithm}
The key challenge in designing a suitable algorithm for the problem at hand stems from the fact that the paper-side gain is coupled (non-separable) across the orderings of papers presented to all reviewers so the impact of each individual paper ordering cannot be fully realized until the entire bidding process is complete. 
Conversely, the reviewer-side gain is decoupled (separable) across reviewers. This means the reviewer-side gain that can be obtained from any given reviewer is independent of the ordering of papers presented to any other reviewer. Thus, an algorithm for this problem is required to make local decisions, where the effect of the decision on the global gain (or cost) is only partially known. This perspective is reminiscent of the classical A$^{\ast}$ algorithm~\citep{hart1968formal}, and using A$^{\ast}$ as an inspiration, we now present an algorithm which we call \super for our problem\footnote{The name \super stands for SUperior PERmutations and also indicates the inspiration from A$^{\ast}$.}.

The A$^{\ast}$ algorithm operates with a goal of finding the minimum cost path between a pair of vertices in a cost-weighted graph. For any node in consideration, it considers two functions: a function which captures the cost so far and a second function---called the ``heuristic''---which captures some estimate of the cost from the current node to the destination. The A$^{\ast}$ algorithm then finds a path based on these two functions. 
Before moving to a description of \super, we discuss such a heuristic in the context of the problem at hand.

\subsection{Heuristic for Future Bids} 
In a manner analogous to the A$^{\ast}$ algorithm, at any point in time \super keeps track of the gains so far and also takes as input a heuristic that captures the ``unseen'' events. The heuristic in A$^{\ast}$ provides, for every vertex in the given graph, an estimate of the cost incurred in the future. Analogously, the heuristic in \super provides, for every arrival of a reviewer, an estimate of the number of bids each paper will receive in the future. Formally, let us index the reviewers as $\revindex\in \reviewerset$ in the order of arrival (note that this order is unknown a priori). The heuristic comprises a collection of vectors $\{\heuristicf_1, \ldots, \heuristicf_\numrev\}$, where each $\heuristicf_\revindex \in [0,\numrev-\revindex]^\numpapers$ represents an estimate of the number of bids each of the $\numpapers$ papers will receive from all future reviewers $\{\revindex+1,\ldots,\numrev\}$. The vector $\heuristicf_\revindex$ is provided to the \super algorithm on arrival of the $\revindex^{th}$ reviewer.  
Two examples of heuristic functions that we consider in the subsequent narrative are described as follows.
\begin{itemize}[itemsep=5pt, topsep=5pt]
    \item {\em Zero heuristic:} $\heuristicf_\revindex = 0$ for every $\revindex \in [\numrev]$.
    \item {\em Mean heuristic:} This function computes the expected number of bids each paper will receive if the permutations shown to all future reviewers are chosen independently and uniformly at random. Formally: $\heuristicf_{\revindex, \paperindex} =  \frac{1}{\numpapers} \sum_{\revindex' = \revindex+1}^{\numrev} \sum_{\paperindex' \in [\numpapers]} \bidfunction(\paperindex', \similarity_{\revindex', \paperindex})$  $\forall\ \revindex \in [\numrev-1], \paperindex \in \paperset$.
\end{itemize}
We set $\heuristicf_\numrev = 0$ for any heuristic, implying there are no bids placed after the last reviewer. This is analogous to setting the heuristic value to zero for the target vertex in the A$^{\ast}$ algorithm.

\subsection{Intuition Behind the Algorithm} 
 We first provide some intuition about the \super algorithm, and subsequently present a formal description. 
Since a primary impediment to designing an algorithm is the inability to fully realize the impact of a paper ordering on the paper-side gain until the end of the bidding process, we begin by considering the scenario where $(\numrev-1)$ reviewers have already departed, and the problem is to determine the ordering of papers to show the final reviewer. In this scenario, we have access to the bids of all $(\numrev-1)$ reviewers that have already arrived and the orderings of papers presented to them. We use the notation $\numbids_{\numrev-1, \paperindex} \in \{0,\ldots,\numrev-1\}$ to denote the number of bids received by any paper $\paperindex \in \paperset$ at the time of arrival of the last reviewer. The values $\{\numbids_{\numrev-1,1},\ldots,\numbids_{\numrev-1, \numpapers}\}$ are thus known at the time when the final reviewer arrives. As a result, we can formulate an optimization problem for the final reviewer $\numrev$ to maximize the expected gain from~\eqref{EqnDefnOverallgain} in the following manner. For every $\paperindex \in \paperset$, let $\randombid_{\numrev, \paperindex}$ denote a Bernoulli random variable with mean $\bidprob_{\revindex, \paperindex} = \bidfunction(\ordering_{\numrev}(\paperindex), \similarity_{\numrev, \paperindex})$, independent of all else. The random variable $\randombid_{\numrev, \paperindex}$ represents the bid of the final reviewer on paper $\paperindex \in\paperset$. The optimization problem can be written as
\begin{equation}
 \max_{\ordering_{\numrev}\in \symgroup_\numpapers}  \sum_{\paperindex\in \paperset} \mathbb{E}[\gainfunction(\numbids_{\numrev-1, \paperindex} + \randombid_{\numrev,\paperindex})]  + \hyperparam \sum_{\paperindex\in \paperset}\gainfunctionrev(\ordering_{\numrev}(\paperindex), \similarity_{\numrev, \paperindex}) ,
 \label{eq:subproblem}
 \end{equation}
where the expectation is taken over the distribution of the random variables $\randombid_{\numrev, 1},\ldots,\randombid_{\numrev, \numpapers}$.

Observe that the constraint set for the optimization problem in~\eqref{eq:subproblem} is the set $\symgroup_\numpapers$ of all permutations. This set is, in general, not very well behaved~\citep{ailon2008aggregating,shah2016stochastically}, which makes even this one-step optimization a challenge. As we discuss later in the formal algorithm description along with Theorem~\ref{prop:local} and its proof, \super for the final reviewer optimally solves~\eqref{eq:subproblem} and it is computationally efficient manner (see Proposition~\ref{prop:time} in Appendix~\ref{sec:proof_time}). 
The aforementioned subproblem forms the starting point for the \super algorithm. Now that we know to handle a single (last) reviewer in an optimal fashion, we now describe the \super algorithm for a general reviewer, say, $\revindex \in \reviewerset$. When reviewer $\revindex$ arrives, we have access to the number of bids made by all past reviewers on any paper $\paperindex \in [\numpapers]$, which we denote by $\numbids_{\revindex-1, \paperindex} \in \{0,\ldots,\revindex-1\}$.

 \begin{figure}[t!]
\begin{minipage}{\textwidth}\refstepcounter{alg}\label{alg:super}
 \vspace{0pt}%
 \centering
\fbox{
\begin{minipage}{.95\textwidth}
\textbf{Algorithm~\ref{alg:super}}: $\super$
\hrule
\vspace{2mm}
\textbf{Input:} $\gainfunction: \reals_{\geq0} \rightarrow \reals_{\geq0}$,  paper-side gain function  \\
\hspace*{.43in}$\gainfunctionrev: \paperset \times [0,1] \rightarrow \reals_{\geq0}$,  reviewer-side gain function  \\
\hspace*{.43in} $\bidfunction: \paperset \times [0,1] \rightarrow [0, 1]$, bidding model  \\
\hspace*{.43in} $\lambda \geq 0$, trade-off parameter \\
\hspace*{.43in} $\similarity \in [0, 1]^{\numrev\times\numpapers}$, similarity matrix. \\ 
\textbf{Algorithm:} 
\begin{enumerate}[topsep=6pt,itemsep=0ex, align=left, leftmargin=0pt, labelindent=.05in, labelwidth=0pt, itemindent=!]
\item Initialize bids on each paper to zero: $\numbids_{0}\gets 0_{\numpapers}$
\item For each reviewer arrival $\revindex \in \reviewerset$
\begin{enumerate}[topsep=0pt,itemsep=0ex, align=left, leftmargin=0pt, labelindent=.1in, labelwidth=0pt, itemindent=!]
\item Compute or input heuristic $\heuristicf_\revindex \in [0,\numrev-\revindex]^\numpapers$
\item $\ordering_{\revindex}\gets$ \fpo
\item  Present papers in the order $\ordering_{\revindex}$ and observe bids $\realbid_{\revindex} \in \{0, 1\}^{\numpapers}$ 
\item Update paper bid counts: $\numbids_{\revindex}= \numbids_{\revindex-1}+\realbid_{\revindex}$
\end{enumerate}
\end{enumerate}
\end{minipage}}
\end{minipage}
\
\vspace{8mm}

\begin{minipage}{\textwidth}\refstepcounter{alg}\label{alg:subprocedure}
\vspace{0pt}%
\centering
\fbox{
\begin{minipage}{.95\textwidth}
\textbf{Algorithm~\ref{alg:subprocedure}}: \fpo
\hrule
\vspace{.5mm}
\begin{enumerate}[topsep=6pt,itemsep=0ex, align=left, leftmargin=0pt, labelindent=.05in, labelwidth=0pt, itemindent=!]
\item Compute weight matrix $\weight \in \reals^{\numpapers \times \numpapers}$ such that 
\begin{align*}
\weight_{\paperindex, \extravar} = \hyperparam \gainfunctionrev(\extravar, \similarity_{\revindex, \paperindex}) + \bidfunction(\extravar, \similarity_{\revindex, \paperindex})(\gainfunction(\numbids_{\revindex-1, \paperindex}+\proxy_{\revindex, \paperindex} + 1) - \gainfunction(\numbids_{\revindex-1, \paperindex}+\proxy_{\revindex, \paperindex})) \ \forall \ \paperindex \in \paperset, \extravar \in \paperset
\end{align*}
\item Solve linear program to obtain $\optvar^{\ast} \in \reals^{\numpapers \times \numpapers}$:
\begin{align*}
&\optvar^{\ast} \in \argmax_{\optvar\in [0, 1]^{\numpapers \times \numpapers}} \quad  \sum_{\paperindex\in \paperset}\sum_{\extravar\in \paperset} \weight_{\paperindex,\extravar}\optvar_{\paperindex,\extravar}\\
&\text{s.t.}  \sum_{\extravar \in \paperset} \optvar_{\paperindex,\extravar} = 1 \ \forall \ \paperindex \in \paperset, \ \sum_{\paperindex \in \paperset} \optvar_{\paperindex,\extravar} = 1 \ \forall \ \extravar \in \paperset
\end{align*}
\qquad with ties broken arbitrarily between the set of maximizing solutions
\item $\ordering_{\revindex}(\paperindex) = \extravar$ such that $\optvar_{\paperindex, \extravar}^{\ast} = 1$ for each $\paperindex \in \paperset$
\end{enumerate}
\begin{minipage}{3.5in} \
\textbf{Output:} $\ordering_{\revindex}$
\end{minipage}
\end{minipage}}
\end{minipage}
\
\vspace{8mm}

\begin{minipage}{\textwidth}\refstepcounter{alg}\label{alg:subprocedure_efficient}
\vspace{0pt}%
\centering
\fbox{
\begin{minipage}{.95\textwidth}
\textbf{Algorithm~\ref{alg:subprocedure_efficient}}: \fpoe
\hrule
\vspace{.5mm}
\begin{enumerate}[topsep=6pt,itemsep=0ex, align=left, leftmargin=0pt, labelindent=.05in, labelwidth=0pt, itemindent=!]
\item Compute weights for each $\paperindex \in \paperset$:
\begin{align*}
\sortvar_{\revindex, \paperindex} =   \bidfunction^{\similarity}(\similarity_{\revindex, \paperindex})(\gainfunction(\numbids_{\revindex-1, \paperindex} +\proxy_{\revindex, \paperindex}+ 1) - \gainfunction(\numbids_{\revindex-1, \paperindex}+\proxy_{\revindex, \paperindex}))+\hyperparam \gainfunctionrev^{\similarity}(\similarity_{\revindex, \paperindex})
\end{align*}
\item $\ordering_{\revindex} = \rankfunction(\sortvar_{\revindex})$, where $\rankfunction: \reals^{\numpapers}\rightarrow \paperset^{\numpapers}$ returns the rank from maximum to minimum of each input in place and breaks ties arbitrarily.
\end{enumerate}
\begin{minipage}{3.5in} \
\textbf{Output:} $\ordering_{\revindex}$
\end{minipage}
\end{minipage}}
\end{minipage}
\end{figure}

We now recall the A$^{\ast}$ algorithm: for any vertex, A$^{\ast}$ considers the cost ``$g$'' so far and a heuristic estimate ``$h$'' of the subsequent cost.
Then, considering the cost of any vertex as ``$g+h$'', the A$^{\ast}$ algorithm takes the one-step optimal action given by selecting the neighboring vertex with the smallest value of ``$g+h$''. In an analogous fashion, \super considers the number of bids so far ($\numbids_{\revindex-1}$) and takes as input a heuristic ($\heuristicf_\revindex$) for the number of bids in the future. Then, considering the number of bids from all other reviewers as ``$\numbids_{\revindex-1}+\heuristicf_\revindex$'', the \super algorithm  takes the action which is the one-step optimal action. In other words, \super solves for each paper ordering using:
\begin{equation}
\max_{\ordering_{\revindex} \in \symgroup_\numpapers}\quad \sum_{\paperindex\in \paperset} \mathbb{E}[\gainfunction(\numbids_{\revindex-1, \paperindex}+\heuristicf_{\revindex, \paperindex} + \randombid_{\revindex,\paperindex})] \\
+ \hyperparam \sum_{\paperindex\in \paperset}\gainfunctionrev(\ordering_{\revindex}(\paperindex), \similarity_{\revindex, \paperindex}) ,
\label{eq:subproblem_genreviewer}
\end{equation}
where $\randombid_{\revindex, \paperindex}$ is a Bernoulli random variable with mean $\bidprob_{\revindex, \paperindex} = \bidfunction(\ordering_{\revindex}(\paperindex), \similarity_{\revindex, \paperindex})$ and is independent of all else. As for the final reviewer, \super solves this problem in an efficient manner for any arbitrary reviewer (see Proposition~\ref{prop:time} in Appendix~\ref{sec:proof_time}).

\subsection{Formal Algorithm Description}
\label{sec:formal_alg}
The $\super$ algorithm is presented in Algorithm~\ref{alg:super}. To determine a paper ordering to show any reviewer, \super calls a procedure to efficiently solve~\eqref{eq:subproblem_genreviewer}. We give a general method in Algorithm~\ref{alg:subprocedure} and a faster method in Algorithm~\ref{alg:subprocedure_efficient} that is applicable for a special class of reviewer-side gain and bidding functions.

\paragraph*{General version.} In the general version of \super, 
Algorithm~\ref{alg:subprocedure} is called to return a paper ordering 
that is a solution to~\eqref{eq:subproblem_genreviewer}
each time a {\reviewer} arrives. In the proof of Theorem~\ref{prop:local}, we show that the optimization problem over the set of permutations given in~\eqref{eq:subproblem} to find the optimal paper ordering for the final reviewer can be reformulated as an integer linear programming problem with a totally unimodular constraint set. The totally unimodular property of the constraint set guarantees that the solution of a relaxed linear program is in fact the integer optimal solution. The application of this reduction from an optimization problem over permutations to a linear programming problem for any given reviewer forms the technique given in Algorithm~\ref{alg:subprocedure} to efficiently obtain a solution to~\eqref{eq:subproblem_genreviewer}.
Finally, the per-reviewer time complexity of the general version of \super given the evaluations of the heuristic is $\mathcal{O}(\numpapers^3)$ (see Proposition~\ref{prop:time} in Appendix~\ref{sec:proof_time}) as a consequence of the call to solve a linear assignment problem in Algorithm~\ref{alg:subprocedure}.

\paragraph*{Faster specialized version.} 
Given a bidding model that can be decomposed as $\bidfunction(\ordering_{\revindex}(\paperindex), \similarity_{\revindex, \paperindex}) = \bidfunction^{\similarity}(\similarity_{\revindex, \paperindex})\bidfunction^{\ordering}(\ordering_{\revindex}(\paperindex))$ where $\bidfunction^{\similarity}: [0, 1] \rightarrow [0, 1]$ is non-decreasing and $\bidfunction^{\ordering}: \paperset \rightarrow [0, 1]$ is non-increasing, along with a reviewer-side gain function that can be decomposed as $\gainfunctionrev(\ordering_{\revindex}(\paperindex), \similarity_{\revindex, \paperindex}) = \gainfunctionrev^{\similarity}(\similarity_{\revindex, \paperindex})\bidfunction^{\ordering}(\ordering_{\revindex}(\paperindex))$ where $\gainfunctionrev^{\similarity}: [0, 1]\rightarrow \reals_{\geq 0}$ is non-decreasing, \super calls Algorithm~\ref{alg:subprocedure_efficient} to return a paper ordering 
that is a solution to~\eqref{eq:subproblem_genreviewer}
each time a {\reviewer} arrives.
In the proof of Proposition~\ref{prop:time} in Appendix~\ref{sec:proof_time}, we show for this model class that the problem from~\eqref{eq:subproblem} to find the optimal paper ordering for the final reviewer after evaluating the expectation can be reformulated as
\begin{equation}
\max_{\ordering_{\numrev}\in \symgroup_\numpapers} \quad \sum_{\paperindex\in \paperset} \sortvar_{\numrev, \paperindex}\bidfunction^{\ordering}(\ordering_{\numrev}(\paperindex)) 
\label{eq:super_efficient_main}
\end{equation}
for some non-negative weights $\{\sortvar_{\numrev, \paperindex}\}_{\paperindex\in \paperset}$.
The problem in~\eqref{eq:super_efficient_main} admits a simple solution: $\bidfunction^{\ordering}$ is non-increasing on the domain, so the objective is maximized by presenting papers in decreasing order of the weights $\{\sortvar_{\numrev, \paperindex}\}_{\paperindex\in \paperset}$. 
Obtaining this solution only requires sorting the weights, which has a time complexity of $\mathcal{O}(\numpapers\log(\numpapers))$. The application of this problem reformulation for the given model class and any reviewer forms the technique given in Algorithm~\ref{alg:subprocedure_efficient} to obtain a solution to~\eqref{eq:subproblem_genreviewer}.

Before moving on to present our theoretical results, we comment on the relevance of this model class. Importantly, the DCG reviewer-side gain function and bidding model $\bidfunction(\similarity_{\revindex,\paperindex}, \ordering_{\revindex}(\paperindex)) = \similarity_{\revindex, \paperindex}/\log_2(\ordering_{\revindex}(\paperindex)+1)$, which we have mentioned as running examples that can be kept in mind, satisfy the decomposition for which \super is computationally efficient. This choice of functions is standard in the past literature on ranking models and click behavior~\citep{jarvelin2000ir,aslanyan2019position}, meaning that the time complexity result for this model class is quite relevant.

\section{Theoretical Results}
\label{sec:theoretical_properties}
We now present the main theoretical results of this paper. 
Complete proofs of all results are in Appendix~\ref{sec:proofs}. 

\subsection{Local Optimality}
The property of local optimality, as the name suggests, means that the algorithm is optimal with respect to the reviewer under consideration. Achieving even a good local performance in a computationally efficient manner is challenging due to the optimization over permutations in~\eqref{eq:subproblem}. The following results show that \super, which is computationally efficient, is locally optimal. 

The result is first presented in terms of the final reviewer for simplicity and extended to a general reviewer subsequently. In the following theorem, since we consider only the final reviewer, note that the heuristic for \super is irrelevant because the heuristic value for the final reviewer is always set to zero.

\begin{theorem}\label{prop:local}
Given any history of paper orderings and bids from reviewers that arrived previously, the paper ordering given by \super to the final reviewer maximizes the expected gain conditioned on the history.
\end{theorem}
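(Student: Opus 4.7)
The plan is to reduce the optimization over permutations in \eqref{eq:subproblem} to a linear assignment problem, and then invoke the total unimodularity of the assignment polytope to show that the LP relaxation solved inside \fpo recovers an integer optimum, which is exactly the ordering emitted by \super on the final reviewer.

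First, I would condition on the history. The reviewer-side terms $\sum_{\paperindex} \gainfunctionrev(\ordering_\revindex(\paperindex), \similarity_{\revindex,\paperindex})$ for $\revindex < \numrev$ and the realized bids $\{\numbids_{\numrev-1,\paperindex}\}_{\paperindex \in \paperset}$ are all determined by the history, so maximizing the conditional expectation of $\gain$ reduces exactly to \eqref{eq:subproblem}. Next, since $\randombid_{\numrev,\paperindex}$ is Bernoulli with mean $\bidfunction(\ordering_\numrev(\paperindex), \similarity_{\numrev,\paperindex})$, linearity gives
\begin{equation*}
\Exs[\gainfunction(\numbids_{\numrev-1,\paperindex}+\randombid_{\numrev,\paperindex})]
= \gainfunction(\numbids_{\numrev-1,\paperindex}) + \bidfunction(\ordering_\numrev(\paperindex), \similarity_{\numrev,\paperindex})\bigl(\gainfunction(\numbids_{\numrev-1,\paperindex}+1) - \gainfunction(\numbids_{\numrev-1,\paperindex})\bigr),
\end{equation*}
and the $\gainfunction(\numbids_{\numrev-1,\paperindex})$ terms are constants. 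Dropping them, \eqref{eq:subproblem} becomes $\max_{\ordering_\numrev \in \symgroup_\numpapers} \sum_\paperindex \weight_{\paperindex, \ordering_\numrev(\paperindex)}$, where $\weight_{\paperindex, \extravar}$ is precisely the weight matrix computed in Algorithm~\ref{alg:subprocedure} (with the heuristic term $\proxy_{\numrev,\paperindex}$ equal to zero, as prescribed for the final reviewer).

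The key step is then the reformulation as an assignment problem. I would introduce the permutation matrix variables $\optvar_{\paperindex,\extravar} \in \{0,1\}$ with row and column sum constraints, writing
\begin{equation*}
\max_{\ordering_\numrev \in \symgroup_\numpapers} \sum_\paperindex \weight_{\paperindex, \ordering_\numrev(\paperindex)} = \max_{\optvar \in \{0,1\}^{\numpapers \times \numpapers}} \Bigl\{ \sum_{\paperindex,\extravar} \weight_{\paperindex,\extravar} \optvar_{\paperindex,\extravar} \ :\ \sum_\extravar \optvar_{\paperindex,\extravar} = 1 \ \forall \paperindex,\ \sum_\paperindex \optvar_{\paperindex,\extravar} = 1 \ \forall \extravar \Bigr\}.
\end{equation*}
The main technical content is to argue that solving the LP relaxation (relaxing $\optvar_{\paperindex,\extravar} \in \{0,1\}$ to $\optvar_{\paperindex,\extravar} \in [0,1]$), exactly as Algorithm~\ref{alg:subprocedure} does, still yields an integer optimum. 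This follows because the constraint matrix of the assignment polytope is totally unimodular (equivalently, by the Birkhoff–von Neumann theorem, its extreme points are permutation matrices), so every basic optimal solution to the LP is integral. Hence the simplex/LP solver returns some permutation matrix attaining the maximum, and the step 3 of Algorithm~\ref{alg:subprocedure} decodes it into a permutation $\ordering_\numrev$ attaining the optimum of \eqref{eq:subproblem}.

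The main obstacle I anticipate is purely bookkeeping: carefully verifying that the algorithm's weight matrix coincides with the coefficients obtained after the Bernoulli expansion when $\proxy_{\numrev,\paperindex}=0$, and that the tie-breaking in the LP does not affect the argument because any optimal vertex is by construction a permutation and therefore feasible for the original problem with the same objective value. Once those identifications are made, the totally unimodular/Birkhoff step is standard and the theorem follows directly for the final reviewer; the ``general history'' phrasing of the statement is handled by the initial conditioning argument, so no further extension is needed.
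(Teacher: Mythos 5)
Your proposal is correct and follows essentially the same route as the paper's proof: conditioning on the history, expanding the Bernoulli expectation and dropping constants, recasting the permutation optimization as a linear sum assignment problem, and using total unimodularity (equivalently Birkhoff--von Neumann) of the assignment polytope to conclude that the LP relaxation solved in \fpo returns an integral optimum, with the heuristic term vanishing for the final reviewer so that the algorithm's weights coincide with the derived coefficients. No gaps; this matches the paper's argument.
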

In other words, the expected amount by which the gain is increased from the final reviewer is maximized. To generalize the previous result to a local optimality result for any reviewer, let the immediate gain from a reviewer be defined as the difference between the gain after and before the reviewer arrived. 
\begin{corollary}\label{cor:local}
Given any history of paper orderings and bids from reviewers that arrived previously, the paper ordering given to any reviewer by \super with zero heuristic maximizes the expected immediate gain 
from that reviewer conditioned on the history.
\end{corollary}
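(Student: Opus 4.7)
The plan is to deduce the corollary from Theorem~\ref{prop:local} by observing that, with the zero heuristic, the per-reviewer subproblem in~\eqref{eq:subproblem_genreviewer} that \super solves collapses to (a constant shift of) the expected immediate-gain objective. No new optimization machinery is needed beyond what was already developed for Theorem~\ref{prop:local}.

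First, I would write out the conditional expected immediate gain explicitly. Fix any reviewer $\revindex \in \reviewerset$ and condition on the history $\history$ of past orderings and bids, which determines the bid counts $\{\numbids_{\revindex-1,\paperindex}\}_{\paperindex\in\paperset}$ as well as the reviewer-side gain contributed by reviewers $1,\ldots,\revindex-1$. The only terms in $\gain$ that can be affected by the choice of $\ordering_\revindex$ (and the random bids of reviewer $\revindex$) are $\sum_{\paperindex\in\paperset}[\gainfunction(\numbids_{\revindex-1,\paperindex} + \randombid_{\revindex,\paperindex}) - \gainfunction(\numbids_{\revindex-1,\paperindex})]$ on the paper side and $\hyperparam \sum_{\paperindex\in\paperset} \gainfunctionrev(\ordering_\revindex(\paperindex),\similarity_{\revindex,\paperindex})$ on the reviewer side; all remaining contributions to $\gain$ are determined by $\history$ alone. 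Taking conditional expectation and dropping the constant $\sum_{\paperindex\in\paperset} \gainfunction(\numbids_{\revindex-1,\paperindex})$, maximizing the expected immediate gain over $\ordering_\revindex$ is equivalent to solving
\[ \max_{\ordering_\revindex \in \symgroup_\numpapers} \sum_{\paperindex\in\paperset}\Exs\bigl[\gainfunction(\numbids_{\revindex-1,\paperindex} + \randombid_{\revindex,\paperindex})\bigr] + \hyperparam \sum_{\paperindex\in\paperset}\gainfunctionrev(\ordering_\revindex(\paperindex),\similarity_{\revindex,\paperindex}). \]

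Second, I would compare this to~\eqref{eq:subproblem_genreviewer}. With zero heuristic $\proxy_{\revindex,\paperindex} = 0$ for every $\paperindex \in \paperset$, the subproblem that \super solves upon arrival of reviewer $\revindex$ is exactly the display above. Thus the output $\ordering_\revindex$ of \super is, by construction, an argmax of the expected-immediate-gain problem provided that \super genuinely returns an optimal solution to~\eqref{eq:subproblem_genreviewer}. This last fact is the content of (the proof of) Theorem~\ref{prop:local}: that proof reformulates the permutation optimization as a linear assignment problem with a totally unimodular constraint set and invokes LP integrality to extract a permutation-valued optimum. The argument uses only that $\{\numbids_{\numrev-1,\paperindex}\}$ are fixed nonnegative integers at the time of solving and the structural assumptions on $\gainfunction,\gainfunctionrev,\bidfunction$; it never uses that the reviewer is the last one. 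Replacing $\numbids_{\numrev-1,\paperindex}$ by $\numbids_{\revindex-1,\paperindex}$ throughout yields optimality for an arbitrary reviewer.

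The main (minor) obstacle is careful bookkeeping: verifying that, conditional on $\history$, the subtracted term $\sum_{\paperindex\in\paperset} \gainfunction(\numbids_{\revindex-1,\paperindex})$ is a $\ordering_\revindex$-independent constant, and that the reviewer-side contributions of past reviewers depend on $\history$ but not on $\ordering_\revindex$ and so may be dropped from the argmax. Once these are verified, the corollary follows from Theorem~\ref{prop:local} by direct substitution.
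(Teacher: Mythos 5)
Your proposal is correct and follows essentially the same route as the paper: it defines the conditional expected immediate gain, drops the history-determined constants, observes that with the zero heuristic the resulting problem coincides with~\eqref{eq:subproblem_genreviewer} (i.e., the weights \super uses reduce to those of the immediate-gain objective), and reuses the assignment-problem/total-unimodularity argument from Theorem~\ref{prop:local} with $\numbids_{\revindex-1,\paperindex}$ in place of $\numbids_{\numrev-1,\paperindex}$. The paper's proof of Corollary~\ref{cor:local} carries out exactly these steps.
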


The property of local optimality also implies optimality of \super (with any heuristic) when the paper-side gain function is linear.  
We refer the reader to  Appendix~\ref{sec:proof_linear} for more details. 

We now show that an analogous statement cannot be made regarding the other baseline methods. In fact, in contrast to \super, all the popular baselines are considerably suboptimal.

\begin{theorem}\label{prop:localworst}
Consider a model with the paper-side gain function $\gainfunction(\numbids_{\paperindex})=\sqrt{\numbids_{\paperindex}}$, the reviewer-side gain function $\gainfunctionrev(\ordering_{\revindex}(\paperindex), \similarity_{\revindex, \paperindex}) = (2^{\similarity_{\revindex, \paperindex}}-1)/\log_2(\ordering_{\revindex}(\paperindex) + 1)$, and the bidding function $\bidfunction(\ordering_{\revindex}(\paperindex), \similarity_{\revindex, \paperindex})=\similarity_{\revindex, \paperindex}/\log_2(\ordering_{\revindex}(\paperindex)+1)$. There exists a constant $\constant >0$ such that for every $\numpapers\geq 2$ and $\hyperparam\geq 0$, in the worst case for the final reviewer:\\
\indent (a) \simbase is suboptimal by an additive factor of at least $\constant\numpapers/\log_2^2(\numpapers)$;\\
\indent (b) \bidbase is suboptimal by an additive factor of at least $\constant\numpapers\max\{1, \hyperparam\}/\log_2^2(\numpapers)$;\\
\indent (c) \randbase is suboptimal by an additive factor of at least $\constant\numpapers\max\{1, \hyperparam\}/\log_2^2(\numpapers)$.
\end{theorem}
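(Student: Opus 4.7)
By Theorem~\ref{prop:local}, on the final reviewer \super attains the conditional optimum, so it suffices for each baseline to exhibit an instance (similarity row $\similarity_{\numrev,\cdot}$ and bid counts $\numbids_{\numrev-1,\cdot}$) and lower-bound the gap to the optimum. Under the stated functions both $\gainfunctionrev$ and $\bidfunction$ factor through $\bidfunction^{\ordering}(k):=1/\log_2(k+1)$, so the expected incremental gain from the final reviewer shown permutation $\ordering_\numrev$ is
\[
\sum_{\paperindex \in \paperset}\frac{w_\paperindex}{\log_2(\ordering_\numrev(\paperindex)+1)},\qquad w_\paperindex := \similarity_{\numrev,\paperindex}\bigl(\sqrt{\numbids_{\numrev-1,\paperindex}+1}-\sqrt{\numbids_{\numrev-1,\paperindex}}\bigr)+\hyperparam\bigl(2^{\similarity_{\numrev,\paperindex}}-1\bigr).
\]
Since $1/\log_2(k+1)$ is strictly decreasing in $k$, the rearrangement inequality says the optimal $\ordering_\numrev^\ast$ sorts papers in decreasing order of $w_\paperindex$. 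In each construction below I split the $\numpapers$ papers into two blocks of size $m:=\lfloor \numpapers/2\rfloor$ and $\numpapers-m$, arranging that the baseline puts the ``small-$w$'' block in the top $m$ positions while $\ordering_\numrev^\ast$ does the opposite.

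\textbf{Positional cost of a block swap.} With $S_1:=\sum_{k=1}^m 1/\log_2(k+1)$ and $S_2:=\sum_{k=m+1}^{\numpapers} 1/\log_2(k+1)$, the gap from such a swap equals $(w_{\text{large}}-w_{\text{small}})(S_1-S_2)$ (for \simbase and \bidbase) or half of it (for \randbase). The hard quantitative step, which underlies all three bounds, is showing
\[
S_1 - S_2 \;\geq\; c_0\,\numpapers/\log_2^2 \numpapers \quad\text{for every } \numpapers \geq 2.
\]
Intuitively $S_1-S_2 \approx (\numpapers/2)\bigl[1/\log_2(\numpapers/2)-1/\log_2 \numpapers\bigr] = (\numpapers/2)/[\log_2(\numpapers)\log_2(\numpapers/2)] = \Theta(\numpapers/\log_2^2 \numpapers)$. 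This will be verified via an integral comparison of the two sums for large $\numpapers$ together with a direct check for the few small cases.

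\textbf{Part (a): \simbase.} Set $\similarity_{\numrev,\paperindex}=1$ and $\numbids_{\numrev-1,\paperindex}=B$ (large) on the ``saturated'' block, and $\similarity_{\numrev,\paperindex}=1-\epsilon$, $\numbids_{\numrev-1,\paperindex}=0$ on the ``hungry'' block. As $B\to\infty$ the saturated weight tends to $\hyperparam$, whereas the hungry weight is $(1-\epsilon)+\hyperparam(2^{1-\epsilon}-1)$. Choosing $\epsilon=\epsilon(\hyperparam)>0$ small enough forces the hungry weight to exceed the saturated one by at least $\tfrac12$ uniformly in $\hyperparam\geq 0$. \simbase orders by similarity and therefore puts the saturated block on top, while $\ordering_\numrev^\ast$ does the reverse, yielding a shortfall of at least $\tfrac12(S_1-S_2)=\Omega(\numpapers/\log_2^2 \numpapers)$.

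\textbf{Parts (b) and (c): \bidbase and \randbase.} Now take the ``low'' block to have $\similarity_{\numrev,\paperindex}=\delta$ and $\numbids_{\numrev-1,\paperindex}=0$, and the ``high'' block to have $\similarity_{\numrev,\paperindex}=1$ and $\numbids_{\numrev-1,\paperindex}=1$. As $\delta\to 0$, $w_{\text{low}}\to 0$, whereas $w_{\text{high}}=(\sqrt{2}-1)+\hyperparam \geq c_1\max\{1,\hyperparam\}$ for a universal $c_1>0$. \bidbase sorts by ascending bid count and thus places the low block first, opposite to $\ordering_\numrev^\ast$, giving a gap $(w_{\text{high}}-w_{\text{low}})(S_1-S_2)=\Omega(\max\{1,\hyperparam\}\numpapers/\log_2^2 \numpapers)$. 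For \randbase, by linearity of expectation $\Exs[\text{gain}]=\tfrac{1}{\numpapers}\bigl(\sum_\paperindex w_\paperindex\bigr)(S_1+S_2)=\tfrac12(w_{\text{high}}+w_{\text{low}})(S_1+S_2)$, so subtracting from the optimum $w_{\text{high}}S_1+w_{\text{low}}S_2$ yields a gap of $\tfrac12(w_{\text{high}}-w_{\text{low}})(S_1-S_2)$, again $\Omega(\max\{1,\hyperparam\}\numpapers/\log_2^2 \numpapers)$. The theorem follows by taking $\constant$ to be the minimum of the three constants produced above; the only subtlety is the uniform estimate of $S_1-S_2$, which is the one genuine analytic step of the argument.
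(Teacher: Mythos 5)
Your proposal is essentially correct, and for parts (a) and (b) it follows the same route as the paper: reduce the final-reviewer problem to maximizing $\sum_j w_j/\log_2(\ordering_\numrev(j)+1)$, build a two-block instance in which the baseline inverts the optimal block order (nearly-equal similarities but unequal bid counts against \simbase; zero-similarity/zero-bid papers versus similarity-one/one-bid papers against \bidbase), and then reduce everything to the single analytic estimate $S_1-S_2\geq c_0\numpapers/\log_2^2\numpapers$, which is exactly the paper's Lemma on $\sum_{j\le \numpapers/2}(\bidfunction^{\ordering}(j)-\bidfunction^{\ordering}(j+\numpapers/2))$. Where you genuinely diverge is part (c): you compute the expected gain of \randbase exactly, using that under a uniform permutation each paper sits in each position with probability $1/\numpapers$, so the gap is $\tfrac12(w_{\text{high}}-w_{\text{low}})(S_1-S_2)$ in closed form. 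The paper instead runs a combinatorial argument: it lower-bounds by $1/6$ the probability that a random permutation misplaces at least $\numpapers/4$ of the high-weight papers (via binomial symmetry and Vandermonde's identity) and then bounds the conditional loss, plus a separate case analysis for $\numpapers$ not divisible by four. Your exact-expectation computation is simpler, tighter in spirit, and avoids those lemmas entirely; the paper's route yields a statement about a constant-probability bad event, which is slightly more information but is not needed for the theorem. Three small caveats you should repair in a full write-up: (i) the crux inequality $S_1-S_2=\Omega(\numpapers/\log_2^2\numpapers)$ is asserted with only a sketch --- it is true and your integral-comparison-plus-small-cases plan works (the paper proves it by keeping the first $\lfloor\numpapers/4\rfloor$ summands and bounding each below), but as written it is the one unproven step; (ii) odd $\numpapers$ needs explicit bookkeeping since your two blocks then have unequal sizes and the clean $(w_L-w_S)(S_1-S_2)$ identity no longer holds verbatim (the paper handles this by appending a zero-similarity, one-bid dummy paper); (iii) in part (a) you should not literally send $B\to\infty$, since the bid count must be realizable by finitely many prior reviewers --- but this is cosmetic, as $B=1$ with $\epsilon$ of order $1/(1+\hyperparam)$ already gives the weight gap of $1/2$, which is precisely what the paper's construction does.
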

Theorems~\ref{prop:local} and~\ref{prop:localworst} in tandem show that \super not only is locally optimal but can outperform currently popular algorithms by a wide margin.

\subsection{Global Optimality Under a Community Model}
\label{sec:global_opt_model}
We now transition to consider the global performance of the algorithms. Given our focus on the application of peer review, we are motivated to give guarantees on the performance of \super for similarity matrix classes that would be encountered in a real conference. 

A common characteristic of networks is community structure~\citep{newman2004finding,
porter2009communities}, where nodes can be grouped into clusters and links between groups are not as common. This phenomena has been documented in social and biological networks among others~\citep{girvan2002community}. Pertinent to this work, empirical investigations have revealed community structures in scientific collaboration networks~\citep{newman2001structure}. Given this close connection, and the fact that scientific research is highly specialized, it is intuitive that communities exist in major conferences pertaining to different subtopics.

We explore the possible existence of such structure in the ICLR 2018 similarity matrix that was reconstructed by~\citet{xu2018strategyproof} and is of size $\numrev=2435$ and $\numpapers=935$. Recall that the ICLR similarity matrix is of size $(2435 \times 935)$. To begin, we investigate the spectral properties of the similarity matrix from ICLR 2018, and in particular, whether it is low rank. We plot the singular values of the similarity matrix in Figure~\ref{fig:iclr_analysis_b}, where the (heuristic) elbow method suggests a low rank ($\approx 10$). In Figure~\ref{fig:iclr_analysis_c} we plot the entries of the similarity matrix after permuting its rows and columns according to the spectral co-clustering algorithm~\citep{dhillon2001co}.
The result suggests the ICLR 2018 similarity matrix exhibits some characteristics of a noisy block diagonal structure.
\begin{figure}[ht]
    \centering
    \subfloat[][]{\includegraphics[width=.35\linewidth]{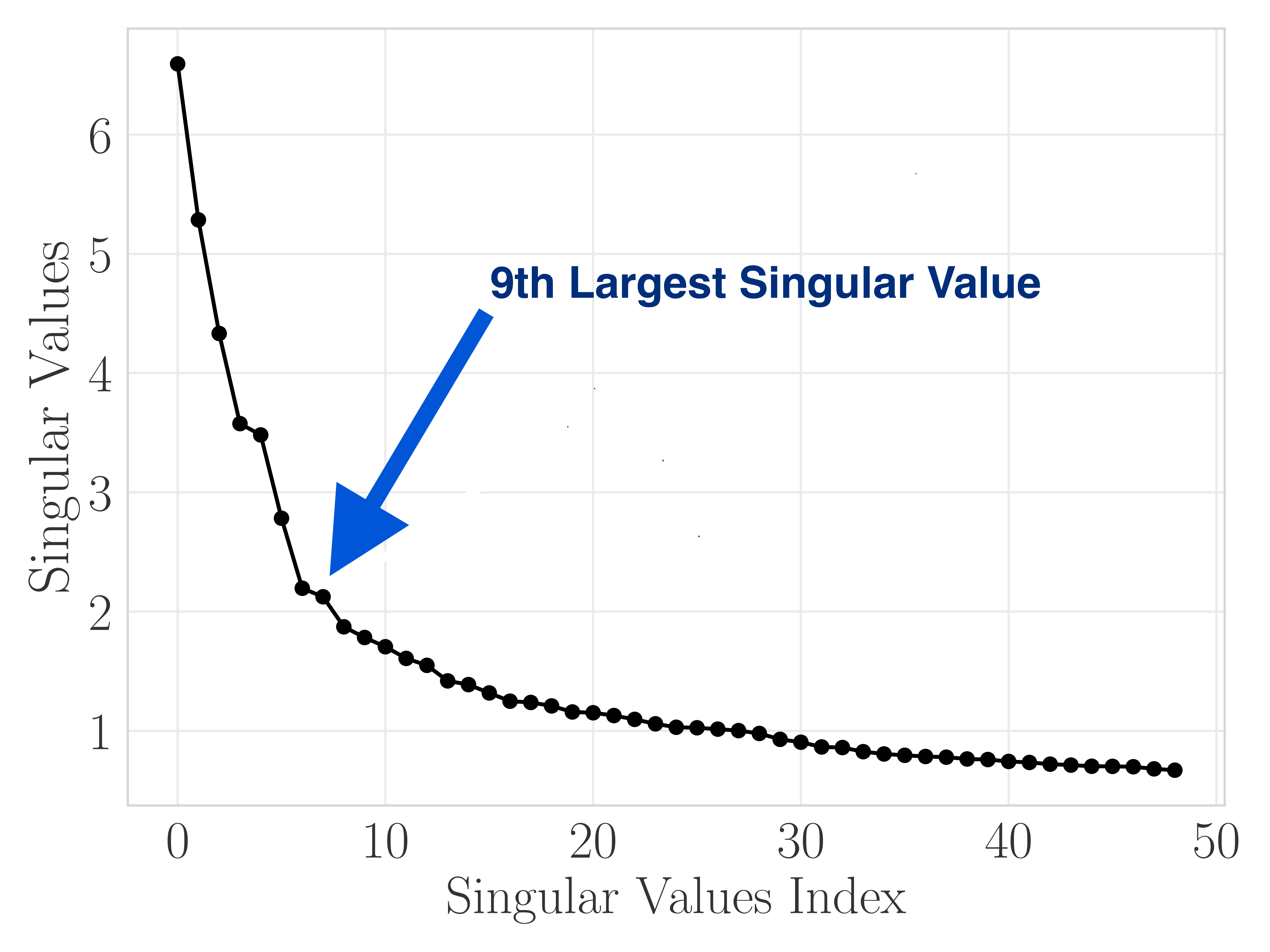}\label{fig:iclr_analysis_b}}\hfill
    \subfloat[][]{\includegraphics[width=.6\linewidth]{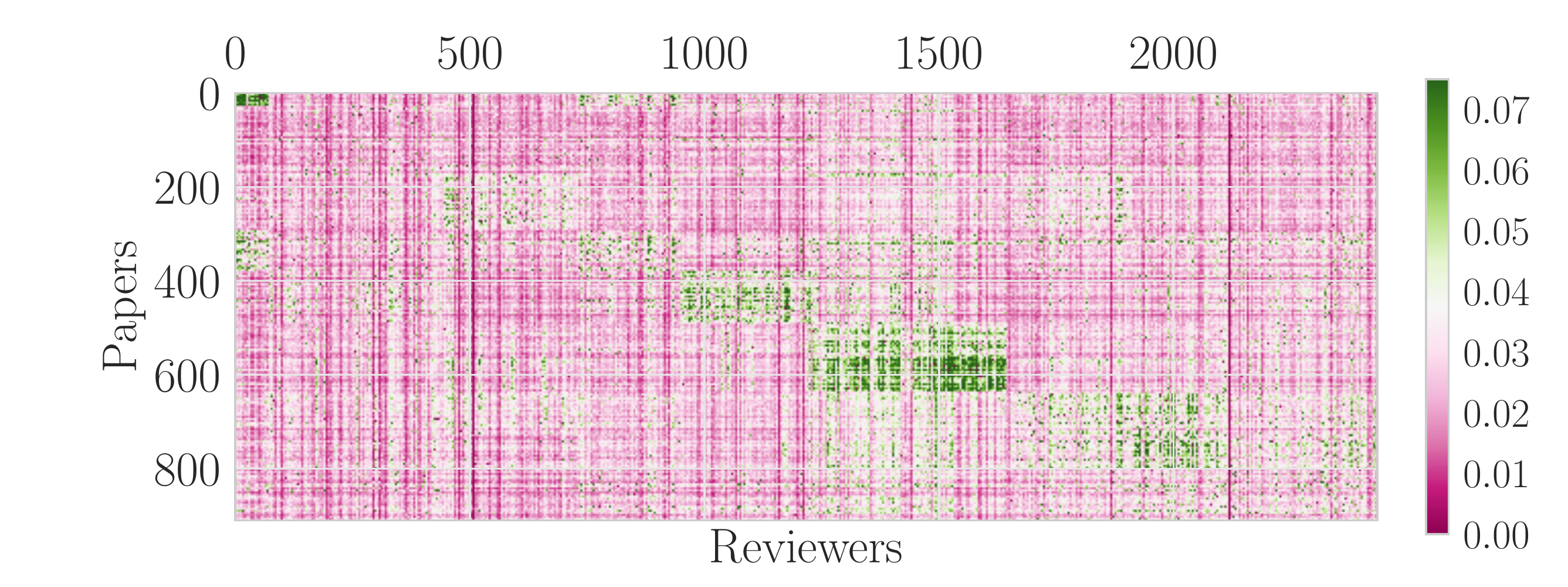}\label{fig:iclr_analysis_c}}\hfill
    \caption{\small (a) The 50 singular values (excluding the maximum singular value) of ICLR 2018 similarity matrix, which shows low-rank structure. (b) Similarity scores of the permuted ICLR matrix as a heatmap indicating the block diagonal structure.}
    \label{fig:iclr_analysis}
\end{figure}

In what follows, we now perform an associated theoretical analysis of the algorithms under such community structures of the similarity matrix. We begin by proposing a simple model which we call the `noiseless community model'.

\paragraph*{Noiseless community model.}
Informally, the noiseless community model we study is a set of similarity matrices that up to a permutation of rows and columns belong to a subclass of block diagonal matrices. 
Formally, let $\mathbf{0}_{\blocksize\times \blocksize}$ and $\mathbf{1}_{\blocksize\times \blocksize}$ denote $\blocksize \times \blocksize$ matrices of all zeros and  all ones respectively. Define an $\numblocks\blocksize \times \numblocks\blocksize$ block diagonal matrix $\blockm$ as:
\begin{equation*}
\blockm = \begin{bmatrix}  \mathbf{1}_{\blocksize\times \blocksize} & \mathbf{0}_{\blocksize\times \blocksize} & \cdots & \mathbf{0}_{\blocksize\times \blocksize}\\\mathbf{0}_{\blocksize\times \blocksize} & \mathbf{1}_{\blocksize\times \blocksize}  &\cdots & \mathbf{0}_{\blocksize\times \blocksize} \\ \vdots  & \vdots &\ddots&\vdots  \\ \mathbf{0}_{\blocksize\times \blocksize} & \mathbf{0}_{\blocksize\times \blocksize} & \cdots & \mathbf{1}_{\blocksize\times \blocksize}  \end{bmatrix}.
\end{equation*}
Finally, denote by $\permclass_{\numblocks\blocksize\times \numblocks\blocksize}$ the set of all $\numblocks\blocksize \times \numblocks\blocksize$ permutation matrices. Recall that a permutation matrix is a matrix obtained by permuting the rows of an identity matrix. Also recall that left multiplying a matrix by a permutation matrix permutes the rows of the matrix and right multiplying a matrix by a permutation matrix permutes the columns of the matrix. With this background, the noiseless community model is defined as the following set of similarity matrices for $\numblocks\geq 2$ and $\blocksize\geq 2$:
\begin{equation}
\text{Noiseless Community Model} = \{\similarity \in \mathbb{R}^{\numblocks \blocksize \times \numblocks\blocksize}: \similarity = 
\permleft (\simscalar{\blockm})\permright,~~~ \simscalar \in [0.01, 1],~~~ \permleft,\permright \in \permclass_{\numblocks\blocksize\times \numblocks\blocksize}\}.
\label{eq:noiseless_model}
\end{equation}
 The number of reviewers is given by $\numrev=\numblocks\blocksize$ and the number of papers is by $\numpapers=\numblocks\blocksize$.
In words, this is the set of all similarity matrices obtained via a permutation of the rows and columns of the block matrix $\blockm$. 

We begin our theoretical results for this section by showing that under the noiseless community formulation, both \super and \simbase are optimal, whereas \bidbase and \randbase fare poorly. Recall that $\numpapers=\numrev=\numblocks\blocksize$ in the noiseless community model.
\begin{theorem}\label{prop:diagonal}
Consider a model with a paper-side gain function  $\gainfunction(\numbids_{\paperindex})=\sqrt{\numbids_{\paperindex}}$,  the reviewer-side gain function  $\gainfunctionrev(\ordering_{\revindex}(\paperindex), \similarity_{\revindex, \paperindex}) = (2^{\similarity_{\revindex, \paperindex}}-1)/\log_2(\ordering_{\revindex}(\paperindex) + 1)$, and the bidding function $\bidfunction(\ordering_{\revindex}(\paperindex), \similarity_{\revindex, \paperindex}) = \mathds{1}\{\ordering_{\revindex}(\paperindex)=1\}\mathds{1}\{\similarity_{\revindex, \paperindex} > \simscalar/2\}$. Then, under the noiseless community model from~\eqref{eq:noiseless_model}, for all $\numblocks\geq 2$, $\blocksize\geq 2$ and $\hyperparam\geq 0$:\\
\indent (a) \super with zero heuristic is optimal; \\
\indent (b) \simbase is optimal.\\
In contrast, there exists a constant $\constant > 0$ such that for every $\numblocks\geq 2$, $\blocksize\geq 2$ and $\hyperparam\geq 0$:\\
\indent (c) \bidbase is suboptimal by an additive factor of at least $\constant\hyperparam \numpapers/\log_2^2(\numpapers)$; \\
\indent (d) \randbase is suboptimal by an additive factor of at least $\constant\numpapers$.
\end{theorem}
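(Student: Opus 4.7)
My plan is to first establish two structural observations about the model, then handle the four claims in turn. Under the noiseless community model the bidding function $\bidfunction(\ordering_{\revindex}(\paperindex),\similarity_{\revindex,\paperindex})=\indicator{\ordering_{\revindex}(\paperindex)=1}\indicator{\similarity_{\revindex,\paperindex}>\simscalar/2}$ equals $1$ exactly when the paper at position $1$ lies in the same community as the reviewer, and is $0$ otherwise. Consequently each reviewer contributes at most one bid, each paper receives at most $\blocksize$ bids, and the total number of bids is at most $\numrev=\numpapers$. By concavity of $\gainfunction=\sqrt{\,\cdot\,}$, the paper-side maximum is $\numpapers$, attained if and only if every paper receives exactly one bid. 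The reviewer-side gain $\gainfunctionrev$ vanishes on out-of-community papers, so the per-reviewer maximum equals $(2^\simscalar-1)\sum_{t=1}^\blocksize 1/\log_2(t+1)$, attained if and only if the $\blocksize$ in-community papers occupy positions $1,\ldots,\blocksize$. I will refer to these as the \emph{structural optima}.

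For claim (a), by Corollary~\ref{cor:local} \super with zero heuristic maximizes the expected immediate gain per reviewer. Given the structural observations, this one-step optimum is achieved by placing the minimum-bid in-community paper at position~$1$ and the remaining in-community papers anywhere in positions $2,\ldots,\blocksize$. Inductively across the reviewers from any single community, this rule produces a perfectly balanced allocation, so every paper finishes with exactly one bid; both structural optima are simultaneously achieved at every step, and \super attains the global optimum. For claim (b), \simbase sorts by similarity (placing in-community papers first) and breaks ties by fewer bids (placing the minimum-bid in-community paper at position~$1$), so it realizes the same orderings as \super up to irrelevant tie-breaking and is likewise optimal.

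For claim (c) I would exhibit the arrival order ``all $\blocksize$ reviewers of $C_1$, then all of $C_2$, \ldots, then all of $C_\numblocks$'' and analyze \bidbase. When the $\kindex$th reviewer of community $C_\ell$ arrives, $\kindex-1$ in-community papers already carry one bid. Since \bidbase sorts by increasing bids and breaks ties by similarity, the zero-bid in-community papers fill positions $1,\ldots,\blocksize-\kindex+1$, the zero-bid out-of-community papers follow, and the one-bid in-community papers land in positions $(\numblocks-\ell+1)\blocksize-\kindex+2,\ldots,(\numblocks-\ell+1)\blocksize$. Comparing with the structural optimum, the per-reviewer reviewer-side loss is at least
\begin{equation*}
(2^{\simscalar}-1)(\kindex-1)\Bigl[\tfrac{1}{\log_2(\blocksize+1)}-\tfrac{1}{\log_2((\numblocks-\ell+1)\blocksize+1)}\Bigr].
\end{equation*}
Summing over $\ell\leq\numblocks/2$ and $\kindex\in[\blocksize]$, and lower-bounding the logarithmic gap by $\Omega(1/\log_2^2(\numpapers))$ using $\simscalar\geq 0.01$, yields a cumulative reviewer-side loss of $\Omega(\numblocks\blocksize^2/\log_2^2(\numpapers))=\Omega(\numpapers\blocksize/\log_2^2(\numpapers))$, which is $\Omega(\numpapers/\log_2^2(\numpapers))$ since $\blocksize\geq 2$. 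Because \bidbase's position-$1$ choice is always a minimum-bid in-community paper, the round-robin is preserved and the paper-side gain remains $\numpapers$, so the entire loss is in $\hyperparam\gain_r$, giving the claimed $\Omega(\hyperparam\numpapers/\log_2^2(\numpapers))$ suboptimality. The small cases $\numblocks\in\{2,3\}$ admit a direct calculation that gives an even larger bound.

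For claim (d), under \randbase each reviewer shows a uniformly random permutation, so the position-$1$ paper is uniform on $\paperset$. Hence paper $\paperindex$ can only be bid on by an in-community reviewer who happens to see $\paperindex$ first, yielding $\numbids_\paperindex\sim\mathrm{Binomial}(\blocksize,1/\numpapers)$ with mean $1/\numblocks$. Jensen's inequality gives $\Exs[\sqrt{\numbids_\paperindex}]\leq 1/\sqrt{\numblocks}$, so the expected paper-side gain of \randbase is at most $\numpapers/\sqrt{\numblocks}$. The optimum has paper-side gain $\numpapers$, and since $1/\log_2(t+1)$ is decreasing, its per-reviewer reviewer-side gain (in-community at the top $\blocksize$ positions) weakly dominates \randbase's expected per-reviewer reviewer-side gain (in-community uniformly spread). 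Thus the total suboptimality is at least $\numpapers(1-1/\sqrt{\numblocks})\geq\numpapers(1-1/\sqrt{2})=\Omega(\numpapers)$. The main obstacle is the bookkeeping in part (c): identifying a stressful arrival order and extracting a clean $1/\log_2^2(\numpapers)$ dependence uniformly in $\numblocks$ and $\blocksize$.
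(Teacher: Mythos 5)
Your structural observations and your handling of parts (a), (b), and (d) match the paper's proof in substance: the paper likewise characterizes the optimum as ``minimum-bid in-block paper on top, all in-block papers before out-of-block papers,'' shows \super with zero heuristic realizes this via its one-step optimality (the paper's Lemma~\ref{lemma:noiseless_diag_super_policy}), and observes \simbase's tie-breaking reproduces the same orderings. For (d) you are actually simpler: the paper bounds $\Exs[\sqrt{\mathrm{Bin}(\blocksize,1/(\numblocks\blocksize))}]$ with a bespoke inequality (Lemma~\ref{lemma:binom}), whereas Jensen suffices here because the mean is $1/\numblocks\le 1/2$; your constant $1-1/\sqrt{2}$ is slightly weaker than the paper's $1/2$, but the $\Omega(\numpapers)$ conclusion is identical. (Jensen would not suffice in the noisy-model analogue, where the relevant binomial has mean $1$ --- that is why the paper needs the sharper lemma --- but for \randbase it is fine.)

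The one substantive deviation is (c). You fix a particular arrival order (all reviewers of community $1$, then community $2$, and so on) and lower-bound \bidbase's reviewer-side loss on that instance. The paper instead proves the gap for \emph{every} similarity matrix in the noiseless community model, i.e., for every arrival order: it considers the first $\lfloor\numblocks\blocksize/4\rfloor$ reviewers who arrive after some paper in their own block already has a bid (all but the first reviewer of each block qualify, so this set is large for any order) and bounds, by a counting argument valid for any order, how far the one-bid in-block papers are pushed behind the zero-bid out-of-block papers. Since parts (a)--(b) are statements about every matrix in the class and the theorem does not carry the ``in the worst case'' qualifier of Theorem~\ref{prop:localworst}, parts (c)--(d) are also for-all-instances claims; as written, your argument certifies only one instance and hence a weaker statement. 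Your per-reviewer calculation itself is sound --- the bracket $1/\log_2(\blocksize+1)-1/\log_2((\numblocks-\ell+1)\blocksize+1)$ is a valid per-displaced-paper bound because $x\mapsto 1/\log_2(a+x)-1/\log_2(b+x)$ is decreasing for $a<b$, a monotonicity step you should state --- and your observation that \bidbase still attains the full paper-side gain $\numpapers$ matches the paper, so the loss is purely reviewer-side; but to prove the theorem as stated you would need to run essentially this displacement count uniformly over arrival orders, which is exactly what the paper's $\mathcal{F}$ device accomplishes.
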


Although \simbase is optimal in the noiseless community model, this optimality turns out to be quite brittle. As we show below, even an infinitesimally small amount of noise makes \simbase considerably suboptimal. In contrast, \super is robust enough and suffers by only a small amount.

\paragraph*{Noisy community model.}
More formally, we first define a `noisy community model'. Under this model, we assume that the similarity matrix is generated by first selecting any similarity matrix $\similarity'$ from the noiseless community model defined  in~\eqref{eq:noiseless_model}, and then adding noise to its entries as:
\newcommand{\unifnoise}{\xi}
\begin{equation}
\similarity_{\revindex, \paperindex} = 
\begin{cases}
\simscalar -\nu_{\revindex, \paperindex}&  \text{if} \ \similarity_{\revindex, \paperindex}' = \simscalar\\
\nu_{\revindex, \paperindex}& \text{if} \ \similarity_{\revindex, \paperindex}'=0,
\end{cases}
\label{eq:noisy_model}
\end{equation}
where $\nu_{\revindex, \paperindex}$ is drawn independently and uniformly from $(0, \unifnoise)$ for each reviewer-paper pair, for some small value $\unifnoise$ to be defined subsequently. 

The next result shows that even under an arbitrarily small perturbation $\unifnoise$ from a noiseless community model, the baselines become significantly suboptimal. In contrast, \super is robust to the noise and is near-optimal. Recall that $\numpapers=\numrev=\numblocks\blocksize$ in the noisy community model.

\begin{theorem}\label{thm:diagonal}
Consider the gain and bidding functions from Theorem~\ref{prop:diagonal} and the noisy community model given in~\eqref{eq:noisy_model} with any noise bound satisfying $\unifnoise \leq (1+\hyperparam)^{-1} e^{-e\numblocks\blocksize}$. Then, for all $\numblocks\geq 2$, $\blocksize\geq 2$, and $\hyperparam\geq 0$:  \\
 \indent (a) \super with zero heuristic is within at least an additive factor of $0.0001$ of the optimal. \\
Moreover, there exists a constant $\constant>0$ such that for every $\numblocks\geq 2$, $\blocksize\geq 2$, and $\hyperparam\geq 0$, with respect to \super with zero heuristic:  \\
 \indent (b) \simbase is suboptimal by an additive factor of at least $\constant\numpapers$; \\
 \indent (c) \bidbase is suboptimal by an additive factor of at least $\constant\hyperparam\numpapers/\log_2^2(\numpapers)$; \\
 \indent (d) \randbase is suboptimal by an additive factor of at least $\constant\numpapers$.
\end{theorem}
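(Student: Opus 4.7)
The plan is to prove all four parts by first upper-bounding the paper-side gain by $n$ (via $\sum_j \numbids_j \leq \numrev$ since the bidding function is an indicator on position~1, combined with concavity of $\sqrt{\cdot}$), and then arguing about each algorithm's (i) paper-side gain via its behavior at position~1 and (ii) reviewer-side gain via perturbation arguments that exploit the smallness of $\unifnoise$. The essential structural observation shared across all parts is that, because $\bidfunction = \indicator{\pi_\revindex(\paperindex)=1}\cdot \indicator{\similarity_{\revindex,\paperindex}>\simscalar/2}$, a reviewer bids iff the paper at position~1 is in her community; hence which paper sits at position~1 is the only lever on paper-side gain.

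For part~(a), I would first show that \super's LP places an in-community paper with the smallest $\numbids_{\revindex-1,\paperindex}$ at position~1 for every arriving reviewer. The key estimate is that the paper-side weight gap between a $0$-bid and a $\geq 1$-bid in-community paper at position~$1$ is at least $2-\sqrt{2}\approx 0.586$, while all noise-driven reviewer-side perturbations are bounded by $\lambda\cdot 2\ln 2\cdot \unifnoise \leq 2\ln 2 \cdot e^{-e\numrev}$ by the $\unifnoise$ bound, so the paper-side term always dominates; moreover in-community papers beat out-of-community ones at position~1 by the similarity gap $\simscalar/2 \geq 0.005$. By a simple induction over the $\blocksize$ reviewers in each community, each in-community paper ends with exactly one bid, giving $\gain_p^{\super}=\numrev$. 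Next, I bound the reviewer-side loss per reviewer. Because positions $\geq 2$ in \super are sorted in decreasing similarity by the LP (only reviewer-side weight remains), the only discrepancy with the reviewer-side-optimal ordering occurs at position~1: if \super selects $j_{(r)}^{(\revindex)}$ (the $r$-th smallest-noise in-community paper) instead of $j_{(1)}^{(\revindex)}$, a telescoping identity $\sum_{l=1}^{r-1}\big(\tfrac{1}{\log_2(l+1)}-\tfrac{1}{\log_2(l+2)}\big)=1-\tfrac{1}{\log_2(r+1)}$ reduces the per-reviewer gap to $\sum_{l=1}^{r-1}\big(2^{\simscalar-\nu_{(l)}}-2^{\simscalar-\nu_{(r)}}\big)\big(\tfrac{1}{\log_2(l+1)}-\tfrac{1}{\log_2(l+2)}\big)\leq 2\ln 2\cdot \unifnoise$. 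Since the first arriving reviewer from each community trivially satisfies $j_0=j_{(1)}$ (all in-community papers have $0$ bids, tie broken by smallest $\nu$), only $\numrev-\numblocks$ reviewers contribute loss, so $\text{opt}-\super\leq 2\ln 2\cdot(\numrev-\numblocks)\cdot \lambda\unifnoise\leq 2\ln 2\cdot(\numrev-\numblocks)e^{-e\numrev}$, and numerical verification for $\numrev\geq 4$ gives this $<0.0001$.

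For parts~(b) and~(d), I compare against \super from part~(a). For \simbase, the position-1 paper for each reviewer is the in-community argmin over i.i.d.\ uniform noises, which is uniform on the $\blocksize$ in-community papers and independent across reviewers; this is a balls-into-bins process with $\blocksize$ balls and $\blocksize$ bins, and by Jensen with strict concavity, the per-community expected paper-side gain is $\mathbb{E}[\sum_\paperindex\sqrt{\numbids_\paperindex}]\leq \alpha \blocksize$ for some $\alpha<1$ (e.g., $\alpha=\tfrac{1}{2}+\tfrac{\sqrt{2}}{4}<1$ at $\blocksize=2$, and the Poisson($1$) limit $\mathbb{E}[\sqrt{X}]\approx 0.773$ for larger $\blocksize$). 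Since \simbase attains the reviewer-side maximum $R^{\max}$ (sorting by similarity), $\text{opt}-\simbase \geq \super-\simbase \geq (1-\alpha)\numrev - O(\numrev e^{-e\numrev})=\Omega(\numrev)$. For \randbase, each reviewer bids with probability $\blocksize/\numrev=1/\numblocks$, so each paper's bid count has mean $1/\numblocks$ and $\mathbb{E}[\sqrt{\numbids_\paperindex}]\leq 1/\sqrt{\numblocks}$, giving $\mathbb{E}[\gain_p^{\randbase}]\leq \numrev/\sqrt{\numblocks}\leq \numrev/\sqrt{2}$; also \randbase's expected reviewer-side gain is at most $R^{\max}$ and \super's is at least $R^{\max}-O(\numrev\unifnoise)$, so $\text{opt}-\randbase\geq (1-1/\sqrt{2})\numrev - O(\numrev e^{-e\numrev})=\Omega(\numrev)$.

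For part~(c), a nearly identical induction to part~(a) shows \bidbase also achieves $\gain_p=\numrev$ (at each step the fewest-bid papers come first, and within the $0$-bid set in-community beats out-of-community on the similarity tie-break). The suboptimality thus comes entirely from reviewer-side gain: when the $\extravar$-th reviewer from community $C$ arrives, the $\extravar-1$ already-bid in-community papers (with similarity $\approx \simscalar$) are pushed past every $0$-bid out-of-community paper, landing at positions $\Omega(\numrev)$, whereas the reviewer-side-optimal places them in positions $\leq \blocksize$. Each such displacement loses at least $(2^\simscalar-1)\big(\tfrac{1}{\log_2(\blocksize+1)}-\tfrac{1}{\log_2(\numrev+1)}\big)$. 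Summing $\sum_{\extravar=2}^{\blocksize}(\extravar-1)$ within a community and $\numblocks$ communities yields total reviewer-side loss $\Omega(\numrev\blocksize/\log_2\numrev)\geq \Omega(\numrev/\log_2^2\numrev)$, and multiplying by $\lambda$ gives the claimed $c\lambda\numrev/\log_2^2\numrev$ bound. The main obstacle across the proof is part~(a): tightening the reviewer-side gap bound enough to clear $0.0001$ for the worst-case small instance $\numrev=4$ requires using both the telescoping cancellation and the fact that the first reviewer per community incurs zero loss.
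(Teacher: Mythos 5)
Your parts (a), (b), and (d) follow essentially the paper's route: you show \super with zero heuristic places a zero-bid in-community paper at position~1 for every reviewer (so its paper-side gain is exactly $\numblocks\blocksize$, which is also the paper-side optimum), and you control its reviewer-side loss by a per-reviewer perturbation bound of order $\hyperparam\xi\leq e^{-e\numblocks\blocksize}$, which is the same mechanism as the paper's Lemma on comparing \super's ordering with the similarity-sorted one; for \randbase your Jensen bound $\mathbb{E}[\sqrt{\numbids_\paperindex}]\leq 1/\sqrt{\numblocks}\leq 1/\sqrt{2}$ is a simpler substitute for the paper's binomial lemma and suffices. One incompleteness in (b): for \simbase the bid count of a paper is $\mathrm{Bin}(\blocksize,1/\blocksize)$ with mean $1$, so plain Jensen gives only $\mathbb{E}[\sqrt{\numbids_\paperindex}]\leq 1$; your claimed uniform $\alpha<1$ (quoting $\blocksize=2$ and a Poisson limit) needs an actual argument valid for every $\blocksize\geq 2$, e.g.\ $\mathbb{E}[\sqrt{X}]\leq\sqrt{\mathbb{E}[X]\,\mathbb{P}(X\geq 1)}\leq\sqrt{1-(1-1/\blocksize)^{\blocksize}}\leq\sqrt{3/4}$, or the paper's explicit bound $(2+\sqrt{2})/4$. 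That is a one-line fix, but as written the step is asserted rather than proved.

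The genuine gap is in part (c). You argue that when the $k$-th reviewer of a community arrives, the $k-1$ already-bid in-community papers are ``pushed past every $0$-bid out-of-community paper, landing at positions $\Omega(\numrev)$.'' This is only true if, at that moment, $\Omega(\numrev)$ out-of-community papers still have zero bids, i.e.\ if the reviewer arrives early in the \emph{global} arrival order. The noisy community model allows arbitrary row permutations, so communities are interleaved arbitrarily, and the bound must hold for every such interleaving. For a reviewer arriving late in the process, almost all out-of-community papers already carry one bid; \bidbase then places the in-community one-bid papers ahead of them (ties on bid count are broken by similarity), so the ``displaced'' papers sit within the first $\blocksize+1$ or so positions and contribute essentially no reviewer-side loss --- in the extreme, the last reviewer loses nothing. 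Hence your sum $\numblocks\sum_{k=2}^{\blocksize}(k-1)$ times a fixed $\Omega(1/\log_2^2\numrev)$ displacement overcounts and the claimed total does not follow. The paper closes exactly this hole by restricting to the first $\lfloor\numblocks\blocksize/4\rfloor$ reviewers who see a prior in-community bid (there are $\numblocks(\blocksize-1)\geq\numblocks\blocksize/2$ such reviewers), and showing that before any of them at most $\lfloor\numblocks\blocksize/4\rfloor-1+(\numblocks-1)$ bids can have landed out of community, so $\Omega(\numblocks\blocksize)$ zero-bid out-of-community papers still separate the displaced papers from their positions under \super; this is what produces the $\constant\hyperparam\numpapers/\log_2^2(\numpapers)$ rate. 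Your argument needs this (or an equivalent) restriction and counting step; without it the key displacement estimate fails for general arrival orders.
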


This result thus establishes the global optimality of the proposed \super algorithm under the community model, while in contrast all popular baselines are considerably suboptimal.

\section{Experimental Results}
\label{sec:simulations}
\noindent
We now empirically evaluate \super (with zero and mean heuristics) and compare it with the baselines $\simbase$, $\bidbase$, and $\randbase$ (discussed earlier in Section~\ref{sec:problem_setup}). The experimentation methodology is as follows for any chosen set of model parameters including the gain functions, bidding probability, trade-off parameter, and number of reviewers and papers. Given a fixed similarity matrix, we shuffle the rows of the matrix to randomize the sequence of reviewer arrivals and simulate each of the algorithms. Then, for each algorithm, we record the gain along with the number of papers that end up with bid counts in the intervals $\{0,1,2\}$, $\{3,4,5\}, \{6,7,8\}$, and $\{9+\}$. We repeat this process 20 times for a given similarity matrix if it is fixed and draw a similarity matrix at random for each run if the score structure being simulated is a distribution. 
To evaluate performance, we show the means of the relative gains (additive gains relative to the gain of a baseline) across the runs and include error bars representing the standard error of the mean. Moreover, we present the mean number of papers across the repeated simulations that finish with bid counts in each of the previously given bid count intervals. The code and data to reproduce each of the experiments is available at  {\href{https://github.com/fiezt/Peer-Review-Bidding}{\tt github.com/fiezt/Peer-Review-Bidding}}.

\subsection{ICLR Similarity Matrix}
\label{sec:iclr_simulations}
To begin our experiments, we perform evaluations on a similarity matrix from the ICLR 2018 conference discussed earlier in Section~\ref{sec:theoretical_properties}. 
Recall that the similarity matrix consists of $\numrev=2435$ reviewers and $\numpapers=935$ papers. 
In the following experiments, we run a simulation using a default model configuration, then we explore the impact of changing components of the model, and we finish by exploring the robustness of the algorithm to various real-world complexities.

\paragraph*{Default model configuration.} 
We begin by evaluating a default model configuration that is considered throughout the the remainder of the experiments unless otherwise specified. The model consists of the paper-side gain function $\gainfunction(\numbids_{\paperindex}) = \min\{\numbids_{\paperindex}, 6\}$, the reviewer-side gain function $\gainfunctionrev(\ordering_{\revindex}(\paperindex),\similarity_{\revindex,\paperindex}) =  (2^{\similarity_{\revindex, \paperindex}}-1)/\log_2(\ordering_{\revindex}(\paperindex)+1)$, and the bidding probability model $\bidfunction(\ordering_{\revindex}(\paperindex), \similarity_{\revindex, \paperindex}) = \similarity_{\revindex, \paperindex}/\log_2(\ordering_{\revindex}(\paperindex)+1)$. We remark that the paper-side gain function is a natural choice given that conferences often assign three reviewers to each paper and as such they may seek twice the number of bids per paper. Moreover, recall that for this pair of reviewer-side gain and bidding functions, the efficient routine in Algorithm~\ref{alg:subprocedure_efficient} can be called in place of Algorithm~\ref{alg:subprocedure} in \super to retrieve a paper ordering, which is what we implement. 

\begin{figure*}[t]
    \centering
    \subfloat[][]{\includegraphics[width=0.25\textwidth]{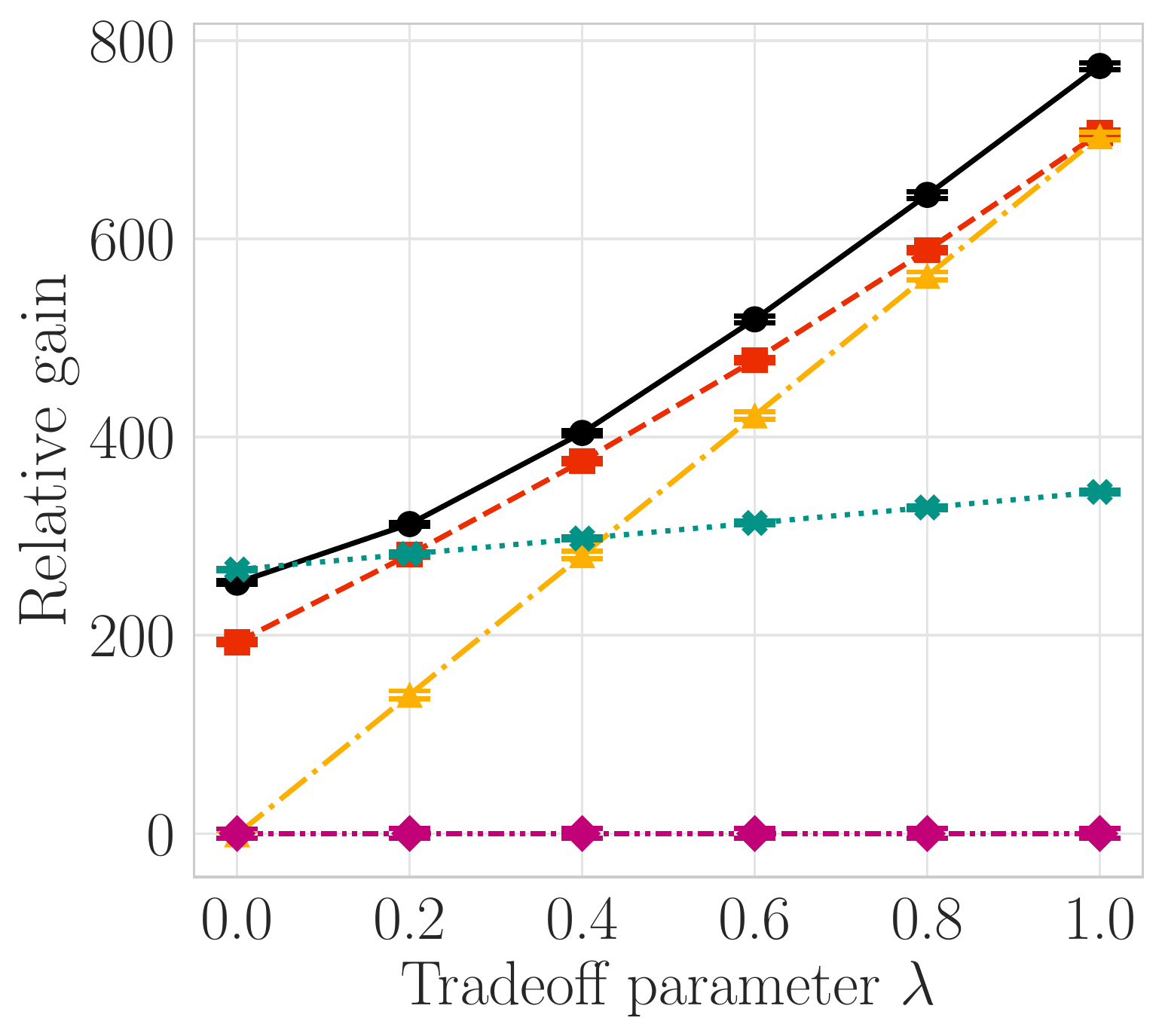}\label{fig:iclr20181a}}\hfill 
    \subfloat[][]{\includegraphics[width=0.25\textwidth]{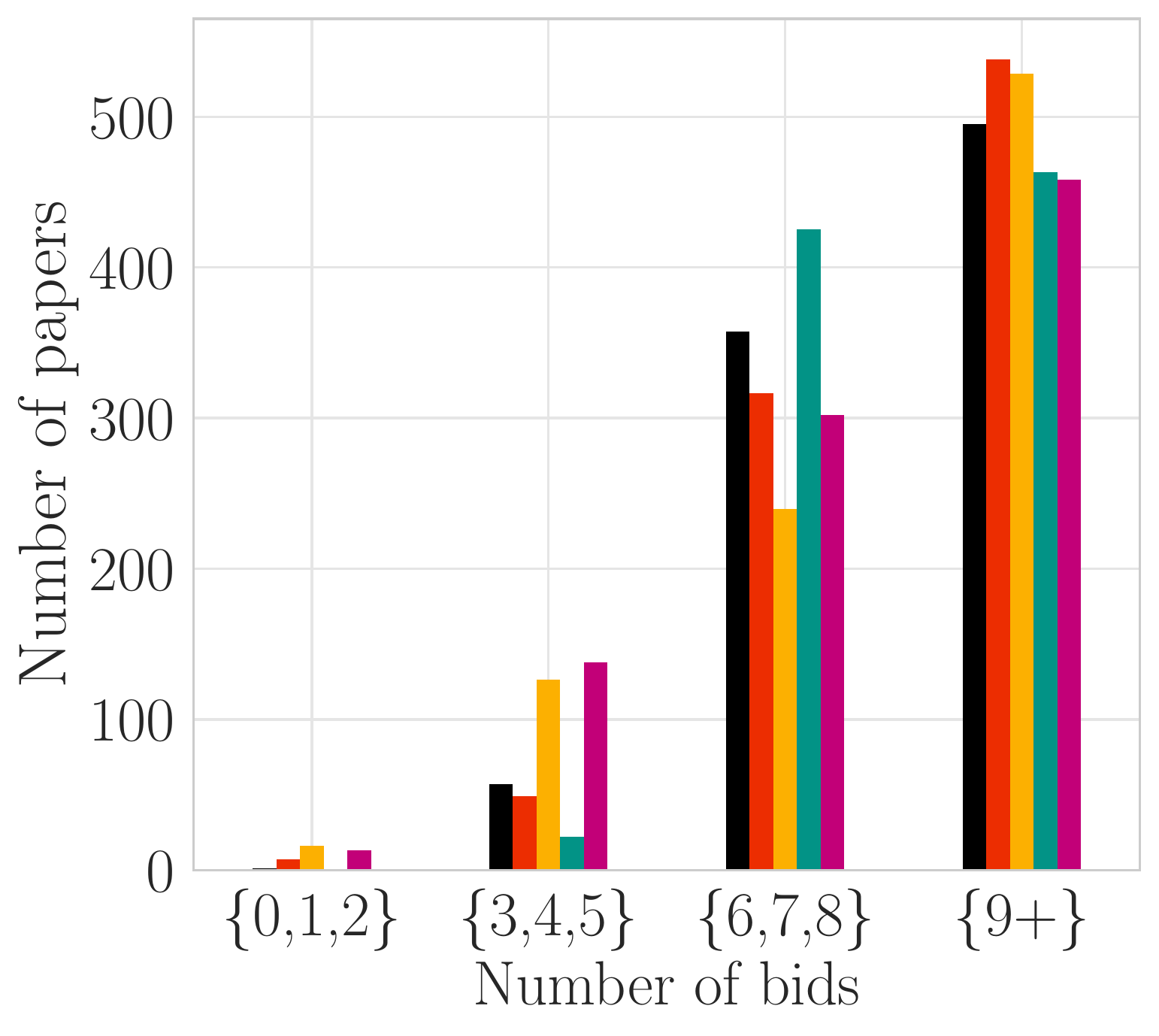}\label{fig:iclr20181b}}\hfill
    \subfloat[][]{\includegraphics[width=0.25\textwidth]{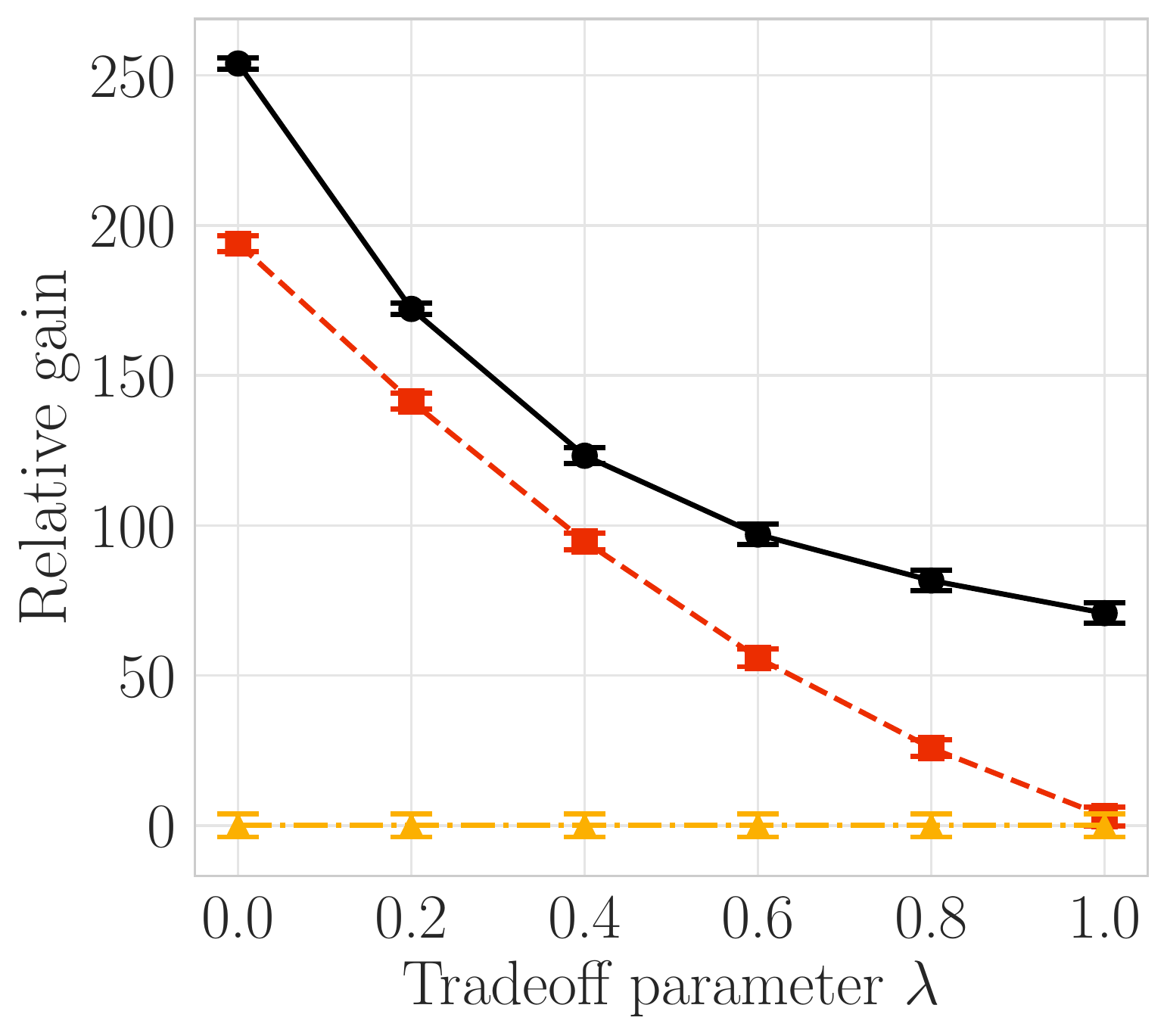}\label{fig:iclr20181c}}\hfill
    \subfloat[][]{\includegraphics[width=0.25\textwidth]{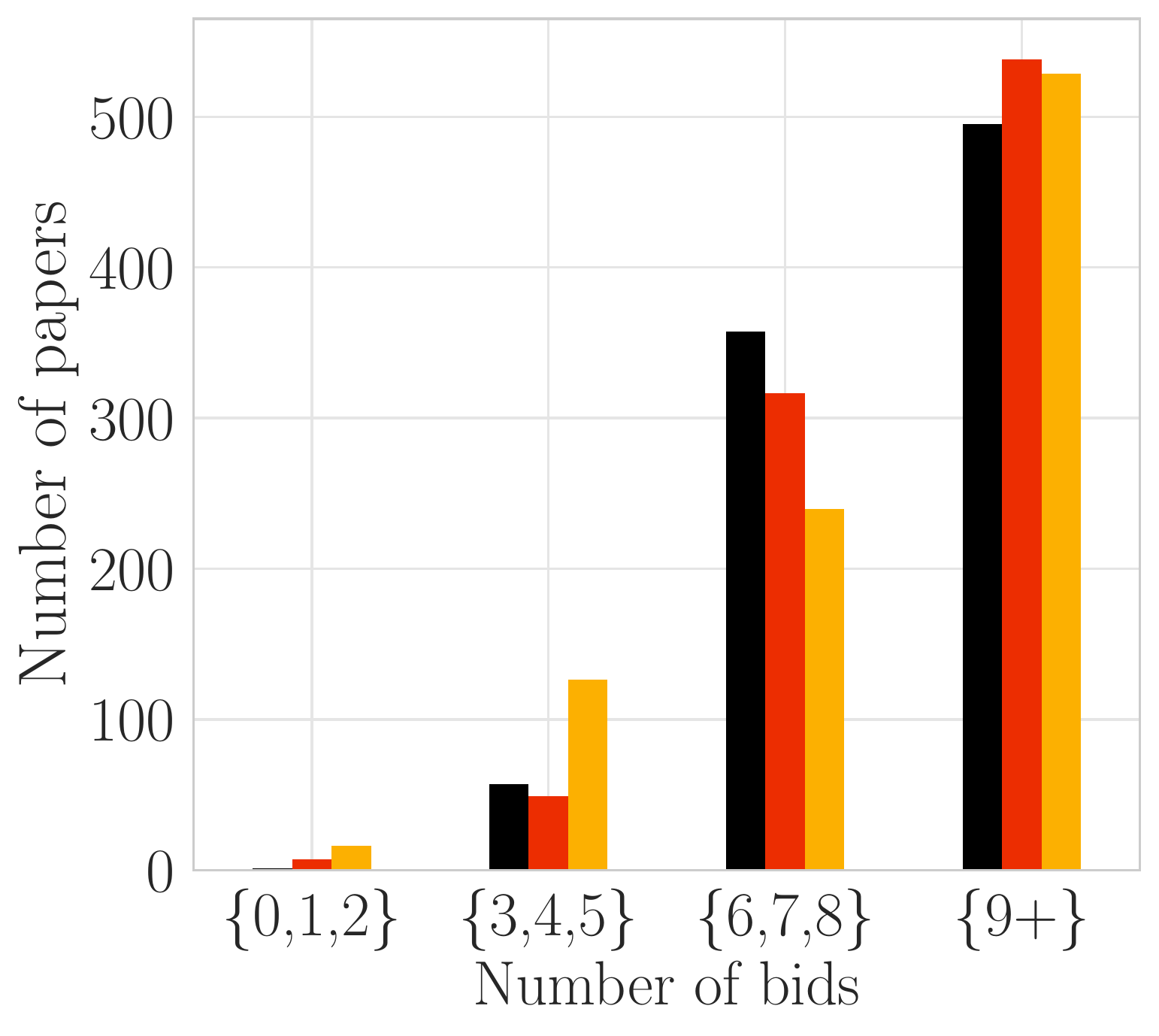}\label{fig:iclr20181d}}\hfill
    \subfloat{\includegraphics[width=.9\textwidth]{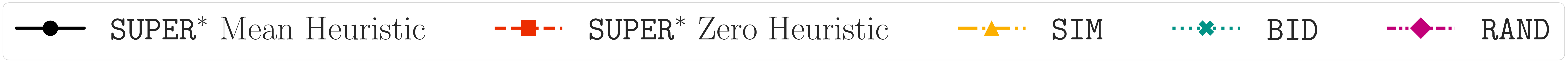}}
    \caption{\small ICLR experiment with the default model configuration. Legend for each bid distribution plot (b, d): within each bid-count interval, the bars
    correspond to the algorithms given in the legend and are presented in an order consistent with the legend itself when read from left to right and (d) only includes  \super with mean heuristic, \super with zero heuristic, and \simbase.}
    \label{fig:iclr20181}
\end{figure*}

The results of the experiment are presented in Figure~\ref{fig:iclr20181}. In Figures~\ref{fig:iclr20181a}--\ref{fig:iclr20181b} we compare \super to each baseline and in Figures~\ref{fig:iclr20181c}--\ref{fig:iclr20181d} we zoom in and only show the results for \super and \simbase. 
In terms of the gain results shown in Figures~\ref{fig:iclr20181a} and \ref{fig:iclr20181c}, each version of \super outperforms the baseline algorithms, while \bidbase outperforms \simbase when minimal weight $\hyperparam$ is given to the reviewer-side gain and vice versa when a significant amount of weight $\hyperparam$ is given to the reviewer-side gain. In Figures~\ref{fig:iclr20181b} and~\ref{fig:iclr20181d}, the distribution of the bid counts obtained for the algorithms are shown with $\hyperparam=0.8$, which was chosen since this parameter choice gave nearly equal paper-side and weighted reviewer-side gain for \randbase. While \bidbase has a similar number of papers with fewer than the minimum number of desired bids as each version of \super, the gain demonstrates why it is not a generally adopted method. As a result of showing papers of limited relevance early in the paper orderings to elicit bids on papers with few bids, the overall gain is significantly smaller than \simbase and \super since the reviewer-side gain is worse. The distributions also illustrate that both versions of \super end the bidding process with approximately a 60\% reduction of the number of papers with fewer than the desired minimum number of bids compared to \simbase and \randbase.

\begin{figure*}[t]
    \subfloat[][]{\includegraphics[width=0.24\textwidth]{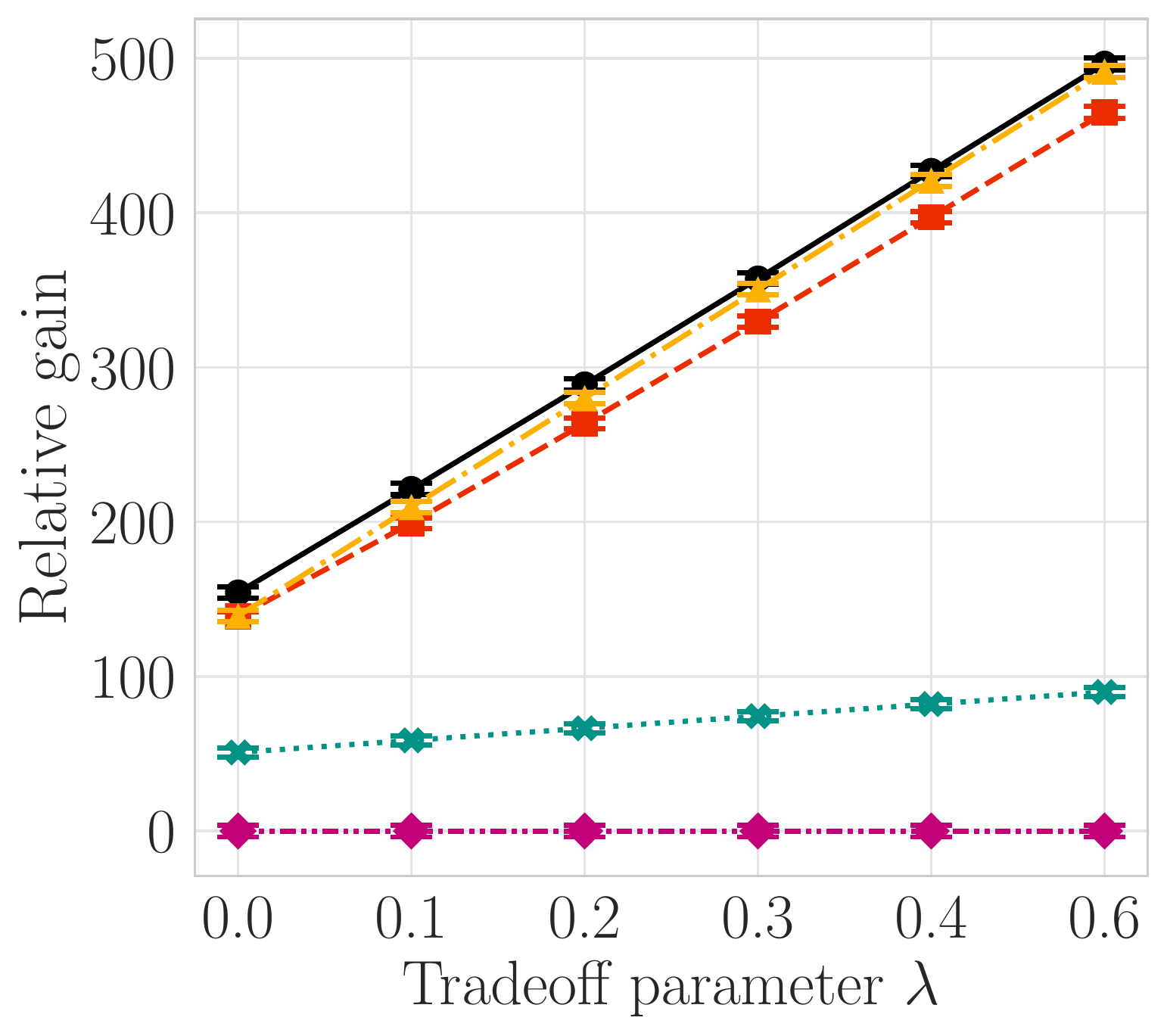}\label{fig:iclr20182a}}\hfill 
    \subfloat[][]{\includegraphics[width=0.24\textwidth]{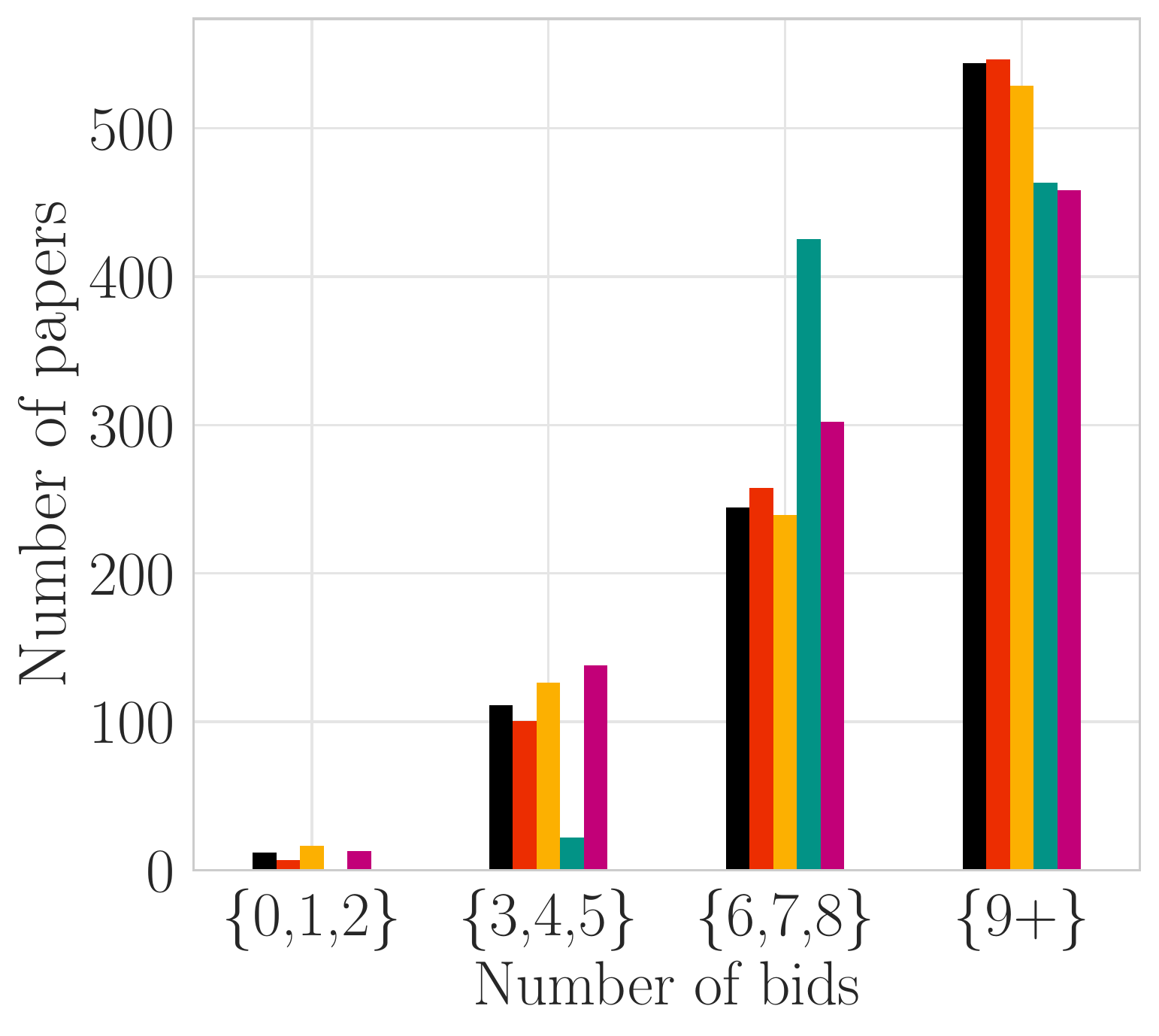}\label{fig:iclr20182b}}\hfill
    \subfloat[][]{\includegraphics[width=0.24\textwidth]{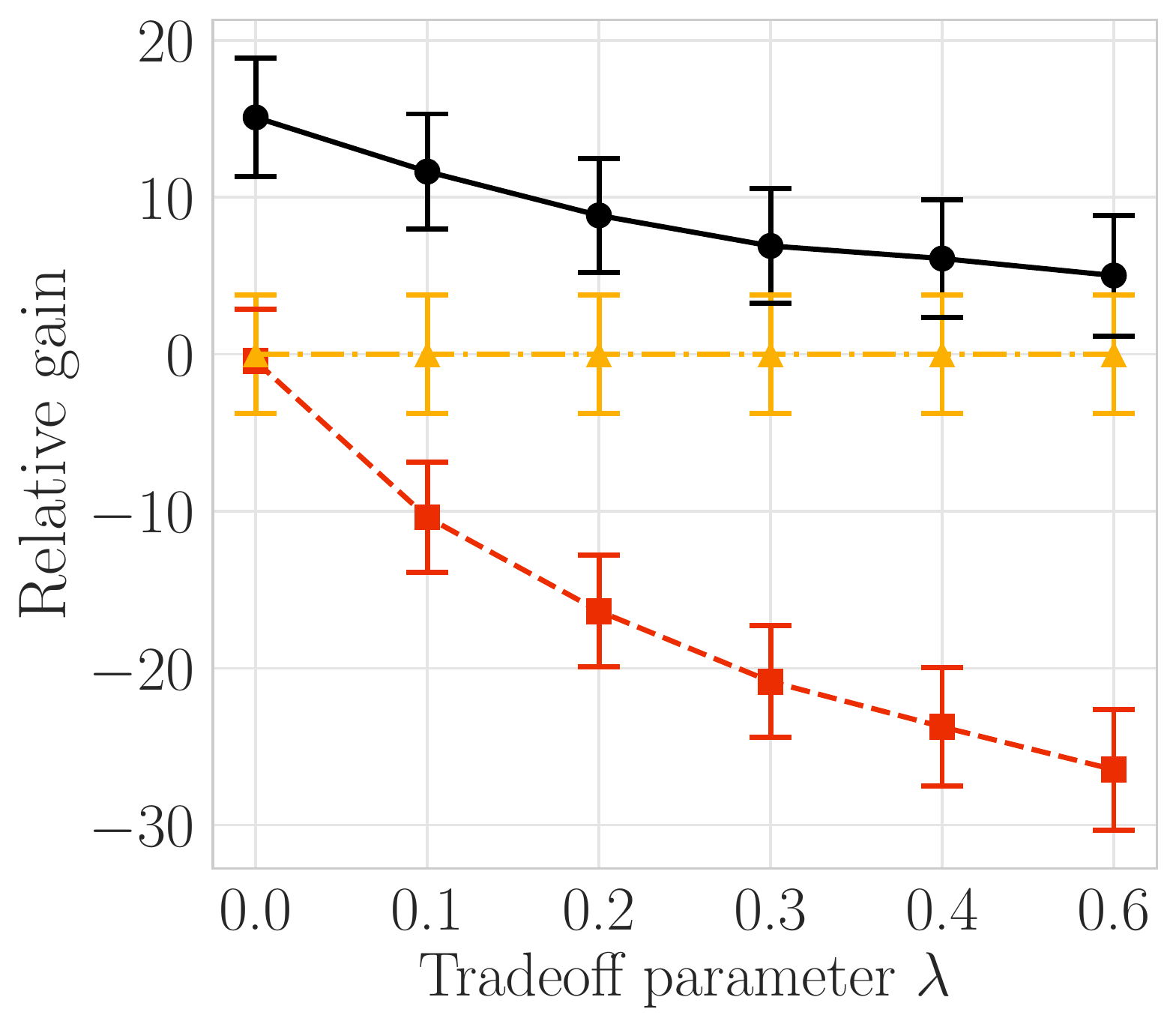}\label{fig:iclr20182c}}\hfill
    \subfloat[][]{\includegraphics[width=0.24\textwidth]{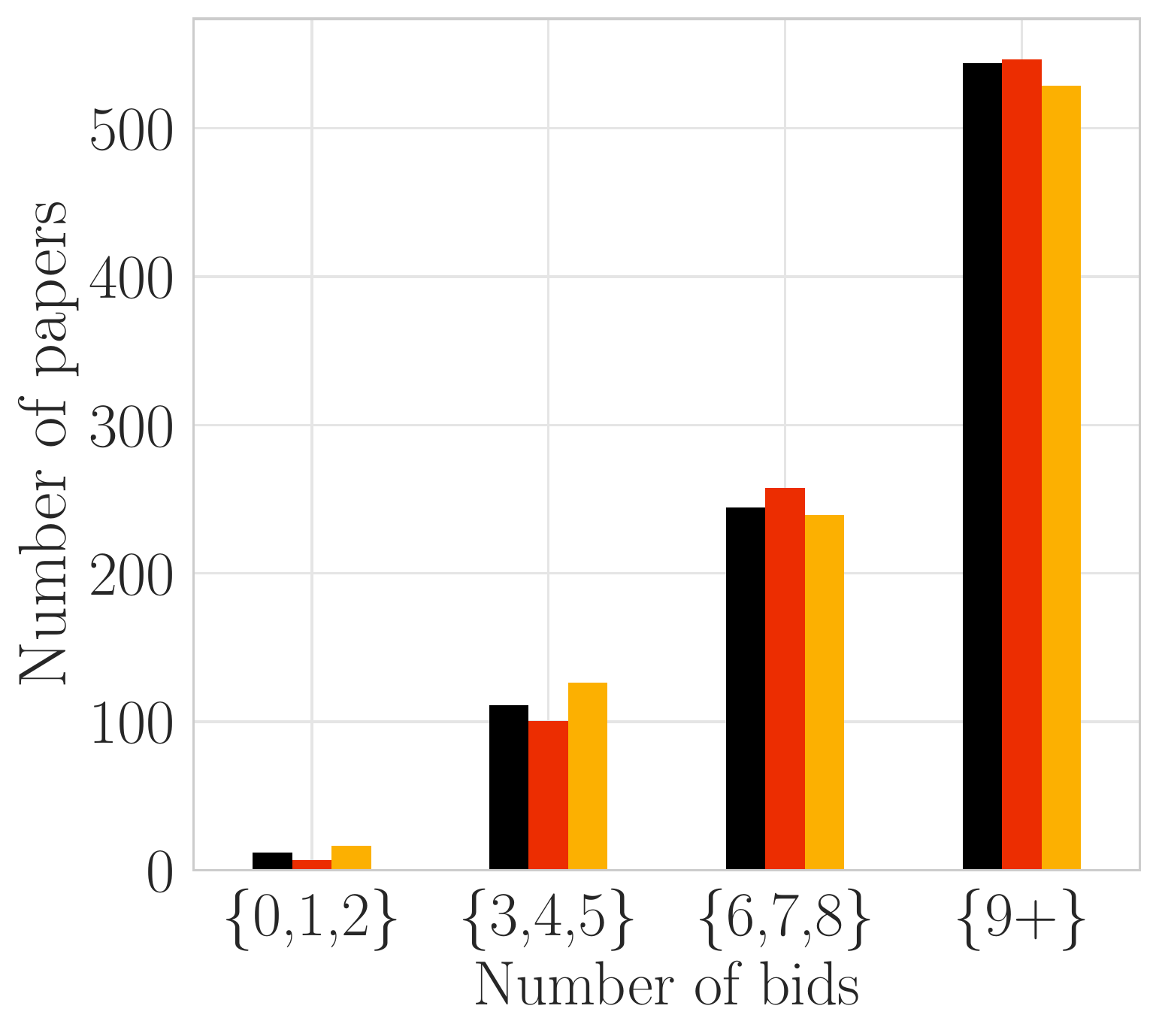}\label{fig:iclr20182d}}\hfill

    \subfloat[][]{\includegraphics[width=0.25\textwidth]{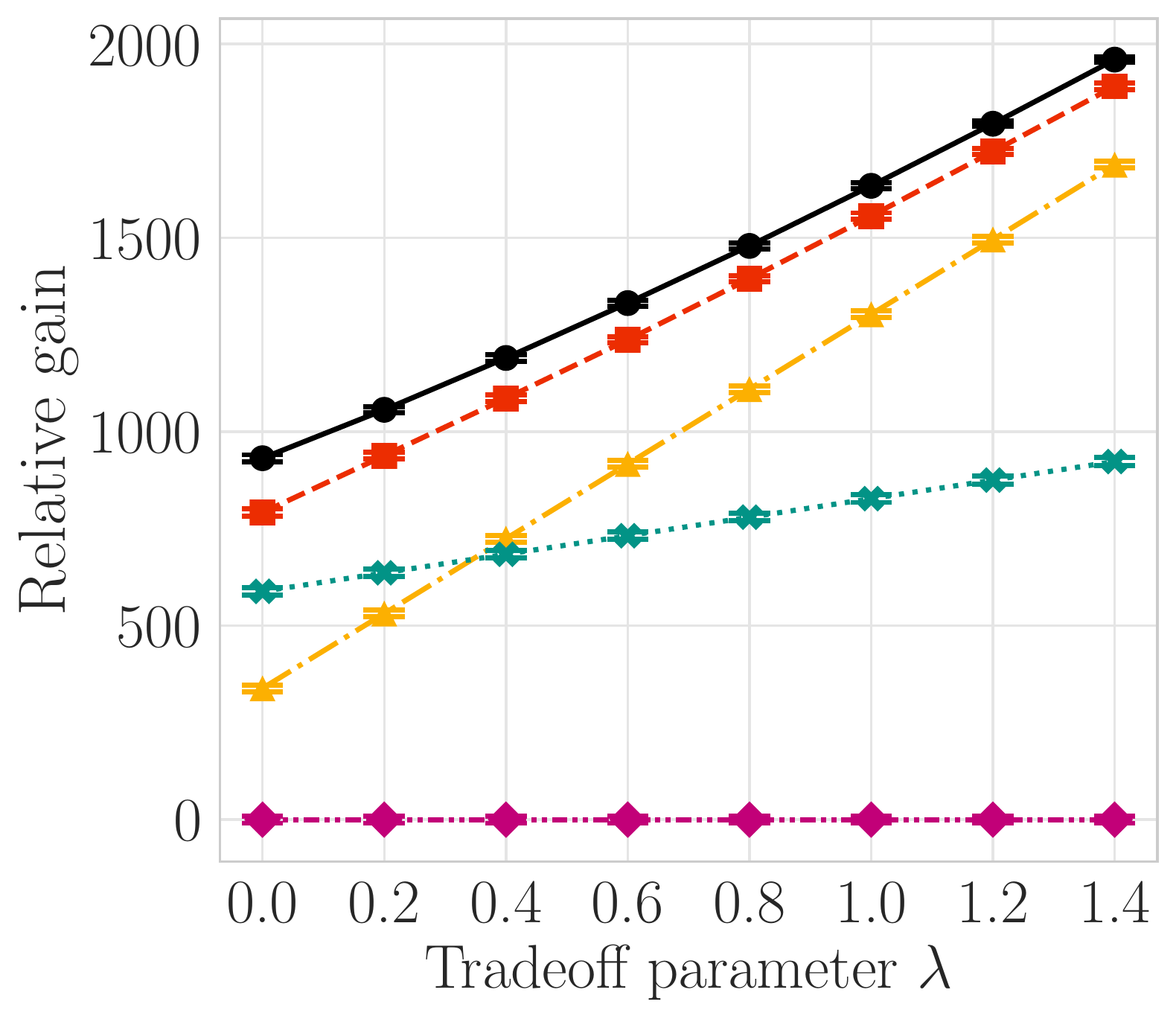}\label{fig:iclr20182e}}\hfill 
    \subfloat[][]{\includegraphics[width=0.25\textwidth]{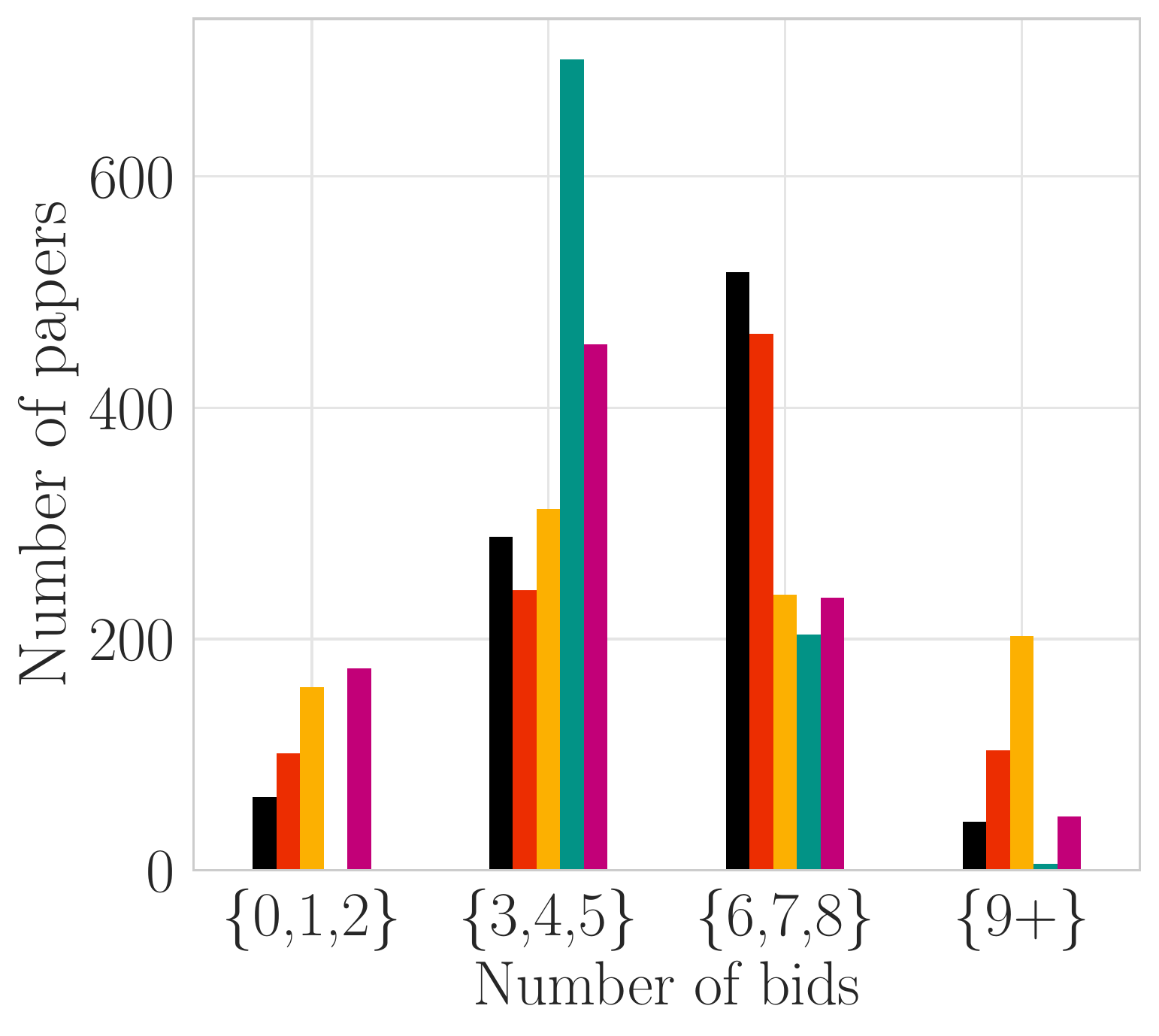}\label{fig:iclr20182f}}\hfill
    \subfloat[][]{\includegraphics[width=0.25\textwidth]{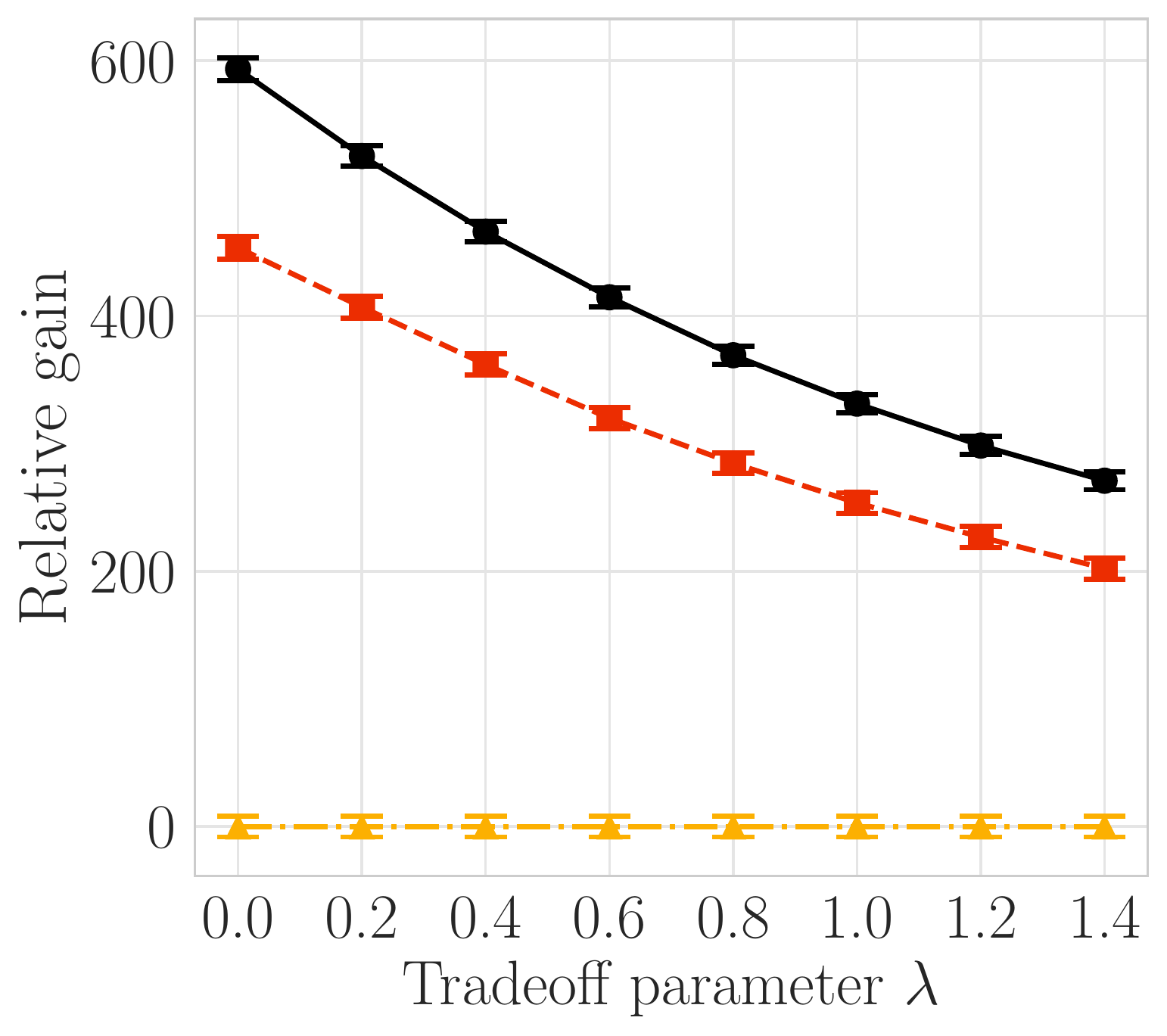}\label{fig:iclr20182g}}\hfill
    \subfloat[][]{\includegraphics[width=0.25\textwidth]{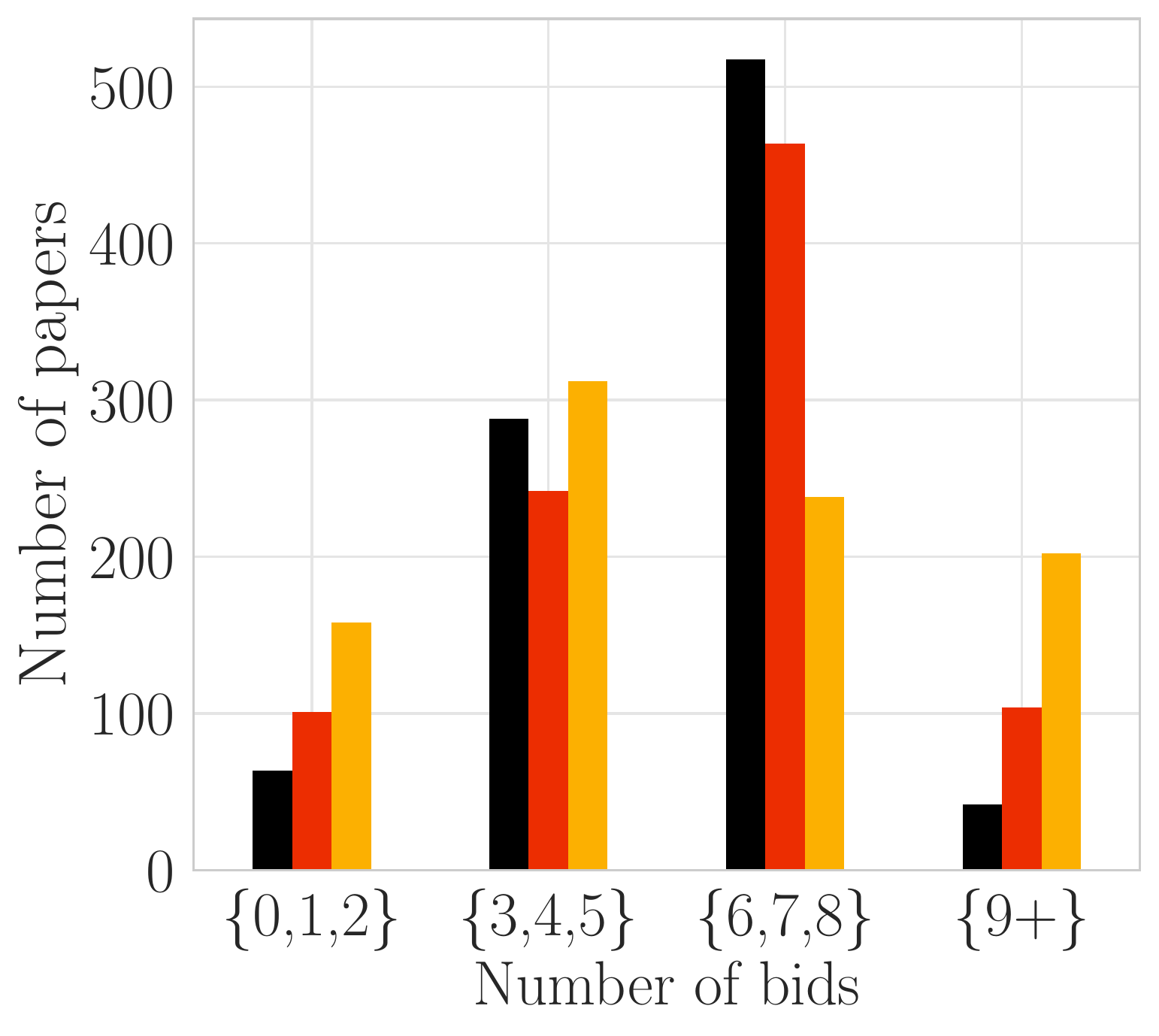}\label{fig:iclr20182h}}\hfill
    \subfloat{\includegraphics[width=.9\textwidth]{Figs/legend}}
    \caption{\small ICLR 2018 experiment with variations of the default model configuration. 
        In Figures~\ref{fig:iclr20182a}--\ref{fig:iclr20182d}, the paper-side gain function is changed.  In Figures~\ref{fig:iclr20182e}--\ref{fig:iclr20182h}, the reviewer-side gain and bidding function are changed. Legend for each bid distribution plot (b, d, f, h): within each bid-count interval, the bars 
    correspond to the algorithms given in the legend and are presented in an order consistent with the legend itself when read from left to right and (d, h) only includes \super with mean heuristic, \super with zero heuristic, and \simbase.
    }
    \label{fig:iclr20182}
\end{figure*}

\paragraph*{Varying model parameters.} We now consider variations of the default model consideration. In Figures~\ref{fig:iclr20182a}--\ref{fig:iclr20182d}, results are shown when the paper-side gain function is changed from $\gainfunction(\numbids_{\paperindex})=\min\{\numbids_{\paperindex}, 6\}$ to $\gainfunction(\numbids_{\paperindex})=\sqrt{\numbids_{\paperindex}}$. In Figures~\ref{fig:iclr20182a}--\ref{fig:iclr20182b} we compare \super to each baseline and in Figures~\ref{fig:iclr20182c}--\ref{fig:iclr20182d} we zoom in and only show the results for \super and \simbase. In Figures~\ref{fig:iclr20182b} and~\ref{fig:iclr20182d}, the distribution of the bid counts obtained for the algorithms are shown with $\hyperparam=0.4$, which was chosen since this parameter choice gave nearly equal paper-side and weighted reviewer-side gain for \randbase.
For this model configuration, \bidbase and \randbase are significantly suboptimal in terms of the gain. \super with the mean heuristic outperforms \simbase by a marginal amount in terms of the gain, while \simbase outperforms \super with the zero heuristic by a marginal amount in terms of the gain. The bid distributions show that each version of \super ends the bidding process with a smaller number of papers obtaining fewer than six bids compared to \simbase. Compared to the default paper-side gain function, the discrepancy is not as significant since \super is optimizing an objective that rewards getting more than five bids per paper even though the returns are diminishing.

In Figures~\ref{fig:iclr20182a}--\ref{fig:iclr20182d}, the default paper-side gain function is considered, but the reviewer-side gain function is changed to $\gainfunctionrev(\ordering_{\revindex}(\paperindex),\similarity_{\revindex,\paperindex}) =  (2^{\similarity_{\revindex, \paperindex}}-1)/\sqrt{\ordering_{\revindex}(\paperindex)}$ and the bidding function is changed to $\bidfunction(\ordering_{\revindex}(\paperindex), \similarity_{\revindex, \paperindex}) = \similarity_{\revindex, \paperindex}/\sqrt{\ordering_{\revindex}(\paperindex)}$. 
The effect of this change is that the probability of obtaining a bid on a paper from a reviewer decays faster with the position the paper is shown and similarly the reviewer-side gain from a paper diminishes faster as a function of the position the paper is shown to a reviewer. In Figures~\ref{fig:iclr20182e}--\ref{fig:iclr20182f} we compare \super to each baseline and in Figures~\ref{fig:iclr20182g}--\ref{fig:iclr20182h} we zoom in and only show the results for \super and \simbase. In Figures~\ref{fig:iclr20182f} and~\ref{fig:iclr20182h}, the distribution of the bid counts obtained for the algorithms are shown with $\hyperparam=1.2$, which was chosen since this parameter choice gave nearly equal paper-side and weighted reviewer-side gain for \randbase.
For this model configuration, each version of \super significantly outperforms each of the baselines in terms of the gain. The bid count distributions show \super with zero and mean heuristic reduce the number of papers with fewer than three bids by 35\% and 60\% compared to \simbase, respectively.
Moreover, both versions of \super end up with half as many papers obtaining fewer than six bids compared to \bidbase.

\paragraph*{Robustness to real-world complexities.}
\label{SecRobustSimulations}
The previous experiments were performed under settings faithful to the model described earlier in Section~\ref{sec:problem_setup}. We now evaluate the robustness of \super to the models by evaluating the performance under various vagaries and complexities of real-world peer review. For this set of experiments, we consider that \super is optimizing the default model configuration described previously in this section. The results of the following simulations that consider deviations from the model being optimized are given in Figure~\ref{fig:iclr20183}. For each experiment we show the gains of each of the algorithms relative to \randbase across a sweep of the parameter $\hyperparam$ and the bid count distributions for each of algorithms with $\hyperparam=0.8$ as selected for the default model previously.

\begin{figure*}[t]

    \begin{minipage}{\textwidth}
    \begin{minipage}{.2\textwidth}
    \centering
    \footnotesize \textbf{Bid probability mismatch}
    \end{minipage}%
    \begin{minipage}{.2\textwidth}
    \centering
    \footnotesize \textbf{Similarity mismatch}
    \end{minipage}%
    \begin{minipage}{.2\textwidth}
    \centering
    \vspace{-.5mm}
    \footnotesize \textbf{Subset of reviewers arrive}
    \end{minipage}%
    \begin{minipage}{.2\textwidth}
    \centering
    \vspace{-.5mm}
    \footnotesize \textbf{Concurrent arrivals}
    \end{minipage}%
    \begin{minipage}{.2\textwidth}
    \centering
    \footnotesize \textbf{Subset of papers bid on}
    \end{minipage}
    \end{minipage}

    \centering
    \subfloat[][]{\includegraphics[width=0.2\textwidth]{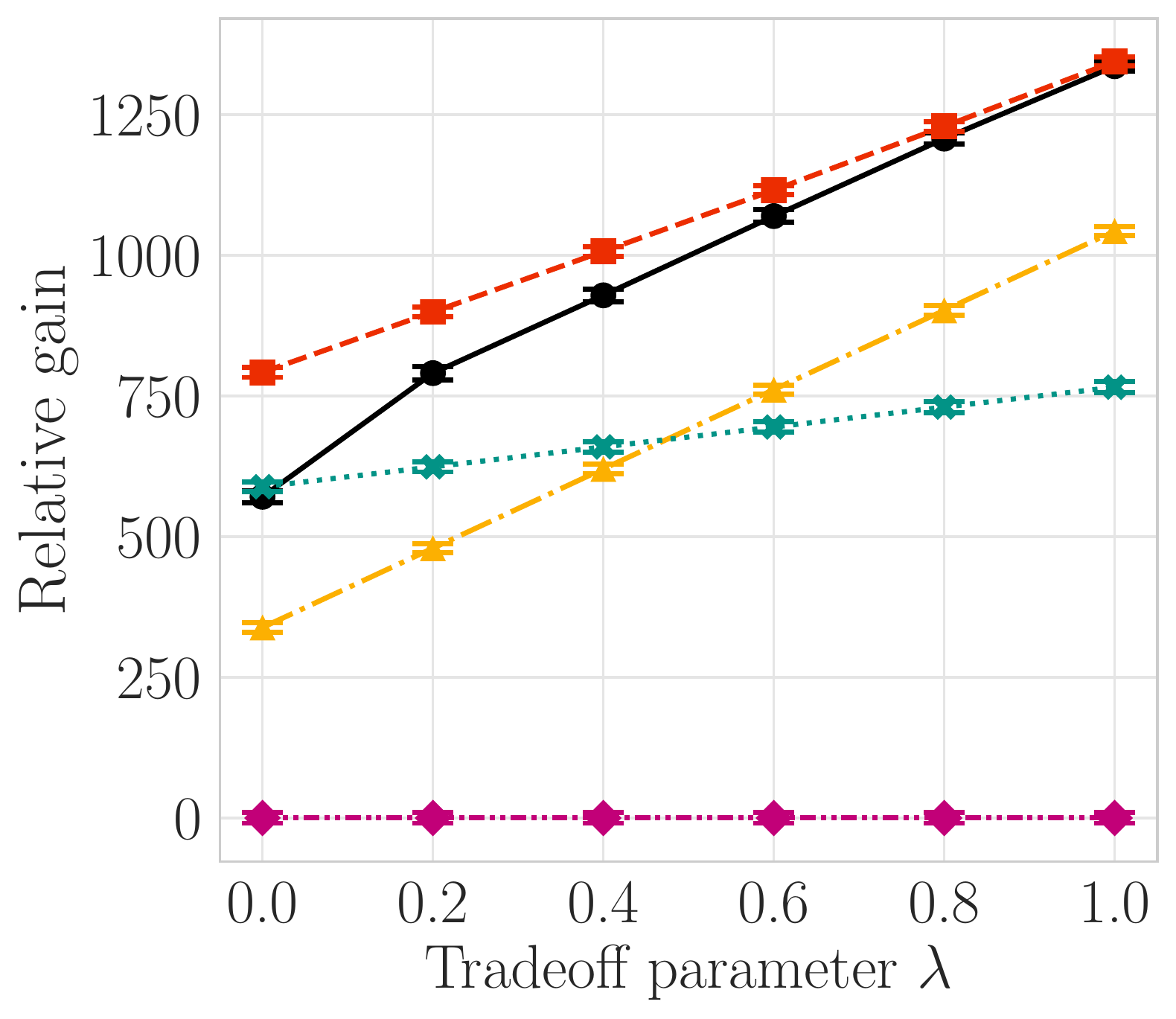}\label{fig:iclr20184a}}\hfill 
    \subfloat[][]{\includegraphics[width=0.2\textwidth]{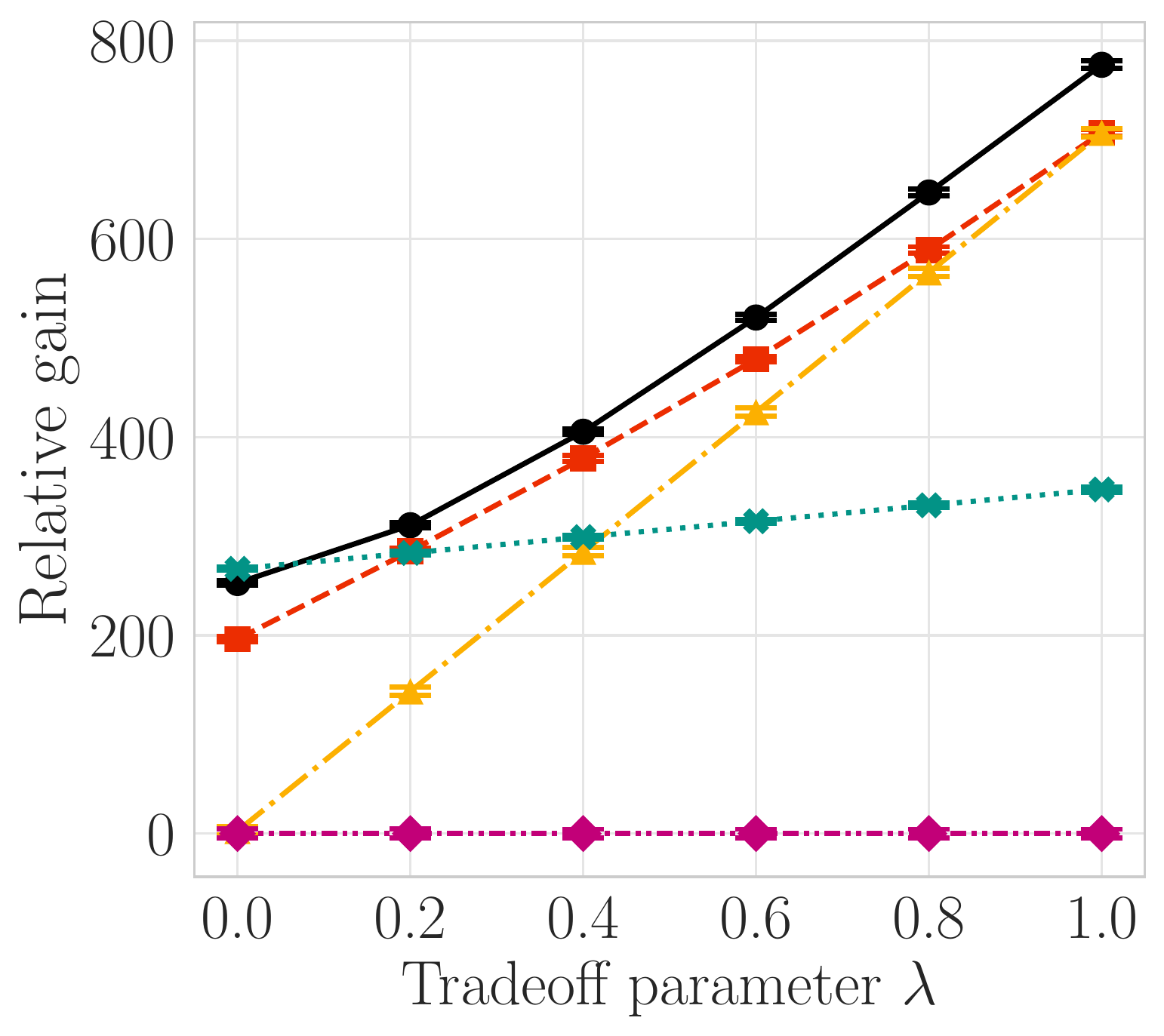}\label{fig:iclr20184b}}\hfill
    \subfloat[][]{\includegraphics[width=0.2\textwidth]{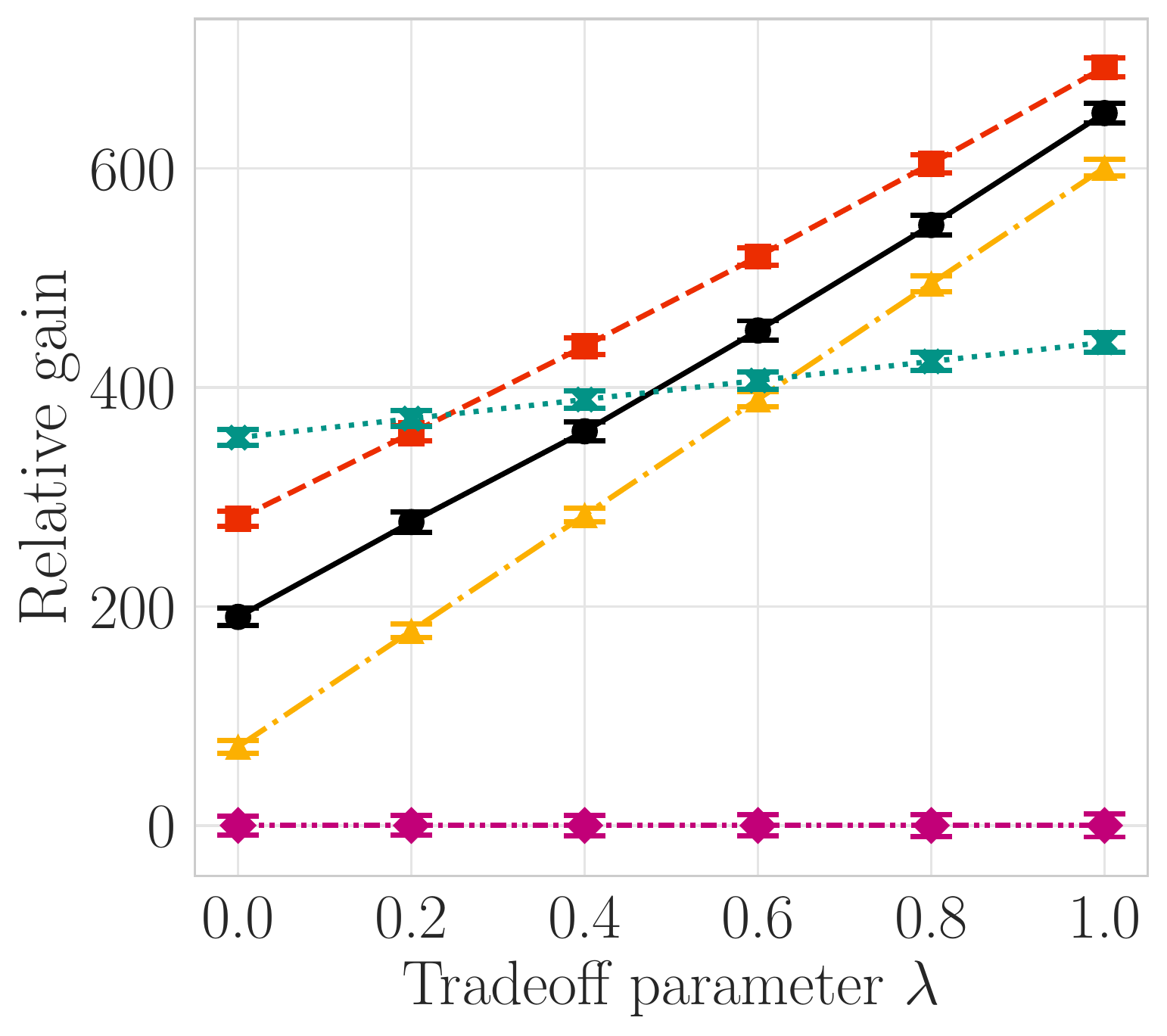}\label{fig:iclr20184c}}\hfill
    \subfloat[][]{\includegraphics[width=0.2\textwidth]{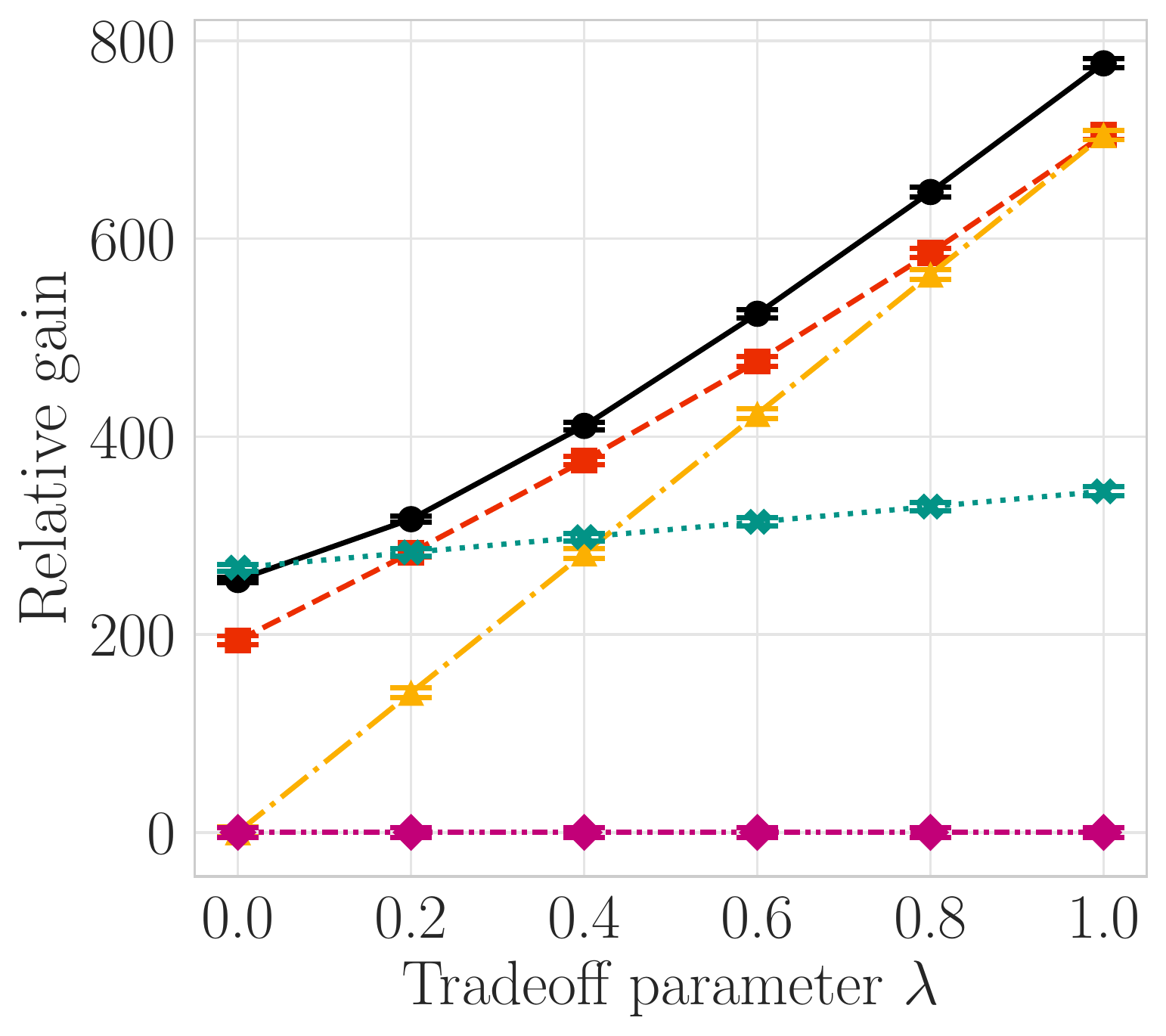}\label{fig:iclr20184d}}\hfill
    \subfloat[][]{\includegraphics[width=0.2\textwidth]{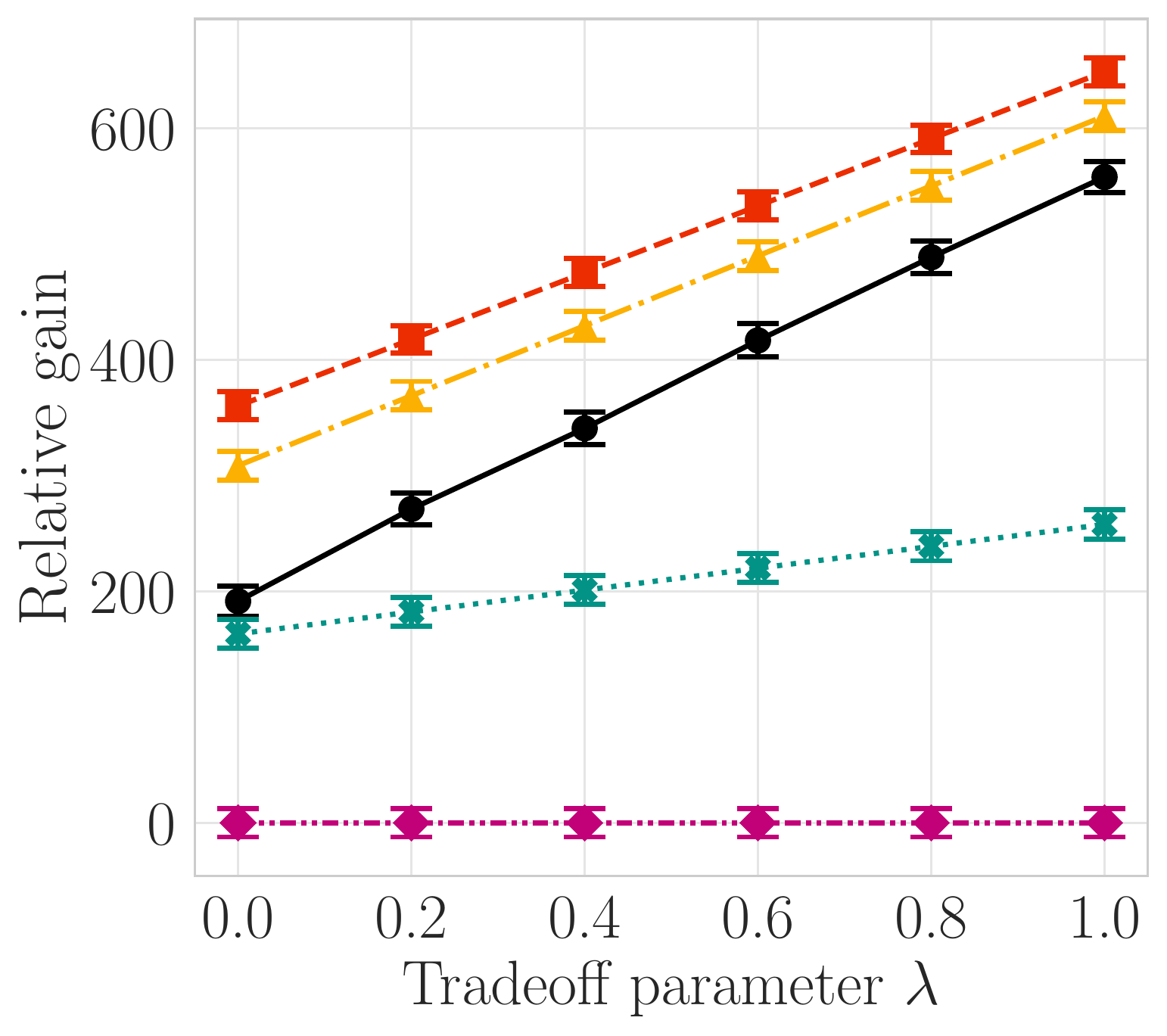}\label{fig:iclr20184dp}}\hfill
    
    \subfloat[][]{\includegraphics[width=0.2\textwidth]{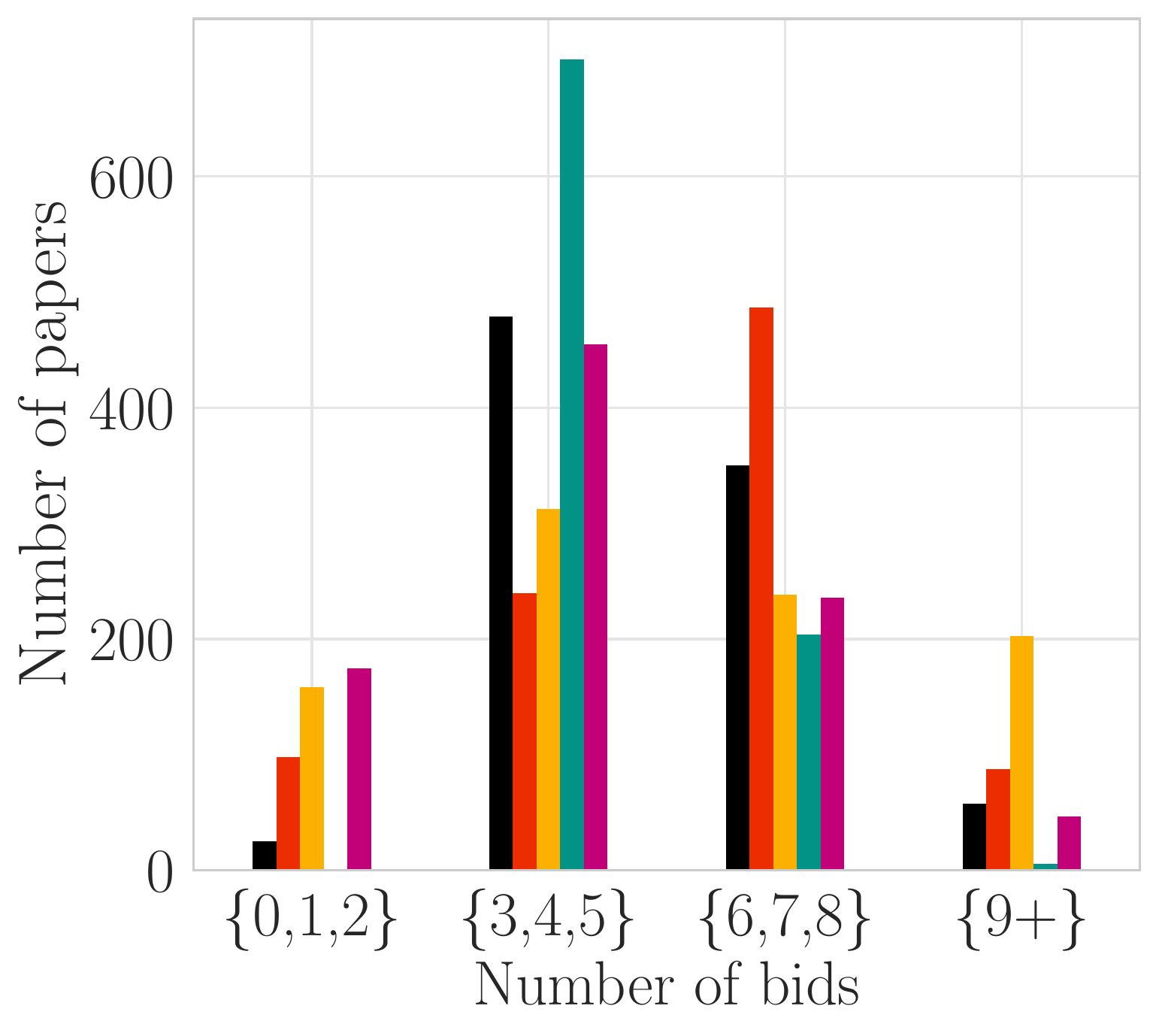}\label{fig:iclr20184e}}\hfill 
    \subfloat[][]{\includegraphics[width=0.2\textwidth]{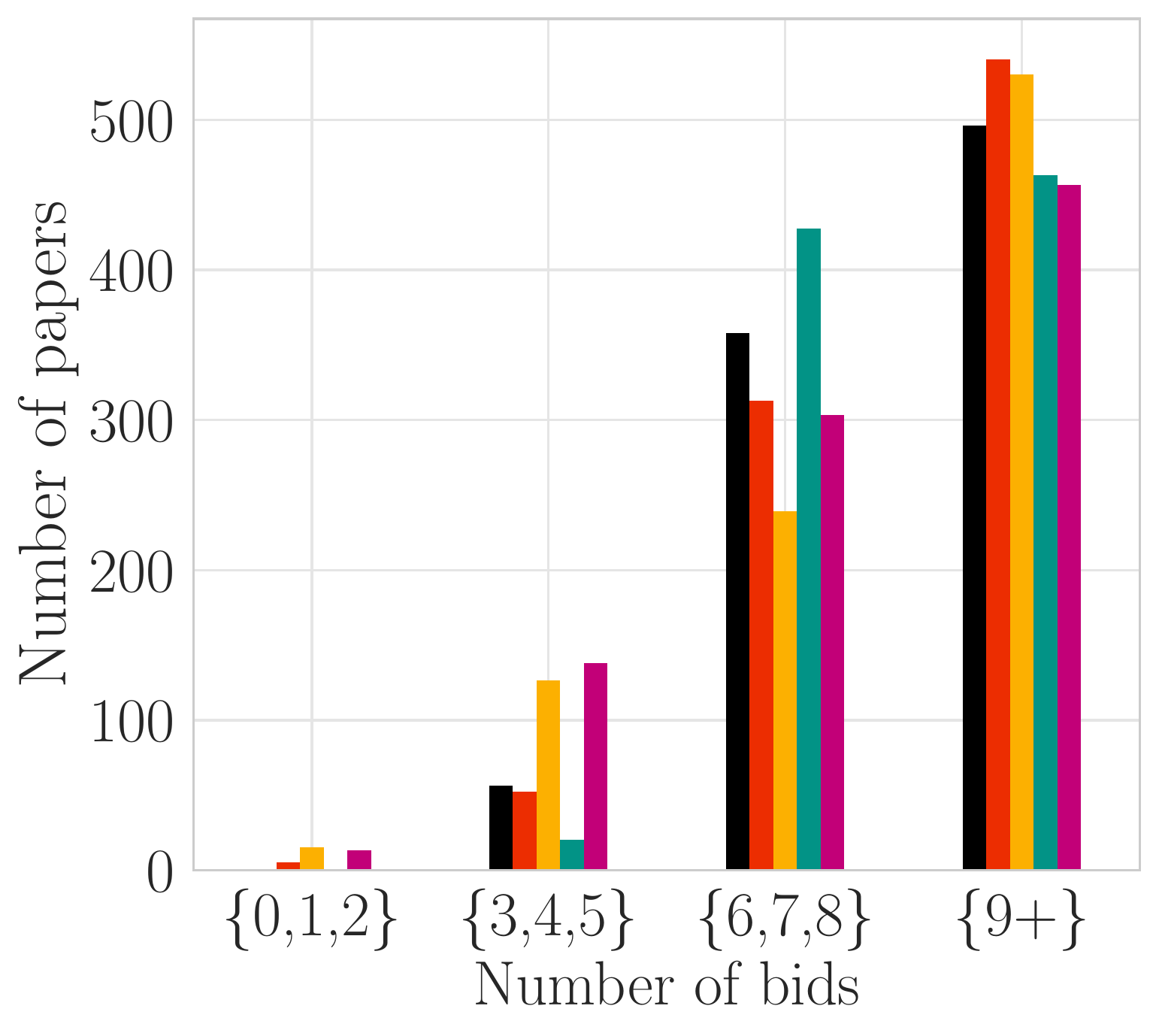}\label{fig:iclr20184f}}\hfill
    \subfloat[][]{\includegraphics[width=0.2\textwidth]{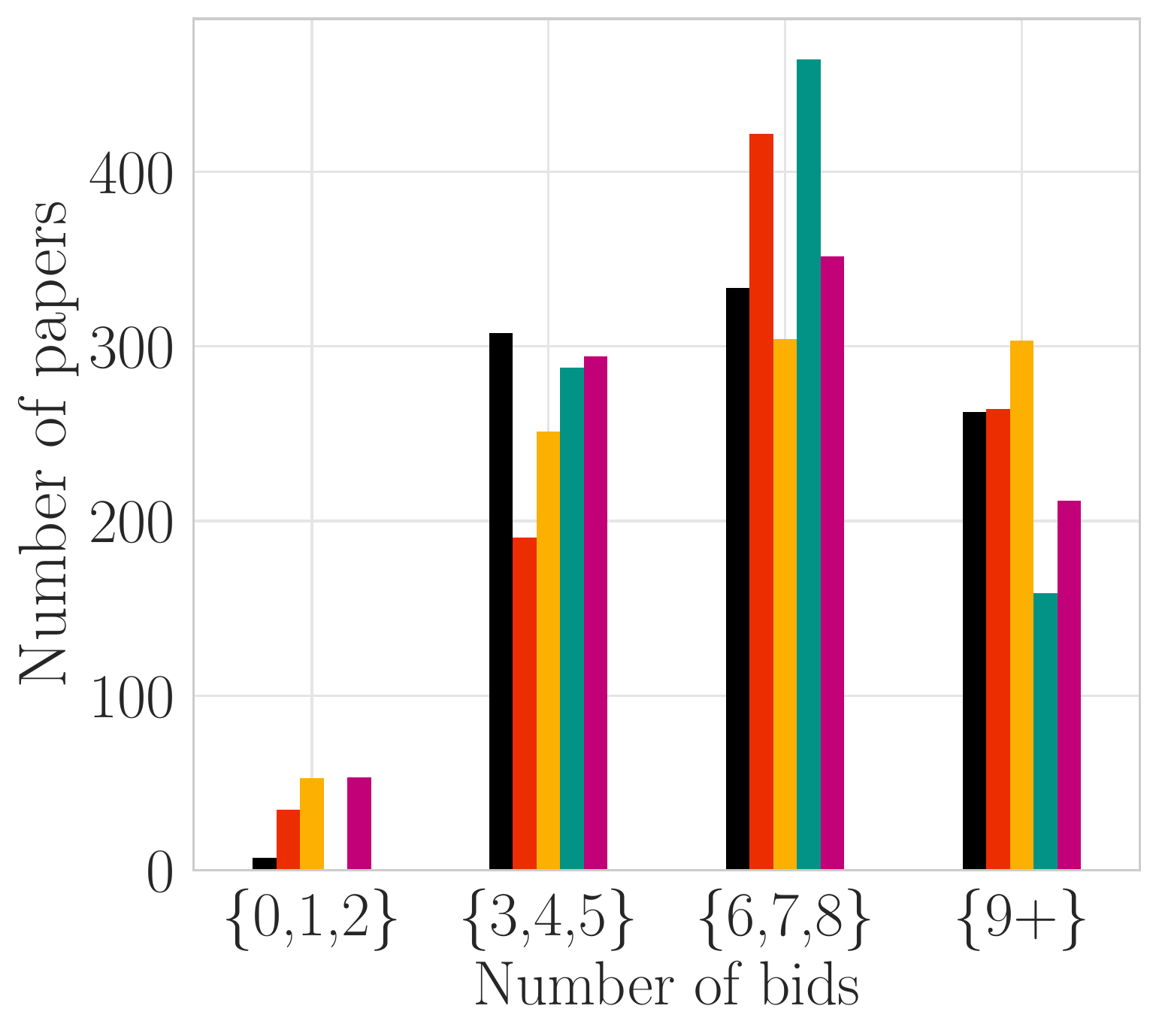}\label{fig:iclr20184g}}\hfill
    \subfloat[][]{\includegraphics[width=0.2\textwidth]{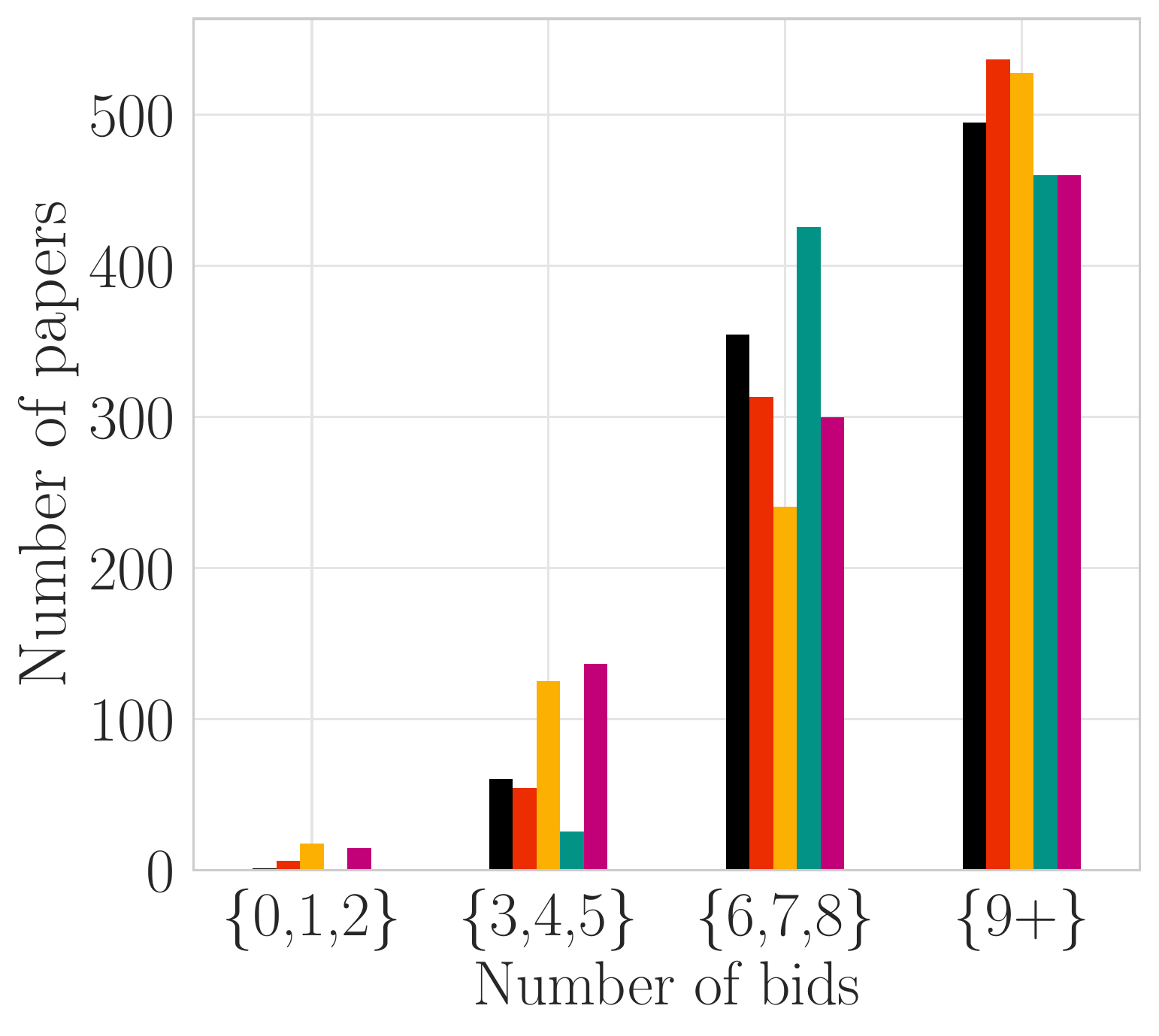}\label{fig:iclr20184h}}\hfill
    \subfloat[][]{\includegraphics[width=0.2\textwidth]{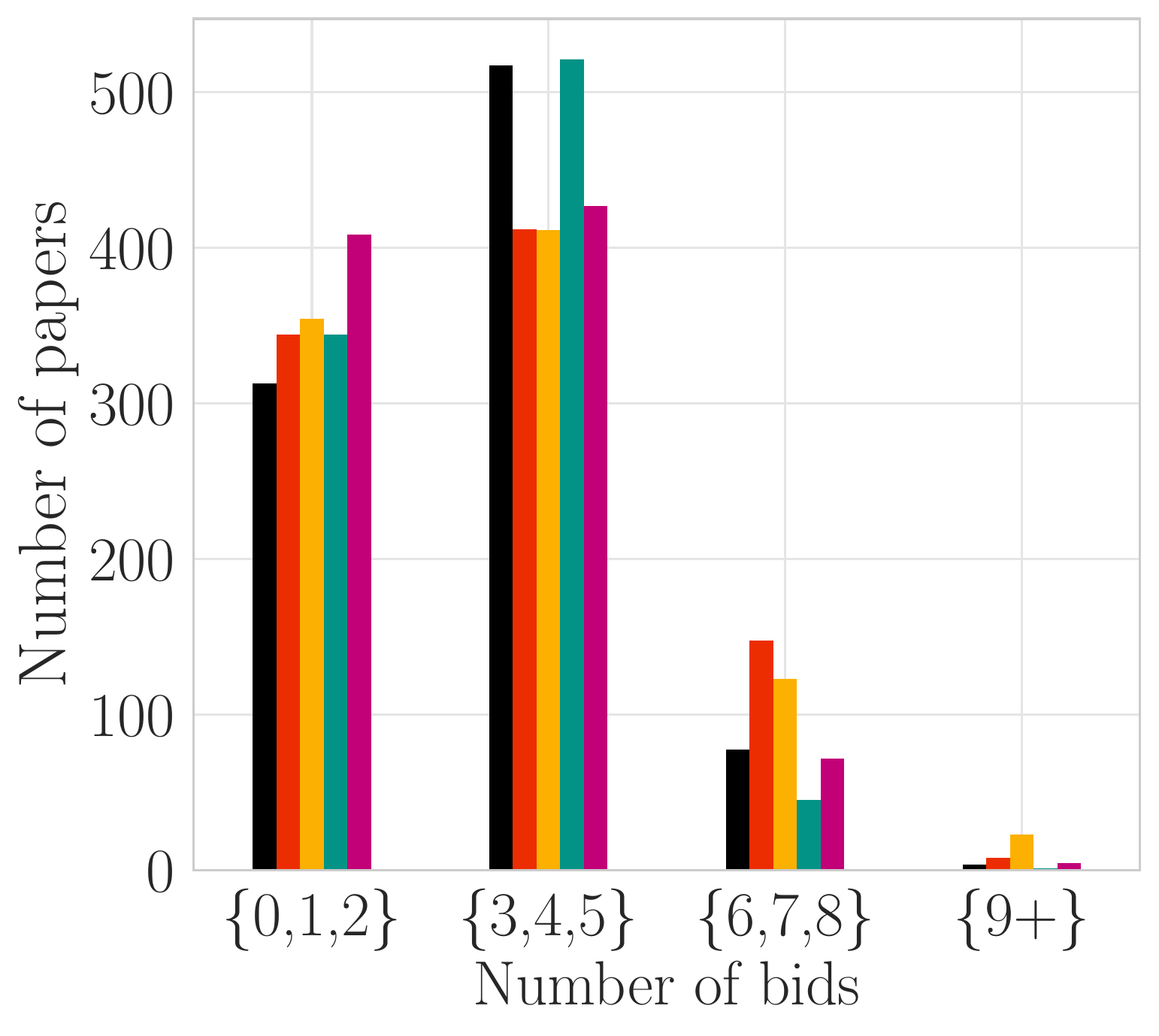}\label{fig:iclr20184hp}}\hfill
    \subfloat{\includegraphics[width=.9\textwidth]{Figs/legend}}
    \caption{\small ICLR 2018 experiments of robustness to real-world complexities. Legend for each bid distribution plot (f)--(j): within each bid-count interval, the bars 
    correspond to the algorithms given in the legend and are presented in an order consistent with the legend itself when read from left to right.
    }
    \label{fig:iclr20183}
\end{figure*}

\begin{figure*}[t]
    \begin{minipage}{\textwidth}
    \begin{minipage}{.25\textwidth}
    \centering
    \vspace{1mm}
    \footnotesize \textbf{Homogeneous}
    \end{minipage}%
    \begin{minipage}{.25\textwidth}
    \centering
    \footnotesize \textbf{Low rank}
    \end{minipage}%
    \begin{minipage}{.25\textwidth}
    \centering
    \vspace{1mm}
    \footnotesize \textbf{Community}
    \end{minipage}%
    \begin{minipage}{.25\textwidth}
    \centering
    \vspace{1mm}
    \footnotesize \textbf{Interdisciplinary}
    \end{minipage}%
    \end{minipage}

    \centering
    \subfloat[][]{\includegraphics[width=0.25\textwidth]{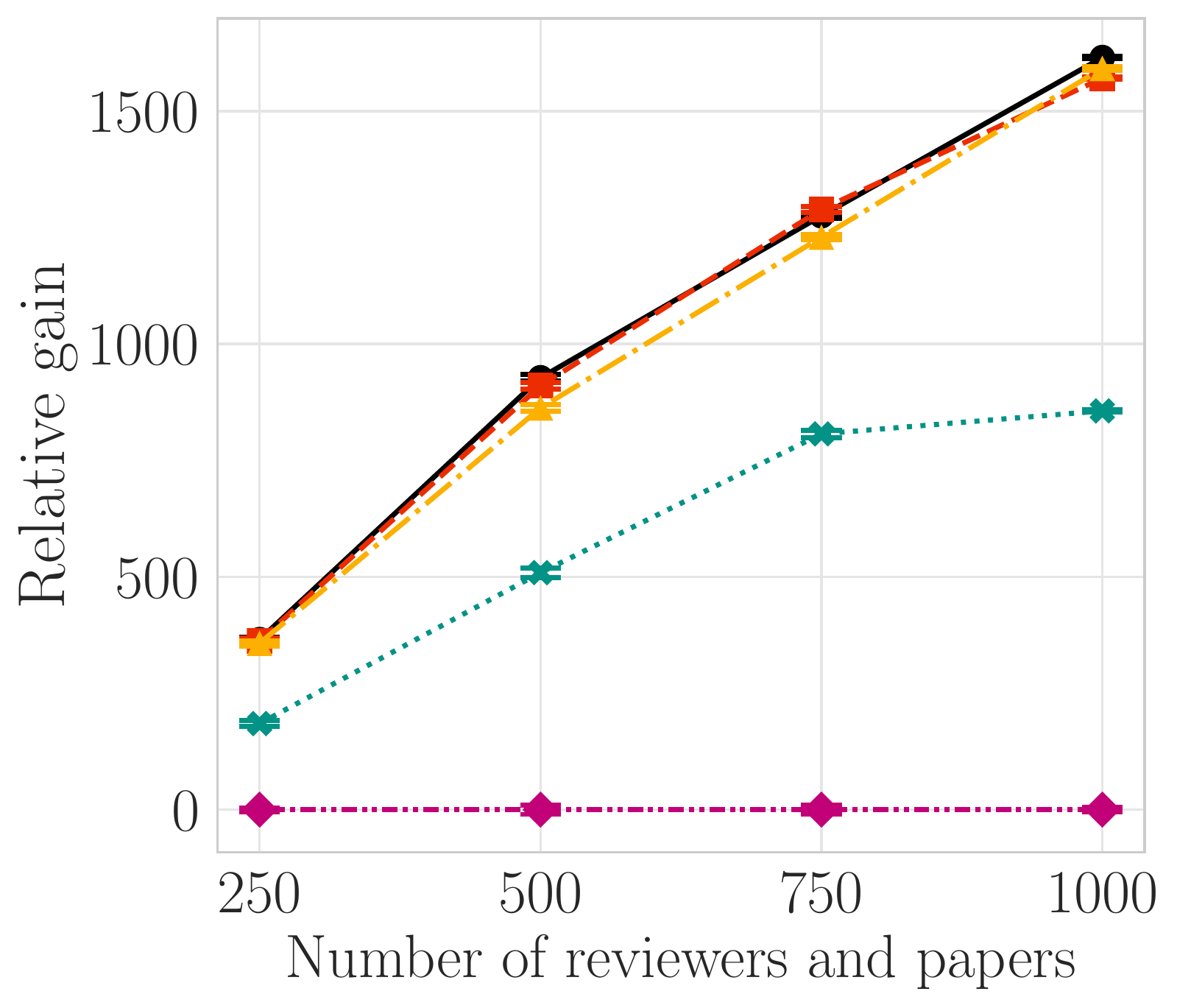}\label{fig:svary1a}}\hfill 
    \subfloat[][]{\includegraphics[width=0.25\textwidth]{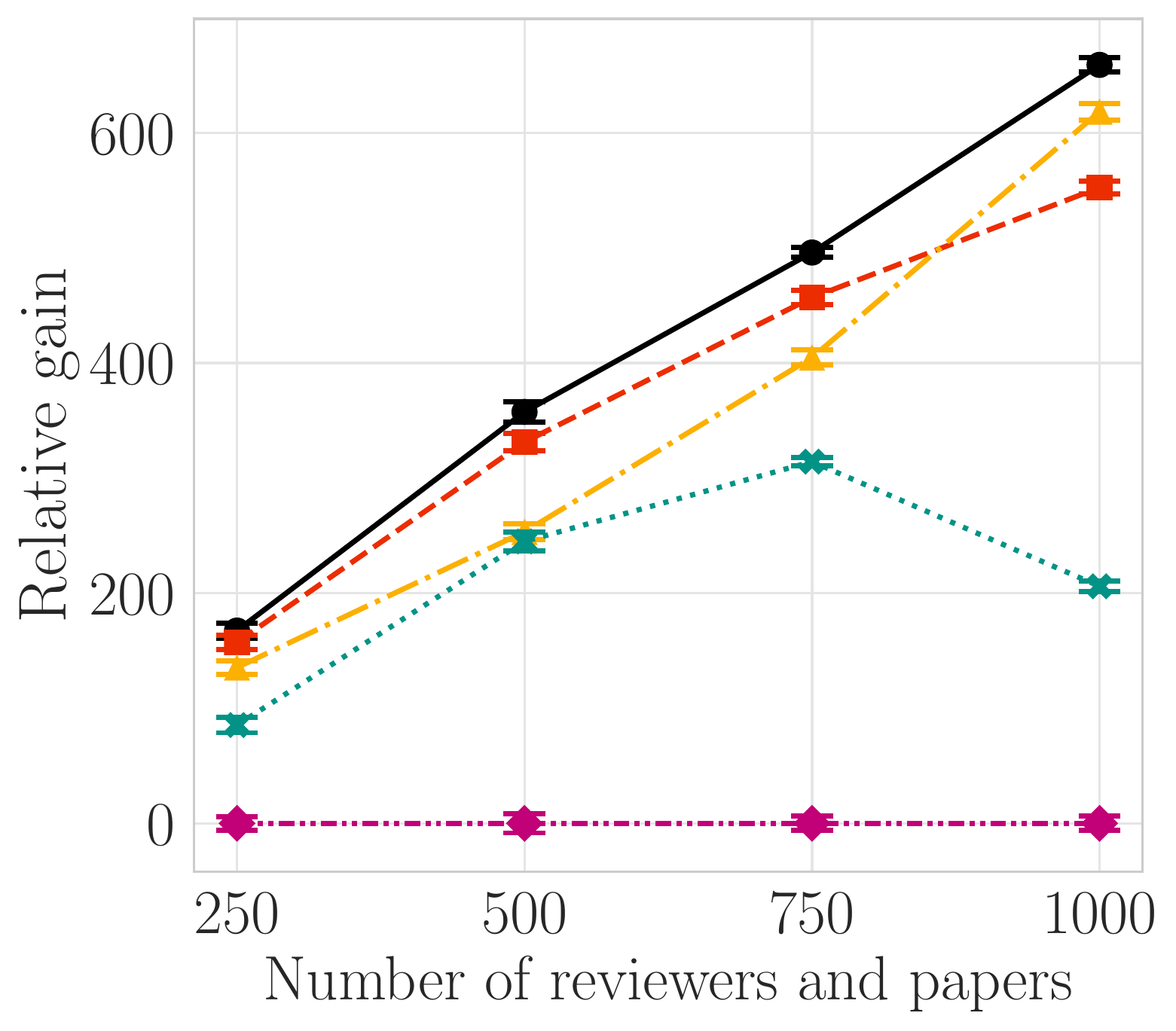}\label{fig:svary1b}}\hfill
    \subfloat[][]{\includegraphics[width=0.25\textwidth]{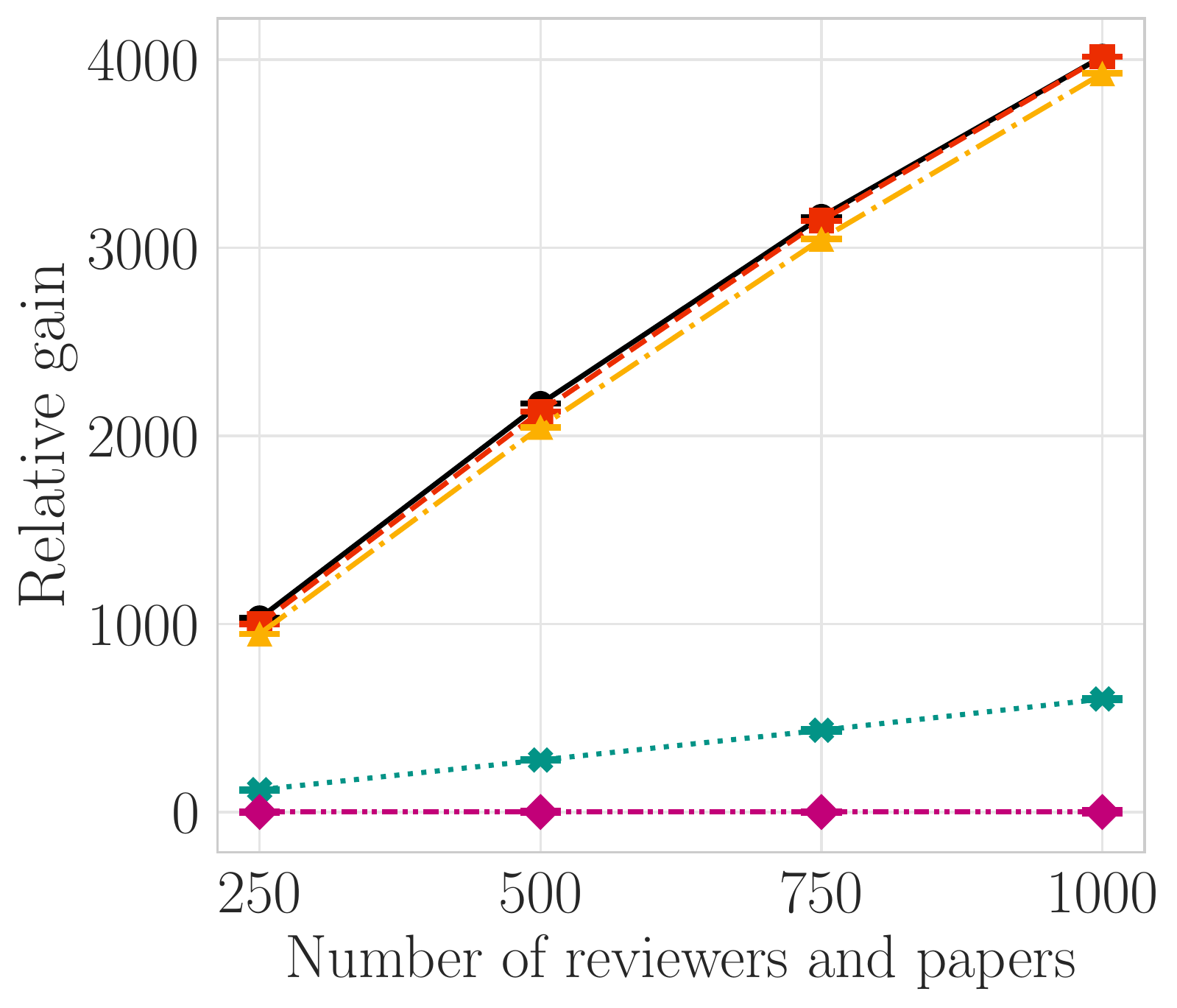}\label{fig:svary1c}}\hfill
    \subfloat[][]{\includegraphics[width=0.25\textwidth]{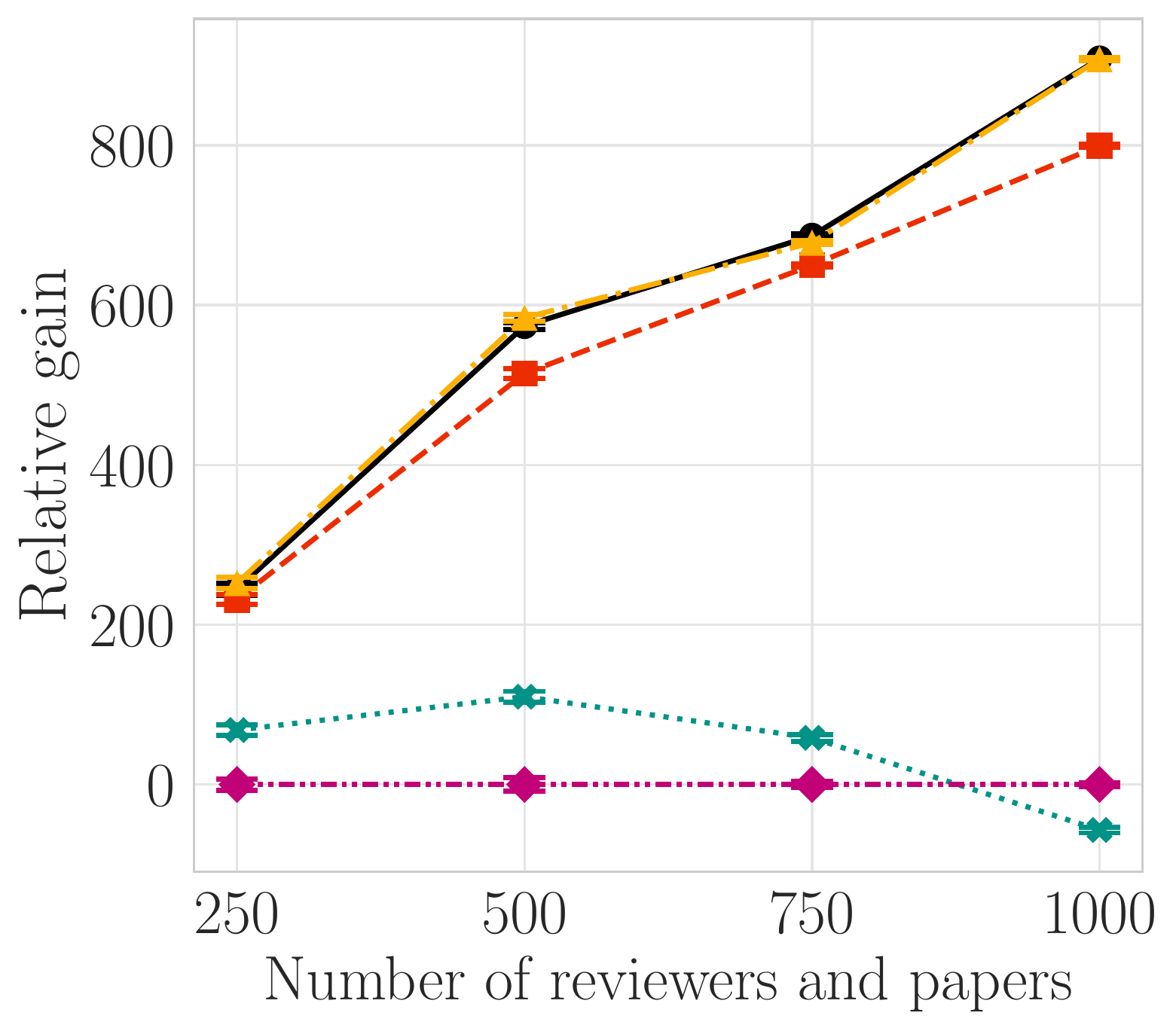}\label{fig:svary1d}}\hfill
    
    \subfloat[][]{\includegraphics[width=0.25\textwidth]{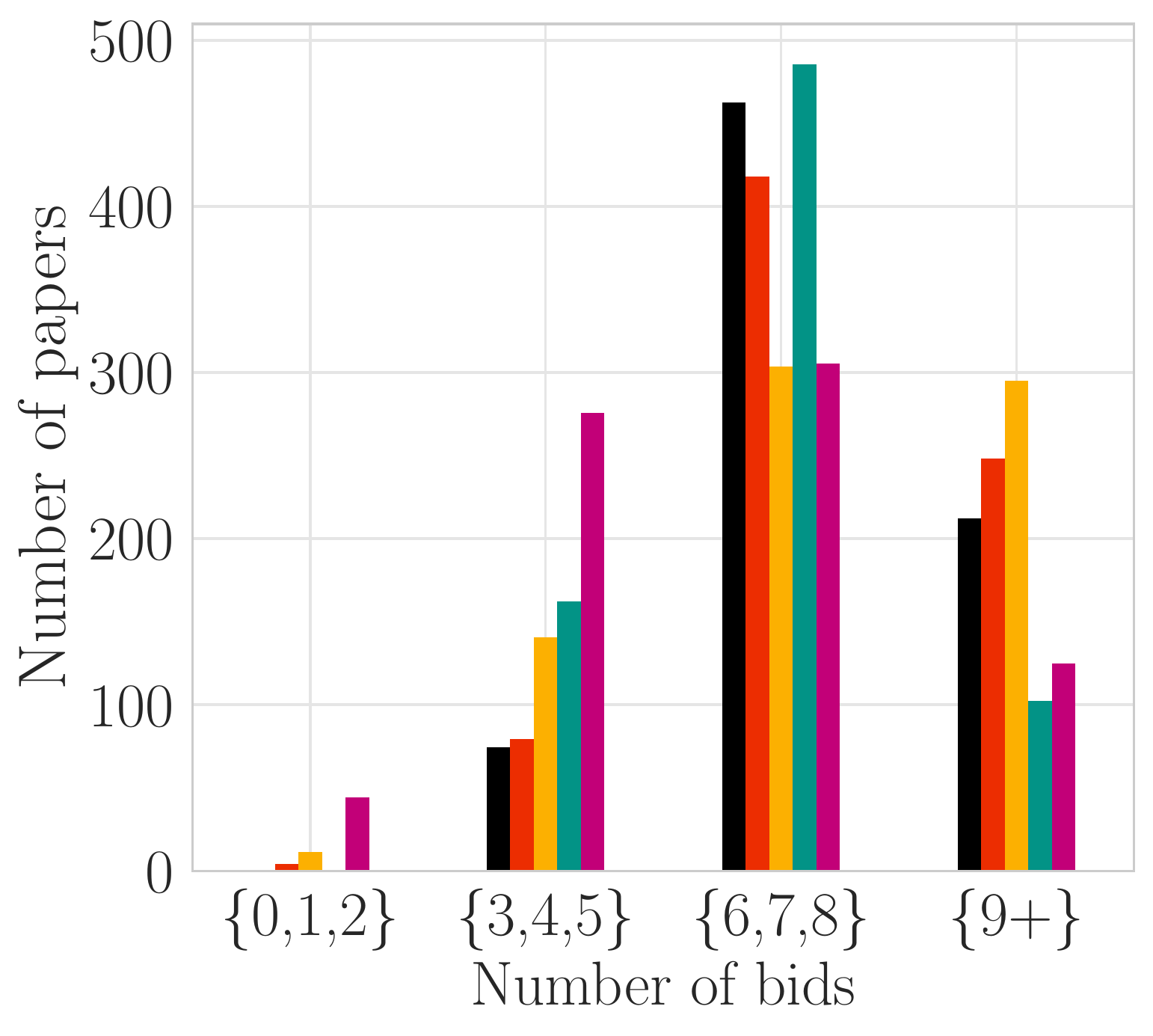}\label{fig:svary1e}}\hfill 
    \subfloat[][]{\includegraphics[width=0.25\textwidth]{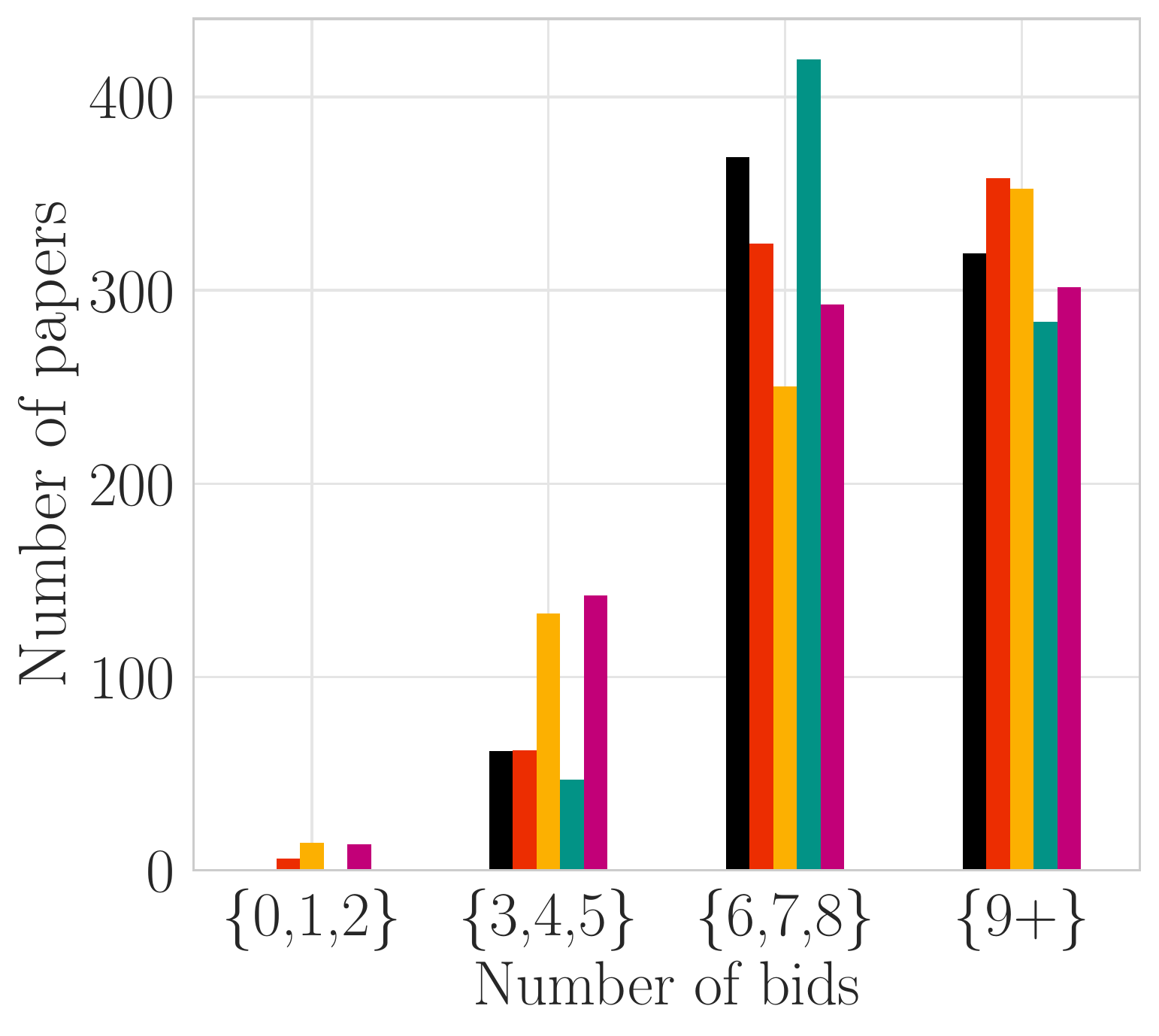}\label{fig:svary1f}}\hfill
    \subfloat[][]{\includegraphics[width=0.25\textwidth]{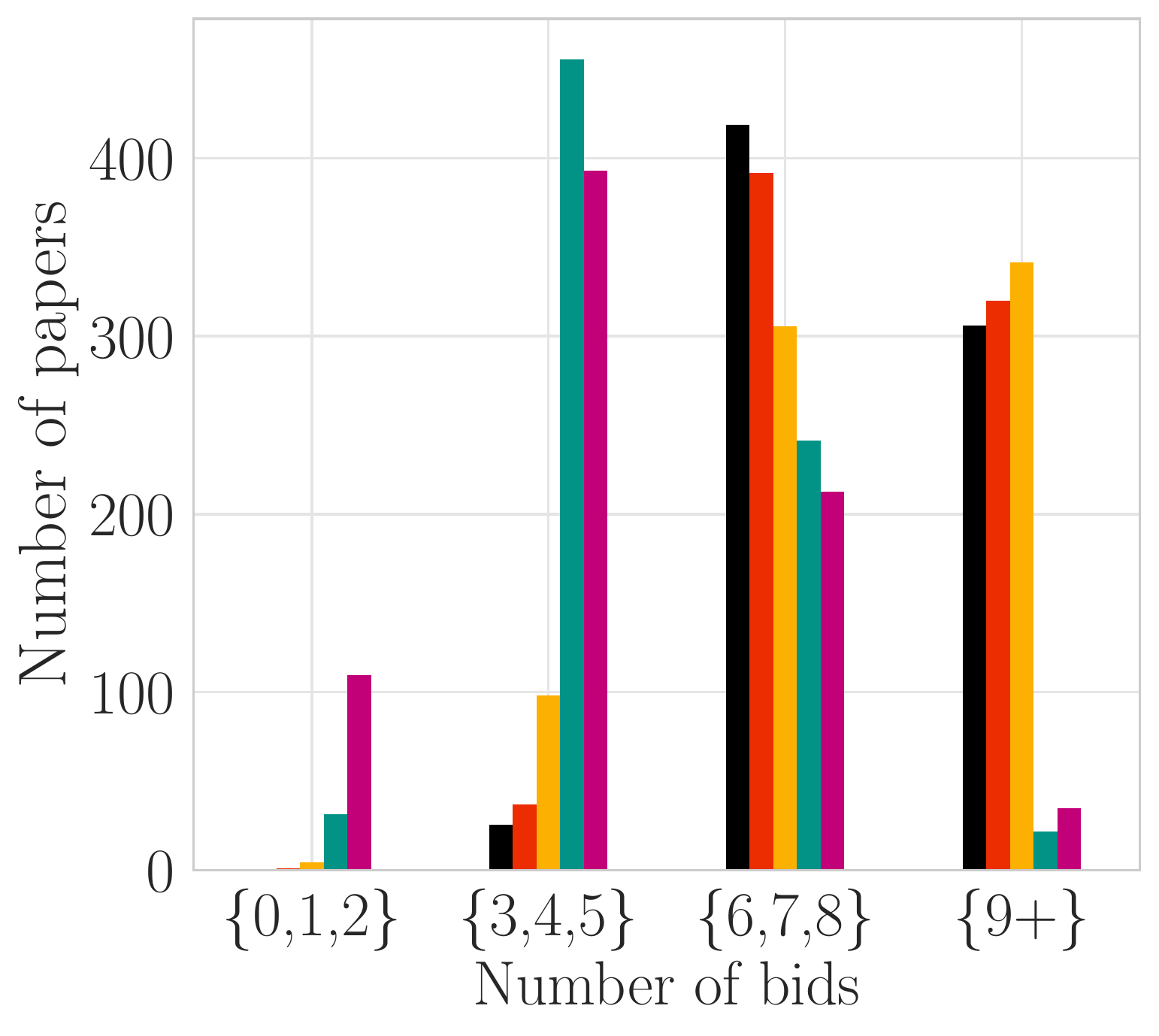}\label{fig:svary1g}}\hfill
    \subfloat[][]{\includegraphics[width=0.25\textwidth]{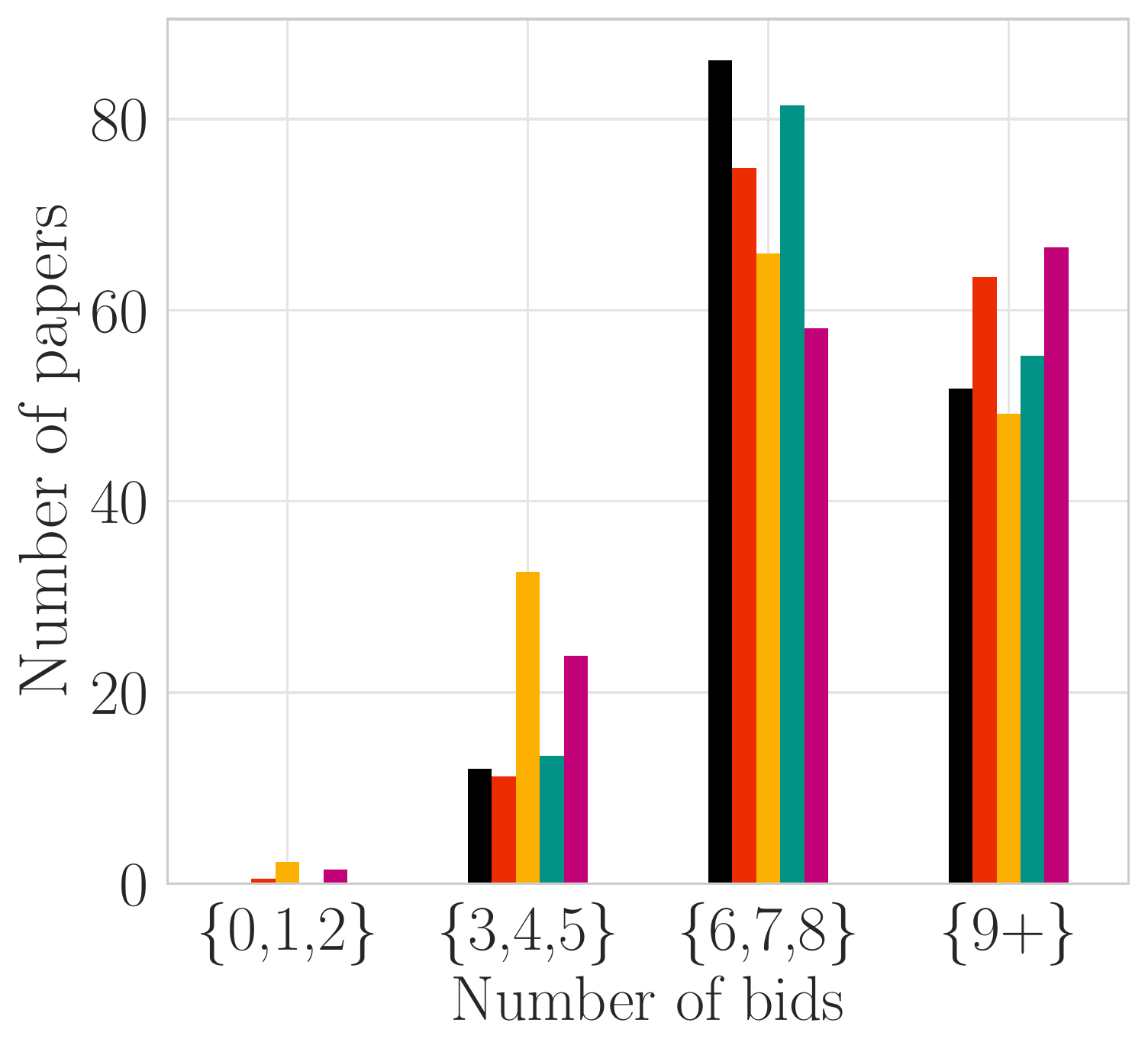}\label{fig:svary1h}}\hfill
    \subfloat{\includegraphics[width=.9\textwidth]{Figs/legend}}
    \caption{\small Experiments on synthetic similarity scores under the default model configuration. Legend for each bid distribution plot (e)--(h): within each bid-count interval, the bars 
    correspond to the algorithms given in the legend and are presented in an order consistent with the legend itself when read from left to right.
    }
    \label{fig:svary}
\end{figure*}

We begin looking at model mismatch for the bidding function. In Figures~\ref{fig:iclr20184a} and~\ref{fig:iclr20184e}, the situation is considered in which the actual bids are performed under the bidding function $\bidfunction(\ordering_{\revindex}(\paperindex), \similarity_{\revindex, \paperindex})=\similarity_{\revindex, \paperindex}/\sqrt{\ordering_{\revindex}(\paperindex)}$, whereas \super assumes $\bidfunction(\ordering_{\revindex}(\paperindex), \similarity_{\revindex, \paperindex}) = \similarity_{\revindex, \paperindex}/\log_2(\ordering_{\revindex}(\paperindex)+1)$. The results show each version of \super is robust to this deviation and outperforms the baselines in terms of the gain. Moreover, \super with zero heuristic is especially robust since it is not as dependent on the model of bids as \super with mean heuristic. 
The bid count distributions show that \super with mean heuristic reduces the number of papers with fewer than three bids by 85\% relative to \simbase, and \super with zero heuristic reduces the number of papers with fewer than six bids by 50\% compared to \bidbase.
In Figures~\ref{fig:iclr20184b} and~\ref{fig:iclr20184f}, we consider that the probability of a reviewer bidding on a paper is actually given by $\bidfunction(\ordering_{\revindex}(\paperindex), \similarity_{\revindex, \paperindex}) = (\similarity_{\revindex, \paperindex}+\mathcal{N}(0, \sigma^2))/\log_2(\ordering_{\revindex}(\paperindex)+1)$ where $\sigma=0.01$ and we remark that the mean of the similarity scores is approximately $0.03$ so the noise magnitude is not insignificant. We clip the noisy bid probabilities to guarantee that they remain in the interval $[0, 1]$. We observe that \super is again robust to this model mismatch and outperforms the baselines in terms of the gain and each version ends up reducing the fewer the number of papers having below the minimum desired number of bids by approximately 60\% compared to \simbase and \randbase.

In practice, not all reviewers may participate in the bidding process. We consider that only three quarters of the reviewers arrive, but this is unknown a priori to the algorithms. This proportion is roughly based on the number of reviewers that were found to not place any positive bids during the NeurIPS 2016 review process~\citep{shah:2018aa}. The results under this real-world complexity are shown in Figures~\ref{fig:iclr20184c} and~\ref{fig:iclr20184g}.
Moreover, reviewers may not actually arrive sequentially. 
Figures~\ref{fig:iclr20184d} and~\ref{fig:iclr20184h} consider the setting where Poisson(1) reviewers arrive at each time, and the algorithm must present paper orderings to all these reviewers simultaneously. For this pair of real-world complexities, \super remains quite robust and generally performs favorably compared to the baselines in terms of both the gain and the bid count distributions. It is worth noting that when not all reviewers arrive, \super with zero heuristic outperforms \super with zero heuristic since it does attempt to account for bids that may come from reviewers that have not arrived.

A common feature in peer review bidding systems is the ability for a reviewer to search papers by subject area or keyword and then only bid within the resulting subset of papers.  We now evaluate the robustness of \super to this type of reviewer behavior in the following manner. In our simulations, on arrival of any reviewer, one quarter of the papers are randomly selected and required to be shown to the reviewer (these are the papers that are assumed to meet the search query). The remaining papers are not presented to the reviewer, are bid on with probability zero, and do not factor into the reviewer-side gain. The algorithms compute the paper orderings over only the selected subset of papers. The results of this experiment are shown in Figures~\ref{fig:iclr20184dp} and~\ref{fig:iclr20184hp}. We observe that \super with zero heuristic outperforms the rest of the algorithms in terms of the gain while obtaining a favorable bid distribution. \super with mean heuristic is not quite as robust since fewer bids come from future reviewers than anticipated when computing the mean heuristic -- if one has estimates of the amount of selection done by reviewers via search, then this issue may be mitigated by appropriately scaling down the heuristic value.

\subsection{Synthetic Similarities}
\label{SecSyntheticSimulations}
 We perform several simulations on synthetic similarity scores comparing the algorithms as presented in Figure~\ref{fig:svary}. For this set of simulations, we consider the default model configuration from the previous section. 
 Moreover, for each similarity structure we let the number of reviewers and the number of papers $(\numrev, \numpapers)$ be among the set of pairs $\{(250, 250), (500,500),(750, 750), (1000,1000)\}$ and fix the trade-off parameter to be $\hyperparam = 0.8$ since this gave roughly equal paper-side and weighted reviewer-side gains for \randbase with $\numrev=\numpapers=750$. 
 
 \textbf{Homogeneous similarity scores.} 
 We consider a synthetic similarity matrix structure where each entry is drawn from at random from a Beta distribution with parameters $\alpha=1$ and $\beta = 15$. This distribution is highly skewed and the expected value of a draw from it is $0.0625$. The results of this experiment are given in Figures~\ref{fig:svary1a} and~\ref{fig:svary1e}. The gain of \super with each heuristic exceeds that of each of the baselines and similarly the bid count distributions show \super with each heuristic ends up with at least 50\% fewer papers obtaining under the minimum desired by the paper-side gain function compared to the baselines. We tried other homogeneous similarity structures and observed similar trends throughout.

 \textbf{Low rank structure.}
 We experimented with the following low rank structure to generate the similarity scores. The similarity matrix is split into 10 groups of reviewers which correspond to a block of rows in the matrix. The similarity scores for block $\ell \in \{1, \dots, 10\}$ are given by the rank-1 matrix $u(v^{\ell})^{\top}$ where $u$ is a $\numrev/10$-dimensional vector of ones and $v^{\ell}$ is a $\numpapers$-dimensional vector with each entry drawn at random from a Beta distribution with parameters $\alpha = \ell$ and $\beta =60$. This set of parameter choices for the distribution was made so that the scores were near the scale of the ICLR similarity matrix and to create a disparity in similarity scores between the blocks. Combining the blocks forms a similarity matrix of rank 10 where within each block the reviewers are identical and between the blocks the similarity score distribution changes. The structure can be viewed as if there are 10 types of reviewers with varying levels of relevance to the papers.
The result of the experiment with this similarity matrix structure is shown in Figures~\ref{fig:svary1b} and~\ref{fig:svary1f}. We again see that \super with each heuristic outperforms the baselines in terms of the gain and they obtain a favorable distribution of the bid counts with a 50\% reduction in the number of papers with fewer than six bids compared to \simbase and \randbase. 

 \textbf{Community model structure.} We now consider a community model type block structure as motivated in Section~\ref{sec:theoretical_properties}.
 To form this similarity matrix, we create a block matrix where each submatrix is of dimension $25\times 25$. The similarity score of each entry in the submatrix $(\ell, k)$ is 0 if $\ell \neq k$ and $0.7$ if $\ell = k$. In other words the matrix is block diagonal. We then add noise to each similarity score drawn uniformly at random from the interval $[0, 0.05]$. The results are given in Figures~\ref{fig:svary1c} and~\ref{fig:svary1g}. \bidbase and \randbase are highly suboptimal in terms of the gain and bid count distribution since they end up showing papers with similarity scores near zero early in the paper orderings even if later on there are reviewers to arrive which are closely matched to the papers. Remarkably, \super with each heuristic reduces the number of papers with fewer than the minimum number of desired bids by at least 90\% compared to \bidbase and \randbase. Moreover, \super with each heuristic outperforms \simbase in terms of the gain and reduces the number of papers with fewer than six bids by over 60\%. This happens since for papers with close similarity scores, \super shows the paper with fewer bids ahead if the similarity score is only marginally smaller.

 \textbf{Interdisciplinary papers.} To conclude our simulations, we consider the impact our algorithm could have on interdisciplinary papers. As mentioned in Section~\ref{sec:intro}, such papers face additional challenges in the peer review process owing to the lack of ideally matched peers. To simulate this phenomena, we consider a similarity matrix structure where there are two groups of reviewers of equal size and then three groups of papers which make up 40\%, 40\%, and 20\% of the papers, respectively. Each reviewer in group one has similarity scores of $0.17$, $0.005$, and $0.085$ with the respective paper groups and each reviewer in group two has similarity scores of $0.005$, $0.17$, and $0.085$ with the respective paper groups. This reflects a scenario where the reviewer pool has two distinct areas of expertise and there is a set of interdisciplinary papers (paper group with similarity score of $0.075$ with all reviewers). We show the results of the experiment in Figures~\ref{fig:svary1d} and~\ref{fig:svary1h}. In terms of the gain, \super and \simbase perform significantly outperform \bidbase and \randbase. For the bid count distribution in Figure~\ref{fig:svary1h}, we only consider the interdisciplinary papers. \super with each heuristic mitigates negative impacts on the interdisciplinary papers as the number with an insufficient number of bids is curtailed by 65\% and 50\% compared to \simbase and \randbase.
 This is a result of the fact that \super works to trade-off the paper-side and reviewer-side objectives so the interdisciplinary papers end up not always being shown after the papers matching the reviewers expertise as occurs for \simbase.

\section{Discussion}
\label{sec:discussion}
This work develops a principled framework to improve bidding in peer review. We develop a sequential decision-making algorithm called \super to optimize the process and show that it empirically outperforms baseline methods on real conference data and has several compelling theoretical guarantees. 

This work leads to several interesting and potentially impactful open problems:
\begin{itemize}
\item An obvious open problem is that of developing an online algorithm that is globally optimal for every similarity matrix. Conversely, showing possible computational hardness of the problem as a negative result could be a path of future work.
\item In several automated reviewer-paper assignment methods, bids and and similarities are combined to form the scores used to compute the assignment~\citep{shah:2018aa}. Given the tight connection between the bidding and matching systems, it is natural to design methods for jointly optimizing the components that govern the assignment process.
\item Finally, given the online nature of reviewer arrivals and the need to immediately show the paper ordering to an arriving reviewer, it is of interest to solve the passive problem. That is, develop an algorithm which selects the paper orderings to present to each reviewer before any of the reviewers arrive, which can be presented to a reviewer in the event of insufficient computational time. We also think that a solution to the passive problem would serve as an effective heuristic function for \super. 
\end{itemize}

Although our work primarily focuses on peer review, there are a number of other applications for which this work is relevant. One such application is crowdsourcing, where a common goal of the crowdsourcing platform is to ensure that each task gets sufficiently many qualified workers. From the perspective of the worker, it has been documented quite extensively that workers put a non-trivial emphasis on a task if it is of interest to them~\citep{kaufmann2011more, hossain2012users}. This means there is a trade-off of ensuring each worker is satisfied while ensuring a minimum number of workers for each job. As such, it is reasonable to formulate the crowdsourcing problem as a multi-objective optimization problem using analogous approach to the one presented in this paper for the bidding process in peer review. Indeed, the crowdsourcing platform in this formulation seeks to optimize both for a task-side objective, which ensures each task gets a sufficient number of workers selecting it, and for a user-side objective, which is to  present relevant tasks to users. 

Another potentially viable application is crowdfunding and microlending platforms such as Kiva or KickStarter. In crowdfunding, users pledge toward funding a project, and the project is only funded if the cumulative contributions of the crowd meet a known target threshold. The platforms seek to maximize the number of funded projects by deciding, when, how often, and to which users the projects are displayed. Past work~\citep{jain2018firing} has modeled the optimization as a multi-armed bandit problem where the goal is to maximize the number of funded projects with a minimum amount of user impressions. This problem has a fundamental trade-off between showing relevant projects to users while ensuring that the projects themselves are given a fair shot to be funded. As such, a model-dependent approach such of ours could be of potential interest. 

A final potential application outside of peer review for our work is in recommender systems and online advertising where there is the common trade-off between exploration (gaining sufficient feedback on all items) and exploitation (showing the most relevant items to users). In recommender systems, the cold start problem refers to the situation where the system is just beginning to interact with users or items are freshly included in the catalogue and no past user interaction information is available. The common trade-off arises again where there is a need to show relevant items to users, while the system benefits from gaining feedback on items for which the utility is unknown. Our framework easily adapts to a changing action set and could be relevant to this task.

\section*{Acknowledgments}
This work was supported by NSF grants CIF 1763734, CAREER: CIF 1942124	and CRII: CIF 1755656. Tanner Fiez was also supported by the DoD NDSEG Fellowship Program.

\bibliographystyle{abbrvnat}
\bibliography{bibtex}

\begin{thebibliography}{64}
\providecommand{\natexlab}[1]{#1}
\providecommand{\url}[1]{\texttt{#1}}
\expandafter\ifx\csname urlstyle\endcsname\relax
  \providecommand{\doi}[1]{doi: #1}\else
  \providecommand{\doi}{doi: \begingroup \urlstyle{rm}\Url}\fi

\bibitem[Agarwal et~al.(2011)Agarwal, Chen, Elango, and Wang]{agarwal2011click}
D.~Agarwal, B.-C. Chen, P.~Elango, and X.~Wang.
\newblock Click shaping to optimize multiple objectives.
\newblock In \emph{SIGKDD}, pages 132--140, 2011.

\bibitem[Ailon et~al.(2008)Ailon, Charikar, and Newman]{ailon2008aggregating}
N.~Ailon, M.~Charikar, and A.~Newman.
\newblock Aggregating inconsistent information: ranking and clustering.
\newblock \emph{Journal of the ACM (JACM)}, 55\penalty0 (5):\penalty0 23, 2008.

\bibitem[Aslanyan and Porwal(2019)]{aslanyan2019position}
G.~Aslanyan and U.~Porwal.
\newblock Position bias estimation for unbiased learning-to-rank in ecommerce
  search.
\newblock In \emph{ACM International Symposium on String Processing and
  Information Retrieval}, pages 47--64. Springer, 2019.

\bibitem[Aziz et~al.(2019)Aziz, Lev, Mattei, Rosenschein, and
  Walsh]{aziz2019strategyproof}
H.~Aziz, O.~Lev, N.~Mattei, J.~S. Rosenschein, and T.~Walsh.
\newblock Strategyproof peer selection using randomization, partitioning, and
  apportionment.
\newblock \emph{Artificial Intelligence}, 275:\penalty0 295--309, 2019.

\bibitem[Bianchi and Squazzoni(2015)]{bianchi2015three}
F.~Bianchi and F.~Squazzoni.
\newblock Is three better than one? simulating the effect of reviewer selection
  and behavior on the quality and efficiency of peer review.
\newblock In \emph{Winter Simulation Conference}, pages 4081--4089. IEEE, 2015.

\bibitem[Black et~al.(1998)Black, Van~Rooyen, Godlee, Smith, and
  Evans]{black:1998aa}
N.~Black, S.~Van~Rooyen, F.~Godlee, R.~Smith, and S.~Evans.
\newblock What makes a good reviewer and a good review for a general medical
  journal?
\newblock \emph{Journal of the American Medical Association}, 280\penalty0
  (3):\penalty0 231--233, 1998.

\bibitem[Burkard et~al.(2012)Burkard, Dell'Amico, and
  Martello]{burkard2012assignment}
R.~Burkard, M.~Dell'Amico, and S.~Martello.
\newblock \emph{Assignment problems, revised reprint}, volume 106.
\newblock SIAM, 2012.

\bibitem[Cabanac and Preuss(2013)]{cabanac:2013aa}
G.~Cabanac and T.~Preuss.
\newblock Capitalizing on order effects in the bids of peer-reviewed
  conferences to secure reviews by expert referees.
\newblock \emph{Journal of the American Society for Information Science and
  Technology}, 64\penalty0 (2):\penalty0 405--415, 2013.

\bibitem[Cao et~al.(2007)Cao, Qin, Liu, Tsai, and Li]{cao2007learning}
Z.~Cao, T.~Qin, T.-Y. Liu, M.-F. Tsai, and H.~Li.
\newblock Learning to rank: from pairwise approach to listwise approach.
\newblock In \emph{ICML}, pages 129--136, 2007.

\bibitem[Charlin and Zemel(2013)]{charlin2013toronto}
L.~Charlin and R.~Zemel.
\newblock The toronto paper matching system: an automated paper-reviewer
  assignment system.
\newblock In \emph{ICML Workshop on Peer Reviewing and Publishing Models},
  2013.

\bibitem[Chuklin et~al.(2015)Chuklin, Markov, and Rijke]{chuklin2015click}
A.~Chuklin, I.~Markov, and M.~d. Rijke.
\newblock Click models for web search.
\newblock \emph{Synthesis Lectures on Information Concepts, Retrieval, and
  Services}, 7\penalty0 (3):\penalty0 1--115, 2015.

\bibitem[Church(2005)]{church2005reviewing}
K.~Church.
\newblock Reviewing the reviewers.
\newblock \emph{Computational Linguistics}, 31\penalty0 (4):\penalty0 575--578,
  2005.

\bibitem[Dhillon(2001)]{dhillon2001co}
I.~S. Dhillon.
\newblock Co-clustering documents and words using bipartite spectral graph
  partitioning.
\newblock In \emph{SIGKDD}, pages 269--274, 2001.

\bibitem[{Di Mauro} et~al.(2005){Di Mauro}, Basile, and Ferilli]{di:2005aa}
N.~{Di Mauro}, T.~M. Basile, and S.~Ferilli.
\newblock Grape: An expert review assignment component for scientific
  conference management systems.
\newblock In \emph{International conference on industrial, engineering and
  other applications of applied intelligent systems}, pages 789--798. Springer,
  2005.

\bibitem[Ding et~al.(2020)Ding, Shah, and Wang]{ding2020privacy}
W.~Ding, N.~B. Shah, and W.~Wang.
\newblock On the privacy-utility tradeoff in peer-review data analysis.
\newblock \emph{arXiv:2006.16385}, 2020.

\bibitem[Fiez et~al.(2019)Fiez, Shah, and Ratliff]{fiez2019super}
T.~Fiez, N.~Shah, and L.~Ratliff.
\newblock A super* algorithm to optimize paper bidding in peer review.
\newblock In \emph{ICML workshop on Real-world Sequential Decision Making:
  Reinforcement Learning And Beyond}, 2019.

\bibitem[Garg et~al.(2010)Garg, Kavitha, Kumar, Mehlhorn, and
  Mestre]{garg:2010aa}
N.~Garg, T.~Kavitha, A.~Kumar, K.~Mehlhorn, and J.~Mestre.
\newblock Assigning papers to referees.
\newblock \emph{Algorithmica}, 58\penalty0 (1):\penalty0 119--136, Sep 2010.

\bibitem[Girvan and Newman(2002)]{girvan2002community}
M.~Girvan and M.~E. Newman.
\newblock Community structure in social and biological networks.
\newblock \emph{Proceedings of the national academy of sciences}, 99\penalty0
  (12):\penalty0 7821--7826, 2002.

\bibitem[Hart et~al.(1968)Hart, Nilsson, and Raphael]{hart1968formal}
P.~Hart, N.~Nilsson, and B.~Raphael.
\newblock A formal basis for the heuristic determination of minimum cost paths.
\newblock \emph{IEEE Trans. Systems Science and Cybernetics}, 4\penalty0
  (2):\penalty0 100--107, 1968.

\bibitem[Hossain(2012)]{hossain2012users}
M.~Hossain.
\newblock Users' motivation to participate in online crowdsourcing platforms.
\newblock In \emph{International Conference on Innovation Management and
  Technology Research}, pages 310--315. IEEE, 2012.

\bibitem[Jain and Jamieson(2018)]{jain2018firing}
L.~Jain and K.~Jamieson.
\newblock Firing bandits: Optimizing crowdfunding.
\newblock In \emph{ICML}, pages 2206--2214, 2018.

\bibitem[Jambor and Wang(2010)]{jambor2010opt}
T.~Jambor and J.~Wang.
\newblock Optimizing multiple objectives in collaborative filtering.
\newblock In \emph{RecSys}, pages 55--62, 01 2010.

\bibitem[J{\"a}rvelin and Kek{\"a}l{\"a}inen(2000)]{jarvelin2000ir}
K.~J{\"a}rvelin and J.~Kek{\"a}l{\"a}inen.
\newblock {IR} evaluation methods for retrieving highly relevant documents.
\newblock In \emph{ACM SIGIR}, pages 41--48, 2000.

\bibitem[Jecmen et~al.(2020)Jecmen, Zhang, Liu, Shah, Conitzer, and
  Fang]{jecmen2020manipulation}
S.~Jecmen, H.~Zhang, R.~Liu, N.~B. Shah, V.~Conitzer, and F.~Fang.
\newblock Mitigating manipulation in peer review via randomized reviewer
  assignments.
\newblock In \emph{ICML Workshop on Incentives in Machine Learning}, 2020.

\bibitem[Kang et~al.(2018)Kang, Ammar, Dalvi, van Zuylen, Kohlmeier, Hovy, and
  Schwartz]{kang2018dataset}
D.~Kang, W.~Ammar, B.~Dalvi, M.~van Zuylen, S.~Kohlmeier, E.~Hovy, and
  R.~Schwartz.
\newblock A dataset of peer reviews (peerread): Collection, insights and nlp
  applications.
\newblock \emph{arXiv:1804.09635}, 2018.

\bibitem[Karimzadehgan et~al.(2008)Karimzadehgan, Zhai, and
  Belford]{karimzadehgan2008multi}
M.~Karimzadehgan, C.~Zhai, and G.~Belford.
\newblock Multi-aspect expertise matching for review assignment.
\newblock In \emph{Conference on Information and knowledge management}, pages
  1113--1122, 2008.

\bibitem[Kaufmann et~al.(2011)Kaufmann, Schulze, and Veit]{kaufmann2011more}
N.~Kaufmann, T.~Schulze, and D.~Veit.
\newblock More than fun and money. worker motivation in crowdsourcing-a study
  on mechanical turk.
\newblock In \emph{Americas Conference on Information Systems}, volume~11,
  pages 1--11, 2011.

\bibitem[Kobren et~al.(2019)Kobren, Saha, and McCallum]{kobren19localfairness}
A.~Kobren, B.~Saha, and A.~McCallum.
\newblock Paper matching with local fairness constraints.
\newblock In \emph{SIGKDD}, 2019.

\bibitem[Lawler(1976)]{lawler1976combinatorial}
E.~L. Lawler.
\newblock Combinatorial optimization, 1976.

\bibitem[Lawrence and Cortes(2014)]{nips14experiment}
N.~Lawrence and C.~Cortes.
\newblock The nips experiment.
\newblock \url{http://inverseprobability.com/2014/12/16/the-nips-experiment},
  2014.

\bibitem[Lian et~al.(2018)Lian, Mattei, Noble, and Walsh]{lian2018conference}
J.~W. Lian, N.~Mattei, R.~Noble, and T.~Walsh.
\newblock The conference paper assignment problem: Using order weighted
  averages to assign indivisible goods.
\newblock In \emph{AAAI}, 2018.

\bibitem[Long et~al.(2013)Long, Wong, Peng, and Ye]{long2013good}
C.~Long, R.~C.-W. Wong, Y.~Peng, and L.~Ye.
\newblock On good and fair paper-reviewer assignment.
\newblock In \emph{International Conference on Data Mining}, pages 1145--1150.
  IEEE, 2013.

\bibitem[Meir et~al.(2020)Meir, Lang, Lesca, Kaminsky, and Mattei]{meirmarket}
R.~Meir, J.~Lang, J.~Lesca, N.~Kaminsky, and N.~Mattei.
\newblock A market-inspired bidding scheme for peer review paper assignment.
\newblock In \emph{Games, Agents, and Incentives Workshop at AAMAS}, 2020.

\bibitem[Momma et~al.(2019)Momma, Garakani, and Sun]{momma2019multiobjective}
M.~Momma, A.~B. Garakani, and Y.~Sun.
\newblock Multi-objective relevance ranking.
\newblock In \emph{SIGIR 2019 eCom}, 2019.

\bibitem[Mulligan et~al.(2013)Mulligan, Hall, and Raphael]{mulligan2013peer}
A.~Mulligan, L.~Hall, and E.~Raphael.
\newblock Peer review in a changing world: An international study measuring the
  attitudes of researchers.
\newblock \emph{Journal of the American Society for Information Science and
  Technology}, 64\penalty0 (1):\penalty0 132--161, 2013.

\bibitem[Murphy et~al.(2006)Murphy, Hofacker, and Mizerski]{murphy2006primacy}
J.~Murphy, C.~Hofacker, and R.~Mizerski.
\newblock Primacy and recency effects on clicking behavior.
\newblock \emph{Journal of Computer-Mediated Communication}, 11\penalty0
  (2):\penalty0 522--535, 2006.

\bibitem[Newman(2001)]{newman2001structure}
M.~E. Newman.
\newblock The structure of scientific collaboration networks.
\newblock \emph{Proceedings of the national academy of sciences}, 98\penalty0
  (2):\penalty0 404--409, 2001.

\bibitem[Newman and Girvan(2004)]{newman2004finding}
M.~E. Newman and M.~Girvan.
\newblock Finding and evaluating community structure in networks.
\newblock \emph{Physical review E}, 69\penalty0 (2):\penalty0 026113, 2004.

\bibitem[Noothigattu et~al.(2018)Noothigattu, Shah, and
  Procaccia]{noothigattu2018choosing}
R.~Noothigattu, N.~Shah, and A.~Procaccia.
\newblock Loss functions, axioms, and peer review.
\newblock In \emph{ICML Workshop on Incentives in Machine Learning}, 2018.

\bibitem[Porter and Rossini(1985)]{porter1985peer}
A.~L. Porter and F.~A. Rossini.
\newblock Peer review of interdisciplinary research proposals.
\newblock \emph{Science, technology, \& human values}, 10\penalty0
  (3):\penalty0 33--38, 1985.

\bibitem[Porter et~al.(2009)Porter, Onnela, and Mucha]{porter2009communities}
M.~A. Porter, J.-P. Onnela, and P.~J. Mucha.
\newblock Communities in networks.
\newblock \emph{Notices of the AMS}, 56\penalty0 (9):\penalty0 1082--1097,
  2009.

\bibitem[Price and Flach(2017)]{price2017computational}
S.~Price and P.~A. Flach.
\newblock Computational support for academic peer review: a perspective from
  artificial intelligence.
\newblock \emph{Communications of the ACM}, 60\penalty0 (3):\penalty0 70--79,
  2017.

\bibitem[Price et~al.(2010)Price, Flach, and Spiegler]{price2010subsift}
S.~Price, P.~A. Flach, and S.~Spiegler.
\newblock Subsift: a novel application of the vector space model to support the
  academic research process.
\newblock In \emph{Workshop on Applications of Pattern Analysis}, pages 20--27,
  2010.

\bibitem[Rodriguez et~al.(2012)Rodriguez, Posse, and
  Zhang]{rodriguez2012multiple}
M.~Rodriguez, C.~Posse, and E.~Zhang.
\newblock Multiple objective optimization in recommender systems.
\newblock In \emph{RecSys}, pages 11--18, 2012.

\bibitem[Rodriguez et~al.(2007)Rodriguez, Bollen, and Van~de
  Sompel]{rodriguez:2007aa}
M.~A. Rodriguez, J.~Bollen, and H.~Van~de Sompel.
\newblock Mapping the bid behavior of conference referees.
\newblock \emph{Journal of Informetrics}, 1\penalty0 (1):\penalty0 68--82,
  2007.

\bibitem[Roos et~al.(2012)Roos, Rothe, Rudolph, Scheuermann, and
  Stoyan]{roos2012statistical}
M.~Roos, J.~Rothe, J.~Rudolph, B.~Scheuermann, and D.~Stoyan.
\newblock A statistical approach to calibrating the scores of biased reviewers:
  The linear vs. the nonlinear model.
\newblock In \emph{6th Multidisciplinary Workshop on Advances in Preference
  Handling}, 2012.

\bibitem[Shah et~al.(2016)Shah, Balakrishnan, Guntuboyina, and
  Wainwright]{shah2016stochastically}
N.~Shah, S.~Balakrishnan, A.~Guntuboyina, and M.~Wainwright.
\newblock Stochastically transitive models for pairwise comparisons:
  Statistical and computational issues.
\newblock In \emph{ICML}, pages 11--20, 2016.

\bibitem[Shah et~al.(2018{\natexlab{a}})Shah, Tabibian, Muandet, Guyon, and
  Von~Luxburg]{shah2017design}
N.~B. Shah, B.~Tabibian, K.~Muandet, I.~Guyon, and U.~Von~Luxburg.
\newblock Design and analysis of the {NIPS} 2016 review process.
\newblock \emph{The Journal of Machine Learning Research}, 19\penalty0
  (1):\penalty0 1913--1946, 2018{\natexlab{a}}.

\bibitem[Shah et~al.(2018{\natexlab{b}})Shah, Tabibian, Muandet, Guyon, and
  Von~Luxburg]{shah:2018aa}
N.~B. Shah, B.~Tabibian, K.~Muandet, I.~Guyon, and U.~Von~Luxburg.
\newblock Design and analysis of the nips 2016 review process.
\newblock \emph{JMLR}, 19\penalty0 (1):\penalty0 1913--1946,
  2018{\natexlab{b}}.

\bibitem[Singh and Joachims(2019)]{singh2019policy}
A.~Singh and T.~Joachims.
\newblock Policy learning for fairness in ranking.
\newblock In \emph{NeurIPS}, pages 5427--5437, 2019.

\bibitem[Stelmakh et~al.(2019{\natexlab{a}})Stelmakh, Shah, and
  Singh]{stelmakh2019testing}
I.~Stelmakh, N.~Shah, and A.~Singh.
\newblock On testing for biases in peer review.
\newblock In \emph{NeurIPS}, 2019{\natexlab{a}}.

\bibitem[Stelmakh et~al.(2019{\natexlab{b}})Stelmakh, Shah, and
  Singh]{stelmakh:2018aa}
I.~Stelmakh, N.~B. Shah, and A.~Singh.
\newblock Peerreview4all: Fair and accurate reviewer assignment in peer review.
\newblock In \emph{ALT}, pages 828--856, 2019{\natexlab{b}}.

\bibitem[Stelmakh et~al.(2020{\natexlab{a}})Stelmakh, Shah, and
  Singh]{stelmakh2020catch}
I.~Stelmakh, N.~Shah, and A.~Singh.
\newblock Catch me if i can: Detecting strategic behaviour in peer assessment.
\newblock In \emph{ICML Workshop on Incentives in Machine Learning},
  2020{\natexlab{a}}.

\bibitem[Stelmakh et~al.(2020{\natexlab{b}})Stelmakh, Shah, Singh, and
  Daumé~III]{stelmakh2020resubmissions}
I.~Stelmakh, N.~Shah, A.~Singh, and H.~Daumé~III.
\newblock Prior and prejudice: The bias against resubmissions in conference
  peer review.
\newblock \emph{arXiv}, 2020{\natexlab{b}}.

\bibitem[Svore et~al.(2011)Svore, Volkovs, and Burges]{svore2011learning}
K.~Svore, M.~Volkovs, and C.~Burges.
\newblock Learning to rank with multiple objective functions.
\newblock In \emph{WWW}, pages 367--376, 03 2011.

\bibitem[Tang et~al.(2010)Tang, Tang, and Tan]{tang2010expertise}
W.~Tang, J.~Tang, and C.~Tan.
\newblock Expertise matching via constraint-based optimization.
\newblock In \emph{International Conference on Web Intelligence and Intelligent
  Agent Technology}, volume~1, pages 34--41. IEEE, 2010.

\bibitem[Thurner and Hanel(2011)]{thurner:2011aa}
S.~Thurner and R.~Hanel.
\newblock Peer-review in a world with rational scientists: Toward selection of
  the average.
\newblock \emph{The European Physical Journal B}, 84\penalty0 (4):\penalty0
  707--711, 2011.

\bibitem[Tomkins et~al.(2017)Tomkins, Zhang, and Heavlin]{tomkins2017reviewer}
A.~Tomkins, M.~Zhang, and W.~D. Heavlin.
\newblock Reviewer bias in single-versus double-blind peer review.
\newblock \emph{Proceedings of the National Academy of Sciences}, 114\penalty0
  (48):\penalty0 12708--12713, 2017.

\bibitem[Travis and Collins(1991)]{travis1991new}
G.~D.~L. Travis and H.~M. Collins.
\newblock New light on old boys: Cognitive and institutional particularism in
  the peer review system.
\newblock \emph{Science, Technology, \& Human Values}, 16\penalty0
  (3):\penalty0 322--341, 1991.

\bibitem[Wang and Shah(2019)]{wang2018your}
J.~Wang and N.~B. Shah.
\newblock Your 2 is my 1, your 3 is my 9: Handling arbitrary miscalibrations in
  ratings.
\newblock In \emph{AAMAS}, pages 864--872, 2019.

\bibitem[Ware(2008)]{ware2008peer}
M.~Ware.
\newblock Peer review in scholarly journals: Perspective of the scholarly
  community--results from an international study.
\newblock \emph{Information Services \& Use}, 28\penalty0 (2):\penalty0
  109--112, 2008.

\bibitem[Wing(2011)]{wing2011hypercriticality}
J.~Wing.
\newblock Yes, computer scientists are hypercritical.
\newblock \emph{Communications of the ACM}, 2011.

\bibitem[Xu et~al.(2019)Xu, Zhao, Shi, and Shah]{xu2018strategyproof}
Y.~Xu, H.~Zhao, X.~Shi, and N.~Shah.
\newblock On strategyproof conference review.
\newblock In \emph{IJCAI}, pages 616--622, 2019.

\bibitem[Yadav et~al.(2019)Yadav, Du, and Joachims]{yadav2019fair}
H.~Yadav, Z.~Du, and T.~Joachims.
\newblock Fair learning-to-rank from implicit feedback.
\newblock \emph{arXiv:1911.08054}, 2019.

\end{thebibliography}

\begin{appendices}
\renewcommand\contentsname{Table of Contents}
\tableofcontents
\addtocontents{toc}{\setcounter{tocdepth}{3}} 

\section{Proofs}\label{sec:proofs}
In this section, we present proofs of the theoretical results presented in the main text.  We begin by formally defining the history of information available to an algorithm when a reviewer arrives and restating important notation that will be found throughout the proofs.

\begin{definition}[History]
The history of information available to an algorithm when reviewer $\revindex \in \reviewerset$ arrives is defined as $\history_{\revindex-1}= \{\ordering_{\ell}, \realbid_{\ell}: \ell=1,\dots,\revindex-1\}$, where $\ordering_{\ell} \in \symgroup_{\numpapers}$ is the paper ordering that was presented to reviewer $\ell$ and $\realbid_{\ell} \in \{0, 1\}^{\numpapers}$ contains the bid realizations on each paper from reviewer $\ell$. 
\label{def:history}
\end{definition}

\paragraph*{Notation.}
We let $\numpapers\geq 2$ denote the number of papers and $\numrev\geq 2$ denote the number of reviewers. We use the notation $\similarity_{\revindex, \paperindex} \in [0,1]$ to denote the given similarity score between any reviewer $\revindex \in [\numrev]$ and paper $\paperindex \in [\numpapers]$. In general, we let $\revindex$ index a reviewer and $\paperindex$ index a paper. We commonly use the set notation $[\kappa] = \{1,2,\dots, \kappa\}$ for any positive integer $\kappa$.
$\symgroup_\numpapers$ denotes the set of $\numpapers!$ permutations of $\numpapers$ papers. For any reviewer $\revindex \in \reviewerset$, we let $\ordering_\revindex \in \symgroup_\numpapers$  denote the ordering (permutation) of the papers shown to reviewer $\revindex$. We use the notation $\ordering_\revindex(\paperindex)$ to denote the position of paper $\paperindex \in \paperset$ in the ordering $\ordering_\revindex$.  
The notation $\randombid_{\revindex, \paperindex}$ represents the random variable of reviewer $\revindex \in \reviewerset$ biding on paper $\paperindex \in \paperset$ which follows a Bernoulli distribution with parameter $\bidprob_{\revindex, \paperindex}= \bidfunction(\ordering_{\revindex}(\paperindex), \similarity_{\revindex, \paperindex})$. We use the notation $\numbids_{\revindex-1, \paperindex} \in \{0,\ldots,\revindex-1\}$ to denote the number of bids received by any paper $\paperindex \in \paperset$ at the time of arrival of reviewer $\revindex \in \reviewerset$ and $\numbids_{\paperindex}$ to denote the number of bids at the end of the bidding process on paper $\paperindex \in \paperset$. The heuristic estimating the number of bids paper $\paperindex \in \paperset$ will obtain from reviewers $\{\revindex+1, \dots, \numrev\}$ is denoted as $\proxy_{\revindex, \paperindex}$ and it is provided to the \super algorithm on arrival of reviewer $\revindex \in \reviewerset$. Finally, we abbreviate `with probability' by w.p and use the terminology almost surely to mean with probability one and almost never to mean with probability zero.

\subsection{Proof of Theorem~\ref{prop:local}: Local Optimality of \super for Final Reviewer}
\label{sec:proof_local}
In this proof, we show that selecting the optimal ordering to present to the final reviewer can be simplified to a tractable optimization problem. The $\super$ algorithm solves exactly this problem to determine an ordering of papers to present the final reviewer, and is hence an optimal algorithm for the final reviewer.

The optimization problem for the final reviewer $\numrev$ to maximize the expected gain conditioned on the history is
\begin{equation}
\max_{\ordering_{\numrev} \in \symgroup_\numpapers} \quad \sum_{\paperindex\in \paperset} \mathbb{E}[\gainfunction(\numbids_{\paperindex})|\history_{\numrev-1}]+\hyperparam\sum_{\revindex \in \reviewerset}\sum_{\paperindex\in \paperset}\mathbb{E}[\gainfunctionrev(\ordering_{\revindex}(\paperindex), \similarity_{\revindex,\paperindex})|\history_{\numrev-1}] 
\label{eq:app_opt}
\end{equation}
where the expectation is taken over the randomness in the bid to be placed by the reviewer. Conditioned on the history $\history_{\numrev-1}$, the final number of bids on any paper $\paperindex \in \paperset$ given by $\numbids_{\paperindex}$ is the sum of the deterministic number of bids prior to the final reviewer denoted as $\numbids_{\numrev-1, \paperindex}$ and a Bernoulli random variable $\randombid_{\numrev, \paperindex}$ with parameter $\bidprob_{\numrev, \paperindex} = \bidfunction(\ordering_{\numrev}(\paperindex), \similarity_{\numrev, \paperindex})$ representing the random bid of the final reviewer. We incorporate this fact and remove terms independent of the optimization variable from the problem in~\eqref{eq:app_opt} to equivalently obtain
\begin{equation}
\max_{\ordering_{\numrev} \in \symgroup_\numpapers} \quad \sum_{\paperindex\in \paperset} \mathbb{E}[\gainfunction(\numbids_{\numrev-1, \paperindex} + \randombid_{\numrev,\paperindex})]+ \hyperparam \sum_{\paperindex\in \paperset}\gainfunctionrev(\ordering_{\numrev}(\paperindex), \similarity_{\numrev, \paperindex})
\label{eq:app_opt2}
\end{equation}
where the expectation on the reviewer-side gain is removed as it is independent of the random reviewer bids.

We now simplify the paper-side gain term by expanding the expectation for each $\paperindex \in \paperset$. Observe that 
\begin{equation*}
\gainfunction(\numbids_{\numrev-1, \paperindex} + \randombid_{\numrev,\paperindex}) = 
\begin{cases}
\gainfunction(\numbids_{\numrev-1, \paperindex} + 1) & \text{w.p.}\quad \bidfunction(\ordering_{\numrev}(\paperindex), \similarity_{\numrev, \paperindex})\\
\gainfunction(\numbids_{\numrev-1, \paperindex}) & \text{w.p.}\quad 1- \bidfunction(\ordering_{\numrev}(\paperindex), \similarity_{\numrev, \paperindex}).
\end{cases}
\end{equation*}
Therefore,
\begin{equation}
\mathbb{E}[\gainfunction(\numbids_{\numrev-1, \paperindex} + \randombid_{\numrev,\paperindex})] 
= \bidfunction(\ordering_{\numrev}(\paperindex), \similarity_{\numrev, \paperindex})(\gainfunction(\numbids_{\numrev-1, \paperindex} + 1) - \gainfunction(\numbids_{\numrev-1, \paperindex})) + \gainfunction(\numbids_{\numrev-1, \paperindex}).\label{eq:prob_simplify}
\end{equation}
Substituting~\eqref{eq:prob_simplify} into~\eqref{eq:app_opt2} and removing the term independent of the optimization variable gives the problem
\begin{equation}
\max_{\ordering_{\numrev} \in \symgroup_\numpapers} \quad \sum_{\paperindex\in \paperset} \bidfunction(\ordering_{\numrev}(\paperindex), \similarity_{\numrev, \paperindex})(\gainfunction(\numbids_{\numrev-1, \paperindex} + 1) - \gainfunction(\numbids_{\numrev-1, \paperindex})) + \hyperparam \sum_{\paperindex \in \paperset}\gainfunctionrev(\ordering_{\numrev}(\paperindex), \similarity_{\numrev, \paperindex}).
\label{eq:opt_preinteger}
\end{equation}

We now reformulate~\eqref{eq:opt_preinteger} into the following equivalent integer linear programming problem:  
\begin{equation*}
\begin{split}
\max_{\optvar\in \reals^{\numpapers \times \numpapers}} &\quad  \sum_{\paperindex\in \paperset}\sum_{\extravar \in \paperset}\bidfunction(\extravar, \similarity_{\numrev, \paperindex})(\gainfunction(\numbids_{\numrev-1, \paperindex} + 1) - \gainfunction(\numbids_{\numrev-1, \paperindex})) \optvar_{\paperindex,\extravar}  + \hyperparam\sum_{\paperindex\in \paperset}\sum_{\extravar \in \paperset}\gainfunctionrev(\extravar, \similarity_{\numrev, \paperindex}) \optvar_{\paperindex, \extravar} \\
\text{s.t.} &\quad \sum_{\extravar \in \paperset} \optvar_{\paperindex,\extravar} = 1 \ \forall \ \paperindex \in \paperset,  \quad \sum_{\paperindex \in \paperset} \optvar_{\paperindex,\extravar} = 1 \ \forall \ \extravar \in \paperset, \quad \optvar_{\paperindex,\extravar} \in \{0,1\} \ \forall \ \paperindex,\extravar \in \paperset.
\end{split}
\label{eq:opt_prelinear}
\end{equation*}
In this formulation, $\optvar$ is a $\numpapers \times \numpapers$ matrix for which $\optvar_{\paperindex,\extravar}$ is an indicator of paper $\paperindex \in \paperset$ being shown at position $\extravar \in \paperset$. The constraint $\sum_{\extravar \in \paperset} \optvar_{\paperindex,\extravar} = 1 \ \forall \ \paperindex \in \paperset$ ensures each paper is included strictly once in the ordering shown to the reviewer. The constraint $\sum_{\paperindex \in \paperset} \optvar_{\paperindex,\extravar} = 1 \ \forall \ \extravar \in \paperset$ ensures strictly one paper is selected to be shown at each position. The final constraint ensures that each index of $\optvar$ is integer valued. This integer linear programming problem is known as a linear sum assignment problem.

The key step of this proof is to recall that the linear sum assignment problem can be solved as a linear program. Indeed, the final constraint ensuring an integer solution can be relaxed to $0 \leq \optvar_{\paperindex,\extravar} \leq 1 \ \forall \ \paperindex,\extravar \in \paperset$ and the optimal solution of the relaxed linear program will be the integer optimal solution. 
This property of the linear sum assignment problem is a consequence of the relaxed linear program containing a totally unimodular constraint set which guarantees the optimal solution to be the integral solution (see, e.g., Chapter 4 in~\citealp{burkard2012assignment}). 

The optimization problem arising from recognizing that the integer constraint can be relaxed is given by
\begin{equation}
\begin{split}
\max_{\optvar\in \reals^{\numpapers \times \numpapers}} &\quad  \sum_{\paperindex\in \paperset}\sum_{\extravar\in \paperset} \weight_{\paperindex,\extravar}\optvar_{\paperindex,\extravar}\\
\text{s.t.} &\quad \sum_{\extravar \in \paperset} \optvar_{\paperindex,\extravar} = 1 \ \forall \ \paperindex \in \paperset, \quad \sum_{\paperindex \in \paperset} \optvar_{\paperindex,\extravar} = 1 \ \forall \ \extravar \in \paperset, \quad  0\leq \optvar_{\paperindex,\extravar} \leq 1 \ \forall \ \paperindex,\extravar \in \paperset,
\end{split}
\label{eq:opt_linear}
\end{equation}
where 
\begin{equation}
\weight_{\paperindex, \extravar} = \bidfunction(\extravar, \similarity_{\numrev, \paperindex})(\gainfunction(\numbids_{\numrev-1, \paperindex} + 1) - \gainfunction(\numbids_{\numrev-1, \paperindex})) + \hyperparam \gainfunctionrev(\extravar, \similarity_{\numrev, \paperindex}) \ \forall \ \paperindex,\extravar \in \paperset.
\label{eq:opt_final_weights}
\end{equation}
This formulation shows that the paper ordering for the final reviewer that maximizes the expected gain conditioned on the history can be obtained efficiently by solving a linear program.

\paragraph*{Local optimality of \super for final reviewer.}
To determine the ordering of papers to show any reviewer $\revindex \in \reviewerset$ for the general class of assumed gain and bidding functions, \super calls Algorithm~\ref{alg:subprocedure}. Algorithm~\ref{alg:subprocedure} solves the optimization problem in~\eqref{eq:opt_linear} using the weights 
\begin{equation}
\weight_{\paperindex, \extravar} = \bidfunction(\extravar, \similarity_{\revindex, \paperindex})(\gainfunction(\numbids_{\revindex-1, \paperindex} + \proxy_{\revindex, \paperindex} + 1) - \gainfunction(\numbids_{\revindex-1, \paperindex} + \proxy_{\revindex, \paperindex})) + \hyperparam \gainfunctionrev(\extravar, \similarity_{\revindex, \paperindex}) \ \forall \ \paperindex,\extravar \in \paperset.
\label{eq:superweights}
\end{equation}
Given any heuristic function, the heuristic for the final reviewer is such that $\proxy_{\numrev} =0$. This means \super selects the paper ordering for the final reviewer by solving the optimization problem from~(\ref{eq:opt_linear}--\ref{eq:opt_final_weights}) to maximize the expected gain conditioned on the history. As a result, we conclude \super is locally optimal for the final reviewer with any heuristic function. 

\subsection{Proof of Corollary~\ref{cor:local}: Local Optimality of \super for Any Reviewer}
\label{sec:proof_local_col}
The immediate gain from any reviewer $\revindex \in \reviewerset$ is the difference between the gain after and before
the reviewer arrived. Formally, the immediate gain from any reviewer $\revindex \in \reviewerset$ is given by the quantity
\begin{equation*}
\gain_{\revindex} =  \sum_{\paperindex \in \paperset} \Big(\gainfunction\Big(\sum_{\ell\in [\revindex]}\randombid_{\ell, \paperindex}\Big) - \gainfunction\Big(\sum_{\ell\in [\revindex-1]}\randombid_{\ell, \paperindex}\Big)\Big) + \hyperparam\sum_{\paperindex \in [\numpapers]}  \gainfunctionrev(\ordering_{\revindex}(\paperindex),\similarity_{\revindex,\paperindex}),
\end{equation*}
where $\randombid_{\ell, \paperindex}$ is a Bernoulli random variable representing the random bid of a reviewer $\ell \in \reviewerset$ on a paper $\paperindex \in \paperset$ that depends on the position the paper was shown to the reviewer. 

The optimization problem to maximize the immediate expected gain from reviewer $\revindex \in \reviewerset$ conditioned on the history (see Definition~\ref{def:history}) is thus given by 
\begin{equation*}
\max_{\ordering_{\revindex} \in \symgroup_\numpapers} \quad \sum_{\paperindex \in \paperset} \mathbb{E}\Big[\gainfunction\Big(\sum_{\ell\in [\revindex]}\randombid_{\ell, \paperindex}\Big) - \gainfunction\Big(\sum_{\ell\in [\revindex-1]}\randombid_{\ell, \paperindex}\Big)\big|\history_{\revindex-1}\Big] + \hyperparam\sum_{\paperindex \in [\numpapers]}  \mathbb{E}[\gainfunctionrev(\ordering_{\revindex}(\paperindex),\similarity_{\revindex,\paperindex})].
\end{equation*}
Since $\sum_{\ell\in [\revindex-1]}\randombid_{\ell, \paperindex}$ is the deterministic bid count $\numbids_{\revindex-1, \paperindex}$ conditioned on the history $\history_{\revindex-1}$ for each paper $\paperindex\in \paperset$ and the reviewer-side gain from reviewer $\revindex \in \reviewerset$ is deterministic given a fixed paper ordering $\ordering_{\revindex} \in \symgroup_{\numpapers}$, the previous optimization problem is equivalently given by
\begin{equation}
\max_{\ordering_{\revindex} \in \symgroup_\numpapers} \quad \sum_{\paperindex \in \paperset} \mathbb{E}[\gainfunction(\numbids_{\revindex-1, \paperindex} + \randombid_{\revindex, \paperindex}) - \gainfunction(\numbids_{\revindex-1, \paperindex})] + \hyperparam\sum_{\paperindex \in [\numpapers]}  \gainfunctionrev(\ordering_{\revindex}(\paperindex),\similarity_{\revindex,\paperindex}).
\label{eq:cor_opt}
\end{equation}
We now evaluate the expectation in~\eqref{eq:cor_opt} using~\eqref{eq:prob_simplify} and then simplify to obtain the optimization problem
\begin{equation}
\max_{\ordering_{\revindex} \in \symgroup_\numpapers} \quad \sum_{\paperindex\in \paperset} \bidfunction(\ordering_{\revindex}(\paperindex), \similarity_{\revindex, \paperindex})(\gainfunction(\numbids_{\revindex-1, \paperindex} + 1) - \gainfunction(\numbids_{\revindex-1, \paperindex})) + \hyperparam \sum_{\paperindex \in \paperset}\gainfunctionrev(\ordering_{\revindex}(\paperindex), \similarity_{\revindex, \paperindex}).
\label{eq:cor_opt2}
\end{equation}
The problem in~\eqref{eq:cor_opt2} is equivalent to that given in~\eqref{eq:opt_preinteger} from the proof of Theorem~\ref{prop:local} up to the reviewer index. Consequently, we 
follow the steps after~\eqref{eq:opt_preinteger} in the proof of Theorem~\ref{prop:local} to simplify~\eqref{eq:cor_opt2} into a tractable representation. In doing so, we get that the problem in~\eqref{eq:cor_opt2} is equivalent to the linear program in~\eqref{eq:opt_linear} with the weights
\begin{equation*}
\weight_{\paperindex, \extravar} = \bidfunction(\extravar, \similarity_{\revindex, \paperindex})(\gainfunction(\numbids_{\revindex-1, \paperindex} + 1) - \gainfunction(\numbids_{\revindex-1, \paperindex})) + \hyperparam \gainfunctionrev(\extravar, \similarity_{\revindex, \paperindex}) \ \forall \ \paperindex,\extravar \in \paperset.
\end{equation*}
\paragraph*{Local optimality of \super with zero heuristic for any reviewer.}
To determine the ordering of papers to show any reviewer $\revindex \in \reviewerset$ for the general class of assumed gain and bidding functions, \super calls Algorithm~\ref{alg:subprocedure}. Algorithm~\ref{alg:subprocedure} solves the optimization problem in~\eqref{eq:opt_linear} using the weights 
\begin{equation*}
\weight_{\paperindex, \extravar} = \bidfunction(\extravar, \similarity_{\revindex, \paperindex})(\gainfunction(\numbids_{\revindex-1, \paperindex} + \proxy_{\revindex, \paperindex} + 1) - \gainfunction(\numbids_{\revindex-1, \paperindex} + \proxy_{\revindex, \paperindex})) + \hyperparam \gainfunctionrev(\extravar, \similarity_{\revindex, \paperindex}) \ \forall \ \paperindex,\extravar \in \paperset.
\end{equation*}
For any reviewer $\revindex \in \reviewerset$, given the zero heuristic function, $\proxy_{\revindex} =0$ by definition. This means \super with zero heuristic selects the paper ordering for any reviewer by solving the optimization problem to maximize the immediate expected gain conditioned on the history, so we conclude it is locally optimal for any reviewer.

\subsection{Proof of Theorem~\ref{prop:localworst}: Suboptimality of Baselines for Final Reviewer}
The organization of this proof is as follows. In Section~\ref{sec:localworst_prelim}, we present notation and preliminary information common to the analysis for each of the baselines. We prove the suboptimality bounds for the \simbase, \bidbase, and \randbase baselines separately in Sections~\ref{sec:localworst_sim},~\ref{sec:localworst_bid}, and~\ref{sec:localworst_rand}, respectively. Combining the results for each of the baselines proves the theorem statement. We conclude in Section~\ref{sec:localworst_lemmas} with proofs of technical lemmas invoked in the analysis of the baselines. 

\subsubsection{Notation and Preliminaries}\label{sec:localworst_prelim}
We denote the gain from the final reviewer $\numrev$ of an arbitrary algorithm \algbase presenting a potentially random paper ordering $\ordering_{\numrev}^{\algbase}$ to the reviewer as
\begin{equation*}
\gain_{\numrev}^{\algbase} = \gain_{p, \numrev}^{\algbase}  + \hyperparam \gain_{r, \numrev}^{\algbase}.
\end{equation*}
The paper-side gain from the final reviewer $\gain_{p, \numrev}^{\algbase}$ is given by 
\begin{equation*}
\gain_{p, \numrev}^{\algbase}  = \sum_{\paperindex \in \paperset} \Big(\gainfunction\Big(\sum_{\revindex\in \reviewerset}\randombid_{\revindex, \paperindex}\Big) - \gainfunction\Big(\sum_{\revindex\in [\numrev-1]}\randombid_{\revindex, \paperindex}\Big)\Big),
\end{equation*}
where again $\randombid_{\revindex, \paperindex}$ is a Bernoulli random variable representing the random bid of a reviewer $\revindex \in \reviewerset$ on a paper $\paperindex \in \paperset$ that depends on the position the paper was shown to the reviewer. The reviewer-side gain from the final reviewer $\gain_{r, \numrev}^{\algbase}$ is given by 
\begin{equation*}
\gain_{r, \numrev}^{\algbase} =   \sum_{\paperindex \in [\numpapers]}  \gainfunctionrev(\ordering_{\numrev}^{\algbase}(\paperindex),\similarity_{\numrev,\paperindex}). 
\end{equation*}
Accordingly, 
\begin{equation*}
\gain_{\numrev}^{\algbase} =  \sum_{\paperindex \in \paperset} \Big(\gainfunction\Big(\sum_{\revindex\in \reviewerset}\randombid_{\revindex, \paperindex}\Big) - \gainfunction\Big(\sum_{\revindex\in [\numrev-1]}\randombid_{\revindex, \paperindex}\Big)\Big) + \hyperparam\sum_{\paperindex \in [\numpapers]}  \gainfunctionrev(\ordering_{\numrev}^{\algbase}(\paperindex),\similarity_{\numrev,\paperindex}).
\end{equation*}
The expected gain 
from the final reviewer conditioned on the history of bids and paper orderings $\history_{\numrev-1}$ (see Definition~\ref{def:history}) is given by
\[\mathbb{E}[\gain_{\numrev}^{\algbase}|\history_{\numrev-1}] = \mathbb{E}[\gain_{p, \numrev}^{\algbase}|\history_{\numrev-1}] + \hyperparam \mathbb{E}[\gain_{r, \numrev}^{\algbase}|\history_{\numrev-1}]\]
where the expectation is with respect to the randomness in the algorithm and the bids from the final reviewer.
Observe that 
\begin{equation*}
\mathbb{E}[\gain_{p, \numrev}^{\algbase}|\history_{\numrev-1}] = \mathbb{E}_{\ordering_{\numrev}^{\algbase}}\Big[\sum_{\paperindex\in \paperset} \bidfunction(\ordering_{\numrev}^{\algbase}(\paperindex), \similarity_{\numrev, \paperindex})(\gainfunction(\numbids_{\numrev-1, \paperindex} + 1) - \gainfunction(\numbids_{\numrev-1, \paperindex}))\Big]
\end{equation*} 
since $\sum_{\revindex\in [\numrev-1]}\randombid_{\revindex, \paperindex}$ is the deterministic quantity $\numbids_{\numrev-1, \paperindex}$ for each $\paperindex\in \paperset$ conditioned on $\history_{\numrev-1}$ and $\randombid_{\numrev, \paperindex}$ is a Bernoulli random variable with parameter $\bidprob_{\numrev, \paperindex} = \bidfunction(\ordering_{\numrev}^{\algbase}(\paperindex), \similarity_{\numrev, \paperindex})$ for each $\paperindex\in \paperset$ given the fixed paper ordering $\ordering_{\numrev}^{\algbase}$. Moreover,
\begin{equation*}
\mathbb{E}[\gain_{r, \numrev}^{\algbase}|\history_{\numrev-1}] = \mathbb{E}_{\ordering_{\numrev}^{\algbase}}\Big[\sum_{\paperindex \in \paperset}\gainfunctionrev(\ordering_{\numrev}^{\algbase}(\paperindex), \similarity_{\numrev, \paperindex})\Big].
\end{equation*} 
It follows that
\begin{equation*}
\mathbb{E}[\gain_{\numrev}^{\algbase}|\history_{\numrev-1}] = \mathbb{E}_{\ordering_{\numrev}^{\algbase}}\Big[\sum_{\paperindex\in \paperset} \bidfunction(\ordering_{\numrev}^{\algbase}(\paperindex), \similarity_{\numrev, \paperindex})(\gainfunction(\numbids_{\numrev-1, \paperindex} + 1) - \gainfunction(\numbids_{\numrev-1, \paperindex})) + \hyperparam \sum_{\paperindex \in \paperset}\gainfunctionrev(\ordering_{\numrev}^{\algbase}(\paperindex), \similarity_{\numrev, \paperindex})\Big].
\end{equation*} 
The given biding function can be decomposed into the form
\begin{equation}
\bidfunction(\ordering_{\revindex}(\paperindex), \similarity_{\revindex, \paperindex})=\frac{\similarity_{\revindex, \paperindex}}{\log_2(\ordering_{\revindex}(\paperindex)+1)} = \similarity_{\revindex, \paperindex}\bidfunction^{\ordering}(\ordering_{\revindex}(\paperindex))
\label{eq:bidf_decomposed}
\end{equation}
where \[\bidfunction^{\ordering}(\ordering_{\revindex}(\paperindex)) = \frac{1}{\log_2(\ordering_{\revindex}(\paperindex)+1)}\] 
denotes the component of the bidding function $\bidfunction$ that only depends on the paper ordering and is independent of the similarity score. The reviewer-side gain function can similarly be decomposed into the form 
\begin{equation}
\gainfunctionrev(\ordering_{\revindex}(\paperindex), \similarity_{\revindex, \paperindex}) = (2^{\similarity_{\revindex, \paperindex}}-1)\bidfunction^{\ordering}(\ordering_{\revindex}(\paperindex)).
\label{eq:revgain_decomposed}
\end{equation} 

Using the decomposed forms of the bidding function and the reviewer-side gain function from~\eqref{eq:bidf_decomposed} and~\eqref{eq:revgain_decomposed}, the expected gain from the final reviewer $\numrev$ of an arbitrary algorithm \algbase is given by
\begin{align}
\mathbb{E}[\gain_{\numrev}^{\algbase}|\history_{\numrev-1}] &=\mathbb{E}[\gain_{p, \numrev}^{\algbase}|\history_{\numrev-1}]+ \hyperparam\mathbb{E}[\gain_{r, \numrev}^{\algbase}|\history_{\numrev-1}] \notag \\
&\mkern-96mu= \mathbb{E}_{\ordering_{\numrev}^{\algbase}}\Big[\sum_{\paperindex\in \paperset} \similarity_{\numrev, \paperindex}(\gainfunction(\numbids_{\numrev-1, \paperindex} + 1) - \gainfunction(\numbids_{\numrev-1, \paperindex}))\bidfunction^{\ordering}(\ordering_{\numrev}^{\algbase}(\paperindex)) + \hyperparam \sum_{\paperindex \in \paperset}(2^{\similarity_{\numrev, \paperindex}}-1)\bidfunction^{\ordering}(\ordering_{\numrev}^{\algbase}(\paperindex))\Big].\label{eq:gain_new}
\end{align} 
The optimal paper ordering for the final reviewer is thus given by the solution to the following optimization problem
\begin{equation}
\ordering_{\numrev}^{\ast} =  \argmax_{\ordering_{\numrev} \in \symgroup_\numpapers} \quad \sum_{\paperindex\in \paperset} \sortvar_{\numrev, \paperindex}\bidfunction^{\ordering}(\ordering_{\numrev}(\paperindex))
\label{eq:baseline_subopt_opt_problem}
\end{equation}
where
\begin{equation}
\sortvar_{\numrev, \paperindex} = \similarity_{\numrev, \paperindex}(\gainfunction(\numbids_{\numrev-1, \paperindex} + 1) - \gainfunction(\numbids_{\numrev-1, \paperindex})) + \hyperparam(2^{\similarity_{\numrev, \paperindex}}-1)  \ \forall \ \paperindex \in \paperset.
\label{eq:baseline_subopt_opt_problem_weights}
\end{equation}
The optimal solution to~\eqref{eq:baseline_subopt_opt_problem} ranks papers in a decreasing order of their corresponding values in \{$\alpha_{\numrev, \paperindex}\}_{\paperindex\in \paperset}$ since the function $\bidfunction^{\ordering}$ is decreasing in the decision variable. This observation will be used to obtain an explicit form of $\ordering_{\numrev}^{\ast}$ for each problem we subsequently construct to show the suboptimality of the baselines. From Theorem~\ref{prop:local}, \super with any heuristic is optimal for the final reviewer, which means $\ordering^{\super}_{\numrev}=\ordering^{\ast}_{\numrev}$. See that $\ordering^{\super}_{\numrev}$ is a non-random quantity given a deterministic tie-breaking mechanism and the expected gain is independent of the tie-breaking mechanism. Thus, without loss of generality, we assume ties are broken by the paper indexes in favor of $\paperindex<\paperindex'$ for \super.

We need to compare the expected gain obtained from the final reviewer using the \super algorithm presenting the optimal paper ordering $\ordering^{\super}_{\numrev}=\ordering^{\ast}_{\numrev}$ with any baseline \algbase $\in \{\simbase, \bidbase, \randbase\}$ presenting a potentially random ordering $\ordering^{\algbase}_{\numrev}$. Therefore, we analyze the quantity
\begin{align}
\mathbb{E}[\gain_{\numrev}^{\super}-\gain_{\numrev}^{\algbase}|\history_{\numrev-1}] &= \mathbb{E}[\gain_{p, \numrev}^{\super}-\gain_{p, \numrev}^{\algbase}|\history_{\numrev-1}] + \hyperparam\mathbb{E}[\gain_{r, \numrev}^{\super}-\gain_{r, \numrev}^{\algbase}|\history_{\numrev-1}]\notag \\
&=\mathbb{E}_{\ordering_{\numrev}^{\algbase}}\Big[\sum_{\paperindex \in \paperset} \similarity_{\numrev, \paperindex}(\gainfunction(\numbids_{\numrev-1, \paperindex}+1)-\gainfunction(\numbids_{\numrev-1, \paperindex}))(\bidfunction^{\ordering}(\ordering_{\numrev}^{\super}(\paperindex))-\bidfunction^{\ordering}(\ordering_{\numrev}^{\algbase}(\paperindex))) \notag\\
&\qquad\qquad+ \hyperparam\sum_{\paperindex\in \paperset} (2^{\similarity_{\numrev, \paperindex}}-1)(\bidfunction^{\ordering}(\ordering_{\numrev}^{\super}(\paperindex))-\bidfunction^{\ordering}(\ordering_{\numrev}^{\algbase}(\paperindex)))\Big]
\label{eq:diff}
\end{align}
for each baseline \algbase $\in \{\simbase, \bidbase, \randbase\}$. The \simbase and \bidbase algorithms are deterministic for the final reviewer conditioned on the history, up to the tie-breaking mechanism.
The \randbase algorithm is random, so the expectation over the paper ordering in~\eqref{eq:diff} is necessary when analyzing \randbase.
In the remainder of the proof, we analyze \simbase, then \bidbase, and finish with \randbase.

\subsubsection{Suboptimality of \simbase for Final Reviewer}\label{sec:localworst_sim}
In this section, we prove the worst case performance of the \simbase baseline for the final reviewer.
\paragraph*{Intuition.} The \simbase algorithm directly optimizes the expected reviewer-side gain since it shows papers in a decreasing order of the similarity scores. Consequently, it obtains the maximum expected reviewer-side gain that can be achieved. However, the algorithm gives no attention to the number of bids on each paper, which play an important role in the expected paper-side gain that can be obtained upon the arrival of the final reviewer. This point suggests that \simbase may be suboptimal for the combined objective.

To build intuition for when this can occur, consider that there is only a pair of papers $\paperindex$ and $\paperindex'$. Moreover, suppose paper $\paperindex$ has only marginally higher similarity score than paper $\paperindex'$, but paper $\paperindex$ has significantly more bids than paper $\paperindex'$. 
In this scenario, the expected reviewer-side gain of any paper ordering is nearly equal. However, showing paper $\paperindex'$ ahead of paper $\paperindex$ results in significantly higher expected paper-side gain owing to the diminishing returns of bids from the paper-side gain function. Since \simbase instead shows paper $\paperindex$ ahead of paper $\paperindex'$, it would be suboptimal. The following construction now generalizes this observation.

\paragraph*{Construction.}
We now construct a problem instance that will be used to prove \simbase is significantly suboptimal for the final reviewer in the worst case.
Consider the similarity scores for the final reviewer to be $\similarity_{\numrev, \paperindex}=1-1/(\paperindex\epsilon)$ for each paper $\paperindex \in \paperset$, where $\epsilon = (1+\hyperparam)e^{e^{e}}$ and $\hyperparam\geq 0$ is the fixed and given trade-off parameter. In this construction, the similarity scores for the final reviewer are nearly equal, but they are increasing in the paper index. For the time being, assume the number of papers 
$\numpapers$ is even. At the end of this section, we handle when the number of papers $\numpapers$ is odd.
Let the number of bids on the papers from previous reviewers be 
\begin{equation*}
\numbids_{\numrev-1, \paperindex} =
\begin{cases}
0, & \text{if} \ \paperindex \in \{1,\ldots,\numpapers/2\} \\
\bidn, & \text{if} \ \paperindex \in \{\numpapers/2+1,\ldots,\numpapers\}.
\end{cases}
\end{equation*}
The bid counts are such that papers among the top half of the similarity scores obtained a bid in the past, and papers among the bottom half of the similarity scores did not obtain any bids from previous reviewers. 

We now derive the explicit form of the optimal paper ordering for the final reviewer. Recall from~\eqref{eq:baseline_subopt_opt_problem_weights} that the weights of the optimization problem for the final reviewer given in~\eqref{eq:baseline_subopt_opt_problem} are defined by
\begin{equation}
\sortvar_{\numrev, \paperindex} = \similarity_{\numrev, \paperindex}(\gainfunction(\numbids_{\numrev-1, \paperindex} + 1) - \gainfunction(\numbids_{\numrev-1, \paperindex})) + \hyperparam(2^{\similarity_{\numrev, \paperindex}}-1)  \ \forall \ \paperindex \in \paperset.
\label{eq:opt_weights}
\end{equation} 
Moreover, from the structure of the optimization problem in~\eqref{eq:baseline_subopt_opt_problem}, if $\sortvar_{\numrev, \paperindex}>\sortvar_{\numrev, \paperindex'}$, then $\ordering_{\numrev}^{\super}(\paperindex)<\ordering_{\numrev}^{\super}(\paperindex')$ so that paper $\paperindex$ is shown ahead of paper $\paperindex'$ in the ranking. Observe that $\sortvar_{\numrev, \paperindex}$ is increasing in the similarity score $\similarity_{\numrev, \paperindex}$ and decreasing in the number of bids $\numbids_{\numrev-1, \paperindex}$ for each $\paperindex \in \paperset$. Consequently, if a pair of papers $\paperindex, \paperindex' \in \paperset$ are such that $\numbids_{\numrev-1, \paperindex} = \numbids_{\numrev-1, \paperindex'}$ and $\similarity_{\numrev, \paperindex}>\similarity_{\numrev, \paperindex'}$, then $\sortvar_{\numrev, \paperindex}>\sortvar_{\numrev, \paperindex'}$ and in turn, $\ordering_{\numrev}^{\super}(\paperindex)<\ordering_{\numrev}^{\super}(\paperindex')$.

We now show that in the optimal ordering for this instance, each paper with zero bids is shown ahead of each paper with a non-zero number of bids. Among the set of papers with zero bids, namely those indexed by $\{1,\dots, \numpapers/2\}$, the minimum similarity score $\similarity_{\numrev, \paperindex}$ and minimum weight $\sortvar_{\numrev, \paperindex}$ occur at $\paperindex=1$. Among the set of papers with a non-zero number of bids, namely those indexed by $\{\numpapers/2+1,\dots, \numpapers\}$, the maximum similarity score $\similarity_{\revindex, \paperindex'}$ and maximum weight $\sortvar_{\numrev, \paperindex'}$ occur at $\paperindex'=\numpapers$. Thus, if $\sortvar_{\numrev, 1}-\sortvar_{\numrev, \numpapers}>0$, then we can conclude each paper with zero bids is shown ahead of each paper with a non-zero number of bids. 
To prove this, we need the following lemma, the proof of which can be found in Section~\ref{sec:multiplicative}. 
\begin{lemma}
For $\gainfunction(x) = \sqrt{x}$, any $\hyperparam\geq 0$, $\numpapers\geq2$, and $\epsilon=(1+\hyperparam)e^{e^e}$, it must be that 
\begin{equation}
\setlength\belowdisplayskip{-0pt}
(\gainfunction(1)-\gainfunction(0))(1-1/\epsilon)-(\gainfunction(\bidnp)-\gainfunction(\bidn))(1-1/(\numpapers\epsilon))+\hyperparam (2^{(1-1/\epsilon)}-2^{(1-1/(\numpapers\epsilon))}) \geq 1/2.
\label{eq:lem_mult}
\end{equation}
\label{lem:multiplicative}
\end{lemma}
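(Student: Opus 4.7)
The approach is to substitute the square-root gain function to reduce the inequality to a concrete numerical statement, and then exploit the fact that $\epsilon=(1+\hyperparam)e^{e^e}$ is huge to show the ``correction'' terms are negligible. With $\gainfunction(x)=\sqrt{x}$, we have $\gainfunction(1)-\gainfunction(0)=1$ and $\gainfunction(\bidnp)-\gainfunction(\bidn)=\sqrt{2}-1$, so the left-hand side of \eqref{eq:lem_mult} becomes
\[
(1-1/\epsilon) \;-\; (\sqrt{2}-1)\bigl(1-1/(\numpapers\epsilon)\bigr) \;+\; \hyperparam\bigl(2^{1-1/\epsilon} - 2^{1-1/(\numpapers\epsilon)}\bigr).
\]
Grouping the constant terms gives $2-\sqrt{2} + \text{(small terms)}$, so since $2-\sqrt{2}\approx 0.586$ there is a gap of about $0.086$ above the required threshold $1/2$. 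The whole proof is showing that the small terms cannot eat up this gap.

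The first step is to drop the nonnegative term $(\sqrt{2}-1)/(\numpapers\epsilon)$ coming from expanding the second summand, so the left-hand side is at least $2-\sqrt{2} - 1/\epsilon + \hyperparam(2^{1-1/\epsilon}-2^{1-1/(\numpapers\epsilon)})$. The second step is to control the $\hyperparam$ term. For $\numpapers\geq 2$ we have $1-1/\epsilon < 1-1/(\numpapers\epsilon)$, so this term is nonpositive, and we need an upper bound on its magnitude. By the mean value theorem applied to $x\mapsto 2^x$ on the interval $[1-1/\epsilon, 1-1/(\numpapers\epsilon)]\subset[0,1]$, there exists $\xi$ in that interval with
\[
2^{1-1/(\numpapers\epsilon)} - 2^{1-1/\epsilon} = 2^{\xi}(\ln 2)\Bigl(\tfrac{1}{\epsilon}-\tfrac{1}{\numpapers\epsilon}\Bigr) \;\leq\; \frac{2\ln 2}{\epsilon},
\]
since $2^{\xi}\leq 2$. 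Consequently $\hyperparam(2^{1-1/\epsilon}-2^{1-1/(\numpapers\epsilon)}) \geq -2\hyperparam\ln 2/\epsilon$.

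The third step uses the specific form $\epsilon=(1+\hyperparam)e^{e^e}$: this immediately gives $1/\epsilon\leq e^{-e^e}$ and $\hyperparam/\epsilon = \hyperparam/((1+\hyperparam)e^{e^e})\leq e^{-e^e}$. Combining these bounds yields
\[
\text{LHS of }\eqref{eq:lem_mult} \;\geq\; 2-\sqrt{2} - \frac{1+2\ln 2}{e^{e^e}}.
\]
Since $e^{e^e}$ is astronomical (roughly $3.8\times 10^6$) while $1+2\ln 2 < 2.4$, the subtracted quantity is on the order of $10^{-6}$, well below the slack $2-\sqrt{2}-1/2 = 3/2-\sqrt{2}\approx 0.086$. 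A direct numerical verification of $1+2\ln 2 \leq (3/2-\sqrt{2})\,e^{e^e}$ then finishes the proof.

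\textbf{Main obstacle.} The only nontrivial step is bounding the exponential difference $2^{1-1/\epsilon}-2^{1-1/(\numpapers\epsilon)}$; everything else reduces to arithmetic. The mean value theorem bound above is clean and independent of $\numpapers$, and the choice of $\epsilon$ with the $e^{e^e}$ factor is clearly tuned so that even multiplying by $\hyperparam$ (which could be arbitrarily large) cannot defeat it, because the $\hyperparam$ is cancelled by the $(1+\hyperparam)$ in $\epsilon$.
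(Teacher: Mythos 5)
Your proof is correct and follows essentially the same strategy as the paper: substitute $\gainfunction(1)-\gainfunction(0)=1$ and $\gainfunction(2)-\gainfunction(1)=\sqrt{2}-1$, then use the definition $\epsilon=(1+\hyperparam)e^{e^e}$ so that both $1/\epsilon$ and $\hyperparam/\epsilon$ are at most $e^{-e^e}$, leaving the slack $3/2-\sqrt{2}$ untouched. The only (cosmetic) difference is in controlling the $\hyperparam$-weighted exponential term: you bound $2^{1-1/(\numpapers\epsilon)}-2^{1-1/\epsilon}\le 2\ln 2/\epsilon$ via the mean value theorem, whereas the paper first bounds $2^{1-1/(\numpapers\epsilon)}\le 2$ and then invokes monotonicity in $\hyperparam$ and the limit $\lim_{\hyperparam'\to\infty}\hyperparam'(2^{1-1/((1+\hyperparam')e^{e^e})}-2)=-e^{-e^e}\log(4)$ — both routes yield the same $-\log(4)\,e^{-e^e}$ bound, and yours is, if anything, slightly more self-contained.
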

From the given similarity scores and bid counts, and then applying Lemma~\ref{lem:multiplicative}, we obtain
\begin{equation*}
\sortvar_{\numrev, 1}-\sortvar_{\numrev, \numpapers} = (\gainfunction(1)-\gainfunction(0))(1-1/\epsilon)-(\gainfunction(2)-\gainfunction(1))(1-1/(\numpapers\epsilon))+\hyperparam (2^{(1-1/\epsilon)}-2^{(1-1/(\numpapers\epsilon))}) \geq 1/2.
\end{equation*}
Therefore, the optimal paper ordering shows all papers with zero bids  ahead of every paper with a non-zero number of bids. 
Moreover, recall if a pair of papers $\paperindex, \paperindex' \in \paperset$ are such that $\numbids_{\numrev-1, \paperindex} = \numbids_{\numrev-1, \paperindex'}$ and $\similarity_{\numrev, \paperindex}>\similarity_{\numrev, \paperindex'}$, then $\sortvar_{\numrev, \paperindex}>\sortvar_{\numrev, \paperindex'}$. This fact allows us to determine that among each group of papers (zero and non-zero bids), the optimal paper ordering presents the papers in decreasing order of the similarity scores. 

Combining the previous conclusions, \super shows the optimal paper ordering 
\begin{equation}
\ordering_{\numrev}^{\super}(j) = 
\begin{cases}
\numpapers/2-\paperindex + 1, & \text{if} \ \paperindex \in \{1,\dots,\numpapers/2\} \\
\numpapers + \numpapers/2+1-\paperindex, & \text{if} \ \paperindex \in \{\numpapers/2+1,\dots,\numpapers\}.
\end{cases}
\label{eq:sim_construct}
\end{equation}
The \simbase algorithm shows papers in a decreasing order of the similarity scores so that $\ordering_{\numrev}^{\simbase}(\paperindex) = \numpapers-\paperindex+1$ for $\paperindex \in \paperset$. Observe that for this problem, \super shows the papers with zero bids much earlier in the paper ordering than \simbase. We now move on to lower bounding~\eqref{eq:diff} for this construction and begin by considering the expected paper-side gain.

\paragraph*{Bounding the expected paper-side gain.}
Substituting the similarity scores, the number of bids on each paper, and the (deterministic) paper orderings presented by each algorithm for this construction into the paper-side component of~\eqref{eq:diff}, we obtain
\begin{align*}
\mathbb{E}[\gain_{p, \numrev}^{\super}-\gain_{p, \numrev}^{\simbase}|\history_{\numrev-1}] &=  (\gainfunction(1)-\gainfunction(0))\sum_{\paperindex=1}^{\numpapers/2}(1-1/(\paperindex \epsilon)) (\bidfunction^{\ordering}(\numpapers/2 - \paperindex + 1)-\bidfunction^{\ordering}(\numpapers - \paperindex+1)) \\
&+ (\gainfunction(\bidnp)-\gainfunction(\bidn))\sum_{\paperindex=\numpapers/2+1}^{\numpapers}(1-1/(\paperindex \epsilon))(\bidfunction^{\ordering}(\numpapers + \numpapers/2 +1 -\paperindex)-\bidfunction^{\ordering}(\numpapers - \paperindex+1)).
\end{align*}
Manipulating the indexing of the sum over the last half of the papers gives
\begin{align*}
\mathbb{E}[\gain_{p, \numrev}^{\super}-\gain_{p, \numrev}^{\simbase}|\history_{\numrev-1}] &=  (\gainfunction(1)-\gainfunction(0))\sum_{\paperindex=1}^{\numpapers/2}(1-1/(\paperindex \epsilon)) (\bidfunction^{\ordering}(\numpapers/2 - \paperindex + 1)-\bidfunction^{\ordering}(\numpapers - \paperindex+1)) \\
&+ (\gainfunction(\bidnp)-\gainfunction(\bidn))\sum_{\paperindex=1}^{\numpapers/2}(1-1/((\paperindex + \numpapers/2) \epsilon))(\bidfunction^{\ordering}(\numpapers-\paperindex+1)-\bidfunction^{\ordering}(\numpapers/2 - \paperindex+1)).
\end{align*}
Noting that $(\bidfunction^{\ordering}(\numpapers/2 - \paperindex + 1)-\bidfunction^{\ordering}(\numpapers - \paperindex+1)) = -(\bidfunction^{\ordering}(\numpapers - \paperindex+1)-\bidfunction^{\ordering}(\numpapers/2 - \paperindex + 1))$, we obtain
\begin{align*}
\mathbb{E}[\gain_{p, \numrev}^{\super}-\gain_{p, \numrev}^{\simbase}|\history_{\numrev-1}] &=(\gainfunction(1)-\gainfunction(0))\sum_{\paperindex=1}^{\numpapers/2}(1-1/(\paperindex \epsilon)) (\bidfunction^{\ordering}(\numpapers/2 - \paperindex + 1)-\bidfunction^{\ordering}(\numpapers - \paperindex+1)) \notag \\
&- (\gainfunction(\bidnp)-\gainfunction(\bidn))\sum_{\paperindex=1}^{\numpapers/2}(1-1/((\paperindex + \numpapers/2) \epsilon))(\bidfunction^{\ordering}(\numpapers/2 - \paperindex+1)-\bidfunction^{\ordering}(\numpapers-\paperindex+1)). \notag 
\end{align*}
For every $\paperindex \in [\numpapers/2]$, we have $\bidfunction^{\ordering}(\numpapers/2 - \paperindex + 1)-\bidfunction^{\ordering}(\numpapers - \paperindex+1) >0$ since $\numpapers/2 - \paperindex + 1<\numpapers - \paperindex+1$ and $\bidfunction^{\ordering}(\ordering_{\revindex}(\paperindex)) = 1/\log_2(\ordering_{\revindex}(\paperindex)+1)$ 
is a decreasing function on the domain $\mathbb{R}_{>0}$. Moreover, for every $\paperindex \in [\numpapers/2]$,  $1-1/(\paperindex \epsilon)\geq 1-1/\epsilon$ and $1-1/((\paperindex + \numpapers/2) \epsilon)\leq 1-1/(\numpapers \epsilon)$, and the given paper-side gain function $\gainfunction$ is increasing on the domain $\reals_{\geq0}$. Thus, 
\begin{align*}
\mathbb{E}[\gain_{p, \numrev}^{\super}-\gain_{p, \numrev}^{\simbase}|\history_{\numrev-1}]&\geq ((\gainfunction(1)-\gainfunction(0))(1-1/\epsilon)-(\gainfunction(\bidnp)-\gainfunction(\bidn))(1-1/(\numpapers\epsilon))) \\
&\qquad \sum_{\paperindex=1}^{\numpapers/2} (\bidfunction^{\ordering}(\numpapers/2 - \paperindex + 1)-\bidfunction^{\ordering}(\numpapers - \paperindex+1)).
\end{align*}
Finally, manipulating the indexing of the sum gives 
\begin{equation}
\begin{split}
\mathbb{E}[\gain_{p, \numrev}^{\super}-\gain_{p, \numrev}^{\simbase}|\history_{\numrev-1}]
&\geq ((\gainfunction(1)-\gainfunction(0))(1-1/\epsilon)-(\gainfunction(\bidnp)-\gainfunction(\bidn))(1-1/(\numpapers\epsilon))) \\
&\qquad \sum_{\paperindex=1}^{\numpapers/2} (\bidfunction^{\ordering}(\paperindex)-\bidfunction^{\ordering}(\paperindex+\numpapers/2)). 
\end{split}
\label{eq:sim_pside_bound}
\end{equation}

We now bound the expected reviewer-side gain including the trade-off parameter $\hyperparam$ from~\eqref{eq:diff}. The steps that follow are analogous to those exercised in bounding the expected paper-side gain.
\paragraph*{Bounding the expected reviewer-side gain.}
Substituting the values of the similarity scores, the number of bids on each paper, and the (deterministic) paper orderings presented by each algorithm into the reviewer-side component of~\eqref{eq:diff}, we obtain
\begin{align*}
\hyperparam\mathbb{E}[\gain_{r, \numrev}^{\super}-\gain_{r, \numrev}^{\simbase}|\history_{\numrev-1}] &= \hyperparam\sum_{\paperindex=1}^{\numpapers/2}(2^{(1-1/(\paperindex \epsilon))}-1) (\bidfunction^{\ordering}(\numpapers/2 - \paperindex + 1)-\bidfunction^{\ordering}(\numpapers - \paperindex+1)) \\
&\quad+ \hyperparam \sum_{\paperindex=\numpapers/2+1}^{\numpapers}(2^{(1-1/(\paperindex \epsilon))}-1)(\bidfunction^{\ordering}(\numpapers + \numpapers/2 +1 -\paperindex)-\bidfunction^{\ordering}(\numpapers - \paperindex+1)).
\end{align*}
Manipulating the indexing of the sum over the last half of the papers results in
\begin{align*}
\hyperparam\mathbb{E}[\gain_{r, \numrev}^{\super}-\gain_{r, \numrev}^{\simbase}|\history_{\numrev-1}]  &= \hyperparam \sum_{\paperindex=1}^{\numpapers/2}(2^{(1-1/(\paperindex \epsilon))}-1) (\bidfunction^{\ordering}(\numpapers/2 - \paperindex + 1)-\bidfunction^{\ordering}(\numpapers - \paperindex+1)) \\
&\quad+ \hyperparam \sum_{\paperindex=1}^{\numpapers/2}(2^{(1-1/((\paperindex + \numpapers/2) \epsilon))}-1)(\bidfunction^{\ordering}(\numpapers-\paperindex+1)-\bidfunction^{\ordering}(\numpapers/2 - \paperindex+1)). 
\end{align*}
Noting that $\bidfunction^{\ordering}(\numpapers/2 - \paperindex + 1)-\bidfunction^{\ordering}(\numpapers - \paperindex+1) = -(\bidfunction^{\ordering}(\numpapers - \paperindex+1)-\bidfunction^{\ordering}(\numpapers/2 - \paperindex + 1))$, we obtain
\begin{align*}
\hyperparam\mathbb{E}[\gain_{r, \numrev}^{\super}-\gain_{r, \numrev}^{\algbase}|\history_{\numrev-1}]  &= \hyperparam \sum_{\paperindex=1}^{\numpapers/2}(2^{(1-1/(\paperindex \epsilon))}-1) (\bidfunction^{\ordering}(\numpapers/2 - \paperindex + 1)-\bidfunction^{\ordering}(\numpapers - \paperindex+1)) \\
&\quad- \hyperparam \sum_{\paperindex=1}^{\numpapers/2}(2^{(1-1/((\paperindex + \numpapers/2) \epsilon))}-1)(\bidfunction^{\ordering}(\numpapers/2 - \paperindex+1)-\bidfunction^{\ordering}(\numpapers-\paperindex+1)). 
\end{align*}
For every $\paperindex \in [\numpapers/2]$, we have $\bidfunction^{\ordering}(\numpapers/2 - \paperindex + 1)-\bidfunction^{\ordering}(\numpapers - \paperindex+1) >0$ since $\numpapers/2 - \paperindex + 1<\numpapers - \paperindex+1$ and $\bidfunction^{\ordering}(\ordering_{\revindex}(\paperindex)) = 1/\log_2(\ordering_{\revindex}(\paperindex)+1)$ is a decreasing function on the domain $\mathbb{R}_{>0}$. 
Furthermore, observe that for every $\paperindex \in [\numpapers/2]$,  $1-1/(\paperindex \epsilon)\geq 1-1/(\epsilon)$ and $1-1/((\paperindex + \numpapers/2) \epsilon)\leq 1-1/(\numpapers \epsilon)$. This set of facts leads to the bound
\begin{equation*}
\hyperparam\mathbb{E}[\gain_{r, \numrev}^{\super}-\gain_{r, \numrev}^{\simbase}|\history_{\numrev-1}]  \geq \hyperparam\big(2^{(1-1/\epsilon)}-2^{(1-1/(\numpapers\epsilon))}\big)\sum_{\paperindex=1}^{\numpapers/2} (\bidfunction^{\ordering}(\numpapers/2 - \paperindex + 1)-\bidfunction^{\ordering}(\numpapers - \paperindex+1)).
\end{equation*}
To finish this sequence of steps, we manipulate the indexing of the sum to conclude 
\begin{equation}
\mathbb{E}[\gain_{r, \numrev}^{\super}-\gain_{r, \numrev}^{\simbase}|\history_{\numrev-1}] \geq \hyperparam\big(2^{(1-1/\epsilon)}-2^{(1-1/(\numpapers\epsilon))}\big)\sum_{\paperindex=1}^{\numpapers/2} (\bidfunction^{\ordering}(\paperindex)-\bidfunction^{\ordering}(\paperindex+\numpapers/2)). 
\label{eq:sim_rside_bound}
\end{equation}
\paragraph*{Completing the lower bound.}
Combining the bounds on the expected paper-side and reviewer-side gain terms from~\eqref{eq:sim_pside_bound} and~\eqref{eq:sim_rside_bound}, we obtain an initial lower bound given by 
\begin{equation}
\mathbb{E}[\gain^{\super}_{\numrev} -\gain^{\simbase}_{\numrev}|\history_{\numrev-1}] \geq \mathcal{C}\sum_{\paperindex=1}^{\numpapers/2} (\bidfunction^{\ordering}(\paperindex)-\bidfunction^{\ordering}(\paperindex+\numpapers/2)),
 \label{eq:sim_intermediate}
\end{equation}
where for ease of notation we define
\begin{equation}
\mathcal{C} = (\gainfunction(1)-\gainfunction(0))(1-1/\epsilon)-(\gainfunction(\bidnp)-\gainfunction(\bidn))(1-1/(\numpapers\epsilon))+\hyperparam (2^{(1-1/\epsilon)}-2^{(1-1/(\numpapers\epsilon))}).
\label{eq:sim_intermediate_mult}
\end{equation}
We apply Lemma~\ref{lem:multiplicative} to get that $\mathcal{C}\geq 1/2$.
The following lemma, the proof of which can be found in Section~\ref{sec:sum_bound}, provides a bound on the sum in~\eqref{eq:sim_intermediate}.
\begin{lemma}
Let $\bidfunction^{\ordering}(x) = 1/\log_2(x+1)$. Fix $\numpapers$ to be an even integer such that $\numpapers\geq 2$. Then, 
\begin{equation*}
\setlength\belowdisplayskip{-0pt}
\sum_{\paperindex=1}^{\numpapers/2} (\bidfunction^{\ordering}(\paperindex)-\bidfunction^{\ordering}(\paperindex+\numpapers/2)) \geq \frac{\numpapers}{16\log_2^2(\numpapers)}.
\end{equation*}
\label{lemma:sum_bound}
\end{lemma}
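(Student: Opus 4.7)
The strategy is to bound each summand from below by a term of order $1/\log_2^2(\numpapers)$ and count enough such terms to accumulate a total of order $\numpapers/\log_2^2(\numpapers)$. First I would rewrite each summand using the algebraic identity
\[
\frac{1}{\log_2(\paperindex+1)} - \frac{1}{\log_2(\paperindex+\numpapers/2+1)} = \frac{\log_2\!\bigl((\paperindex+\numpapers/2+1)/(\paperindex+1)\bigr)}{\log_2(\paperindex+1)\,\log_2(\paperindex+\numpapers/2+1)}.
\]
Then I would bound the denominator uniformly above by $\log_2^2(\numpapers+1)$, which follows since $\paperindex+1$ and $\paperindex+\numpapers/2+1$ both lie in $[2,\numpapers+1]$ for $\paperindex \in [1,\numpapers/2]$.

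The crux is then to control the numerator. I would use the elementary arithmetic fact that $(\paperindex+\numpapers/2+1)/(\paperindex+1) \geq 2$ is equivalent to $\paperindex \leq \numpapers/2 - 1$, so that the logarithm in the numerator is at least $1$ on this range. Counting the $\numpapers/2-1$ summands for which this holds (and discarding the remaining one, which is nonnegative since $\bidfunction^{\ordering}$ is decreasing) yields
\[
\sum_{\paperindex=1}^{\numpapers/2} \bigl(\bidfunction^{\ordering}(\paperindex) - \bidfunction^{\ordering}(\paperindex+\numpapers/2)\bigr) \geq \frac{\numpapers/2-1}{\log_2^2(\numpapers+1)}.
\]

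Finally I would clean up constants. For $\numpapers \geq 4$ we have $\numpapers/2-1 \geq \numpapers/4$, and the elementary estimate $\log_2(\numpapers+1) \leq \log_2(\numpapers^2) = 2\log_2(\numpapers)$ (valid since $\numpapers+1 \leq \numpapers^2$ for $\numpapers \geq 2$) gives $\log_2^2(\numpapers+1) \leq 4\log_2^2(\numpapers)$, producing the target $\numpapers/(16\log_2^2(\numpapers))$. The only loose end is the degenerate case $\numpapers = 2$ where the counting bound becomes vacuous; there I would simply verify directly that $1 - 1/\log_2(3) \approx 0.369$ exceeds $1/8$. I do not anticipate a substantive obstacle. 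The main decision is whether to use this crude per-term bound or a tighter approach (for instance, a mean value bound $f(j)-f(j+\numpapers/2) \geq (\numpapers/2)\cdot(-f'(j+\numpapers/2))$ followed by a harmonic-sum estimate), but since the target bound is only of order $\numpapers/\log_2^2(\numpapers)$ the crude counting argument already suffices with constants to spare.
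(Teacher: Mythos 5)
Your proposal is correct and takes essentially the same approach as the paper: both rewrite $\bidfunction^{\ordering}(\paperindex)-\bidfunction^{\ordering}(\paperindex+\numpapers/2)$ as a ratio with $\log_2$ of a quotient in the numerator, lower-bound that numerator by a constant, upper-bound the denominator by (a constant multiple of) $\log_2^2(\numpapers)$, multiply by a count of order $\numpapers$ terms, and check $\numpapers=2$ separately. The only differences are bookkeeping: the paper truncates to the first $\lfloor\numpapers/4\rfloor$ terms and uses monotonicity to bound each by the worst one, whereas you retain $\numpapers/2-1$ terms and bound each directly via the ratio-at-least-$2$ observation.
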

\noindent
From~\eqref{eq:sim_intermediate} along with the fact that $\mathcal{C}\geq 1/2$ and Lemma~\ref{lemma:sum_bound}, we obtain
\begin{equation}
\mathbb{E}[\gain^{\super}_{\numrev} -\gain^{\simbase}_{\numrev}|\history_{\numrev-1}] \geq \frac{\numpapers}{32\log_2^2(\numpapers)}.
 \label{eq:sim_div2}
\end{equation}

Lemma~\ref{lemma:sum_bound}, and consequently the bound in \eqref{eq:sim_div2}, are applicable when $\numpapers$ is even.  The following lemma shows that an equivalent result (up to constants) holds when the number of papers $\numpapers$ is odd. The bound is obtained by looking at an identical problem construction for $\numpapers' = \numpapers-1$ papers, and then including an additional paper that has a similarity score of zero with the final reviewer and one previous bid. This change is such that both \super and \simbase show the additional paper last, and moreover, the expected gain from paper $\numpapers$ is zero since the similarity score is zero. 
\begin{lemma}
If $\numpapers$ is odd, then in the worst case for the final reviewer under the assumptions of Theorem~\ref{prop:localworst},
\begin{equation*}
\setlength\belowdisplayskip{-0pt}
\mathbb{E}[\gain^{\super}_{\numrev} -\gain^{\simbase}_{\numrev}|\history_{\numrev-1}] \geq \frac{\numpapers}{64\log_2^2(\numpapers)}.
\end{equation*}
\label{lemma:sim_odd_bound}
\end{lemma}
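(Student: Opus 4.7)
The plan is to reduce the odd case to the even case proved just above, via a padding construction. Let $\numpapers' = \numpapers - 1$, which is even, and instantiate the even-case construction on papers $\paperindex \in \{1,\ldots,\numpapers'\}$: take $\similarity_{\numrev,\paperindex} = 1 - 1/(\paperindex\bigc)$ with $\numbids_{\numrev-1,\paperindex} = 0$ for $\paperindex \leq \numpapers'/2$ and $\numbids_{\numrev-1,\paperindex} = \bidn$ for $\paperindex > \numpapers'/2$. Then append a single dummy paper at index $\paperindex = \numpapers$ with $\similarity_{\numrev,\numpapers} = 0$ and $\numbids_{\numrev-1,\numpapers} = \bidn$.

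The next step is to verify that both algorithms pin the dummy paper to the last slot, so that it does not disrupt the relative ordering of the remaining $\numpapers'$ papers. For \simbase this is immediate since $\similarity_{\numrev,\numpapers} = 0$ is strictly smaller than every other similarity. For \super, the weight in~\eqref{eq:baseline_subopt_opt_problem_weights} evaluates to $\sortvar_{\numrev,\numpapers} = 0 \cdot (\gainfunction(\bidnp) - \gainfunction(\bidn)) + \hyperparam (2^0 - 1) = 0$, whereas $\sortvar_{\numrev,\paperindex} > 0$ for every $\paperindex \in [\numpapers']$ because $\similarity_{\numrev,\paperindex} > 0$; hence \super also places the dummy last. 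Moreover, the contribution of paper $\numpapers$ to the summands in~\eqref{eq:diff} is identically zero under both algorithms, since $\bidfunction(\cdot,0) \equiv 0$ and $\gainfunctionrev(\cdot,0) \equiv 0$.

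Consequently, the only terms of~\eqref{eq:diff} that survive are those for $\paperindex \in [\numpapers']$, and the orderings of these papers within positions $1,\ldots,\numpapers'$ coincide exactly with what they would be in the even-case instance of size $\numpapers'$. The chain of inequalities from~\eqref{eq:sim_pside_bound} through~\eqref{eq:sim_div2} therefore applies verbatim with $\numpapers'$ in place of $\numpapers$, yielding $\mathbb{E}[\gain^{\super}_{\numrev} - \gain^{\simbase}_{\numrev} \mid \history_{\numrev-1}] \geq \numpapers' / (32 \log_2^2(\numpapers'))$. Finally, since $\numpapers$ is odd with $\numpapers \geq 3$, we have $\numpapers' = \numpapers - 1 \geq \numpapers/2$ and $\log_2(\numpapers') \leq \log_2(\numpapers)$, and the claimed bound $\numpapers / (64 \log_2^2(\numpapers))$ follows by monotone substitution. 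I do not anticipate a substantive obstacle: the whole argument rests on a clean reduction, and the only points needing care---strict last-position placement of the dummy under both algorithms, and its vanishing contribution to the gain difference---are immediate consequences of $\similarity_{\numrev,\numpapers} = 0$.
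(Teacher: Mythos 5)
Your proposal is correct and follows essentially the same route as the paper's proof: pad the even-case construction of size $\numpapers'=\numpapers-1$ with a zero-similarity, one-bid dummy paper, observe that both \super and \simbase place it last and that it contributes nothing to the gain difference, then apply the even-case bound at $\numpapers'$ and convert using $\numpapers-1\geq \numpapers/2$ and $\log_2^2(\numpapers-1)\leq\log_2^2(\numpapers)$. The only cosmetic difference is that you verify the dummy's last-place position via the weight $\sortvar_{\numrev,\numpapers}=0$ directly, whereas the paper invokes its earlier monotonicity observation (equal bid counts, lower similarity); both are valid.
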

\noindent
The proof of Lemma~\ref{lemma:sim_odd_bound} is in Section~\ref{sec:sim_odd_bound}.

Combining the bound from~\eqref{eq:sim_div2} which holds for $\numpapers$ even with the bound from Lemma~\ref{lemma:sim_odd_bound} which holds for $\numpapers$ odd, we find that for every $\numpapers\geq 2$ and $\hyperparam\geq 0$,
\begin{equation*}
\mathbb{E}[\gain^{\super}_{\numrev} -\gain^{\simbase}_{\numrev}|\history_{\numrev-1}] \geq \frac{\numpapers}{64\log_2^2(\numpapers)}.
\end{equation*}
This proves the claim in Theorem~\ref{prop:localworst} stating that there exists a constant $\constant >0$ such that for every $\numpapers\geq2$ and $\hyperparam\geq 0$, \simbase is suboptimal by an additive factor of at least $\constant\numpapers /\log_2^2(\numpapers)$ in the worst case for the final reviewer.

\subsubsection{Suboptimality of \bidbase for Final Reviewer}
\label{sec:localworst_bid}
In this section, we prove the worst case performance of the \bidbase baseline for the final reviewer. 

\paragraph*{Intuition.} 
The \bidbase algorithm greedily optimizes the minimum bid count since it shows papers in an increasing order of the number of bids received previously. 
The underlying problem with this method is that the paper ordering selected for any reviewer is independent of the similarity scores for that reviewer (up to serving as a tie-breaking mechanism). Since the paper-side gain function and the reviewer-side gain function both depend on the similarity scores, this property of \bidbase leads to suboptimality in terms of both the expected paper-side gain and the reviewer-side gain.

To build some intuition for when \bidbase is suboptimal, consider there is only a pair of papers $\paperindex$ and $\paperindex'$. Moreover, suppose paper $\paperindex$ has a much higher similarity score than paper $\paperindex'$ and paper $\paperindex$ has only one more bid than $\paperindex'$. In this scenario, the expected reviewer-side gain from showing paper $\paperindex$ ahead of paper $\paperindex'$ is significantly higher than showing paper $\paperindex'$ ahead of paper $\paperindex$. Moreover, since the probability of obtaining a bid on paper $\paperindex$ is significantly higher at a given position in the paper ordering than for paper $\paperindex'$ and the number of bids on the papers are nearly equal, the expected paper-side gain is also maximized if paper $\paperindex$ is shown ahead of paper $\paperindex'$. Since \bidbase instead shows paper $\paperindex'$ ahead of paper $\paperindex$, it would be suboptimal for both paper-side and reviewer-side gain. The following construction now generalizes this observation.

\paragraph*{Construction.}
We now construct a problem instance that will be used to prove \bidbase is significantly suboptimal for the final reviewer in the worst case.
Consider the similarity scores for the final reviewer as
\begin{equation*}
\similarity_{\numrev, \paperindex}=
\begin{cases}
1, & \text{if} \ \paperindex \in \{1,\ldots,\numpapers/2\} \\
0, & \text{if} \ \paperindex \in \{\numpapers/2+1,\ldots,\numpapers\}.
\end{cases}
\end{equation*}
For now, assume the number of papers $\numpapers$ is even. We handle when the number of papers $\numpapers$ is odd at the end of this proof. Let the number of bids on the papers from previous reviewers be such that 
\begin{equation*}
\numbids_{\numrev-1, \paperindex} =
\begin{cases}
\bidn, & \text{if} \ \paperindex \in \{1,\ldots,\numpapers/2\} \\
0, & \text{if} \ \paperindex \in \{\numpapers/2+1,\ldots,\numpapers\}.
\end{cases}
\end{equation*}
In words, half of the papers have a similarity score of one and have recieved bids, and the other half of the papers have a similarity score of zero and have obtained no bids.

We now derive the optimal paper ordering for the final reviewer. Recall  from~\eqref{eq:baseline_subopt_opt_problem_weights} that the weights of the optimization problem for the final reviewer given in~\eqref{eq:baseline_subopt_opt_problem} are defined by
\begin{equation*}
\sortvar_{\numrev, \paperindex} = \similarity_{\numrev, \paperindex}(\gainfunction(\numbids_{\numrev-1, \paperindex} + 1) - \gainfunction(\numbids_{\numrev-1, \paperindex})) + \hyperparam(2^{\similarity_{\numrev, \paperindex}}-1)  \ \forall \ \paperindex \in \paperset.
\end{equation*} 
Moreover, from the structure of the optimization problem in~\eqref{eq:baseline_subopt_opt_problem}, if $\sortvar_{\numrev, \paperindex}>\sortvar_{\numrev, \paperindex'}$, then $\ordering_{\numrev}^{\super}(\paperindex)<\ordering_{\numrev}^{\super}(\paperindex')$ so that paper $\paperindex$ is shown ahead of paper $\paperindex'$ in the ranking. Observe that for each $\paperindex \in \{1,\dots, \numpapers/2\}$, $\sortvar_{\numrev, \paperindex}$ is a fixed number. Similarly, for each $\paperindex' \in \{\numpapers/2+1,\dots, \numpapers\}$, $\sortvar_{\numrev, \paperindex'}$ is a fixed number. If $\sortvar_{\numrev, \paperindex} - \sortvar_{\numrev, \paperindex'}>0$ for any $\paperindex \in \{1,\dots, \numpapers/2\}$ and $\paperindex' \in \{\numpapers/2+1,\dots, \numpapers\}$, then we can conclude each paper with a bid is shown ahead of each paper without a bid. We consider $\paperindex=1$ and $\paperindex'=\numpapers$. Since $\gainfunction(x)=\sqrt{x}$ and $\sortvar_{\numrev, \numpapers}=0$,
\begin{equation}
\sortvar_{\numrev, 1}-\sortvar_{\numrev, \numpapers} = \gainfunction(2) - \gainfunction(1) + \hyperparam = \sqrt{2}-1+\hyperparam \geq 1/3 +\hyperparam.
\label{eq:simple_lem_eq}
\end{equation}
Consequent of the fact $\hyperparam\geq0$, we conclude  $\sortvar_{\numrev, 1}-\sortvar_{\numrev, \numpapers}>0$, which means \super shows each paper with a bid ahead of each paper without a bid. Finally, since $\sortvar_{\numrev, \paperindex}$ is a fixed number for each $\paperindex \in \{1,\dots, \numpapers/2\}$ and $\sortvar_{\numrev, \paperindex'}$ is a fixed number for each $\paperindex' \in \{\numpapers/2+1,\dots, \numpapers\}$, as long as $\ordering_{\numrev}^{\super}(\paperindex)<\ordering_{\numrev}^{\super}(\paperindex')$ for every $\paperindex, \paperindex'$ pair, then the paper ordering is optimal. In other words, any paper ordering which shows the papers with a bid in an arbitrary order followed by the papers without a bid in an arbitrary order is optimal. 

We conclude $\ordering^{\super}_{\numrev}(j) = \paperindex$ for each $\paperindex \in \paperset$, where without loss of generality, to simplify the analysis, we assume if a pair of papers have equal weights in the optimization problem, then ties are broken in order of the paper indexes since the tie-breaking mechanism will not change the expected gain the paper ordering obtains from the final reviewer.

The \bidbase baseline will show papers in an increasing order of the number of bids so that 
\begin{equation*}
\ordering_{\numrev}^{\bidbase}(j) = 
\begin{cases}
\paperindex + \numpapers/2, & \text{if} \ \paperindex \in \{1,\ldots,\numpapers/2\} \\
\paperindex - \numpapers/2, & \text{if} \ \paperindex \in \{\numpapers/2+1,\ldots,\numpapers\}.
\end{cases}
\end{equation*}
This paper ordering is derived from recalling that \bidbase breaks ties by the similarity scores and further ties are broken uniformly at random. However, without loss of generality, to simplify the analysis, we assume if a pair of papers have equal similarity scores and bid counts, then ties are broken in order of the paper indexes since the tie-breaking mechanism among this set of papers will not impact the expected gain. We now move on to lower bounding~\eqref{eq:diff} for this construction.

\paragraph*{Bounding the expected gain.}
Substituting the similarity scores, the number of bids on each paper, and the (deterministic) paper orderings presented by each algorithm for this construction into~\eqref{eq:diff}, we obtain
\begin{equation*}
\mathbb{E}[\gain^{\super}_{\numrev} -\gain^{\bidbase}_{\numrev}|\history_{\numrev-1}]
= \sum_{\paperindex=1}^{\numpapers/2} (\gainfunction(2)-\gainfunction(1))(\bidfunction^{\ordering}(\paperindex)-\bidfunction^{\ordering}(\paperindex+\numpapers/2)) + \hyperparam \sum_{\paperindex=1}^{\numpapers/2} (\bidfunction^{\ordering}(\paperindex)-\bidfunction^{\ordering}(\paperindex+\numpapers/2))
\end{equation*}
where the terms for papers in the set $\{\numpapers/2+1,\dots, \numpapers\}$ dropped out since the similarity scores are zero. Simplifying the expression, we obtain 
\begin{equation}
\mathbb{E}[\gain^{\super}_{\numrev} -\gain^{\bidbase}_{\numrev}|\history_{\numrev-1}] = (\gainfunction(\bidnp)-\gainfunction(\bidn)+\hyperparam)\sum_{\paperindex=1}^{\numpapers/2} (\bidfunction^{\ordering}(\paperindex)-\bidfunction^{\ordering}(\paperindex+\numpapers/2)).
\label{eq:bid_subopt_mid}
\end{equation}
From~\eqref{eq:simple_lem_eq}, we get 
\begin{equation}
\gainfunction(\bidnp)-\gainfunction(\bidn)+\hyperparam \geq 1/3 +\hyperparam.
\label{eq:bid_mult_bound}
\end{equation}
Moreover, from Lemma~\ref{lemma:sum_bound}, 
\begin{equation}
\sum_{\paperindex=1}^{\numpapers/2} (\bidfunction^{\ordering}(\paperindex)-\bidfunction^{\ordering}(\paperindex+\numpapers/2)) \geq \frac{\numpapers}{16\log_2^2(\numpapers)}.
\label{eq:bid_sum_bound}
\end{equation}
Combining~\eqref{eq:bid_subopt_mid},~\eqref{eq:bid_mult_bound}, and~\eqref{eq:bid_sum_bound}, we obtain 
\begin{equation}
\mathbb{E}[\gain^{\super}_{\numrev} -\gain^{\bidbase}_{\numrev}|\history_{\numrev-1}] \geq \Big(\frac{1}{48} + \frac{\hyperparam}{16}\Big)\Big(\frac{\numpapers}{\log_2^2(\numpapers)}\Big)
\label{eq:bid_even}
\end{equation}
whenever the number of papers $\numpapers$ is even.

Lemma~\ref{lemma:sum_bound} and the bound in \eqref{eq:bid_even} are applicable when $\numpapers$ is even. The following lemma shows that an equivalent result (up to constants) holds when the number of papers $\numpapers$ is odd. The approach to obtain the result is similar to that for deriving Lemma~\ref{lemma:sim_odd_bound}. We obtain the bound for an odd number of papers $\numpapers$ by looking at an identical problem construction for $\numpapers' = \numpapers-1$ papers, and then include an additional paper that has a similarity score of zero with the final reviewer and one previous bid. This change is such that both \super and \bidbase show the additional paper last, and moreover, the expected gain from paper $\numpapers$ is zero since the similarity score is zero. 
\begin{lemma}
If $\numpapers$ is odd, then in the worst case for the final reviewer under the assumptions of Theorem~\ref{prop:localworst},
\begin{equation*}
\setlength\belowdisplayskip{-0pt}
\mathbb{E}[\gain^{\super}_{\numrev} -\gain^{\bidbase}_{\numrev}|\history_{\numrev-1}] \geq \Big(\frac{1}{96}+\frac{\hyperparam}{32}\Big)\Big(\frac{\numpapers}{\log_2^2(\numpapers)}\Big).
\end{equation*}
\label{lemma:bid_odd}
\end{lemma}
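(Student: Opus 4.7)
\textbf{Proof plan for Lemma~\ref{lemma:bid_odd}.} The plan is to reduce the odd case to the even case by constructing a problem with $\numpapers$ papers such that both algorithms place one specific ``dummy'' paper at the final position, after which the remaining $\numpapers-1$ positions behave exactly as in the even-case construction from Section~\ref{sec:localworst_bid}.

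First, I would take $\numpapers' = \numpapers - 1$ (which is even since $\numpapers$ is odd and at least~$3$) and reuse the construction from the even case for papers indexed $1,\ldots,\numpapers'$: papers $\paperindex \in \{1,\ldots,\numpapers'/2\}$ have $\similarity_{\numrev,\paperindex} = 1$ and $\numbids_{\numrev-1,\paperindex} = 1$, while papers $\paperindex \in \{\numpapers'/2+1,\ldots,\numpapers'\}$ have $\similarity_{\numrev,\paperindex} = 0$ and $\numbids_{\numrev-1,\paperindex} = 0$. Then I would introduce an additional paper indexed $\numpapers$ with $\similarity_{\numrev,\numpapers} = 0$ and $\numbids_{\numrev-1,\numpapers} = 1$.

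Next, I would verify that both \super and \bidbase place paper $\numpapers$ at position $\numpapers$. For \super, the weight $\sortvar_{\numrev,\numpapers} = 0$ since $\similarity_{\numrev,\numpapers} = 0$; the papers $\paperindex \in \{1,\ldots,\numpapers'/2\}$ have strictly positive weights (as in the even case, at least $\sqrt{2}-1+\hyperparam$), so they occupy the first $\numpapers'/2$ positions, and the remaining papers with weight $0$ fill the rest, with ties broken by index placing paper $\numpapers$ last. For \bidbase, papers are shown in increasing order of bids with ties broken by similarity; the $\numpapers'/2$ zero-bid papers occupy positions $1,\ldots,\numpapers'/2$, then among the $\numpapers'/2+1$ one-bid papers the similarity-$1$ papers $\{1,\ldots,\numpapers'/2\}$ come before paper $\numpapers$ whose similarity is $0$, so paper $\numpapers$ lands at position $\numpapers$. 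Since $\similarity_{\numrev,\numpapers} = 0$ makes both the bidding probability and the reviewer-side gain on paper $\numpapers$ equal to zero regardless of its position, paper $\numpapers$ contributes $0$ to $\mathbb{E}[\gain_{\numrev}^{\super}|\history_{\numrev-1}]$ and $\mathbb{E}[\gain_{\numrev}^{\bidbase}|\history_{\numrev-1}]$, and the positions of papers $1,\ldots,\numpapers'$ under each algorithm coincide with those in the even-case construction on $\numpapers'$ papers.

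Consequently, the difference in expected gains inherits the even-case bound~\eqref{eq:bid_even} applied with $\numpapers'=\numpapers-1$, giving
\begin{equation*}
\mathbb{E}[\gain_{\numrev}^{\super}-\gain_{\numrev}^{\bidbase}|\history_{\numrev-1}] \geq \Big(\frac{1}{48}+\frac{\hyperparam}{16}\Big)\frac{\numpapers-1}{\log_2^2(\numpapers-1)}.
\end{equation*}
Finally I would use the elementary inequality $\tfrac{\numpapers-1}{\log_2^2(\numpapers-1)} \geq \tfrac{\numpapers}{2\log_2^2(\numpapers)}$ (which holds for $\numpapers \geq 3$ since $\numpapers-1 \geq \numpapers/2$ and $\log_2(\numpapers-1) \leq \log_2(\numpapers)$) to obtain the claimed bound with constants halved.

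The main obstacle is the careful bookkeeping for tie-breaking: both algorithms must provably place paper $\numpapers$ last so that the remaining $\numpapers-1$ positions are a faithful copy of the even-case instance. This is not a computational difficulty but requires invoking the ``without loss of generality'' tie-breaking convention for \super (since the expected gain is insensitive to the order among zero-weight papers) and verifying that \bidbase's secondary similarity-based tie-breaker pushes the similarity-$0$ extra paper behind the similarity-$1$ papers within the one-bid group.
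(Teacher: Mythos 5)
Your proposal is correct and follows essentially the same route as the paper's own proof: append a dummy paper with zero similarity and one prior bid to the even-case construction on $\numpapers-1$ papers, check (via the tie-breaking conventions) that both \super and \bidbase show it last so the remaining positions replicate the even instance, note its expected-gain contribution is zero, and then transfer the bound in~\eqref{eq:bid_even} from $\numpapers-1$ to $\numpapers$ using $\numpapers-1\geq \numpapers/2$ and $\log_2^2(\numpapers-1)\leq\log_2^2(\numpapers)$, which halves the constants to $(1/96,\hyperparam/32)$.
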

\noindent
The proof of Lemma~\ref{lemma:bid_odd} is in Section~\ref{sec:bid_odd}.

Combining the bound from~\eqref{eq:bid_even} which holds for $\numpapers$ even with the bound from Lemma~\ref{lemma:bid_odd} which holds for $\numpapers$ odd, we find that for every $\numpapers\geq 2$ and $\hyperparam\geq0$,
\begin{equation}
\mathbb{E}[\gain^{\super}_{\numrev} -\gain^{\bidbase}_{\numrev}|\history_{\numrev-1}] \geq \Big(\frac{1}{96}+\frac{\hyperparam}{32}\Big)\Big(\frac{\numpapers}{\log_2^2(\numpapers)}\Big).
\label{eq:bid_final}
\end{equation}
This proves the claim in Theorem~\ref{prop:localworst} that there exists a constant $\constant >0$ such that for all $\numpapers\geq2$ and $\hyperparam\geq 0$, \bidbase is suboptimal by an additive factor of at least $\constant\numpapers\max\{1,\hyperparam\}/\log_2^2(\numpapers)$ in the worst case for the final reviewer. 

\subsubsection{Suboptimality of \randbase for Final Reviewer}
\label{sec:localworst_rand}
In this section, we prove the the worst case performance of the \randbase baseline for the final reviewer.

\paragraph*{Intuition.}
The \randbase algorithm selects an ordering of papers to show a reviewer uniformly at random from the set of permutations. Since this method is agnostic to the similarity scores and the number of bids, \randbase can select highly suboptimal paper orderings with some non-zero probability. 

To see when this can occur, consider the example that provided intuition for the suboptimality of \bidbase in Section~\ref{sec:localworst_bid} that consisted of only a pair of papers $\paperindex$ and $\paperindex'$. In this example, paper $\paperindex$ has a much higher similarity score than paper $\paperindex'$ and paper $\paperindex$ has only one more bid than $\paperindex'$. The expected reviewer-side gain from showing paper $\paperindex$ ahead of paper $\paperindex'$ is significantly higher than showing paper $\paperindex'$ ahead of paper $\paperindex$. Moreover, since the probability of obtaining a bid on paper $\paperindex$ is significantly higher at a given position in the paper ordering than for paper $\paperindex'$ and the number of bids on the papers are nearly equal, the expected paper-side gain is also maximized if paper $\paperindex$ is shown ahead of paper $\paperindex'$. Since there are only two permutations of the papers that can be selected, with probability $1/2$, \randbase would show paper $\paperindex'$ ahead of paper $\paperindex$ and be suboptimal for both paper-side and reviewer-side gain. The problem construction from Section~\ref{sec:localworst_bid} is sufficient to generalize this observation. For completeness, we repeat the construction below. 

\paragraph*{Construction.}
In the remainder of the proof, we show the problem construction from Section~\ref{sec:localworst_bid} can be used to prove \randbase is significantly suboptimal for the final reviewer in the worst case. In this construction, the similarity scores for the final reviewer are
\begin{equation*}
\similarity_{\numrev, \paperindex}=
\begin{cases}
1, & \text{if} \ \paperindex \in \{1,\ldots,\numpapers/2\} \\
0, & \text{if} \ \paperindex \in \{\numpapers/2+1,\ldots,\numpapers\}.
\end{cases}
\end{equation*}
For now, assume the number of papers $\numpapers$ is divisible by four. We deal with a number of papers $\numpapers$ that is not divisible by four at the end of this section. The number of bids on the papers from previous reviewers are
\begin{equation*}
\numbids_{\numrev-1, \paperindex} =
\begin{cases}
\bidn, & \text{if} \ \paperindex \in \{1,\ldots,\numpapers/2\} \\
0, & \text{if} \ \paperindex \in \{\numpapers/2+1,\ldots,\numpapers\}.
\end{cases}
\end{equation*}
In Section~\ref{sec:localworst_bid}, we showed that \super selects the optimal ordering $\ordering^{\super}_{\numrev}(j) = \paperindex$ for each $\paperindex \in \paperset$. The \randbase baseline will select a paper ordering $\ordering_{\numrev}^{\randbase}$ uniformly at random from the set of permutations $\symgroup_{\numpapers}$.

\paragraph*{Bounding the expected gain.}
For this construction, we need to lower bound~\eqref{eq:diff}.
As an initial step, we simplify the quantity by substituting the similarity scores, the number of bids on each paper, and the paper ordering presented by \super to obtain
\begin{align*}
\mathbb{E}[\gain_{\numrev}^{\super}-\gain_{\numrev}^{\randbase}|\history_{\numrev-1}] 
=\mathbb{E}_{\ordering_{\numrev}^{\randbase}}\Big[&\sum_{\paperindex =1}^{\numpapers/2} (\gainfunction(\bidnp)-\gainfunction(\bidn))(\bidfunction^{\ordering}(\paperindex)-\bidfunction^{\ordering}(\ordering_{\numrev}^{\randbase}(\paperindex))) \notag\\
&+ \hyperparam\sum_{\paperindex=1}^{\numpapers/2} (\bidfunction^{\ordering}(\paperindex)-\bidfunction^{\ordering}(\ordering_{\numrev}^{\randbase}(\paperindex)))\Big],
\end{align*}
where the terms for papers in the set $\{\numpapers/2+1,\dots, \numpapers\}$ dropped out since the similarity scores are zero. Combining the sums and using the fact that $\gainfunction(2)-\gainfunction(1)=\sqrt{2}-1\geq 1/3$ gives
\begin{align}
\mathbb{E}[\gain_{\numrev}^{\super}-\gain_{\numrev}^{\randbase}|\history_{\numrev-1}] 
\geq\mathbb{E}_{\ordering_{\numrev}^{\randbase}}\Big[&(1/3+\hyperparam)\sum_{\paperindex =1}^{\numpapers/2} (\bidfunction^{\ordering}(\paperindex)-\bidfunction^{\ordering}(\ordering_{\numrev}^{\randbase}(\paperindex))) \Big].
\label{eq:rand_sum}
\end{align}

Before proceeding, we provide some intuition that guides the remainder of the proof.
The expression in~\eqref{eq:rand_sum} only depends on the positions \randbase shows the papers in the set $\{1,\dots, \numpapers/2\}$ to the final reviewer.
Recalling that the given function $\bidfunction^{\ordering}$ is decreasing on the domain $\mathbb{R}_{>0}$, we can observe that the number of positive summand in $\sum_{\paperindex =1}^{\numpapers/2} (\bidfunction^{\ordering}(\paperindex)-\bidfunction^{\ordering}(\ordering_{\numrev}^{\randbase}(\paperindex)))$ increases with the number of papers from the set $\{1,\dots, \numpapers/2\}$ that are not presented in the set of positions $\{1,\dots, \numpapers/2\}$ from a selection $\ordering_{\numrev}^{\randbase}$ and the remaining summand are zero. This point suggests if with probability bounded away from zero, sufficiently many papers from the set $\{1,\dots, \numpapers/2\}$ are not presented in the set of positions $\{1,\dots, \numpapers/2\}$ in the ordering selected by \randbase, then it should be suboptimal in expectation. 

Toward formalizing this line of reasoning, the following lemma provides a lower bound on the probability that $\randbase$ selects a paper ordering that shows fewer than $\numpapers/4$ papers from the set $\{1,\dots, \numpapers/2\}$ in the set of positions $\{1,\dots, \numpapers/2\}$. 
The proof is given in Section~\ref{sec:prob_bound}.
\begin{lemma}
Assume $\numpapers$ is divisible by four and consider a set of papers $\paperset$. Let $\mathcal{E}$ be the event that a permutation $\ordering$ of the paper set $\paperset$ drawn uniformly at random from $\symgroup_{\numpapers}$ has fewer than $\numpapers/4$ of the papers $\{1,\dots, \numpapers/2\}$ in the positions $\{1,\dots, \numpapers/2\}$. Then, $\mathbb{P}(\mathcal{E})\geq 1/6$.
\label{lem:prob_bound}
\end{lemma}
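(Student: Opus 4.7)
The plan is to recognize the random variable of interest as hypergeometric, exploit the symmetry of its distribution about the mean, and bound the resulting mode probability. Let $X$ denote the number of papers from $\{1,\ldots,\numpapers/2\}$ that are placed in positions $\{1,\ldots,\numpapers/2\}$ under a uniformly random permutation $\ordering \in \symgroup_{\numpapers}$. Equivalently, $X$ counts the overlap between the uniformly random $\numpapers/2$-subset of $[\numpapers]$ of positions occupied by the ``special'' papers $\{1,\ldots,\numpapers/2\}$ and the fixed position set $\{1,\ldots,\numpapers/2\}$. Thus $X$ is hypergeometric with population size $\numpapers$, number of successes $\numpapers/2$, and $\numpapers/2$ draws; in particular $\mathbb{E}[X] = \numpapers/4$ and the event of interest is exactly $\mathcal{E} = \{X < \numpapers/4\}$.

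Using the identity $\binom{\numpapers/2}{k} = \binom{\numpapers/2}{\numpapers/2 - k}$ in the hypergeometric pmf, one checks that $\mathbb{P}(X = \numpapers/4 + j) = \mathbb{P}(X = \numpapers/4 - j)$ for every integer $j$, so the distribution of $X$ is symmetric about its mean. This gives $\mathbb{P}(X < \numpapers/4) = \mathbb{P}(X > \numpapers/4)$, and therefore
\begin{equation*}
\mathbb{P}(\mathcal{E}) \;=\; \frac{1 - \mathbb{P}(X = \numpapers/4)}{2}.
\end{equation*}
It thus suffices to establish the mode bound $\mathbb{P}(X = \numpapers/4) \leq 2/3$ for every $\numpapers \geq 4$ divisible by four.

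Writing $n = \numpapers/4$, the mode probability equals $p(n) = \binom{2n}{n}^2/\binom{4n}{2n}$. A direct computation gives $p(1) = 4/6 = 2/3$, saturating the bound at $\numpapers = 4$. I would then show $p$ is strictly decreasing by evaluating the ratio
\begin{equation*}
\frac{p(n+1)}{p(n)} \;=\; \frac{16(2n+1)^4}{(4n+1)(4n+2)(4n+3)(4n+4)},
\end{equation*}
and verifying that the denominator exceeds the numerator for every $n \geq 1$ via an elementary polynomial expansion (the difference reduces to $128 n^3 + 176 n^2 + 72 n + 8 > 0$). Combined with $p(1) = 2/3$, this yields $\mathbb{P}(X = \numpapers/4) \leq 2/3$ for all admissible $\numpapers$, and hence $\mathbb{P}(\mathcal{E}) \geq 1/6$. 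The main (and essentially only) obstacle is the mode bound; once symmetry has collapsed the problem to a single-point probability estimate, the remaining work is a purely routine comparison of two polynomials.
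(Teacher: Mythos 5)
Your proposal is correct and follows essentially the same strategy as the paper: both reduce, via the symmetry ${\numpapers/2 \choose k}={\numpapers/2 \choose \numpapers/2-k}$, to $\mathbb{P}(\mathcal{E})=\tfrac{1}{2}\big(1-\mathbb{P}(X=\numpapers/4)\big)$ with $\mathbb{P}(X=\numpapers/4)={\numpapers/2 \choose \numpapers/4}^2/{\numpapers \choose \numpapers/2}$, and then bound this central probability by $2/3$ using monotonicity in $\numpapers$ together with the base case $\numpapers=4$. The only differences are cosmetic: you obtain the reduction directly from the symmetry of the hypergeometric distribution rather than the paper's explicit permutation count combined with Vandermonde's identity, and you spell out the ratio computation $p(n+1)/p(n)<1$ that the paper merely asserts when claiming the subtracted term is decreasing in $\numpapers$.
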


Define $T_1\subset \symgroup_{\numpapers}$ as the set of paper orderings with fewer than $\numpapers/4$ of the papers from the set $\{1,\dots, \numpapers/2\}$ in the set of positions $\{1,\dots, \numpapers/2\}$ and $T_2\subset \symgroup_{\numpapers}$ as the set containing the remaining paper orderings so that $T_1\cup T_2 = \symgroup_{\numpapers}$. Now, beginning from~\eqref{eq:rand_sum}, we evaluate and bound the expectation as follows:
\begin{align*}
\mathbb{E}[\gain_{\numrev}^{\super}-\gain_{\numrev}^{\randbase}|\history_{\numrev-1}] 
&=(1/3+\hyperparam)\sum_{\ordering_{\numrev}\in T_1}\mathbb{P}(\ordering_{\numrev}^{\randbase}=\ordering_{\numrev})\sum_{\paperindex =1}^{\numpapers/2} (\bidfunction^{\ordering}(\paperindex)-\bidfunction^{\ordering}(\ordering_{\numrev}(\paperindex)))  \\
&\quad +(1/3+\hyperparam)\sum_{\ordering_{\numrev}\in T_2}\mathbb{P}(\ordering_{\numrev}^{\randbase}=\ordering_{\numrev})\sum_{\paperindex =1}^{\numpapers/2} (\bidfunction^{\ordering}(\paperindex)-\bidfunction^{\ordering}(\ordering_{\numrev}(\paperindex)))  \\
&\geq (1/3+\hyperparam) \mathbb{P}(\ordering_{\numrev}^{\randbase} \in T_1)\min_{\ordering_{\numrev} \in T_1}\sum_{\paperindex =1}^{\numpapers/2} (\bidfunction^{\ordering}(\paperindex)-\bidfunction^{\ordering}(\ordering_{\numrev}(\paperindex))) \\
&\quad+(1/3+\hyperparam) \mathbb{P}(\ordering_{\numrev}^{\randbase} \in T_2)\min_{\ordering_{\numrev} \in T_2}\sum_{\paperindex =1}^{\numpapers/2} (\bidfunction^{\ordering}(\paperindex)-\bidfunction^{\ordering}(\ordering_{\numrev}(\paperindex))).
\end{align*}
Observe that $\min_{\ordering_{\numrev} \in T_2}\sum_{\paperindex =1}^{\numpapers/2} (\bidfunction^{\ordering}(\paperindex)-\bidfunction^{\ordering}(\ordering_{\numrev}(\paperindex)))=0$ since the optimal paper ordering that shows each paper in the set $\{1,\dots, \numpapers/2\}$ in the set of positions $\{1, \dots, \numpapers/2\}$ is contained in $T_2$. 
Moreover, from Lemma~\ref{lem:prob_bound},
$\mathbb{P}(\ordering_{\numrev}^{\randbase} \in T_1)\geq 1/6$. This results in the bound
\begin{equation}
\mathbb{E}[\gain_{\numrev}^{\super}-\gain_{\numrev}^{\randbase}|\history_{\numrev-1}] 
\geq \Big(\frac{1}{18}+\frac{\hyperparam}{6}\Big) \min_{\ordering_{\numrev} \in T_1}\sum_{\paperindex =1}^{\numpapers/2} (\bidfunction^{\ordering}(\paperindex)-\bidfunction^{\ordering}(\ordering_{\numrev}(\paperindex))). 
\label{eq:rand_point} 
\end{equation}
We now need to reason about the minimizer of $\min_{\ordering_{\numrev} \in T_1}\sum_{\paperindex =1}^{\numpapers/2} (\bidfunction^{\ordering}(\paperindex)-\bidfunction^{\ordering}(\ordering_{\numrev}(\paperindex)))$. Equivalently, we can find the maximizer of $\max_{\ordering_{\numrev} \in T_1}\sum_{\paperindex =1}^{\numpapers/2} \bidfunction^{\ordering}(\ordering_{\numrev}(\paperindex))$. Since the given function $\bidfunction^{\ordering}$ is decreasing on the domain $\mathbb{R}_{>0}$, the quantity $\sum_{\paperindex =1}^{\numpapers/2} \bidfunction^{\ordering}(\ordering_{\numrev}(\paperindex))$ is maximized when the papers in the set $\{1,\dots, \numpapers/2\}$ are shown the earliest in the ordering $\ordering_{\numrev}$ that is feasible subject to the constraint that fewer than $\numpapers/4$ papers from the set $\{1,\dots, \numpapers/2\}$ are presented in the set of positions $\{1,\dots, \numpapers/2\}$. This means  
\begin{equation}
\ordering_{\numrev}(j) = 
\begin{cases}
\paperindex, & \text{if} \ \paperindex \in \{1,\ldots,\numpapers/4-1\} \\
\paperindex + \numpapers/4+1, & \text{if} \ \paperindex \in \{\numpapers/4, \dots, \numpapers/2\} \\
\paperindex - \numpapers/4-1, & \text{if} \ \paperindex \in \{\numpapers/2+1, \dots, 3\numpapers/4+1\} \\
\paperindex, & \text{if} \ \paperindex \in \{3\numpapers/4+2,\ldots,\numpapers\}
\end{cases}
\label{eq:rand_ordering}
\end{equation}
is a minimizer of $\sum_{\paperindex =1}^{\numpapers/2} (\bidfunction^{\ordering}(\paperindex)-\bidfunction^{\ordering}(\ordering_{\numrev}(\paperindex)))$ among the set $T_1$. 
Substituting the paper ordering from~\eqref{eq:rand_ordering} as the minimizer into~\eqref{eq:rand_point}, we obtain
\begin{equation}
\mathbb{E}[\gain_{\numrev}^{\super}-\gain_{\numrev}^{\randbase}|\history_{\numrev-1}] 
\geq \Big(\frac{1}{18}+\frac{\hyperparam}{6}\Big) \sum_{\paperindex=\numpapers/4}^{\numpapers/2} (\bidfunction^{\ordering}(\paperindex)-\bidfunction^{\ordering}(\paperindex+\numpapers/4+1)).
\label{eq:rand_subopt_mid}
\end{equation}
The following lemma provides a bound on the sum in~\eqref{eq:rand_subopt_mid}.
\begin{lemma}
Let $\bidfunction^{\ordering}(x) = 1/\log_2(x+1)$ and fix $\numpapers\geq 4$ and divisible by four. Then, 
\begin{equation*}
\setlength\belowdisplayskip{-0pt}
\sum_{\paperindex=\numpapers/4}^{\numpapers/2} (\bidfunction^{\ordering}(\paperindex)-\bidfunction^{\ordering}(\paperindex+\numpapers/4+1)) \geq \frac{\numpapers}{32\log_2^2(\numpapers)}.
\end{equation*}
\label{lemma:sum_bound2}
\end{lemma}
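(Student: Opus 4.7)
}
The plan is to adapt the argument used for Lemma~\ref{lemma:sum_bound}, exploiting the identity
\begin{equation*}
\frac{1}{\log_2(a)} - \frac{1}{\log_2(b)} = \frac{\log_2(b/a)}{\log_2(a)\log_2(b)},
\end{equation*}
valid for $a,b > 1$. Applied with $a = \paperindex + 1$ and $b = \paperindex + \numpapers/4 + 2$, each summand becomes
\begin{equation*}
\bidfunction^{\ordering}(\paperindex) - \bidfunction^{\ordering}(\paperindex + \numpapers/4 + 1) = \frac{\log_2\!\left(\frac{\paperindex + \numpapers/4 + 2}{\paperindex + 1}\right)}{\log_2(\paperindex+1)\,\log_2(\paperindex+\numpapers/4+2)}.
\end{equation*}
I would then lower bound the numerator and upper bound the denominator uniformly over $\paperindex \in \{\numpapers/4, \ldots, \numpapers/2\}$ and multiply by the number of summands $\numpapers/4 + 1$.

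For the numerator, the ratio $\frac{\paperindex + \numpapers/4 + 2}{\paperindex + 1} = 1 + \frac{\numpapers/4 + 1}{\paperindex + 1}$ is decreasing in $\paperindex$, so it is minimized at $\paperindex = \numpapers/2$, where it equals $1 + \frac{\numpapers/4 + 1}{\numpapers/2 + 1} \geq 3/2$ for every $\numpapers \geq 4$. Hence each numerator is at least $\log_2(3/2) \geq 1/2$. For the denominator, since $\paperindex + \numpapers/4 + 2 \leq 3\numpapers/4 + 2 \leq \numpapers$ whenever $\numpapers \geq 8$, both $\log_2$ factors are at most $\log_2(\numpapers)$, so the product is at most $\log_2^2(\numpapers)$. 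Combining and using that there are at least $\numpapers/4$ summands gives
\begin{equation*}
\sum_{\paperindex=\numpapers/4}^{\numpapers/2} (\bidfunction^{\ordering}(\paperindex) - \bidfunction^{\ordering}(\paperindex + \numpapers/4 + 1)) \;\geq\; \frac{\numpapers}{4} \cdot \frac{1/2}{\log_2^2(\numpapers)} \;=\; \frac{\numpapers}{8 \log_2^2(\numpapers)},
\end{equation*}
which is stronger than the claim.

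The main obstacle, as so often, lies in the boundary cases $\numpapers \in \{4\}$ where $3\numpapers/4 + 2 > \numpapers$ and the denominator bound is slightly loose; I would handle $\numpapers = 4$ by direct numerical verification (the sum equals $(1 - 1/2) + (1/\log_2 3 - 1/\log_2 5) \approx 0.7$, which comfortably exceeds $4/(32\log_2^2 4) = 1/32$), and for $\numpapers \in \{8, 12, \ldots\}$ the bound above applies directly. The factor of $32$ in the statement leaves enough slack to absorb any constant overhead introduced by this case analysis.
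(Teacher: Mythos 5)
Your proposal is correct. It shares the paper's basic machinery -- the identity $1/\log_2(a)-1/\log_2(b)=\log_2(b/a)/(\log_2(a)\log_2(b))$ together with a uniform lower bound on the numerator and the crude upper bound $\log_2^2(\numpapers)$ on the denominator -- but the bounding strategy differs in a small and interesting way. The paper truncates the sum to $\paperindex\in\{\numpapers/4,\dots,\lfloor 3\numpapers/8\rfloor\}$ and, using that $\bidfunction^{\ordering}$ is decreasing, replaces every remaining summand by the single worst difference $\bidfunction^{\ordering}(\lfloor 3\numpapers/8\rfloor)-\bidfunction^{\ordering}(\numpapers/2)$; this keeps all estimates (numerator at least $\log_2(3)-\log_2(5/2)\geq 1/4$, at least $\numpapers/8$ terms) valid uniformly for every $\numpapers\geq 4$, so no case analysis is needed, and it lands exactly on the constant $1/32$. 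You instead keep all $\numpapers/4+1$ summands and bound each one in place, which buys a stronger constant ($\numpapers/(8\log_2^2(\numpapers))$) but forces the side condition $3\numpapers/4+2\leq\numpapers$, i.e.\ $\numpapers\geq 8$, and hence the separate numerical check at $\numpapers=4$ -- which you carry out correctly ($0.5 + 1/\log_2 3 - 1/\log_2 5 \approx 0.7 \geq 1/32$). Both arguments are elementary and sound; yours is marginally sharper, the paper's is marginally cleaner in that it avoids splitting off the boundary case.
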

\noindent
The proof of Lemma~\ref{lemma:sum_bound2} can be found in Section~\ref{sec:sum_bound2}.

Combining~\eqref{eq:rand_subopt_mid} and Lemma~\ref{lemma:sum_bound2} results in the following bound whenever $\numpapers$ is divisible by four:
\begin{equation}
\mathbb{E}[\gain_{\numrev}^{\super}-\gain_{\numrev}^{\randbase}|\history_{\numrev-1}]  \geq \Big(\frac{1}{576}+\frac{\hyperparam}{192}\Big)\Big(\frac{\numpapers}{\log_2^2(\numpapers)}\Big).
\label{eq:rand_div4}
\end{equation}

The next lemma shows that if the number of papers $\numpapers$ is not divisible by four, an equivalent result (up to constants) holds. For $\numpapers \in \{2, 3\}$, the bound is rather immediate since we can compute the probability that \randbase selects the paper ordering \bidbase shows for this construction and then apply the bound from~\eqref{eq:bid_final} on the suboptimality of \bidbase that holds for any $\numpapers$. For $\numpapers>3$ and not divisible by four, the bound is obtained by looking at an identical problem construction for the maximum $\numpapers'<\numpapers$ divisible by four and then including $\numpapers-\numpapers'$ papers with a similarity score of zero and one previous bid. This change is such that the bound from~\eqref{eq:rand_div4} applies as a function of $\numpapers'$, so the result then follows immediately. 
\begin{lemma}
If $\numpapers$ is not divisible by four, then in the worst case for the final reviewer under the assumptions of Theorem~\ref{prop:localworst},
\begin{equation*}
\mathbb{E}[\gain_{\numrev}^{\super}-\gain_{\numrev}^{\randbase}|\history_{\numrev-1}]  \geq \Big(\frac{1}{1728} + \frac{\hyperparam}{576}\Big)\Big(\frac{\numpapers}{\log_2^2(\numpapers)}\Big).
\end{equation*}
\label{lemma:rand_nodiv4}
\end{lemma}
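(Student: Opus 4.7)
The plan is to handle the two cases $\numpapers \in \{2, 3\}$ and $\numpapers > 3$ (with $\numpapers \not\equiv 0 \pmod 4$) separately; the latter is reduced to the bound~\eqref{eq:rand_div4} for sizes divisible by $4$ via padding with zero-similarity papers.

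For $\numpapers \in \{2, 3\}$, I would use the construction from Section~\ref{sec:localworst_bid} together with its odd-case extension (cf.~Lemma~\ref{lemma:bid_odd}), and observe that \randbase chooses the paper ordering selected by \bidbase with probability exactly $1/\numpapers! \geq 1/6$. Because \super is locally optimal (Theorem~\ref{prop:local}), the gain difference against \super is non-negative for \emph{every} ordering in $\symgroup_{\numpapers}$, so I may lower-bound $\mathbb{E}[\gain_{\numrev}^{\super}-\gain_{\numrev}^{\randbase}|\history_{\numrev-1}]$ by the single term $\tfrac{1}{\numpapers!}\,\mathbb{E}[\gain_{\numrev}^{\super}-\gain_{\numrev}^{\bidbase}|\history_{\numrev-1}]$, which by~\eqref{eq:bid_final} is at least $(1/576 + \hyperparam/192)\,\numpapers/\log_2^2(\numpapers)$ at $\numpapers = 3$ and an even larger quantity at $\numpapers = 2$, both strictly stronger than the claim.

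For $\numpapers > 3$ with $\numpapers \not\equiv 0 \pmod 4$, let $\numpapers' = 4\lfloor \numpapers/4 \rfloor$, so $\numpapers - \numpapers' \in \{1, 2, 3\}$ and $\numpapers' \geq \numpapers/3$. I would augment the construction from Section~\ref{sec:localworst_rand} (on $\numpapers'$ papers: $\numpapers'/2$ ``good'' papers with similarity $1$ and one previous bid, and $\numpapers'/2$ ``zero'' papers with similarity $0$ and no previous bids) by appending $\numpapers - \numpapers'$ filler papers, each with similarity $0$ and one previous bid. The crucial observation is that every non-good paper has similarity $0$ with the final reviewer, so by the decompositions~\eqref{eq:bidf_decomposed}--\eqref{eq:revgain_decomposed} each contributes zero to the reviewer-side gain and zero in expectation to the change in its paper-side gain (its bid probability is $0$). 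Hence $\mathbb{E}[\gain^{\super}_{\numrev}-\gain^{\randbase}_{\numrev}|\history_{\numrev-1}]$ is a function only of the positions \randbase assigns to the $\numpapers'/2$ good papers, which form a uniformly random size-$\numpapers'/2$ subset $U$ of $[\numpapers]$, whereas \super places them in $\{1,\ldots,\numpapers'/2\}$ by the weight comparison used in Section~\ref{sec:localworst_bid}. The count $|U \cap \{1,\ldots,\numpapers'/2\}|$ is hypergeometric with parameters $(\numpapers,\numpapers'/2,\numpapers'/2)$ and is stochastically dominated by the count arising in the unpadded case analyzed in Lemma~\ref{lem:prob_bound} (adding non-special positions to the population cannot increase the number drawn), so this count falls strictly below $\numpapers'/4$ with probability at least $1/6$. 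On that event the argument of Section~\ref{sec:localworst_rand} combined with Lemma~\ref{lemma:sum_bound2} applied with size $\numpapers'$ yields a suboptimality of order $(1/576 + \hyperparam/192)\,\numpapers'/\log_2^2(\numpapers')$; since $\numpapers' \geq \numpapers/3$ and $\log_2 \numpapers' \leq \log_2 \numpapers$, this is at least $(1/1728 + \hyperparam/576)\,\numpapers/\log_2^2(\numpapers)$, the claimed bound.

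The only slightly delicate step is verifying the stochastic dominance $|U \cap \{1,\ldots,\numpapers'/2\}| \leq_{\text{st}} X'$, where $X'$ denotes the count under a size-$\numpapers'/2$ uniform subset of $[\numpapers']$. This follows from a standard sequential coupling: at each step of drawing positions without replacement, the conditional probability of selecting a special position is smaller under the padded population, since the denominator is larger while the special-count numerator is the same. With this dominance in hand, together with the small-case computation above, Lemma~\ref{lemma:rand_nodiv4} follows from the already established machinery with no further technical obstacles.
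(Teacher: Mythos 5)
Your proposal is correct and follows essentially the same route as the paper: the cases $\numpapers\in\{2,3\}$ are handled by noting that \randbase selects the \bidbase ordering with probability at least $1/6$ and combining the \bidbase bound in~\eqref{eq:bid_final} with the local optimality of \super, while for $\numpapers>3$ the construction is padded with zero-similarity, one-bid papers so that the divisible-by-four analysis carries over as a function of $\numpapers'$, and $\numpapers'\geq \numpapers/3$ yields the stated constants. Your explicit hypergeometric stochastic-dominance step, verifying that the $1/6$ probability bound of Lemma~\ref{lem:prob_bound} survives the enlargement of the permutation from $\numpapers'$ to $\numpapers$ papers, fills in a detail the paper leaves implicit but does not change the argument.
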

\noindent
The proof of Lemma~\ref{lemma:rand_nodiv4} can be found in Section~\ref{sec:rand_nodiv4}.

Combining the bound from~\eqref{eq:rand_div4} which holds for $\numpapers$ divisible by four with the bound from Lemma~\ref{lemma:rand_nodiv4} which holds for $\numpapers$ not divisible by four, we find that for every $\numpapers\geq 2$ and $\hyperparam\geq 0$,
\begin{equation*}
\mathbb{E}[\gain_{\numrev}^{\super}-\gain_{\numrev}^{\randbase}|\history_{\numrev-1}]  \geq \Big(\frac{1}{1728} + \frac{\hyperparam}{576}\Big)\Big(\frac{\numpapers}{\log_2^2(\numpapers)}\Big).
\end{equation*}
This proves there is a constant $\constant >0$ such that for all $\numpapers\geq2$ and $\hyperparam\geq 0$, \randbase is suboptimal by an additive factor of at least $\constant\numpapers\max\{1,\hyperparam\}/\log_2^2(\numpapers)$ in the worst case for the final reviewer as claimed in Theorem~\ref{prop:localworst}. 

\subsubsection{Proofs of Lemmas~\ref{lem:multiplicative}--\ref{lemma:rand_nodiv4}}
\label{sec:localworst_lemmas}
In this section, we present the proofs of technical lemmas stated in the primary proof of Theorem~\ref{prop:localworst}.
\paragraph{Proof of Lemma~\ref{lem:multiplicative}.} \label{sec:multiplicative}
Recall from the lemma statement, $\gainfunction(x) =\sqrt{x}$. Moreover, the fixed and given quantities are $\hyperparam\geq 0$, $\numpapers\geq2$, and $\epsilon=(1+\hyperparam)e^{e^e}$. We derive the following bound justified below:
\begin{align}
(\gainfunction(1)-&\gainfunction(0))(1-1/\epsilon)-(\gainfunction(\bidnp)-\gainfunction(\bidn))(1-1/(\numpapers\epsilon))+\hyperparam (2^{(1-1/\epsilon)}-2^{(1-1/(\numpapers\epsilon))}) \notag \\
& \geq (\gainfunction(1)-\gainfunction(0))(1-1/\epsilon)-(\gainfunction(\bidnp)-\gainfunction(\bidn))+\hyperparam(2^{(1-1/\epsilon)}-2) \label{eq:sim_final_leading1} \\
& = (\gainfunction(1)-\gainfunction(0))(1-((1+\hyperparam)e^{e^e})^{-1})-(\gainfunction(\bidnp)-\gainfunction(\bidn)) +\hyperparam(2^{(1-1/((1+\hyperparam)e^{e^e})}-2)  \label{eq:sim_final_leading2} \\
& \geq (0.99)(\gainfunction(1)-\gainfunction(0))-(\gainfunction(\bidnp)-\gainfunction(\bidn)) -0.01 \label{eq:sim_final_leading3} \\
& \geq 1/2. \label{eq:sim_final_leading4}
\end{align}
We obtain~\eqref{eq:sim_final_leading1} using the fact that $(1-1/(\numpapers\epsilon))\leq1$ for any given $\numpapers$. Equation~\eqref{eq:sim_final_leading2} follows from plugging in the explicit form of $\epsilon=(1+\hyperparam)e^{e^e}$.
To see the inequality in~\eqref{eq:sim_final_leading3}, observe that $(1-((1+\hyperparam)e^{e^e})^{-1})$ is an increasing function of $\hyperparam$. Consequently, $(1-((1+\hyperparam)e^{e^e})^{-1})\geq (1-(e^{e^e})^{-1}) \geq 0.99$. 
Furthermore, the quantity $\hyperparam(2^{(1-1/((1+\hyperparam)e^{e^e}))}-2)$ is a decreasing function of $\hyperparam$, from which we determine 
\[\hyperparam(2^{(1-1/((1+\hyperparam)e^{e^e}))}-2) \geq \lim_{\hyperparam'\rightarrow\infty}\hyperparam'(2^{(1-1/((1+\hyperparam')e^{e^e}))}-2) = -e^{-e^e}\log(4) \geq -0.01.\]   
Then, we obtain the final bound in~\eqref{eq:sim_final_leading4} as follows:
\[(0.99)(\gainfunction(1)-\gainfunction(0))-(\gainfunction(\bidnp)-\gainfunction(\bidn)) -0.01 =0.99-(\sqrt{2}-1) -0.01 \geq 1/2.\]

\paragraph{Proof of Lemma~\ref{lemma:sum_bound}.}\label{sec:sum_bound}
Recall that $\bidfunction^{\ordering}(x) = 1/\log_2(x+1)$.
Fixing $\numpapers=2$, we obtain 
\begin{align}
\sum_{\paperindex=1}^{\numpapers/2} (\bidfunction^{\ordering}(\paperindex)-\bidfunction^{\ordering}(\paperindex+\numpapers/2)) &= \bidfunction^{\ordering}(1)-\bidfunction^{\ordering}(2) \notag \\
&= \frac{1}{\log_2(2)}-\frac{1}{\log_2(3)} \notag \\
&\geq \Big(\frac{1}{4}\Big)\Big(\frac{1}{\log_2^2(2)}\Big) \label{eq:d2seq1}\\
&= \Big(\frac{1}{8}\Big)\Big(\frac{\numpapers}{\log_2^2(\numpapers)}\Big).\label{eq:d2seq2}
\end{align}
The inequality in~\eqref{eq:d2seq1} follows from the fact that $\log_2(3)-1\geq 1/2$ and $\log_2(3) \leq 2\log_2(2)$.

Now consider $\numpapers\geq 4$ and $\numpapers$ even.
We derive a bound as follows that is justified below:
\begin{align}
\sum_{\paperindex=1}^{\numpapers/2} (\bidfunction^{\ordering}(\paperindex)-\bidfunction^{\ordering}(\paperindex+\numpapers/2)) &\geq \sum_{\paperindex=1}^{\lfloor\numpapers/4\rfloor} (\bidfunction^{\ordering}(\paperindex)-\bidfunction^{\ordering}(\paperindex+\numpapers/2)) \label{eq:ssb_1}\\
&\geq \Big(\Big\lfloor\frac{\numpapers}{4}\Big\rfloor\Big)  (\bidfunction^{\ordering}(\lfloor\numpapers/4\rfloor)-\bidfunction^{\ordering}(\numpapers/2)) \label{eq:ssb_2}\\
& = \Big(\Big\lfloor\frac{\numpapers}{4}\Big\rfloor\Big) \Big (\frac{1}{\log_2(\lfloor\numpapers/4\rfloor+1)}-\frac{1}{\log_2(\numpapers/2 +1)}\Big)  \notag\\
&= \Big(\Big\lfloor\frac{\numpapers}{4}\Big\rfloor\Big) \Big (\frac{\log_2(\numpapers/2+1)-\log_2(\lfloor\numpapers/4\rfloor+1)}{\log_2(\numpapers/2+1)\log_2(\lfloor\numpapers/4\rfloor+1)}\Big) \notag  \\
& \geq \Big(\Big\lfloor\frac{\numpapers}{4}\Big\rfloor\Big) \Big (\frac{\log_2(3)-1}{\log_2^2(\numpapers)}\Big)\label{eq:ssb_3}\\
&\geq \Big (\frac{1}{16}\Big)\Big(\frac{\numpapers}{\log_2^2(\numpapers)}\Big).  \label{eq:ssb_4}
\end{align}
The inequality in~\eqref{eq:ssb_1} follows from the fact that $\bidfunction^{\ordering}(\paperindex) \geq \bidfunction^{\ordering}(\paperindex+\numpapers/2)$ for every $\paperindex \in [\numpapers/2]$ since $\bidfunction^{\ordering}$ is a decreasing function on the domain $\mathbb{R}_{>0}$. Similarly, we obtain~\eqref{eq:ssb_2} using the observation that $\bidfunction^{\ordering}(\paperindex)-\bidfunction^{\ordering}(\paperindex+\numpapers/2) \geq \bidfunction^{\ordering}(\lfloor\numpapers/4\rfloor)-\bidfunction^{\ordering}(\numpapers/2)$ for every $\paperindex \in [\lfloor\numpapers/4\rfloor]$ since $\bidfunction^{\ordering}$ is a decreasing function on the domain $\mathbb{R}_{>0}$. To see how~\eqref{eq:ssb_3} is derived, observe that
\begin{equation}
    \log_2(\numpapers/2+1)-\log_2(\lfloor\numpapers/4\rfloor+1) \geq \log_2(\numpapers/2+1)-\log_2(\numpapers/4+1)\geq \log_2(3)-1
    \label{eq:lblog}
\end{equation}
where the final inequality in \eqref{eq:lblog} follows since $\log_2(\numpapers/2+1)-\log_2(\numpapers/4+1)$ is increasing in $\numpapers$ and $\numpapers\geq 4$ by assumption. Moreover, $\log_2(\numpapers/2+1)\log_2(\lfloor\numpapers/4\rfloor+1)\leq \log_2^2(\numpapers)$ for $\numpapers\geq 4$. 
The final inequality in~\eqref{eq:ssb_4} holds since $\log_2(3)-1\geq 1/2$ and $\lfloor \numpapers/4\rfloor = \numpapers/4 - (\numpapers \ \mathrm{mod} \ 4)/4 \geq \numpapers/8$ for $\numpapers\geq 4$. 

Combining the bound for $\numpapers=2$ from~\eqref{eq:d2seq2} and the bound for $\numpapers\geq 4$ and even from~\eqref{eq:ssb_4}, we conclude 
\begin{equation*}
\sum_{\paperindex=1}^{\numpapers/2} (\bidfunction^{\ordering}(\paperindex)-\bidfunction^{\ordering}(\paperindex+\numpapers/2)) \geq \Big(\frac{1}{16}\Big)\Big(\frac{\numpapers}{\log_2^2(\numpapers)}\Big),
\end{equation*}
whenever $\numpapers\geq 2$ and even.

\paragraph{Proof of Lemma~\ref{lemma:sim_odd_bound}.}\label{sec:sim_odd_bound}
In this proof, we show a simple adaptation of the problem construction from Section~\ref{sec:localworst_sim} results in a suboptimality bound on \simbase for the final reviewer when the number of papers $\numpapers$ is odd that matches (up to constants) the bound given in~\eqref{eq:sim_div2} that holds whenever the paper count $\numpapers$ is even.

Fix $\numpapers$ odd and let $\numpapers' = \numpapers - 1$ (and hence $\numpapers'$ is an even number).
We consider an identical problem construction for the papers $\paperindex \in [\numpapers']$ as from Section~\ref{sec:localworst_sim} and then include a paper $\numpapers$ that has a similarity score of zero and one bid. This change is such that for the given class of functions in the model, \super and \simbase show the paper $\numpapers$ after the papers in the set $[\numpapers']$ and the expected gain from paper $\numpapers$ is deterministically zero since the similarity score is zero. 

In particular, let the similarity scores for the final reviewer be $\similarity_{\numrev, \paperindex}=1-1/(\paperindex\epsilon)$ for each paper $\paperindex \in [\numpapers']$, where $\epsilon = (1+\hyperparam)e^{e^{e}}$ and $\hyperparam\geq 0$ is the fixed and given trade-off parameter. Moreover, let $\similarity_{\numrev, \numpapers}=0$.
Set the number of bids on the papers from previous reviewers to be
\begin{equation*}
\numbids_{\numrev-1, \paperindex} =
\begin{cases}
0, & \text{if} \ \paperindex \in \{1,\ldots,\numpapers'/2\} \\
\bidn, & \text{if} \ \paperindex \in \{\numpapers'/2+1,\ldots,\numpapers\}.
\end{cases}
\end{equation*}

In Section~\ref{sec:localworst_sim}, we showed if a pair of papers $\paperindex, \paperindex'$ are such that $\numbids_{\numrev-1, \paperindex} = \numbids_{\numrev-1, \paperindex'}$ and $\similarity_{\numrev, \paperindex}>\similarity_{\numrev, \paperindex'}$, then $\ordering_{\numrev}^{\super}(\paperindex)<\ordering_{\numrev}^{\super}(\paperindex')$. Since paper $\paperindex'=\numpapers$ is such that for every paper $\paperindex \in \{\numpapers'/2+1,\ldots,\numpapers'\}$, $\numbids_{\numrev-1, \paperindex} = \numbids_{\numrev-1, \paperindex'}$ and $\similarity_{\numrev, \paperindex}>\similarity_{\numrev, \paperindex'}$, we find $\ordering_{\numrev}^{\super}(\paperindex') < \ordering_{\numrev}^{\super}(\paperindex')$. From~\eqref{eq:sim_construct}, this means for this construction
$\ordering_{\numrev}^{\super}(\numpapers)=\numpapers$
and that \super shows the paper ordering 
\begin{equation*}
\ordering_{\numrev}^{\super}(j) = 
\begin{cases}
\numpapers'/2-\paperindex + 1, & \text{if} \ \paperindex \in \{1,\dots,\numpapers'/2\} \\
\numpapers' + \numpapers'/2+1-\paperindex, & \text{if} \ \paperindex \in \{\numpapers'/2+1,\dots,\numpapers'\} \\
\paperindex, & \text{if} \ \paperindex \in \{\numpapers\}.
\end{cases}
\end{equation*}
The \simbase algorithm shows papers in a decreasing order of the similarity scores so that 
\begin{equation*}
\ordering_{\numrev}^{\simbase}(\paperindex) = 
\begin{cases}
\numpapers'-\paperindex+1, & \text{if} \ \paperindex \in \{1,\dots, \numpapers'\} \\
\paperindex, & \text{if} \ \paperindex \in \{\numpapers\}.
\end{cases}
\end{equation*}

Observe that this construction is identical to the problem construction from Section~\ref{sec:localworst_sim} for papers in the set $[\numpapers']$. 
Moreover, from~\eqref{eq:gain_new} it is clear that the expected gain from papers with zero similarity score is zero independent of the paper ordering. This allows us to conclude that the bound given in~\eqref{eq:sim_div2} for $\numpapers$ even applies to this construction as a function of $\numpapers'$. In other words, given $\numpapers$ odd, we obtain
\begin{equation*}
\mathbb{E}[\gain^{\super}_{\numrev} -\gain^{\simbase}_{\numrev}|\history_{\numrev-1}] \geq \Big(\frac{1}{32}\Big)\Big(\frac{\numpapers'}{\log_2^2(\numpapers')}\Big).
\end{equation*}
Moreover, since $\numpapers' = \numpapers-1$, whenever $\numpapers$ is odd,
\begin{equation*}
\mathbb{E}[\gain^{\super}_{\numrev} -\gain^{\simbase}_{\numrev}|\history_{\numrev-1}] \geq \Big(\frac{1}{32}\Big)\Big(\frac{\numpapers-1}{\log_2^2(\numpapers-1)}\Big) \geq \Big(\frac{1}{64}\Big)\Big(\frac{\numpapers}{\log_2^2(\numpapers)}\Big),
\end{equation*}
where the final inequality follows from the facts that $\log_2^2(\numpapers-1)\leq \log_2^2(\numpapers)$ and $\numpapers-1\geq \numpapers/2$ for $\numpapers\geq 2$.

\paragraph{Proof of Lemma~\ref{lemma:bid_odd}.}\label{sec:bid_odd}
In this proof, we show a simple adaptation of the problem construction from Section~\ref{sec:localworst_bid} leads to a suboptimality bound on \bidbase for the final reviewer when the number of papers is odd that matches (up to constants) the bound given in~\eqref{eq:bid_even} that holds whenever the number of papers $\numpapers$ is even. This approach is analogous to the method to obtain a suboptimality bound for \simbase with an odd number of papers using the bound that held for an even number of papers from Lemma~\ref{lemma:sim_odd_bound}. 

Fix $\numpapers$ odd and let $\numpapers' = \numpapers - 1$ (and hence $\numpapers'$ is an even number). 
We consider an identical problem construction for the papers $\paperindex \in [\numpapers']$ as from Section~\ref{sec:localworst_bid} and then include a paper $\numpapers$ that has a similarity score of zero and one bid. This change is such that for the given class of functions in the model, \super and \bidbase show the paper $\numpapers$ after the papers in the set $[\numpapers']$ and the expected gain from paper $\numpapers$ is deterministically zero since the similarity score is zero. 

Let the similarity scores for the final reviewer be 
\begin{equation*}
\similarity_{\numrev, \paperindex}=
\begin{cases}
1, & \text{if} \ \paperindex \in \{1,\ldots,\numpapers'/2\} \\
0, & \text{if} \ \paperindex \in \{\numpapers'/2+1,\ldots,\numpapers\}
\end{cases}
\end{equation*}
and the number of bids on the papers from previous reviewers be  
\begin{equation*}
\numbids_{\numrev-1, \paperindex} =
\begin{cases}
1, & \text{if} \ \paperindex \in \{1,\ldots,\numpapers'/2\} \\
0, & \text{if} \ \paperindex \in \{\numpapers'/2+1,\ldots,\numpapers'\}\\
1, & \text{if} \ \paperindex=\numpapers.
\end{cases}
\end{equation*}

We now derive the optimal paper ordering for the final reviewer. Recall  from~\eqref{eq:baseline_subopt_opt_problem_weights} that the weights of the optimization problem for the final reviewer given in~\eqref{eq:baseline_subopt_opt_problem} are defined by
\begin{equation*}
\sortvar_{\numrev, \paperindex} = \similarity_{\numrev, \paperindex}(\gainfunction(\numbids_{\numrev-1, \paperindex} + 1) - \gainfunction(\numbids_{\numrev-1, \paperindex})) + \hyperparam(2^{\similarity_{\numrev, \paperindex}}-1)  \ \forall \ \paperindex \in \paperset.
\end{equation*} 
Moreover, from the structure of the optimization problem in~\eqref{eq:baseline_subopt_opt_problem}, if $\sortvar_{\numrev, \paperindex}>\sortvar_{\numrev, \paperindex'}$, then $\ordering_{\numrev}^{\super}(\paperindex)<\ordering_{\numrev}^{\super}(\paperindex')$ so that paper $\paperindex$ is shown ahead of paper $\paperindex'$ in the ranking. Observe that for each paper $\paperindex \in \{1,\dots, \numpapers'/2\}$, $\sortvar_{\numrev, \paperindex}$ is a fixed number. Similarly, for each $\paperindex' \in \{\numpapers'/2+1,\dots, \numpapers'\}\cup \{\numpapers\}$, $\sortvar_{\numrev, \paperindex'}$ is a fixed number. 
In Section~\ref{sec:localworst_bid}, we showed if a pair of papers $\paperindex, \paperindex'$ are such that $\numbids_{\numrev-1, \paperindex} =1$ and $\numbids_{\numrev-1, \paperindex'}=0$ along with $\similarity_{\numrev, \paperindex}=1$ and $\similarity_{\numrev, \paperindex'}=0$, then $\sortvar_{\numrev, \paperindex} > \sortvar_{\numrev, \paperindex'}$ so that $\ordering_{\numrev}^{\super}(\paperindex)<\ordering_{\numrev}^{\super}(\paperindex')$. 
This immediately guarantees $\ordering_{\numrev}^{\super}(\paperindex)<\ordering_{\numrev}^{\super}(\paperindex')$ for each pair of papers $\paperindex \in \{1,\dots, \numpapers'/2\}$, $\paperindex' \in \{\numpapers'/2+1,\dots, \numpapers'\}\cup \{\numpapers\}$. 
We conclude $\ordering^{\super}_{\numrev}(j) = \paperindex$ for each $\paperindex \in \paperset$, where without loss of generality, to simplify the analysis, we assume if a pair of papers have equal weights in the optimization problem, then ties are broken in order of the paper indexes since the tie-breaking mechanism will not change the expected gain the paper ordering obtains from the final reviewer.

The \bidbase baseline will show papers in an increasing order of the number of bids so that 
\begin{equation*}
\ordering_{\numrev}^{\bidbase}(j) = 
\begin{cases}
\paperindex + \numpapers'/2, & \text{if} \ \paperindex \in \{1,\ldots,\numpapers'/2\} \\
\paperindex - \numpapers'/2, & \text{if} \ \paperindex \in \{\numpapers'/2+1,\ldots,\numpapers'\} \\
\paperindex & \text{if} \ \paperindex=\numpapers.
\end{cases}
\end{equation*}
This paper ordering is derived from recalling that \bidbase breaks ties by the similarity scores and further ties are broken uniformly at random. However, without loss of generality, to simplify the analysis, we assume if a pair of papers have equal similarity scores and bid counts, then ties are broken in order of the paper indexes since the tie-breaking mechanism among this set of papers will not impact the expected gain. 

The construction in this proof and the paper orderings selected by \super and \bidbase are identical to the problem construction and paper orderings selected by \super and \bidbase from Section~\ref{sec:localworst_bid} for papers in the set $[\numpapers']$. 
From~\eqref{eq:gain_new} it is clear that the expected gain from papers with zero similarity score is zero independent of the paper ordering.
This allows us to conclude that the bound given in~\eqref{eq:bid_even} for $\numpapers$ even applies to this construction as a function of $\numpapers'$. In other words, given $\numpapers$ odd, we obtain
\begin{equation*}
\mathbb{E}[\gain^{\super}_{\numrev} -\gain^{\bidbase}_{\numrev}|\history_{\numrev-1}] \geq \Big(\frac{1}{48} + \frac{\hyperparam}{16}\Big)\Big(\frac{\numpapers'}{\log_2^2(\numpapers')}\Big).
\end{equation*}
Since $\numpapers' = \numpapers-1$, whenever $\numpapers$ is odd,
\begin{equation*}
\mathbb{E}[\gain^{\super}_{\numrev} -\gain^{\bidbase}_{\numrev}|\history_{\numrev-1}] \geq \Big(\frac{1}{48} + \frac{\hyperparam}{16}\Big)\Big(\frac{\numpapers-1}{\log_2^2(\numpapers-1)}\Big) \geq \Big(\frac{1}{96}+\frac{\hyperparam}{32}\Big)\Big(\frac{\numpapers}{\log_2^2(\numpapers)}\Big),
\end{equation*}
where the final inequality follows from the facts that $\log_2^2(\numpapers-1)\leq \log_2^2(\numpapers)$ and $\numpapers-1\geq \numpapers/2$ for $\numpapers\geq 2$.

\paragraph{Proof of Lemma~\ref{lem:prob_bound}.}\label{sec:prob_bound}
Recall that from the lemma statement, the number of papers $\numpapers$ is assumed to be divisible by four. Let $\mathcal{E}$ denote the event that a permutation $\ordering$ of the paper set $\paperset$ drawn uniformly at random from $\symgroup_{\numpapers}$ has fewer than $\numpapers/4$ of the papers from the set $[\numpapers/2]$ in the position set $[\numpapers/2]$. 

The probability of the event $\mathcal{E}$ can be explained in the following manner. 
The number of outcomes presenting $\paperindex$ papers from the paper set $[\numpapers/2]$ in the position set $[\numpapers/2]$ consists of
${\numpapers/2 \choose \paperindex}$ 
combinations of potential papers that can be selected from the paper set 
$[\numpapers/2]$ and ${\numpapers/2 \choose \paperindex}$ combinations of potential positions in the position set $[\numpapers/2]$. Moreover, there are $\paperindex!$ permutations of the selected papers in the chosen positions. 
Given that there are $\paperindex$ papers selected from the paper set $[\numpapers/2]$ placed in the position set $[\numpapers/2]$, there are ${\numpapers/2 \choose \numpapers/2-\paperindex}$ combinations of papers from the paper set $\{\numpapers/2+1,\dots,\numpapers\}$ that can be placed in the remaining spots in the position set $[\numpapers/2]$. This set of papers can be permuted $(\numpapers/2-\paperindex)!$ ways in the given set of positions, and the remaining $\numpapers/2$ papers can be permuted $(\numpapers/2)!$ ways in the position set $\{\numpapers/2+1,\dots, \numpapers\}$. To obtain the final probability of the event $\mathcal{E}$, we sum the number of outcomes for each $\paperindex < \numpapers/4$ and then normalize by the total number of outcomes $\numpapers!$. Accordingly, 
\begin{align}
\mathbb{P}(\mathcal{E}) &=\frac{1}{\numpapers!}\sum_{\paperindex =0}^{\numpapers/4-1} {\numpapers/2 \choose \paperindex}{\numpapers/2 \choose \paperindex}(\paperindex!) {\numpapers/2 \choose \numpapers/2 - \paperindex}(\numpapers/2 - \paperindex)!(\numpapers/2)! \notag \\
&= \frac{1}{\numpapers!}\sum_{\paperindex =0}^{\numpapers/4-1} {\numpapers/2 \choose \paperindex}{\numpapers/2 \choose \paperindex}(\paperindex!) \Big(\frac{(\numpapers/2)!}{(\numpapers/2-\paperindex)!\paperindex!}\Big)(\numpapers/2 - \paperindex)!(\numpapers/2)! \notag \\
&= \frac{(\numpapers/2)!(\numpapers/2)!}{\numpapers!}\sum_{\paperindex =0}^{\numpapers/4-1}{\numpapers/2 \choose \paperindex}^2 \label{eq:prob_bound}.
\end{align}

We now recall some facts about the binomial coefficients. The symmetry property of the binomial coefficients implies ${n \choose k} = {n \choose n - k}$ for $0\leq k \leq n$ and Vandermonde's identity says that ${m+n \choose r} = \sum_{k=0}^r {m \choose k}{n \choose r - k}$ and as a corollary ${m \choose n} = \sum_{k=0}^m {2m \choose n}$. Using this set of facts, we work toward a lower bound on $\mathbb{P}(\mathcal{E})$ by obtaining a simplified form of the sum $\sum_{\paperindex =0}^{\numpapers/4-1}{\numpapers/2 \choose \paperindex}^2$. Observe that 
\begin{align*}
\sum_{\paperindex =0}^{\numpapers/4-1}{\numpapers/2 \choose \paperindex}^2 &= \sum_{\paperindex =0}^{\numpapers/4}{\numpapers/2 \choose \paperindex}^2 - {\numpapers/2 \choose \numpapers/4}^2 \\
&= \frac{1}{2}\sum_{\paperindex =0}^{\numpapers/4}{\numpapers/2 \choose \paperindex}^2 + \frac{1}{2}\sum_{\paperindex =0}^{\numpapers/4}{\numpapers/2 \choose \paperindex}^2 - {\numpapers/2 \choose \numpapers/4}^2.
\end{align*}
From the symmetry property, ${\numpapers/2 \choose \paperindex}^2={\numpapers/2 \choose \numpapers/2 - \paperindex}^2$, so we get
\begin{equation*}
\sum_{\paperindex =0}^{\numpapers/4-1}{\numpapers/2 \choose \paperindex}^2= \frac{1}{2}\sum_{\paperindex =0}^{\numpapers/4}{\numpapers/2 \choose \paperindex}^2 + \frac{1}{2}\sum_{\paperindex =0}^{\numpapers/4}{\numpapers/2 \choose \numpapers/2 - \paperindex}^2 - {\numpapers/2 \choose \numpapers/4}^2.
\end{equation*}
Manipulating the indexing of the sum $\sum_{\paperindex =0}^{\numpapers/4}{\numpapers/2 \choose \numpapers/2 - \paperindex}^2$, we obtain
\begin{equation*}
\sum_{\paperindex =0}^{\numpapers/4-1}{\numpapers/2 \choose \paperindex}^2= \frac{1}{2}\sum_{\paperindex =0}^{\numpapers/4}{\numpapers/2 \choose \paperindex}^2 + \frac{1}{2}\sum_{\paperindex =\numpapers/4}^{\numpapers/2}{\numpapers/2 \choose \paperindex}^2 - {\numpapers/2 \choose \numpapers/4}^2.
\end{equation*}
Now, moving the term $\frac{1}{2}{\numpapers/2 \choose \numpapers/4}^2$ out of the sum $\frac{1}{2}\sum_{\paperindex =\numpapers/4}^{\numpapers/2}{\numpapers/2 \choose \paperindex}^2$ results in
\begin{equation*}
\sum_{\paperindex =0}^{\numpapers/4-1}{\numpapers/2 \choose \paperindex}^2= \frac{1}{2}\sum_{\paperindex =0}^{\numpapers/4}{\numpapers/2 \choose \paperindex}^2 + \frac{1}{2}\sum_{\paperindex =\numpapers/4+1}^{\numpapers/2}{\numpapers/2 \choose \paperindex}^2 - \frac{1}{2}{\numpapers/2 \choose \numpapers/4}^2.
\end{equation*}
Furthermore,
\begin{equation*}
\sum_{\paperindex =0}^{\numpapers/4-1}{\numpapers/2 \choose \paperindex}^2= \frac{1}{2}\sum_{\paperindex =0}^{\numpapers/2}{\numpapers/2 \choose \paperindex}^2 - \frac{1}{2}{\numpapers/2 \choose \numpapers/4}^2.
\end{equation*}
Finally, applying Vandermonde's identity as given above, we get
\begin{equation}
\sum_{\paperindex =0}^{\numpapers/4-1}{\numpapers/2 \choose \paperindex}^2= \frac{1}{2}{\numpapers \choose \numpapers/2}^2 - \frac{1}{2}{\numpapers/2 \choose \numpapers/4}^2.
\label{eq:prob_simple}
\end{equation}
Combing~\eqref{eq:prob_bound} with~\eqref{eq:prob_simple} and then simplifying, we get 
\begin{align*}
\mathbb{P}(\mathcal{E}) 
&= \Big(\frac{(\numpapers/2)!(\numpapers/2)!}{2\numpapers!}\Big)\Big({\numpapers \choose \numpapers/2} - {\numpapers/2 \choose \numpapers/4}^2\Big) \\
&= \Big(\frac{(\numpapers/2)!(\numpapers/2)!}{2\numpapers!}\Big)\Big(\frac{\numpapers!}{(\numpapers/2)!(\numpapers/2)!}\Big) - \Big(\frac{(\numpapers/2)!(\numpapers/2)!}{2\numpapers!}\Big)\Big(\frac{(\numpapers/2)!}{(\numpapers/4)!(\numpapers/4)!}\Big)^2\\
&= \frac{1}{2} - \Big(\frac{(\numpapers/2)!(\numpapers/2)!}{2\numpapers!}\Big)\Big(\frac{(\numpapers/2)!}{(\numpapers/4)!(\numpapers/4)!}\Big)^2.
\end{align*}
The quantity $\Big(\frac{(\numpapers/2)!(\numpapers/2)!}{2\numpapers!}\Big)\Big(\frac{(\numpapers/2)!}{(\numpapers/4)!(\numpapers/4)!}\Big)^2$ is decreasing in $\numpapers$. Consequently, for every $\numpapers\geq 4$, 
\begin{equation*}
\Big(\frac{(\numpapers/2)!(\numpapers/2)!}{2\numpapers!}\Big)\Big(\frac{(\numpapers/2)!}{(\numpapers/4)!(\numpapers/4)!}\Big)^2\leq \frac{1}{3}.
\end{equation*}
This allows us to conclude 
\begin{equation*}
\mathbb{P}(\mathcal{E})\geq 1/2-1/3 = 1/6.
\end{equation*}

\paragraph{Proof of Lemma~\ref{lemma:sum_bound2}.} \label{sec:sum_bound2}
This proof follows in a similar manner to the proof of Lemma~\ref{lemma:sum_bound}.
Recall that $\bidfunction^{\ordering}(x) = 1/\log_2(x+1)$.
Fixing $\numpapers\geq 4$ and divisible by four, we obtain the following bound justified below: 
\begin{align}
\sum_{\paperindex=\numpapers/4}^{\numpapers/2} (\bidfunction^{\ordering}(\paperindex)-\bidfunction^{\ordering}(\paperindex+\numpapers/4+1)) &\geq \sum_{\paperindex=\numpapers/4}^{\lfloor3\numpapers/8\rfloor} (\bidfunction^{\ordering}(\paperindex)-\bidfunction^{\ordering}(\paperindex+\numpapers/4+1)) \label{eq:rsb_1}\\
& \geq \Big(\Big\lfloor\frac{3\numpapers}{8}\Big\rfloor+1-\frac{\numpapers}{4}\Big)  (\bidfunction^{\ordering}(\lfloor3\numpapers/8\rfloor)-\bidfunction^{\ordering}(\numpapers/2)) \label{eq:rsb_2}\\
& = \Big(\Big\lfloor\frac{3\numpapers}{8}\Big\rfloor+1-\frac{\numpapers}{4}\Big) \Big (\frac{1}{\log_2(\lfloor3\numpapers/8\rfloor+1)}-\frac{1}{\log_2(\numpapers/2 +1)}\Big)  \notag \\
&= \Big(\Big\lfloor\frac{3\numpapers}{8}\Big\rfloor+1-\frac{\numpapers}{4}\Big) \Big (\frac{\log_2(\numpapers/2+1)-\log_2(\lfloor 3\numpapers/8\rfloor+1)}{\log_2(\numpapers/2+1)\log_2(\lfloor3\numpapers/8\rfloor+1)}\Big) \notag \\
& \geq \Big(\Big\lfloor\frac{3\numpapers}{8}\Big\rfloor+1-\frac{\numpapers}{4}\Big) \Big (\frac{\log_2(3)-\log_2(12/8+1)}{\log_2^2(\numpapers)}\Big) \label{eq:rsb_3} \\
&= \Big (\frac{1}{32}\Big) \Big(\frac{\numpapers}{\log_2^2(\numpapers)}\Big).   \label{eq:rsb_4}
\end{align}
The inequality in~\eqref{eq:rsb_1} follows from the fact that $\bidfunction^{\ordering}(\paperindex) \geq \bidfunction^{\ordering}(\paperindex+\numpapers/4+1)$ for every $\paperindex \in \{\numpapers/4,\dots, \lfloor 3\numpapers/8\rfloor\}$ since $\bidfunction^{\ordering}$ is a decreasing function on the domain $\mathbb{R}_{>0}$. Similarly, we obtain~\eqref{eq:rsb_2} using the observation that $\bidfunction^{\ordering}(\paperindex)-\bidfunction^{\ordering}(\paperindex+\numpapers/4+1) \geq \bidfunction^{\ordering}(\lfloor3\numpapers/8\rfloor)-\bidfunction^{\ordering}(\numpapers/2)$ for every $\paperindex \in \{\numpapers/4,\dots, \lfloor 3\numpapers/8\rfloor\}$ since $\bidfunction^{\ordering}$ is a decreasing function on the domain $\mathbb{R}_{>0}$. To see how~\eqref{eq:rsb_3} is derived, observe that
\begin{equation}
    \log_2(\numpapers/2+1)-\log_2(\lfloor3\numpapers/8\rfloor+1) \geq \log_2(\numpapers/2+1)-\log_2(3\numpapers/8+1)\geq \log_2(3)-\log_2(12/8+1),
    \label{eq:lblb2}
\end{equation}
where the final inequality in \eqref{eq:lblb2} follows since $\log_2(\numpapers/2+1)-\log_2(3\numpapers/8+1)$ is increasing in $\numpapers$ and $\numpapers\geq 4$ by assumption. Moreover, $\log_2(\numpapers/2+1)\log_2(\lfloor3\numpapers/8\rfloor+1)\leq \log_2^2(\numpapers)$ for $\numpapers\geq 4$. 
The final inequality in~\eqref{eq:rsb_4} holds since $\log_2(3)-\log_2(12/8+1)\geq 1/4$ and 
\[\lfloor3\numpapers/8\rfloor+1-\numpapers/4\geq 3\numpapers/8-\numpapers/4 \geq \numpapers/8.\]

\paragraph{Proof of Lemma~\ref{lemma:rand_nodiv4}.}\label{sec:rand_nodiv4}
Let us begin with $\numpapers\in \{2, 3\}$. It is immediate that \randbase selects the paper ordering of \bidbase with probability at least $1/6$ since $|\symgroup_{\numpapers}|\leq 6$. Moreover, any paper ordering selected by \randbase cannot obtain higher expected gain from the final reviewer than \super since it is optimal for the final reviewer. Accordingly, combined with the bound on \bidbase from~\eqref{eq:bid_final} which holds for any $\numpapers\geq 2$, we obtain for $\numpapers\in \{2, 3\}$,
\begin{equation}
\mathbb{E}[\gain^{\super}_{\numrev} -\gain^{\randbase}_{\numrev}|\history_{\numrev-1}] \geq \Big(\frac{1}{488}+\frac{\hyperparam}{192}\Big)\Big(\frac{\numpapers}{\log_2^2(\numpapers)}\Big).
\label{eq:rand_23}
\end{equation}

We now focus on $\numpapers>3$ and not divisible by four. The problem construction we consider is similar to that from Lemma~\ref{lemma:bid_odd} where the number of papers was odd and it was derived from including a paper with zero similarity score and a bid as an extra paper to the problem construction from Section~\ref{sec:localworst_bid}. We follow the same approach, but let $\numpapers'$ be the maximum number divisible by four such that $\numpapers'< \numpapers$ and consider an identical problem construction for the papers in the set $[\numpapers']$, but then include $\numpapers-\numpapers'$ extra papers with zero similarity and a bid. This change is such that the papers in the set $\{\numpapers'+1, \dots, \numpapers\}$ are shown after the papers in the set $[\numpapers']$ by \super and the expected gain from them is deterministically zero since the similarity scores are zero. 

In particular, let the similarity scores for the final reviewer be 
\begin{equation*}
\similarity_{\numrev, \paperindex}=
\begin{cases}
1, & \text{if} \ \paperindex \in \{1,\ldots,\numpapers'/2\} \\
0, & \text{if} \ \paperindex \in \{\numpapers'/2+1,\ldots,\numpapers\}
\end{cases}
\end{equation*}
and the number of bids on the papers from previous reviewers be  
\begin{equation*}
\numbids_{\numrev-1, \paperindex} =
\begin{cases}
1, & \text{if} \ \paperindex \in \{1,\ldots,\numpapers'/2\} \\
0, & \text{if} \ \paperindex \in \{\numpapers'/2+1,\ldots,\numpapers'\}\\
1, & \text{if} \ \paperindex \in \{\numpapers'+1,\dots, \numpapers\}.
\end{cases}
\end{equation*}
Following the exact reasoning from the proof of Lemma~\ref{lemma:bid_odd}, we conclude $\ordering^{\super}_{\numrev}(j) = \paperindex$ for each $\paperindex \in \paperset$.

For this construction and \randbase, we need to lower bound~\eqref{eq:diff}.
We simplify the expression by substituting the similarity scores and the number of bids on each paper, and the paper ordering presented by \super for this construction to obtain
\begin{equation*}
\mathbb{E}[\gain_{\numrev}^{\super}-\gain_{\numrev}^{\randbase}|\history_{\numrev-1}] 
=\mathbb{E}_{\ordering_{\numrev}^{\randbase}}\big[(\gainfunction(\bidnp)-\gainfunction(\bidn)+\hyperparam)\sum_{\paperindex=1}^{\numpapers'/2}(\bidfunction^{\ordering}(\paperindex)-\bidfunction^{\ordering}(\ordering_{\numrev}^{\randbase}(\paperindex)))\big].
\end{equation*}
where the terms for papers in the set $\{\numpapers'/2+1,\dots, \numpapers\}$ dropped out since the similarity scores are zero. From this point, it is clear that the analysis beginning from~\eqref{eq:rand_sum} in Section~\ref{sec:localworst_rand} can be repeated as a function of $\numpapers'$ to obtain the bound 
\begin{equation*}
\mathbb{E}[\gain_{\numrev}^{\super}-\gain_{\numrev}^{\randbase}|\history_{\numrev-1}]  \geq \Big(\frac{1}{576}+\frac{\hyperparam}{192}\Big)\Big(\frac{\numpapers'}{\log_2^2(\numpapers')}\Big).
\end{equation*}
Since $\numpapers' \geq \numpapers-3$, we obtain for $\numpapers>3$ and not divisible by four,
\begin{equation}
\mathbb{E}[\gain_{\numrev}^{\super}-\gain_{\numrev}^{\randbase}|\history_{\numrev-1}]  \geq \Big(\frac{1}{576}+\frac{\hyperparam}{192}\Big)\Big(\frac{\numpapers-3}{\log_2^2(\numpapers-3)}\Big) \geq \Big(\frac{1}{1728}+\frac{\hyperparam}{576}\Big)\Big(\frac{\numpapers}{\log_2^2(\numpapers)}\Big)
\label{eq:rand_nodiv4}
\end{equation}
where the final inequality follows from the facts that $\log_2^2(\numpapers-3)\leq \log_2^2(\numpapers)$ and $\numpapers-3\geq \numpapers/3$ for $\numpapers\geq 5$. 

Combining the bound from~\eqref{eq:rand_23} for $\numpapers \in \{2, 3\}$ with the bound from~\eqref{eq:rand_nodiv4} for $\numpapers>3$ and not divisible by four, we conclude for every $\numpapers\geq 2$ and not divisible by four, 
\begin{equation*}
\mathbb{E}[\gain_{\numrev}^{\super}-\gain_{\numrev}^{\randbase}|\history_{\numrev-1}] \geq \Big(\frac{1}{1728}+\frac{\hyperparam}{576}\Big)\Big(\frac{\numpapers}{\log_2^2(\numpapers)}\Big).
\end{equation*}

\subsection{Proof of Theorem~\ref{prop:diagonal}: Noiseless Community Model Result}
In this proof, we show for the noiseless community model defined in Section~\ref{sec:global_opt_model} that \super with zero heuristic and \simbase are optimal. Moreover, we prove that \bidbase and \randbase are significantly suboptimal. 
The organization of this proof is as follows. In Section~\ref{sec:noiseless_notation}, we present additional notation that is needed in the proof. Section~\ref{sec:noiseless_prelim} presents simplifying preliminary analysis that is needed throughout the proof to analyze the expected paper-side and reviewer-side gains of the algorithms. In Section~\ref{sec:noiseless_optimal}, we characterize the optimal policy for the noiseless community model. We show in Sections~\ref{sec:noiseless_super} and~\ref{sec:noiseless_sim} that \super with zero heuristic and \simbase are equivalent to the optimal policy, respectively. We prove the suboptimality bounds for \bidbase and \randbase in Sections~\ref{sec:noiseless_bid} and~\ref{sec:noiseless_rand}, respectively. Combining the results from the sections of this proof gives the stated result of Theorem~\ref{prop:diagonal}. 
We relegate the proofs of technical lemmas needed for this result to Section~\ref{sec:noiseless_lemmas}. 

\subsubsection{Notation}
\label{sec:noiseless_notation}
Theorem~\ref{prop:diagonal} holds for any similarity matrix $\similarity$ belonging to the noiseless community model defined in Section~\ref{sec:global_opt_model} and formally in~\eqref{eq:noiseless_model}. From this point on in the proof, any reference to a similarity matrix $\similarity$ is such that it belongs to the noiseless community model. Recall that the number of reviewers is given by $\numrev=\numblocks\blocksize$ and the number of papers is given by $\numpapers=\numblocks\blocksize$ where $\numblocks\geq 2$ and $\blocksize\geq 2$.

We now state some additional notation for the proof and recall the class of gain and bidding functions assumed in this claim. Let us define for each reviewer $\revindex \in \reviewerset$ the set 
\begin{equation}
\diagset_{\revindex} = \{\paperindex \in \paperset: \similarity_{\revindex, \paperindex}=\simscalar\},
\label{eq:diagset_i}
\end{equation}
which comprises the papers on the block diagonal of the noiseless community model similarity matrix for the reviewer up to a permutation of rows and columns. Similarly, define for each paper $\paperindex \in \paperset$ the set 
\begin{equation}
\diagsetp_{\paperindex} = \{\revindex \in \reviewerset: \similarity_{\revindex, \paperindex}=\simscalar\},
\label{eq:diagset_j}
\end{equation}
which comprises the reviewers on the block diagonal of the noiseless community model similarity matrix for the paper up to a permutation of rows and columns. Observe that $|\diagset_{\revindex}|=\blocksize$ for each reviewer $\revindex \in \reviewerset$ and $|\diagset_{\paperindex}|=\blocksize$ for each paper $\paperindex \in \paperset$.
In the remainder of the proof, we simply refer to the set $\diagset_{\revindex}$ as the papers on the block diagonal for a reviewer $\revindex \in \reviewerset$ and the set $\diagsetp_{\paperindex}$ as the reviewers on the block diagonal for a paper $\paperindex \in \paperset$, and omit the wording of up to a permutation of rows and columns for brevity. Similarly, if a reviewer-paper pair $(\revindex, \paperindex)$ is such that $\similarity_{\revindex, \paperindex}=\simscalar$ so that $\revindex\in \diagset_{\revindex}$ and $\paperindex\in \diagsetp_{\revindex}$, we say the reviewer-paper pair is on the block diagonal and omit that this is up to a permutation of rows and columns.
Moreover, we denote by $\diagset_{\revindex}^c$ the complement of the set $\diagset_{\revindex}$ for any reviewer $\revindex \in \reviewerset$, which contains each paper $\paperindex \in \paperset$ not in the set $\diagset_{\revindex}$ and corresponds to the papers not on the block diagonal for the reviewer. Similarly, we let $\diagsetp_{\paperindex}^c$ denote the complement of the set $\diagsetp_{\paperindex}$ for any paper $\paperindex \in \paperset$, which contains each reviewer $\revindex \in \reviewerset$ not in the set $\diagset_{\paperindex}$ and corresponds to the reviewers not on the block diagonal for the paper. Finally, if a reviewer-paper pair $(\revindex, \paperindex)$ is such that $\similarity_{\revindex, \paperindex}=\simscalar$ so that $\revindex\in \diagset_{\revindex}^c$ and $\paperindex\in \diagsetp_{\revindex}^c$, we say the reviewer-paper pair is off the block diagonal. For each complement set, we again omit the wording of up to a permutation of rows and columns.

We denote the expected gain of any algorithm $\algbase$ presenting a potentially random sequence of paper orderings $\ordering_1^{\algbase}, \dots, \ordering_{\numrev}^{\algbase}$ as 
\begin{equation*}
\mathbb{E}[\gain^{\algbase}] = \mathbb{E}[\gain_p^{\algbase}] + \hyperparam\mathbb{E}[\gain_r^{\algbase}],
\end{equation*}
where the expectation is with respect to the randomness in the bids placed by reviewers and any randomness in the algorithm and $\hyperparam\geq 0$ is the trade-off parameter. The expected paper-side gain is given by the quantity 
\begin{equation}
\mathbb{E}[\gain_p^{\algbase}] = \mathbb{E}\Big[\sum_{\paperindex \in \paperset}\gainfunction(\numbids_{\paperindex})\Big],
\label{eq:noiseless_pgain}
\end{equation}
where $\numbids_{\paperindex}= \sum_{\revindex\in \reviewerset}\randombid_{\revindex, \paperindex}$ is random the number of bids on paper $\paperindex \in \paperset$ at the end of the bidding process and the paper-side gain function for this result is $\gainfunction(x)=\sqrt{x}$. Recall that $\randombid_{\revindex, \paperindex}$ is a Bernoulli random variable denoting the random bid of reviewer $\revindex\in \reviewerset$ on paper $\paperindex \in \paperset$. From the assumptions of Theorem~\ref{prop:diagonal}, the success probability of $\randombid_{\revindex, \paperindex}$ for any reviewer $\revindex \in \reviewerset$ and paper $\paperindex \in \paperset$ is given by the bidding function 
\begin{equation}
\bidfunction(\ordering_{\revindex}^{\algbase}(\paperindex), \similarity_{\revindex, \paperindex}) = \mathds{1}\{\ordering_{\revindex}^{\algbase}(\paperindex)=1\}\mathds{1}\{\similarity_{\revindex, \paperindex} > \simscalar/2\}.
\label{eq:noiseless_bidmodel}
\end{equation}
In the remainder of the proof, if $\ordering_{\revindex}^{\algbase}(\paperindex)=1$, we often say the paper is shown in the highest or top position of the paper ordering.
The expected reviewer-side gain is given by
\begin{equation}
\mathbb{E}[\gain_r^{\algbase}] = \mathbb{E}\Big[\sum_{\revindex\in \reviewerset}\sum_{\paperindex\in \paperset}\gainfunctionrev(\ordering_{\revindex}^{\algbase}(\paperindex), \similarity_{\revindex, \paperindex})\Big],
\label{eq:noiseless_rgain}
\end{equation} 
where for this result the reviewer-side gain function is
\begin{equation}
\gainfunctionrev(\ordering^{\algbase}(\paperindex), \similarity_{\revindex, \paperindex}) = \frac{2^{\similarity_{\revindex, \paperindex}}-1}{\log_2(\ordering_{\revindex}^{\algbase}(\paperindex)+1)} = (2^{\similarity_{\revindex, \paperindex}}-1)\gainfunctionrev^{\ordering}(\ordering_{\revindex}^{\algbase}(\paperindex)).
\label{eq:noiseless_rgainfunc}
\end{equation}
We let
\begin{equation}
\gainfunctionrev^{\ordering}(\ordering_{\revindex}^{\algbase}(\paperindex))=\frac{1}{\log_2(\ordering_{\revindex}^{\algbase}(\paperindex)+1)}
\label{eq:noiseless_revgain_ordering}
\end{equation}
denote the component of the reviewer-side gain function that only depends on the position the paper is in. 

\subsubsection{Preliminaries}
\label{sec:noiseless_prelim}
The focus of this section is to simplify the expressions for the expected paper-side gain and expected reviewer-side gain from~\eqref{eq:noiseless_pgain} and~\eqref{eq:noiseless_rgain} respectively, using the similarity matrix structure and the given class of gain and bidding functions. There are several immediate characteristics of the reviewer bidding behavior and the relation between the similarity scores as a result of the given bidding function from~\eqref{eq:noiseless_bidmodel}
 and the noiseless community model similarity score structure from~\eqref{eq:noiseless_model} that we reference throughout the proof:
 \begin{itemize}
\item  If the reviewer-paper pair $(\revindex, \paperindex)$ is on the block diagonal of the noiseless community model similarity matrix $\similarity$ so that $\revindex\in \diagsetp_{\paperindex}$ and $\paperindex\in \diagset_{\revindex}$, then paper $\paperindex \in \paperset$ is bid on almost surely by reviewer $\revindex\in \reviewerset$ when $\ordering_{\revindex}^{\algbase}(\paperindex)=1$ and almost never when $\ordering_{\revindex}^{\algbase}(\paperindex)\neq1$.  
\item  If the reviewer-paper pair $(\revindex, \paperindex)$ is not on the block diagonal of the noiseless community model similarity matrix $\similarity$ so that $\revindex\in \diagsetp_{\paperindex}^c$ and $\paperindex\in \diagset_{\revindex}^c$, then paper $\paperindex \in \paperset$ is bid on almost never by reviewer $\revindex\in \reviewerset$ independent of $\ordering_{\revindex}^{\algbase}(\paperindex)$.
\item  If the reviewer-paper pair $(\revindex, \paperindex)$ is on the block diagonal of the noiseless community model matrix $\similarity$  so that $\revindex\in \diagsetp_{\paperindex}$ and $\paperindex\in \diagset_{\revindex}$, and the reviewer-paper pair $(\revindex, \paperindex')$ is not on the block diagonal of the noiseless community model matrix $\similarity$ so that $\revindex\in \diagsetp_{\paperindex'}^c$ and $\paperindex'\in \diagset_{\revindex}^c$, then $\similarity_{\revindex, \paperindex}>\similarity_{\revindex, \paperindex'}$.
 \end{itemize}
 Observe that the statements above further imply that each reviewer bids on at most one paper almost surely.

We now show that the expected paper-side gain from any paper $\paperindex \in \paperset$ only depends on the positions it is shown to reviewers $\revindex \in \reviewerset$ by some algorithm $\algbase$ for which the reviewer-paper pair $(\revindex, \paperindex)$ is on the block diagonal of the similarity matrix. Indeed, the expected paper-side gain for any paper $\paperindex \in \paperset$ simplifies to be
\begin{equation}
\mathbb{E}[\gainfunction(\numbids_{\paperindex})] 
=  \mathbb{E}\Big[\gainfunction\Big(\sum_{\revindex\in \reviewerset}\randombid_{\revindex, \paperindex}\Big)\Big] = \mathbb{E}\Big[\gainfunction\Big(\sum_{\revindex\in \diagsetp_{\paperindex}}\randombid_{\revindex, \paperindex}\Big)\Big] =  \sum_{\ell=0}^{\blocksize}\mathbb{P}\Big(\ell=\sum_{\revindex\in \diagsetp_{\paperindex}}\mathds{1}\{\ordering_{\revindex}^{\algbase}(\paperindex)=1\}\Big)\gainfunction(\ell).
\label{eq:noiseless_decoupled}
\end{equation}
The preceding equation follows from the fact that the the bid from any reviewer $\revindex \in \diagsetp_{\paperindex}^c$ on paper $\paperindex \in \paperset$ is zero almost surely independent of the position the paper is shown, and since any reviewer $\revindex \in \diagsetp_{\paperindex}$ bids on the paper $\paperindex \in \paperset$ almost surely if $\ordering_{\revindex}^{\algbase}(\paperindex)=1$ and almost never if $\ordering_{\revindex}^{\algbase}(\paperindex)\neq 1$.

To obtain a final simplified version of the expected paper-side gain given in~\eqref{eq:noiseless_pgain}, we sum~\eqref{eq:noiseless_decoupled} over the paper set $\paperset$ and get that
\begin{equation}
\mathbb{E}[\gain_p^{\algbase}] = \sum_{\paperindex\in \paperset}\mathbb{E}[\gainfunction(\numbids_{\paperindex})] 
=  \sum_{\paperindex\in \paperset}\sum_{\ell=0}^{\blocksize}\mathbb{P}\Big(\ell=\sum_{\revindex\in \diagsetp_{\paperindex}}\mathds{1}\{\ordering_{\revindex}^{\algbase}(\paperindex)=1\}\Big)\gainfunction(\ell).
\label{eq:noiseless_full_decoupled}
\end{equation}
It is now clear from~\eqref{eq:noiseless_full_decoupled} that to analyze the expected paper-side gain of any algorithm \algbase, we only need to determine the distribution on the number of times each paper is shown in the highest position to reviewers for which the reviewer-paper pair is on the block diagonal of the similarity matrix. 

We now turn to deriving a simplified form of the expected reviewer-side gain given in~\eqref{eq:noiseless_rgain}. Beginning from~\eqref{eq:noiseless_rgain}, we substitute in the form of the reviewer-side gain function from~\eqref{eq:noiseless_rgainfunc} and then plug in the similarity scores of the noiseless community model matrix to obtain
\begin{align}
\mathbb{E}[\gain_r^{\algbase}] = \mathbb{E}\Big[\sum_{\revindex\in \reviewerset}\sum_{\paperindex\in \paperset}(2^{\similarity_{\revindex, \paperindex}}-1)\gainfunctionrev^{\ordering}(\ordering_{\revindex}^{\algbase}(\paperindex))\Big] = \mathbb{E}\Big[(2^{\simscalar}-1)\sum_{\revindex\in \reviewerset}\sum_{\paperindex\in \diagset_{\revindex}}\gainfunctionrev^{\ordering}(\ordering_{\revindex}^{\algbase}(\paperindex))\Big].
\label{eq:noiseless_revgain}
\end{align} 
To be clear, the final equality above follows from the facts that $\diagset_{\revindex}\cup \diagset_{\revindex}^c = \paperset$ for each reviewer $\revindex \in \reviewerset$ and $\similarity_{\revindex, \paperindex}=\simscalar$ for $\paperindex \in \diagset_{\revindex}$ and $\similarity_{\revindex, \paperindex'}=0$ for $\paperindex' \in \diagset_{\revindex}^c$. It is now evident that the expected reviewer-side gain only depends on the positions the papers on the block diagonal for each individual reviewer are presented.

\subsubsection{Optimal Policy}
\label{sec:noiseless_optimal}
In this section, we characterize the optimal policy for the noiseless community model and the given class of gain and bidding functions. To do so, we independently explain how the expected paper-side and reviewer-side gain are maximized. Then, we show that they can be simultaneously maximized to obtain the optimal policy.

\paragraph*{Policy to maximize the expected paper-side gain.}
The expected paper-side gain is maximized by any policy that shows a paper among the set with the minimum number of bids within $\diagset_{\revindex}$ in the highest position to each reviewer $\revindex \in \reviewerset$. We now characterize the maximum expected paper-side gain that can be obtained and then show that the aforementioned policy achieves it.

From the characteristics of the reviewer bidding behavior given in Section~\ref{sec:noiseless_prelim}, each reviewer bids on at most one paper almost surely.
This means that the maximum number of bids that can be obtained by any policy is equal to the number of reviewers $\numrev=\numblocks\blocksize$ almost surely. The expected paper-side gain from~\eqref{eq:noiseless_pgain} for the given paper-side gain function is the sum of the expected value of a strictly concave function of the number of bids on a paper over each the $\numpapers=\numblocks\blocksize$ papers. Consequently, since the maximum number of bids that be obtained by any algorithm is equal to the number of papers almost surely, the expected paper-side gain is maximized if the bids are evenly distributed among the papers so that each paper has exactly one bid almost surely. 
It then immediately follows that the maximum expected paper-side gain that can be obtained from any algorithm \algbase is
\begin{equation}
\mathbb{E}[\gain_p^{\algbase}] = \sum_{\paperindex\in \paperset}\mathbb{E}[\gainfunction(\numbids_{\paperindex})] 
=  \sum_{\paperindex\in \paperset}\gainfunction(1)=\numblocks\blocksize.
\label{eq:noiseless_max_pgain2}
\end{equation}

We now show that any policy presenting a paper among the set with a minimum number of bids within $\diagset_{\revindex}$ in the highest position to each reviewer $\revindex \in \reviewerset$ maximizes the expected paper-side gain. 
For any given reviewer $\revindex \in \reviewerset$, the $\blocksize$ papers in $\diagset_{\revindex}$ are each in $\diagset_{\revindex'}$ for $\blocksize-1$ other reviewers $\revindex' \in \reviewerset$ and also in $\diagset_{\revindex''}^c$ for each of the remaining reviewers $\revindex''\in \reviewerset$.
If a paper from $\diagset_{\revindex}$ is shown in the highest position to reviewer $\revindex \in \reviewerset$, then it is bid on almost surely.
Moreover, any paper that is not shown in the highest position to the reviewer is bid on with probability zero.
Together, this means that upon the arrival of each reviewer $\revindex\in \reviewerset$, there is a paper in $\diagset_{\revindex}$ with zero bids that has not been shown in the highest position to any reviewer previously almost surely. 
Consequently, each paper $\paperindex \in \paperset$ is shown exactly once almost surely in the highest position to some reviewer $\revindex \in \diagsetp_{\paperindex}$. 
It then follows from the decomposition in~\eqref{eq:noiseless_decoupled} that the expected paper-side gain of this policy \algbase is
\begin{equation}
\mathbb{E}[\gain_p^{\algbase}] = \sum_{\paperindex\in \paperset}\mathbb{E}[\gainfunction(\numbids_{\paperindex})] 
=  \sum_{\paperindex\in \paperset}\gainfunction(1)=\numblocks\blocksize.
\label{eq:noiseless_max_pgain}
\end{equation}
We conclude that the policy maximizes the expected paper-side gain since it was shown in~\eqref{eq:noiseless_max_pgain2} that the maximum expected paper-side gain that can be obtained is $\numblocks\blocksize$.

\paragraph*{Policy to maximize the expected reviewer-side gain.} 
The expected reviewer-side gain as given in~\eqref{eq:noiseless_revgain} is decoupled between each of the reviewers. Moreover, the expected reviewer-side gain from any reviewer $\revindex \in \reviewerset$ only depends on the positions that papers in the set $\diagset_{\revindex}$ are shown. Since the function $\gainfunctionrev^{\ordering}$ as given in~\eqref{eq:noiseless_revgain_ordering} is decreasing on the domain $\mathbb{R}_{>0}$, as long as each paper in the set $\diagset_{\revindex}$ is shown before the papers in the set $\diagset_{\revindex}^c$, then the expected reviewer-side gain from any reviewer $\revindex \in \reviewerset$ is maximized. This means that if a policy shows papers this way for each reviewer $\revindex \in \reviewerset$, then the expected reviewer-side gain is maximized.

\paragraph*{Overall optimal policy.}
The expected paper-side and reviewer-side gains can be simultaneously maximized. Indeed, if a paper among the set with the minimum number of bids from $\diagset_{\revindex}$ is shown in the highest position to each reviewer $\revindex \in \reviewerset$, then the expected paper-side gain is maximized. Furthermore, if the remaining papers in $\diagset_{\revindex}$ are shown ahead of each paper in $\diagset_{\revindex}^c$ for each reviewer $\revindex\in \reviewerset$, then the expected reviewer-side gain is maximized. It then follows that this is the optimal policy. We refer to such a policy as \optbase in the remainder of the proof.

\subsubsection{Optimality of \super with Zero Heuristic}
\label{sec:noiseless_super}
We show in this section that \super with zero heuristic is equivalent to the optimal policy under the noiseless community model for the given class of gain and bidding functions. 

\paragraph*{Informal description of \super with zero heuristic policy.}
Recall that as explained in Section~\ref{sec:algorithm} and formally characterized in Section~\ref{sec:theoretical_properties}, \super with zero heuristic is designed to maximize the immediate expected gain from each reviewer conditioned on the history. We show that the immediate expected paper-side and reviewer-side gain from any reviewer $\revindex \in \reviewerset$ are both maximized by showing a paper with the minimum number of bids among $\diagset_{\revindex}$ in the highest position, followed by the remaining papers in $\diagset_{\revindex}$ in any order, and then the papers from $\diagset_{\revindex}^c$ in any order.

The immediate expected paper-side gain from any reviewer $\revindex\in \reviewerset$ is maximized by showing a paper with the minimum number of bids among $\diagset_{\revindex}$ in the highest position in the paper ordering. To see why, observe that the immediate expected paper-side gain from any paper that is not shown in the highest position is zero since the probability of it being bid on is zero. Moreover, the probability of a paper being bid on that is shown in the highest position is only non-zero if it is in the set of papers $\diagset_{\revindex}$. Then, since the given paper-side gain function is strictly concave so the returns of bids are diminishing, we determine that the immediate expected paper-side gain from the paper shown in the highest position of the ordering is maximized if it is a paper with the minimum number of bids among $\diagset_{\revindex}$. 

The expected reviewer-side gain from any reviewer $\revindex \in \reviewerset$ is maximized as long as papers in $\diagset_{\revindex}$ are shown ahead of $\diagset_{\revindex}^c$. This follows from the fact that the expected reviewer-side gain as given in~\eqref{eq:noiseless_revgain} is decoupled between the reviewers. Furthermore, the expected reviewer-side gain from any reviewer $\revindex \in \reviewerset$ only depends on the positions that papers in the set $\diagset_{\revindex}$ are shown. Since the function $\gainfunctionrev^{\ordering}$ as given in~\eqref{eq:noiseless_revgain_ordering} is decreasing on the domain $\mathbb{R}_{>0}$, as long as each paper in the set $\diagset_{\revindex}$ is shown before the papers in the set $\diagset_{\revindex}^c$, then the expected reviewer-side gain from the reviewer is maximized.

\paragraph*{Formal description of \super with zero heuristic policy.}
We now formally state the policy of \super with zero heuristic policy for the noiseless community model and the given gain and bidding functions. The proof of Lemma~\ref{lemma:noiseless_diag_super_policy} is given in Section~\ref{sec:noiseless_lemmas}.
\begin{lemma}
Under the assumptions of Theorem~\ref{prop:diagonal}, \super with zero heuristic shows a paper among the set with the minimum number of bids from $\diagset_{\revindex}$ in the highest position to each reviewer $\revindex \in \reviewerset$. Moreover, the remaining papers in $\diagset_{\revindex}$ are shown in an arbitrary order ahead of the papers in $\diagset_{\revindex}^c$ which are also shown in arbitrary order to each reviewer $\revindex \in \reviewerset$. 
\label{lemma:noiseless_diag_super_policy}
\end{lemma}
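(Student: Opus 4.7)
The plan is to invoke Corollary~\ref{cor:local}, which tells us that \super with zero heuristic selects, for every reviewer $\revindex \in \reviewerset$, a paper ordering that maximizes the expected immediate gain conditioned on the history $\history_{\revindex-1}$. From the simplification derived in the proof of Theorem~\ref{prop:local}, this expected immediate gain equals
\begin{equation*}
\sum_{\paperindex\in \paperset} \bidfunction(\ordering_{\revindex}(\paperindex), \similarity_{\revindex, \paperindex})\big(\gainfunction(\numbids_{\revindex-1, \paperindex} + 1) - \gainfunction(\numbids_{\revindex-1, \paperindex})\big) + \hyperparam \sum_{\paperindex \in \paperset}\gainfunctionrev(\ordering_{\revindex}(\paperindex), \similarity_{\revindex, \paperindex}).
\end{equation*}
It therefore suffices to show that, under the specific functions of Theorem~\ref{prop:diagonal} and the noiseless similarity structure, the orderings described in the lemma statement are precisely the maximizers of this objective.

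Next I would exploit the noiseless block-diagonal structure: $\similarity_{\revindex, \paperindex} = \simscalar$ for $\paperindex \in \diagset_{\revindex}$ and $\similarity_{\revindex, \paperindex} = 0$ otherwise, with $\simscalar \in [0.01,1]$. Consequently $\mathds{1}\{\similarity_{\revindex, \paperindex} > \simscalar/2\} = \mathds{1}\{\paperindex \in \diagset_{\revindex}\}$, which collapses the paper-side sum to
\begin{equation*}
\sum_{\paperindex \in \diagset_{\revindex}} \mathds{1}\{\ordering_{\revindex}(\paperindex)=1\}\big(\gainfunction(\numbids_{\revindex-1, \paperindex} + 1) - \gainfunction(\numbids_{\revindex-1, \paperindex})\big),
\end{equation*}
which contains at most one nonzero term, coming from whichever paper sits in the top slot (and only if that paper lies in $\diagset_{\revindex}$). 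Similarly, since $2^{\similarity_{\revindex, \paperindex}}-1 = 0$ whenever $\paperindex \in \diagset_{\revindex}^c$, the reviewer-side sum reduces to
\begin{equation*}
\hyperparam(2^{\simscalar}-1)\sum_{\paperindex\in \diagset_{\revindex}} \frac{1}{\log_2(\ordering_{\revindex}(\paperindex)+1)}.
\end{equation*}

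The two resulting terms are then jointly maximizable by the stated policy. By strict concavity of $\sqrt{\cdot}$, the marginal $\sqrt{\numbids_{\revindex-1,\paperindex}+1}-\sqrt{\numbids_{\revindex-1,\paperindex}}$ is strictly decreasing in $\numbids_{\revindex-1,\paperindex}$, so the paper-side term is maximized exactly when position $1$ holds some paper of $\diagset_{\revindex}$ achieving $\min_{\paperindex \in \diagset_{\revindex}}\numbids_{\revindex-1,\paperindex}$. Because $1/\log_2(x+1)$ is strictly decreasing in $x$ and the reviewer-side term depends only on the multiset of positions assigned to $\diagset_{\revindex}$, that term is maximized exactly when $\diagset_{\revindex}$ occupies the position set $\{1,\dots,\blocksize\}$ in any order. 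These two requirements are simultaneously satisfiable by placing any minimum-bid member of $\diagset_{\revindex}$ at position $1$, the remaining members of $\diagset_{\revindex}$ arbitrarily in positions $2,\dots,\blocksize$, and the papers of $\diagset_{\revindex}^c$ arbitrarily in positions $\blocksize+1,\dots,\numpapers$; any departure from this template strictly decreases at least one of the two terms.

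The only subtle point, which I would address last, is to check that Algorithm~\ref{alg:subprocedure} (with $\proxy_\revindex \equiv 0$) can legitimately output any ordering of this form, given that its ``arbitrary'' tie-breaking is inside the linear-assignment solver. This is immediate from inspecting its weight matrix: for papers $\paperindex \in \diagset_{\revindex}$ the bidding contribution vanishes except at position $\extravar = 1$, so all positions in $\{2,\dots,\blocksize\}$ receive equal weight; and for papers in $\diagset_{\revindex}^c$ every position receives weight $0$. Thus any assignment matching the claimed template attains the LP optimum and is a permissible output.
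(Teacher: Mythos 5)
Your proposal is correct and follows essentially the same route as the paper's own proof: both invoke Corollary~\ref{cor:local} to reduce to maximizing the single-reviewer objective, use the indicator bidding function and the noiseless block structure to collapse the paper-side term to the position-one indicator over $\diagset_{\revindex}$ and the reviewer-side term to a sum over $\diagset_{\revindex}$, and then argue the stated template simultaneously maximizes both via the decreasing marginals of $\sqrt{\cdot}$ and the monotonicity of $1/\log_2(x+1)$. One small correction to your closing LP check (an addition the paper does not make): for a paper in $\diagset_{\revindex}$ the weights at positions $2,\dots,\blocksize$ are \emph{not} equal across positions when $\hyperparam>0$, since the term $\hyperparam(2^{\simscalar}-1)/\log_2(\extravar+1)$ still varies with $\extravar$; the invariance you actually need is that all papers in $\diagset_{\revindex}$ share the same weight at each such position, which is what makes every within-block permutation attain the LP optimum.
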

The policy of \super with zero heuristic given in Lemma~\ref{lemma:noiseless_diag_super_policy} is equivalent to the optimal policy derived in Section~\ref{sec:noiseless_optimal}. We conclude \super with zero heuristic is optimal for the noiseless community model with the given class of gain and bidding functions.

\subsubsection{Optimality of \simbase}
\label{sec:noiseless_sim}
The \simbase policy shows papers to each reviewer in decreasing order of the similarity scores with ties between a pair of papers broken in favor of the paper with fewer bids and any remaining ties are broken uniformly at random. The similarity score of each paper $\paperindex \in \diagset_{\revindex}$ is greater than the similarity score of each paper $\paperindex' \in \diagset_{\revindex}^c$ for each reviewer. By definition of the policy, the previous fact immediately implies that for each reviewer $\revindex \in \reviewerset$, \simbase shows each paper in $\diagset_{\revindex}$ ahead of each paper in $\diagset_{\revindex}^c$. Moreover, the tie-breaking mechanism of \simbase guarantees that a paper with the minimum number of bids among $\diagset_{\revindex}$ is shown in the highest position of the paper ordering to each reviewer $\revindex \in \reviewerset$. This policy is equivalent to the optimal policy given in Section~\ref{sec:noiseless_optimal}, and hence \simbase is optimal for the noiseless community model with the given class of gain and bidding functions.

\subsubsection{Suboptimality of \bidbase}
\label{sec:noiseless_bid}
We now prove the suboptimality of \bidbase for the noiseless community model. 

\paragraph*{Intuition and \bidbase policy.}
The \bidbase algorithm presents papers in an increasing order of the number of bids and ties between papers are broken in favor of the paper with the higher similarity score. In this section, we go on to show that this policy maximizes the expected paper-side gain. This follows from the fact that almost surely a paper with zero bids and a similarity score exceeding the threshold necessary for a reviewer to bid on a paper is shown in the highest position to each reviewer and bid on. However, for the noiseless community model similarity class, the algorithm is suboptimal for the combined objective since the expected reviewer-side gain obtained is suboptimal. The fundamental problem with \bidbase is that, except for as a tie-breaking mechanism, the similarity scores are ignored by the algorithm. For the given bidding model, papers which are not shown in the highest position are bid on with probability zero. Consequently, showing papers with fewer bids closer, but not in the highest position, cannot improve the expected paper-side gain and reduces the expected reviewer-side gain.

\paragraph*{Bounding the expected paper-side gain.}
Recall from Section~\ref{sec:noiseless_optimal} that any policy presenting a paper among the set with a minimum number of bids within $\diagset_{\revindex}$ in the highest position to each reviewer $\revindex \in \reviewerset$ maximizes the expected paper-side gain. We now follow similar arguments from Section~\ref{sec:noiseless_optimal} to determine that \bidbase shows a paper among the set with a minimum number of bids among $\diagset_{\revindex}$ in the highest position to each reviewer $\revindex \in \reviewerset$ almost surely so that is maximizes the expected paper-side gain.

For any given reviewer $\revindex \in \reviewerset$, the $\blocksize$ papers in $\diagset_{\revindex}$ are in $\diagset_{\revindex'}$ for the same $\blocksize-1$ other reviewers $\revindex' \in \reviewerset$ and also in $\diagset_{\revindex''}^c$ for each of the remaining reviewers $\revindex''\in \reviewerset$.
For any reviewer $\revindex\in \reviewerset$, any paper $\paperindex \in \diagset_{\revindex}^c$ is bid on almost never and any paper $\paperindex \in \diagset_{\revindex}$ is only bid on with non-zero probability if shown in the highest position to the reviewer.
This means that upon the arrival of each reviewer $\revindex\in \reviewerset$, there is a paper in $\diagset_{\revindex}$ with zero bids almost surely. 
Furthermore, the similarity score of any paper in $\diagset_{\revindex}$ is greater than the similarity score of any paper in $\diagset_{\revindex}^c$ for each reviewer $\revindex  \in \reviewerset$. Hence, \bidbase shows a paper in $\diagset_{\revindex}$ with zero bids in the highest position to each reviewer $\revindex \in \reviewerset$ almost surely since papers are shown in increasing order of the number of bids and ties are broken in favor of the paper with the higher similarity score. The structure of the bidding function guarantees that if a paper in $\diagset_{\revindex}$ is shown in the highest position of the paper ordering to reviewer $\revindex \in \reviewerset$, then it is bid on by the reviewer almost surely.
Consequently, each paper $\paperindex \in \paperset$ is shown exactly once almost surely in the highest position to some reviewer $\revindex \in \diagsetp_{\paperindex}$.  It then follows from the decomposition in~\eqref{eq:noiseless_decoupled} that the expected paper-side gain of \bidbase for every $\numblocks\geq2, \blocksize\geq 2$, and $\hyperparam\geq 0$, is given by
\begin{equation*}
\mathbb{E}[\gain_p^{\bidbase}] =
\sum_{\paperindex\in \paperset}\mathbb{E}[\gainfunction(\numbids_{\paperindex})] 
 = \sum_{\paperindex\in \paperset}\gainfunction(1) = \numblocks\blocksize.
\end{equation*}
From the expected paper-side gain of \optbase given in~\eqref{eq:noiseless_max_pgain}, we conclude that for every $\numblocks\geq2, \blocksize\geq2,$ and $\hyperparam\geq 0$,
\begin{equation}
\mathbb{E}[\gain_p^{\optbase}] - \mathbb{E}[\gain_p^{\bidbase}] = 0.
\label{eq:ndiag_bid_pgain}
\end{equation}

\paragraph*{Bounding the expected reviewer-side gain.}
We now show that the optimal policy \optbase obtains significantly more expected reviewer-side gain than \bidbase. This requires deriving a suitable lower bound on the following expression based on~\eqref{eq:noiseless_revgain}:
\begin{equation}
\hyperparam\mathbb{E}[\gain_r^{\optbase} - \gain_r^{\bidbase}] = \hyperparam\mathbb{E}\Big[\sum_{\revindex\in \reviewerset}(2^{\simscalar}-1)\sum_{\paperindex\in \diagset_{\revindex}}(\gainfunctionrev^{\ordering}(\ordering_{\revindex}^{\optbase}(\paperindex))-\gainfunctionrev(\ordering_{\revindex}^{\bidbase}(\paperindex)))\Big].
\label{eq:ndiag_rgain_bid}
\end{equation}
Let us begin by defining a ``good event'' for any reviewer and paper under which 
if the paper has probability zero of being bid on then it is not bid on and if the paper has probability one of being bid on then it is bid on. Formally, for any reviewer $k\in \reviewerset$, paper $\paperindex \in \paperset$, and paper ordering $\ordering_{k}^{\algbase}$ given by an algorithm \algbase, we define 
\begin{align*}
\mathcal{E}_{k, \paperindex}^{\algbase}=&\{\ordering_{k}^{\algbase}(\paperindex)=1, \similarity_{k, \paperindex}>\simscalar/2, \randombid_{k, \paperindex}=1\}\cup\{\ordering_{k}^{\algbase}(\paperindex)\neq 1,  \randombid_{k, \paperindex}=0\} \cup\{ \similarity_{k, \paperindex}<\simscalar/2, \randombid_{k, \paperindex}=0\}. 
\end{align*}
Moreover, for each reviewer $\revindex \in \reviewerset$, define the following event $\mathcal{E}_{\revindex}=\cup_{k=1}^{i-1}\cup_{\paperindex=1}^{\numpapers}\{\mathcal{E}_{k, \paperindex}^{\optbase}\cup \mathcal{E}_{k, \paperindex}^{\bidbase}\}$
which says the good event held for each reviewer that arrived previously for every paper and observe that the complement of this event occurs on a measure zero space by the structure of the bidding function given in~\eqref{eq:noiseless_bidmodel}.
Consequently, from the law of total expectation, an equivalent form of~\eqref{eq:ndiag_rgain_bid} is given by
\begin{equation}
\hyperparam\mathbb{E}[\gain_r^{\optbase} - \gain_r^{\bidbase}] = \hyperparam\mathbb{E}\Big[\sum_{\revindex\in \reviewerset}(2^{\simscalar}-1)\mathbb{E}\Big[\sum_{\paperindex\in \diagset_{\revindex}}(\gainfunctionrev^{\ordering}(\ordering_{\revindex}^{\optbase}(\paperindex))-\gainfunctionrev(\ordering_{\revindex}^{\bidbase}(\paperindex)))\Big|\mathcal{E}_{\revindex}\Big]\Big].
\label{eq:ndiag_rgain_bid2}
\end{equation}

Recall from the derivation of the expected paper-side gain of \optbase in
Section~\ref{sec:noiseless_optimal} and \bidbase in this section that each algorithm obtains exactly one bid almost surely from each reviewer and on each paper. Define $\mathcal{F}$ as the set of initial $\lfloor \numblocks\blocksize/4\rfloor$ reviewers 
for which upon arrival of such a reviewer $\revindex\in \mathcal{F}$ at least one paper on the block diagonal for the reviewer given by $\diagset_{\revindex}$ has received a bid previously.
Observe that \optbase obtains at least as much expected reviewer-side gain as \bidbase from each reviewer since it was shown in Section~\ref{sec:noiseless_optimal} that the policy maximizes the expected reviewer-side gain from each individual reviewer. As a result, we get the following lower bound on~\eqref{eq:ndiag_rgain_bid2}:
\begin{equation}
\hyperparam\mathbb{E}[\gain_r^{\optbase} - \gain_r^{\bidbase}] \geq \hyperparam\mathbb{E}\Big[\sum_{\revindex\in \mathcal{F}}(2^{\simscalar}-1)\mathbb{E}\Big[\sum_{\paperindex\in \diagset_{\revindex}}(\gainfunctionrev^{\ordering}(\ordering_{\revindex}^{\optbase}(\paperindex))-\gainfunctionrev(\ordering_{\revindex}^{\bidbase}(\paperindex)))\Big|\mathcal{E}_{\revindex}\Big]\Big].
\label{eq:ndiag_rgain_bid3}
\end{equation}

We now separate papers into relevant groups defined upon arrival for each reviewer $\revindex\in \mathcal{F}$ given the event $\mathcal{E}_{\revindex}$.
Let $T_{\revindex, 1}$ be the set of papers in $\diagset_{\revindex}$ with zero bids and $T_{\revindex, 2}$ be the set of papers in $\diagset_{\revindex}$ with one bid. Denote by $T_{\revindex, 3}$ the set of papers in $\diagset_{\revindex}^c$ with zero bids and $T_{\revindex, 4}$ as the papers in $\diagset_{\revindex}^c$ with one bid. Moreover, we let $N_{\revindex, k} = |T_{\revindex, k}|$ for $k \in \{1, 2, 3, 4\}$ denote the number of papers in each set and define $\ell_{\revindex}=N_{\revindex, 1}+1$.
Using this notation,~\eqref{eq:ndiag_rgain_bid3} is equivalently
\begin{equation}
\hyperparam\mathbb{E}[\gain_r^{\optbase} - \gain_r^{\bidbase}] \geq \hyperparam\mathbb{E}\Big[\sum_{\revindex\in \mathcal{F}}(2^{\simscalar}-1)\mathbb{E}\Big[\sum_{\paperindex\in T_{\revindex, 1}\cup T_{\revindex, 2}}(\gainfunctionrev^{\ordering}(\ordering_{\revindex}^{\optbase}(\paperindex))-\gainfunctionrev(\ordering_{\revindex}^{\bidbase}(\paperindex)))\Big|\mathcal{E}_{\revindex}\Big]\Big].
\label{eq:ndiag_rgain_bid4}
\end{equation}

As shown in Section~\ref{sec:noiseless_optimal}, \optbase shows a paper with the minimum number of bids among $\diagset_{\revindex}$ in the highest position of the paper ordering to each reviewer $\revindex \in \mathcal{F}$. Again, this paper corresponds to a paper in the set $T_{\revindex, 1}$ with zero bids. After this paper, the remaining papers in $\diagset_{\revindex}$ are shown in any arbitrary order. This group of papers contains papers among $T_{\revindex, 1}\cup T_{\revindex, 2}$. Since it has no impact on the expected gain in the analysis that follows, without loss of generality, consider that \optbase shows the papers in $T_{\revindex, 1}$ ahead of the papers in $T_{\revindex, 2}$. 

The \bidbase policy shows a paper with the minimum number of bids among $\diagset_{\revindex}$ in the highest position of the paper ordering to each reviewer $\revindex \in \mathcal{F}$ almost surely as proved earlier. See that such a paper corresponds to a paper in the set $T_{\revindex, 1}$ with zero bids. After this paper, the remaining papers with zero bids, which by definition belong to $T_{\revindex, 1}\cup T_{\revindex, 3}$, are shown with ties broken in favor of the paper with the higher similarity score. Since each paper in $\diagset_{\revindex}$ has a higher similarity score than each paper in $\diagset_{\revindex}^c$, we conclude that \bidbase shows the remaining papers in $T_{\revindex, 1}$ after the paper shown in the highest position. 

Consequently, the papers in $T_{\revindex, 1}$ are shown among the positions $\{1, \dots, N_{\revindex, 1}\}$ by both \optbase and \bidbase conditioned on the event $\mathcal{E}_{\revindex}$.
This allows us to simplify~\eqref{eq:ndiag_rgain_bid4} and get that
\begin{equation}
\hyperparam\mathbb{E}[\gain_r^{\optbase} - \gain_r^{\bidbase}] \geq \hyperparam\mathbb{E}\Big[\sum_{\revindex\in \mathcal{F}}(2^{\simscalar}-1)\mathbb{E}\Big[\sum_{\paperindex\in T_{\revindex, 2}}(\gainfunctionrev^{\ordering}(\ordering_{\revindex}^{\optbase}(\paperindex))-\gainfunctionrev(\ordering_{\revindex}^{\bidbase}(\paperindex)))\Big|\mathcal{E}_{\revindex}\Big]\Big].
\label{eq:ndiag_rgain_bid5}
\end{equation}

As we just showed, conditioned on the event $\mathcal{E}_{\revindex}$, \optbase shows the papers in $T_{\revindex, 2}$ in an arbitrary order immediately after the papers in $T_{\revindex, 1}$ to each reviewer $\revindex \in \mathcal{F}$.
This means that the papers in $T_{\revindex, 2}$ are shown among the position set $\{\ell_{\revindex}, \dots, \ell_i+N_{\revindex, 2}-1\}$ by \optbase to each reviewer $\revindex \in \mathcal{F}$ conditioned on $\mathcal{E}_{\revindex}$.

In contrast, \bidbase shows the papers in $T_{\revindex, 2}$ after the papers in $T_{\revindex, 1}\cup T_{\revindex, 3}$, but before the papers in $T_{\revindex, 4}$. Indeed, the papers in $T_{\revindex, 3}$ each have zero bids and the papers in $T_{\revindex, 2}$ each have one bid, so by definition of the policy, \bidbase shows the papers in $T_{\revindex, 3}$ ahead of the papers in $T_{\revindex, 2}$. Furthermore, by definition of the sets, the similarity score of each paper in $T_{\revindex, 2}$ is greater than the similarity score of each paper in $T_{\revindex, 4}$, which combined with the tie-breaking mechanism of \bidbase ensures that papers in $T_{\revindex, 2}$ are shown ahead of the papers in $T_{\revindex, 4}$ even though the number of bids are equal.
This means that the papers in $T_{\revindex, 2}$ are shown among the position set
 $\{\ell_{\revindex}+N_{\revindex, 3}, \dots, \ell_i+N_{\revindex, 2}+N_{\revindex, 3}-1\}$ by \bidbase to each reviewer $\revindex \in \mathcal{F}$ conditioned on $\mathcal{E}_{\revindex}$.
 
 From this set of facts and continuing from~\eqref{eq:ndiag_rgain_bid5}, we obtain
\begin{equation}
\begin{split}
\hyperparam\mathbb{E}[\gain_r^{\optbase} - \gain_r^{\bidbase}] &\geq \hyperparam\mathbb{E}\Big[(2^{\simscalar}-1)\sum_{\revindex\in \mathcal{F}}\mathbb{E}\Big[\sum_{\paperindex=\ell_{\revindex}}^{\ell_{\revindex}+N_{\revindex, 2}-1}(\gainfunctionrev^{\ordering}(\paperindex)-\gainfunctionrev^{\ordering}(\paperindex+N_{\revindex, 3}))\Big|\mathcal{E}_{\revindex}\Big]\Big].
\end{split}
\label{eq:ndiag_rgain_bid12}
\end{equation}
Minimizing over $\revindex\in \mathcal{F}$ in~\eqref{eq:ndiag_rgain_bid12} and using the definition $|\mathcal{F}| = \lfloor\numblocks\blocksize/4\rfloor$, we get the bound 
\begin{equation}
\hyperparam\mathbb{E}[\gain_r^{\optbase} - \gain_r^{\bidbase}] \geq \hyperparam\mathbb{E}\Big[(2^{\simscalar}-1)(\lfloor \numblocks\blocksize/4\rfloor)\min_{\revindex\in \mathcal{F}}\mathbb{E}\Big[\sum_{\paperindex=\ell_{\revindex}}^{\ell_{\revindex}+N_{\revindex, 2}-1}(\gainfunctionrev^{\ordering}(\paperindex)-\gainfunctionrev^{\ordering}(\paperindex+N_{\revindex, 3}))\Big|\mathcal{E}_{\revindex}\Big]\Big].
\label{eq:ndiag_rgain_bid21}
\end{equation}
Moreover, for every $\numblocks\geq2$ and $\blocksize\geq2$, it holds that
\begin{equation}
\lfloor \numblocks\blocksize/4 \rfloor=  \numblocks\blocksize/4 - (\numblocks\blocksize \ \mathrm{mod} \ 4)/4 \geq \numblocks\blocksize/8,
\label{eq:ndiag_rgain_bid27}
\end{equation}
and by definition of the noiseless community model
\begin{equation}
2^{\simscalar}-1    \geq 2^{0.01}-1 \geq 1/150.
\label{eq:ndiag_rgain_bid28}
\end{equation}
Combining~\eqref{eq:ndiag_rgain_bid21},~\eqref{eq:ndiag_rgain_bid27}, and~\eqref{eq:ndiag_rgain_bid28}, we have
\begin{equation}
\hyperparam\mathbb{E}[\gain_r^{\optbase} - \gain_r^{\bidbase}] \geq \Big(\frac{\hyperparam}{1200}\Big)\mathbb{E}\Big[\min_{\revindex\in \mathcal{F}}\mathbb{E}\Big[\sum_{\paperindex=\ell_{\revindex}}^{\ell_{\revindex}+N_{\revindex, 2}-1}(\gainfunctionrev^{\ordering}(\paperindex)-\gainfunctionrev^{\ordering}(\paperindex+N_{\revindex, 3}))\Big|\mathcal{E}_{\revindex}\Big]\Big].
\label{eq:ndiag_rgain_bidnew3}
\end{equation}

Toward the goal of bounding the right-hand side of~\eqref{eq:ndiag_rgain_bidnew3}, we now work on verifying the following claim.

\textbf{Claim 1.} For each reviewer $\revindex \in \mathcal{F}$ and conditioned on the event $\mathcal{E}_{\revindex}$,
\begin{equation}
N_{\revindex, 3}\geq \numblocks\blocksize-\blocksize-\numblocks-\lfloor\numblocks\blocksize/4\rfloor+N_{\revindex, 2}+1 \geq N_{\revindex, 2}\geq 1.
\label{eq:ndiag_rgain_bid19}
\end{equation}
Recall that $N_{\revindex, 3}$ denotes the number of papers in $\diagset_{\revindex}^c$ with zero bids upon the arrival of reviewer $\revindex \in \mathcal{F}$. By definition, the number of papers in $\diagset_{\revindex}^c$ is $\numblocks\blocksize -\blocksize$. To bound $N_{\revindex, 3}$, we need to bound the maximum number of papers $\diagset_{\revindex}^c$ that could have been bid on previously upon the arrival of the reviewer. Observe that upon the arrival of the reviewer, there could be at most $(\lfloor \numblocks\blocksize/4\rfloor-1) - (N_{\revindex, 2}-1)$ reviewers from $\mathcal{F}$ that previously arrived and bid on a paper in $\diagset_{\revindex}^c$. This follows from the fact that $|\mathcal{F}| = \lfloor \numblocks\blocksize/4\rfloor$ and each reviewer bids on at most one paper almost surely from the structure of the bidding function given in~\eqref{eq:noiseless_bidmodel},
so the total number of bids from this set of reviewers previously is at most $(\lfloor \numblocks\blocksize/4\rfloor-1)$. Furthermore, of the $(\lfloor \numblocks\blocksize/4\rfloor-1)$ bids from the reviewer set $\mathcal{F}$, the number of bids on papers which are in $\diagset_{\revindex}$ instead of $\diagset_{\revindex}^c$ is given by $(N_{\revindex, 2}-1)$ since prior to the arrival of the reviewer a paper in $\diagset_{\revindex}$ had to be bid on by definition of the reviewer set $\mathcal{F}$.
Finally, at most $(\numblocks-1)$ papers in $\diagset_{\revindex}^c$ are bid on before the arrival of the reviewer from previous reviewers which do not belong to  $\mathcal{F}$ since there are $\numblocks$ blocks in the similarity matrix. 
Accordingly, the number of papers with a bid in $\diagset_{\revindex}^c$ is at most $(\numblocks-1)+(\lfloor \numblocks\blocksize/4\rfloor-1)-(N_{\revindex, 2}-1)$. We conclude that the number of papers in $\diagset_{\revindex}^c$ without a bid given by $N_{\revindex, 3}$ for any reviewer $\revindex\in \mathcal{F}$ conditioned on $\mathcal{E}_{\revindex}$ is bounded below as follows
\begin{equation}
N_{\revindex, 3} \geq \numblocks\blocksize-\blocksize-\numblocks-\lfloor\numblocks\blocksize/4\rfloor+N_{\revindex, 2}+1.
\label{eq:ndiag_rgain_bid15}
\end{equation}
We now show 
\begin{equation}
\numblocks\blocksize-\blocksize-\numblocks-\lfloor\numblocks\blocksize/4\rfloor+N_{\revindex, 2}+1 \geq N_{\revindex, 2}.
\label{eq:ndiag_rgain_bid16}
\end{equation}
To see~\eqref{eq:ndiag_rgain_bid16}, observe that
\begin{equation}
\numblocks\blocksize-\blocksize-\numblocks-\lfloor\numblocks\blocksize/4\rfloor+1 \geq \numblocks\blocksize-\blocksize-\numblocks-\numblocks\blocksize/4+1  
= 3\numblocks\blocksize/4-\blocksize-\numblocks+1.
\label{eq:ndiag_rgain_bid17}
\end{equation}
The quantity $(3\numblocks\blocksize/4-\blocksize-\numblocks+1)$ is increasing in $\numblocks$ and $\blocksize$ for $\numblocks\geq 2$, $\blocksize\geq 2$. Using this fact, we get that for every $\numblocks\geq 2$ and $\blocksize\geq 2$,
\begin{equation}
3\numblocks\blocksize/4-\blocksize-\numblocks+1 \geq  0.
\label{eq:ndiag_rgain_bid18}
\end{equation}
Combining~\eqref{eq:ndiag_rgain_bid17} and~\eqref{eq:ndiag_rgain_bid18} immediately implies that~\eqref{eq:ndiag_rgain_bid16} holds. Finally, $N_{\revindex, 2}\geq 1$ for each reviewer $\revindex \in \mathcal{F}$ conditioned on the event $\mathcal{E}_{\revindex}$ by definition of the reviewer set, which proves the final inequality of~\eqref{eq:ndiag_rgain_bid19}. 

Using the result from~\eqref{eq:ndiag_rgain_bid19}, we now prove the following claim to bound the right-hand side of~\eqref{eq:ndiag_rgain_bidnew3}.
\newline \noindent
\textbf{Claim 2.} Conditioned on the event $\mathcal{E}_{\revindex}$, for each reviewer $\revindex\in \mathcal{F}$ it must be that
\begin{equation}
\sum_{\paperindex=\ell_{\revindex}}^{\ell_{\revindex}+N_{\revindex, 2}-1}(\gainfunctionrev^{\ordering}(\paperindex)-\gainfunctionrev^{\ordering}(\paperindex+N_{\revindex, 3})) \geq \Big(\frac{2}{5}\Big)\Big(\frac{1}{\log_2^2(\numblocks\blocksize)}\Big).
\label{eq:ndiag_rgain_bidnew2}
\end{equation}
To begin, for any $\revindex \in \mathcal{F}$ conditioned on the event $\mathcal{E}_{\revindex}$ we get that
\begin{equation}
\sum_{\paperindex=\ell_{\revindex}}^{\ell_{\revindex}+N_{\revindex, 2}-1}(\gainfunctionrev^{\ordering}(\paperindex)-\gainfunctionrev^{\ordering}(\paperindex+N_{\revindex, 3})) \geq \sum_{\paperindex=\ell_{\revindex}}^{\ell_{\revindex}+N_{\revindex, 2}-1}(\gainfunctionrev^{\ordering}(\paperindex)-\gainfunctionrev^{\ordering}(\paperindex+ \numblocks\blocksize-\blocksize-\numblocks-\lfloor\numblocks\blocksize/4\rfloor+N_{\revindex, 2}+1)).
\label{eq:ndiag_rgain_bid22}
\end{equation}
The inequality in~\eqref{eq:ndiag_rgain_bid22} relies upon the facts that $\ell_{\revindex}\geq 1$ for each reviewer $\revindex \in \mathcal{F}$ by definition and the function $\gainfunctionrev^{\ordering}$ as given in~\eqref{eq:noiseless_revgain_ordering} is decreasing on the domain $\mathbb{R}_{> 0}$. As a result of each property, we can invoke the lower bound on $N_{\revindex, 3}$ from~\eqref{eq:ndiag_rgain_bid19} to get the stated bound in~\eqref{eq:ndiag_rgain_bid22}. Moreover, $\ell_{\revindex}+N_{\revindex, 2}-1=|\diagset_{\revindex}|= \blocksize$ by definition for any $\revindex\in \mathcal{F}$ given the event $\mathcal{E}_{\revindex}$, so an equivalent form of the bound in~\eqref{eq:ndiag_rgain_bid22} is 
\begin{equation}
\sum_{\paperindex=\ell_{\revindex}}^{\ell_{\revindex}+N_{\revindex, 2}-1}(\gainfunctionrev^{\ordering}(\paperindex)-\gainfunctionrev^{\ordering}(\paperindex+N_{\revindex, 3})) \geq \sum_{\paperindex=\ell_{\revindex}}^{\blocksize}(\gainfunctionrev^{\ordering}(\paperindex)-\gainfunctionrev^{\ordering}(\paperindex+ \numblocks\blocksize-\blocksize-\numblocks-\lfloor\numblocks\blocksize/4\rfloor+N_{\revindex, 2}+1)).
\label{eq:ndiag_rgain_bid23}
\end{equation}

Now, since $\ell_{\revindex}\geq 1$ for each reviewer $\revindex \in \mathcal{F}$ by definition, the function $\gainfunctionrev^{\ordering}$ as given in~\eqref{eq:noiseless_revgain_ordering} is decreasing on the domain $\mathbb{R}_{>0}$, and $\numblocks\blocksize-\blocksize-\numblocks-\lfloor\numblocks\blocksize/4\rfloor+N_{\revindex, 2}+1\geq 1$ from~\eqref{eq:ndiag_rgain_bid19}, we determine that each summand in~\eqref{eq:ndiag_rgain_bid23} is positive. Hence, to obtain a lower bound on~\eqref{eq:ndiag_rgain_bid23}, we take the maximum $\ell_{\revindex}$ over each reviewer $\revindex \in \mathcal{F}$. 
Recall that $\ell_{\revindex}-1=N_{\revindex, 1}$, which gives the total number of papers in $\diagset_{\revindex}$ without a bid by definition. For each reviewer $\revindex \in \mathcal{F}$, there must be at least one paper with a bid in $\diagset_{\revindex}$ by definition of the reviewer set given the event $\mathcal{E}_{\revindex}$. Then, using the fact that $|\diagset_{\revindex}|=\blocksize$, we get $\ell_{\revindex}-1= N_{\revindex, 1}\leq \blocksize-1$ so that $\ell_{\revindex}\leq \blocksize$ for any reviewer $\revindex \in \mathcal{F}$ given the event $\mathcal{E}_{\revindex}$. 
Hence, \eqref{eq:ndiag_rgain_bid23} is lower bounded as follows:
\begin{equation}
\sum_{\paperindex=\ell_{\revindex}}^{\ell_{\revindex}+N_{\revindex, 2}-1}(\gainfunctionrev^{\ordering}(\paperindex)-\gainfunctionrev^{\ordering}(\paperindex+N_{\revindex, 3})) \geq \gainfunctionrev^{\ordering}(\blocksize)-\gainfunctionrev^{\ordering}(\numblocks\blocksize-\numblocks-\lfloor\numblocks\blocksize/4\rfloor+N_{\revindex, 2}+1).
\label{eq:ndiag_rgain_bid232}
\end{equation}

Combining the fact that $N_{\revindex, 2}\geq 1$ for each reviewer $\revindex \in \mathcal{F}$ conditioned on the event $\mathcal{E}_{\revindex}$ by definition of the reviewer set with~\eqref{eq:ndiag_rgain_bid19}, we obtain 
\begin{equation}
\numblocks\blocksize-\numblocks-\lfloor\numblocks\blocksize/4\rfloor+N_{\revindex, 2}+1\geq \numblocks\blocksize-\numblocks-\lfloor\numblocks\blocksize/4\rfloor+2\geq \blocksize+1.
\label{eq:boundsonmq}\end{equation}
 Since  $\gainfunctionrev^{\ordering}$ as given in~\eqref{eq:noiseless_revgain_ordering} is decreasing on the domain $\mathbb{R}_{>0}$, the inequality in \eqref{eq:boundsonmq} immediately implies
\begin{equation}
\gainfunctionrev^{\ordering}(\numblocks\blocksize-\numblocks-\lfloor\numblocks\blocksize/4\rfloor+N_{\revindex, 2}+1) \leq \gainfunctionrev^{\ordering}(\numblocks\blocksize-\numblocks-\lfloor\numblocks\blocksize/4\rfloor+2).
\label{eq:ndiag_rgain_bid24}
\end{equation}
Then, combining~\eqref{eq:ndiag_rgain_bid232} and~\eqref{eq:ndiag_rgain_bid24} results in the bound
\begin{equation}
\sum_{\paperindex=\ell_{\revindex}}^{\ell_{\revindex}+N_{\revindex, 2}-1}(\gainfunctionrev^{\ordering}(\paperindex)-\gainfunctionrev^{\ordering}(\paperindex+N_{\revindex, 3})) \geq \gainfunctionrev^{\ordering}(\blocksize)-\gainfunctionrev^{\ordering}( \numblocks\blocksize-\numblocks-\lfloor\numblocks\blocksize/4\rfloor+2).
\label{eq:ndiag_rgain_bid25}
\end{equation}
To bound $(\gainfunctionrev^{\ordering}(\blocksize)-\gainfunctionrev^{\ordering}(\numblocks\blocksize-\numblocks-\lfloor\numblocks\blocksize/4\rfloor+2))$, we need the following result proved in Section~\ref{sec:bid_diag_sum_bound}.
\begin{lemma}
Fix $\numblocks\geq2$, $\blocksize\geq 2$, and let $\gainfunctionrev^{\ordering}(x)=1/\log_2(x+1)$. Then,
\setlength\belowdisplayskip{-0pt}
\begin{equation*}
\gainfunctionrev^{\ordering}(\blocksize)-\gainfunctionrev^{\ordering}(\numblocks\blocksize-\numblocks-\lfloor\numblocks\blocksize/4\rfloor+2) \geq \Big(\frac{2}{5}\Big)\Big(\frac{1}{\log_2^2(\numblocks\blocksize)}\Big).
\end{equation*}
\label{lemma:bid_diag_sum_bound}
\end{lemma}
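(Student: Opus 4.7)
}
The plan is to rewrite the difference $\gainfunctionrev^{\ordering}(\blocksize)-\gainfunctionrev^{\ordering}(\numblocks\blocksize-\numblocks-\lfloor\numblocks\blocksize/4\rfloor+2)$ as a single fraction, separately bound the resulting numerator and denominator, and then combine. Setting $A = \numblocks\blocksize-\numblocks-\lfloor\numblocks\blocksize/4\rfloor+3$, we have
\[
\gainfunctionrev^{\ordering}(\blocksize)-\gainfunctionrev^{\ordering}(\numblocks\blocksize-\numblocks-\lfloor\numblocks\blocksize/4\rfloor+2) = \frac{1}{\log_2(\blocksize+1)}-\frac{1}{\log_2(A)} = \frac{\log_2(A)-\log_2(\blocksize+1)}{\log_2(\blocksize+1)\log_2(A)},
\]
so it will suffice to show (i) $\log_2(\blocksize+1)\log_2(A) \leq \log_2^2(\numblocks\blocksize)$ and (ii) $\log_2(A/(\blocksize+1)) \geq 2/5$.

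For (i), since $\numblocks\geq 2$ and $\blocksize\geq 2$ give $\blocksize+1\leq\numblocks\blocksize$ and also $\lfloor\numblocks\blocksize/4\rfloor\geq 1$, so $\numblocks+\lfloor\numblocks\blocksize/4\rfloor\geq 3$ and hence $A\leq\numblocks\blocksize$. Monotonicity of $\log_2$ then yields the desired denominator bound. This part is routine.

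For (ii), I would drop the floor via $\lfloor\numblocks\blocksize/4\rfloor\leq\numblocks\blocksize/4$ to obtain the clean lower bound
\[
\frac{A}{\blocksize+1} \geq \frac{3\numblocks\blocksize/4-\numblocks+3}{\blocksize+1} =: f(\numblocks,\blocksize).
\]
I would then verify that $f$ is non-decreasing in each of $\numblocks$ and $\blocksize$ over the feasible range (the partial derivatives simplify to expressions that are manifestly positive for $\numblocks\geq 2$, $\blocksize\geq 2$), so the minimum of $f$ is attained at $\numblocks=\blocksize=2$, where $f(2,2) = 4/3$. Since $\log_2(4/3) \approx 0.415 \geq 2/5$, this establishes (ii).

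The main obstacle (if any) is essentially bookkeeping: carefully confirming the monotonicity of $f$ in both arguments and ensuring the boundary case $\numblocks=\blocksize=2$ is tight enough for the constant $2/5$ (indeed, $\log_2(4/3)$ only barely exceeds $2/5$, so there is not much slack). Assembling (i) and (ii) gives
\[
\gainfunctionrev^{\ordering}(\blocksize)-\gainfunctionrev^{\ordering}(\numblocks\blocksize-\numblocks-\lfloor\numblocks\blocksize/4\rfloor+2) \geq \frac{2/5}{\log_2^2(\numblocks\blocksize)},
\]
which is exactly the claim.
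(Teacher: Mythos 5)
Your proposal is correct and follows essentially the same route as the paper's proof: write the difference as a single fraction, bound the denominator by $\log_2^2(\numblocks\blocksize)$ using $\blocksize+1\leq\numblocks\blocksize$ and $\numblocks+\lfloor\numblocks\blocksize/4\rfloor\geq 3$, and lower-bound the numerator by reducing to the corner $\numblocks=\blocksize=2$, where the value is $\log_2(4)-\log_2(3)=\log_2(4/3)\geq 2/5$. Your only deviation is dropping the floor via $\lfloor\numblocks\blocksize/4\rfloor\leq\numblocks\blocksize/4$ before the monotonicity check, which slightly streamlines the paper's discrete monotonicity argument but is otherwise the same calculation.
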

Applying Lemma~\ref{lemma:bid_diag_sum_bound} to~\eqref{eq:ndiag_rgain_bid25}, we arrive at the lower bound claimed in~\eqref{eq:ndiag_rgain_bidnew2}.
Then, relating~\eqref{eq:ndiag_rgain_bidnew2} back to~\eqref{eq:ndiag_rgain_bid21}, 
for every $\numblocks\geq2, \blocksize\geq 2$, and $\hyperparam\geq 0$, the following bound holds
\begin{equation}
\hyperparam\mathbb{E}[\gain_r^{\optbase} - \gain_r^{\bidbase}] \geq \Big(\frac{1}{3000}\Big)\Big(\frac{\hyperparam\numblocks\blocksize}{\log_2^2(\numblocks\blocksize)}\Big).
\label{eq:ndiag_rgain_bid282}
\end{equation}
Observe that the expectation in the right-hand side of~\eqref{eq:ndiag_rgain_bid282} is dropped since it is not a random variable.

\paragraph*{Completing the bound.}
Combining the bounds on the expected paper-side and reviewer-side gain between \optbase and \bidbase given in~\eqref{eq:ndiag_rgain_bid2} and~\eqref{eq:ndiag_rgain_bid282}, we find for every $\numblocks\geq2, \blocksize\geq 2, \hyperparam\geq 0$,
\begin{align*}
\mathbb{E}[\gain^{\optbase}-\gain^{\bidbase}] &= \mathbb{E}[\gain_p^{\optbase}-\gain_p^{\bidbase}] + \hyperparam\mathbb{E}[\gain_r^{\optbase}-\gain_r^{\bidbase}] \\
&\geq \Big(\frac{1}{3000}\Big)\Big(\frac{\hyperparam\numblocks\blocksize}{\log_2^2(\blocksize)}\Big).
\end{align*}
We conclude that there exists a constant $\constant >0$ such that for every $\numblocks\geq 2, \blocksize\geq 2$, and $\hyperparam\geq 0$, \bidbase is suboptimal by an additive factor of at least $\constant\hyperparam\numblocks\blocksize/\log_2^2(\numblocks\blocksize)$ for the noiseless community model.

\subsubsection{Suboptimality of \randbase}
\label{sec:noiseless_rand}
In this section, we show the suboptimality of \randbase for the noiseless community model.

\paragraph*{Intuition and \randbase policy.} The \randbase algorithm selects a paper ordering uniformly at random from the set of permutations of papers. For the given class of gain and bidding functions, this is problematic since to obtain a bid from a reviewer, a paper from the block diagonal for the reviewer must be shown in the highest position. Since at least half of the papers are not on the block diagonal of the similarity matrix for any reviewer, there is a significant probability that \randbase fails to induce a bid from each reviewer. This causes the algorithm to be suboptimal for the expected paper-side gain.

\paragraph*{Bounding the expected paper-side gain.}
Recall from~\eqref{eq:noiseless_decoupled} that the expected paper-side gain from any paper $\paperindex\in \paperset$ is given by
\begin{equation}
\mathbb{E}[\gainfunction(\numbids_{\paperindex})] 
=  \sum_{\ell=0}^{\blocksize}\mathbb{P}\Big(\ell=\sum_{\revindex\in \diagsetp_{\paperindex}}\mathds{1}\{\ordering_{\revindex}^{\randbase}(\paperindex)=1\}\Big)\gainfunction(\ell).
\label{eq:nrand_diag_pgain}
\end{equation}
To bound this quantity for a given paper, we need to characterize the distribution of the number of times the paper is shown in the highest position to reviewers for which it is on the block diagonal. 

The \randbase algorithm selects a paper ordering uniformly at random from the set of paper permutations. This means the probability of paper any paper $\paperindex \in \paperset$ being shown in the highest position to any reviewer $\revindex \in \reviewerset$ is $1/\numblocks\blocksize$ since there are $\numpapers=\numblocks\blocksize$ papers. Consequently, the number of times paper $\paperindex\in \paperset$ is shown in the highest position to reviewers in the set $\diagsetp_{\paperindex}$ follows a binomial distribution with $\blocksize$ trials, since the cardinality of $\diagsetp_{\paperindex}$ is $\blocksize$, and a success probability of $1/\numblocks\blocksize$. This means the expected paper-side gain from any paper $\paperindex\in \paperset$ given in~\eqref{eq:nrand_diag_pgain} for \randbase is equivalently
\begin{equation}
\mathbb{E}[\gainfunction(\numbids_{\paperindex})]  = \sum_{\ell=0}^{\blocksize}{\blocksize \choose \ell}\Big(\frac{1}{\numblocks\blocksize}\Big)^{\ell}\Big(1-\frac{1}{\numblocks\blocksize}\Big)^{\blocksize-\ell}\gainfunction(\ell).
\label{eq:nrand_diag_pgain1}
\end{equation}
To bound~\eqref{eq:nrand_diag_pgain1}, we need the following lemma that bounds the expectation of the square root of a binomial random variable with $n$ trials and success probability $p$.
\begin{lemma}
Fix $n\geq 2$ and $p\in [0, 1]$. Then, 
\setlength\belowdisplayskip{-0pt}
\begin{equation*}
\sum_{k=0}^{n}{n \choose k}p^{k}(1-p)^{n-k}\sqrt{k} \leq np(1-p)^{n-1}\Big(1-\frac{\sqrt{2}}{2}\Big) +np\Big(\frac{\sqrt{2}}{2}\Big).
\end{equation*}
\label{lemma:binom}
\end{lemma}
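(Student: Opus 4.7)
The plan is to split the expectation $\mathbb{E}[\sqrt{X}]$ for $X \sim \mathrm{Binomial}(n,p)$ according to three cases, $k=0$, $k=1$, and $k \geq 2$, and to apply the elementary bound $\sqrt{k} \leq k/\sqrt{2}$ valid for every integer $k \geq 2$ in the third case. The $k=0$ term contributes nothing. The $k=1$ term contributes exactly $\binom{n}{1} p(1-p)^{n-1} = np(1-p)^{n-1}$, which accounts for the first summand on the right-hand side (noting that $\sqrt{1}=1$ makes this the unique term that cannot be tightened by the square-root slack). For the tail $k \geq 2$, applying $\sqrt{k} \leq k/\sqrt{2}$ reduces the bound to a weighted moment,
\[
\sum_{k=2}^{n}\binom{n}{k} p^k (1-p)^{n-k} \sqrt{k} \;\leq\; \frac{1}{\sqrt{2}} \sum_{k=2}^{n} k \binom{n}{k} p^k (1-p)^{n-k}.
\]

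Next, I would identify the remaining sum as a truncated first moment and evaluate it using the binomial mean. Writing $\sum_{k=2}^{n} k \binom{n}{k} p^k (1-p)^{n-k} = \mathbb{E}[X] - 1\cdot \mathbb{P}(X=1) = np - np(1-p)^{n-1}$ gives the bound
\[
\sum_{k=2}^{n}\binom{n}{k} p^k (1-p)^{n-k} \sqrt{k} \;\leq\; \frac{np - np(1-p)^{n-1}}{\sqrt{2}}.
\]

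Adding the $k=1$ contribution to the above inequality and using $1/\sqrt{2} = \sqrt{2}/2$ yields
\[
\sum_{k=0}^{n}\binom{n}{k} p^k (1-p)^{n-k} \sqrt{k} \;\leq\; np(1-p)^{n-1}\Bigl(1 - \tfrac{\sqrt{2}}{2}\Bigr) + np\Bigl(\tfrac{\sqrt{2}}{2}\Bigr),
\]
as desired. The bound $\sqrt{k} \leq k/\sqrt{2}$ is tight at $k=2$, which explains why the coefficient $\sqrt{2}/2$ (rather than something smaller) is the right constant in the statement, and why one must separate out $k=1$ to avoid losing a factor of $\sqrt{2}$ on that term. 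The main (and only) mild subtlety is to be careful about the edge cases $p \in \{0,1\}$ and $n=2$, but in both cases both sides of the inequality reduce to easily checked expressions: at $p=0$ both sides vanish, at $p=1$ both sides equal $\sqrt{n}$ on the left and $n\sqrt{2}/2$ on the right (with $\sqrt{n} \leq n\sqrt{2}/2$ for $n\geq 2$), and $n=2$ enters only through the constraint needed to make $\sqrt{k} \leq k/\sqrt{2}$ binding. No additional machinery is required.
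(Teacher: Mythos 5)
Your proof is correct and follows essentially the same route as the paper's: both arguments isolate the $k=1$ term exactly, bound $\sqrt{k}\leq(\sqrt{2}/2)\,k$ for $k\geq 2$, and then evaluate the resulting truncated first moment using the binomial mean (the paper does this via the identity $k\binom{n}{k}=n\binom{n-1}{k-1}$ and the binomial theorem, you via $\mathbb{E}[X]-\mathbb{P}(X=1)=np-np(1-p)^{n-1}$, which is the same computation). No gaps; the edge cases you mention need no separate treatment since the argument is uniform in $p\in[0,1]$.
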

\noindent
The proof of Lemma~\ref{lemma:binom} is provided in Section~\ref{sec:binom}.

We can directly apply Lemma~\ref{lemma:binom} to~\eqref{eq:nrand_diag_pgain1} since the given paper-side gain function is the square root function. The number of trials is $\blocksize\geq 2$ and the success probability is $1/\numblocks\blocksize$, so for any paper $\paperindex \in \paperset$, we obtain
\begin{equation}
\mathbb{E}[\gainfunction(\numbids_{\paperindex})]  = \sum_{\ell=0}^{\blocksize}{\blocksize \choose \ell}\Big(\frac{1}{\numblocks\blocksize}\Big)^{\ell}\Big(1-\frac{1}{\numblocks\blocksize}\Big)^{\blocksize-\ell}\gainfunction(\ell) \leq \Big(\frac{1}{\numblocks}\Big)\Big(1-\frac{1}{\numblocks\blocksize}\Big)^{\blocksize-1}\Big(1-\frac{\sqrt{2}}{2}\Big) +\frac{\sqrt{2}}{2\numblocks}.
\label{eq:nrand_diag_pgain2}
\end{equation}
The bound in the right-hand side of~\eqref{eq:nrand_diag_pgain2} is decreasing in $\numblocks$ and $\blocksize$ for $\numblocks\geq 2$ and $\blocksize\geq 2$. This means for every $\numblocks\geq 2$, $\blocksize\geq 2$, and any paper $\paperindex\in \paperset$, 
\begin{equation*} 
\mathbb{E}[\gainfunction(\numbids_{\paperindex})] \leq  \frac{6+\sqrt{2}}{16}.
\end{equation*}
To get a final bound on the expected paper-side gain of the algorithm, we sum the previous bound over the number of papers and obtain
\begin{equation}
\mathbb{E}[\gain_p^{\randbase}] =
 \sum_{\paperindex\in\paperset}\mathbb{E}[\gainfunction(\numbids_{\paperindex})]  \leq \Big(\frac{6+\sqrt{2}}{16} \Big)\numblocks\blocksize.
 \label{eq:nrand_diag_pgain3}
\end{equation}
Combining~\eqref{eq:nrand_diag_pgain3} with the expected paper-side gain of the optimal policy which was shown to be $\numblocks\blocksize$ in Section~\ref{sec:noiseless_optimal}, this implies for every $\numblocks\geq2, \blocksize\geq 2$, and $\hyperparam\geq 0$, 
\begin{equation}
\mathbb{E}[\gain_p^{\optbase} - \gain_p^{\randbase}] \geq \numblocks\blocksize - \Big(\frac{6+\sqrt{2}}{16}\Big)\numblocks\blocksize.
\label{eq:nrand_diag_pgain4}
\end{equation}

\paragraph*{Bounding the expected reviewer-side gain.}
We compare the expected reviewer-side gain of the optimal algorithm \optbase and \randbase. Previously in Section~\ref{sec:noiseless_optimal} we showed that the optimal algorithm maximizes the expected reviewer-side gain. This means that the expected reviewer-side gain of \randbase cannot exceed that from the optimal policy \optbase. Consequently, for every $\numblocks\geq2, \blocksize\geq2,$ and $\hyperparam\geq 0$ we get the bound
\begin{equation}
\hyperparam\mathbb{E}[\gain_r^{\optbase}-\gain_r^{\randbase}] \geq 0.
\label{eq:ndiag_rand_rgain}
\end{equation}

\paragraph*{Completing the bound.}
Combining the bounds on the expected paper-side and reviewer-side gain between the optimal algorithm \optbase and \randbase given in~~\eqref{eq:nrand_diag_pgain4} and~\eqref{eq:ndiag_rand_rgain}, for every $\numblocks\geq2, \blocksize\geq 2$, and $\hyperparam\geq 0$, we get that
\begin{align*}
\mathbb{E}[\gain^{\optbase}-\gain^{\randbase}] &= \mathbb{E}[\gain_p^{\optbase}-\gain_p^{\randbase}] + \hyperparam\mathbb{E}[\gain_r^{\optbase}-\gain_r^{\randbase}] \\
&\geq \numblocks\blocksize - \Big(\frac{6+\sqrt{2}}{16}\Big)\numblocks\blocksize\geq \numblocks\blocksize/2.
\end{align*}
We conclude that there exists a constant $\constant > 0$ such that for every $\numblocks\geq2, \blocksize\geq 2$, and $\hyperparam\geq 0$, \randbase is suboptimal by an additive factor of at least $\constant\numblocks\blocksize$ for the noiseless community model.

\subsubsection{Proofs of Lemmas~\ref{lemma:noiseless_diag_super_policy}--\ref{lemma:binom}}
\label{sec:noiseless_lemmas}
In this section, we present the proofs of technical lemmas stated in the primary proof of Theorem~\ref{prop:diagonal}.

\paragraph{Proof of Lemma~\ref{lemma:noiseless_diag_super_policy}.}\label{sec:noiseless_diag_super_policy}
In the proof of Corollary~\ref{sec:proof_local_col} given in Section~\ref{sec:proof_local_col}, we showed in~\eqref{eq:cor_opt2} that  
\super with zero heuristic solves the problem
\begin{equation}
\ordering_{\revindex}^{\super} = \argmax_{\ordering_{\revindex}\in \symgroup_\numpapers} \quad \sum_{\paperindex\in \paperset} \bidfunction(\ordering_{\revindex}(\paperindex), \similarity_{\revindex, \paperindex})(\gainfunction(\numbids_{\revindex-1, \paperindex}  + 1) - \gainfunction(\numbids_{\revindex-1, \paperindex})) + \hyperparam \sum_{\paperindex\in \paperset}\gainfunctionrev(\ordering_{\revindex}(\paperindex), \similarity_{\revindex, \paperindex}) 
\label{eq:noiseless_diagonal_policy}
\end{equation}
in order to determine the ordering of papers $\ordering_{\revindex}^{\super}$ to present to reviewer $\revindex \in \reviewerset$ so that the immediate expected gain is maximized conditioned on the history of bids from reviewers that arrived previously. Recalling that the bidding function is $\bidfunction(\ordering_{\revindex}(\paperindex), \similarity_{\revindex, \paperindex}) = \mathds{1}\{\ordering_{\revindex}(\paperindex)=1\}\mathds{1}\{\similarity_{\revindex, \paperindex} > \simscalar/2\}$, the optimization problem in~\eqref{eq:noiseless_diagonal_policy} is equivalent to 
\begin{equation}
\ordering_{\revindex}^{\super} = \argmax_{\ordering_{\revindex}\in \symgroup_\numpapers}  \sum_{\paperindex\in \paperset}\mathds{1}\{\ordering_{\revindex}(\paperindex)=1\}\mathds{1}\{\similarity_{\revindex, \paperindex} > \simscalar/2\}(\gainfunction(\numbids_{\revindex-1, \paperindex}  + 1) - \gainfunction(\numbids_{\revindex-1, \paperindex})) + \hyperparam \sum_{\paperindex\in \paperset}\gainfunctionrev(\ordering_{\revindex}(\paperindex), \similarity_{\revindex, \paperindex}).
\label{eq:noiseless_diagonal_policy1}
\end{equation}
Observe that $\diagset_{\revindex}\cup \diagset_{\revindex}^c=\paperset$. Moreover, if $\paperindex\in \diagset_{\revindex}$, then $\similarity_{\revindex, \paperindex}>\simscalar/2$ since $\similarity_{\revindex, \paperindex}=\simscalar$ by definition of the noiseless community model similarity matrix and $\simscalar \in [0.01, 1]$. Analogously, if $\paperindex\in \diagset_{\revindex}^c$, then $\similarity_{\revindex, \paperindex}<\simscalar/2$ since $\similarity_{\revindex, \paperindex}=0$ by definition of the noiseless community model similarity matrix and $\simscalar \in [0.01, 1]$. This allows us to simplify~\eqref{eq:noiseless_diagonal_policy1} to the following problem:
\begin{equation}
\ordering_{\revindex}^{\super} = \argmax_{\ordering_{\revindex}\in \symgroup_\numpapers} \quad \sum_{\paperindex\in \diagset_{\revindex}}\mathds{1}\{\ordering_{\revindex}(\paperindex)=1\}(\gainfunction(\numbids_{\revindex-1, \paperindex}  + 1) - \gainfunction(\numbids_{\revindex-1, \paperindex})) + \hyperparam \sum_{\paperindex\in \paperset}\gainfunctionrev(\ordering_{\revindex}(\paperindex), \similarity_{\revindex, \paperindex}).
\label{eq:noiseless_diagonal_policy2}
\end{equation}
The given paper-side gain function $\gainfunction$ is such that $\gainfunction(\numbids_{\revindex-1, \paperindex}  + 1) - \gainfunction(\numbids_{\revindex-1, \paperindex})$ is decreasing as a function of the number of bids $\numbids_{\revindex-1, \paperindex}$. As a result, the expected paper-side gain term from~\eqref{eq:noiseless_diagonal_policy2}, which is given by
\begin{equation}
\sum_{\paperindex\in \diagset_{\revindex}}\mathds{1}\{\ordering_{\revindex}(\paperindex)=1\}(\gainfunction(\numbids_{\revindex-1, \paperindex}  + 1) - \gainfunction(\numbids_{\revindex-1, \paperindex})),
\label{eq:noiseless_expect_pgain}
\end{equation}
is maximized by showing a paper $\paperindex \in \diagset_{\revindex}$ with the minimum number of bids in the highest position of the paper ordering. Moreover, the given reviewer-side gain function $\gainfunctionrev$ from~\eqref{eq:noiseless_rgainfunc} is decreasing in the position $\ordering_{\revindex}(\paperindex)$ in which a paper is shown  and increasing in the similarity score $\similarity_{\revindex, \paperindex}$. Consequently, the expected reviewer-side gain term from~\eqref{eq:noiseless_diagonal_policy2}, which is given by
\begin{equation}
\sum_{\paperindex\in \paperset}\gainfunctionrev(\ordering_{\revindex}(\paperindex), \similarity_{\revindex, \paperindex}),
\label{eq:noiseless_expect_rgain}
\end{equation}
is maximized by showing papers in decreasing order of the similarity scores. The similarity score is $\similarity_{\revindex, \paperindex}=\simscalar \in [0.01, 1]$ for papers in $\diagset_{\revindex}$ and the similarity score is $\similarity_{\revindex, \paperindex}=0$ for papers in $\diagset_{\revindex}^c$ by definition of the noiseless community model. Accordingly, the expected reviewer-side gain term in~\eqref{eq:noiseless_expect_rgain} is maximized as long as each paper in $\diagset_{\revindex}$ is shown earlier in the paper ordering than each paper in $\diagset_{\revindex}^c$.   

Since the expected paper-side and reviewer-side gain terms of~\eqref{eq:noiseless_diagonal_policy2} given by~\eqref{eq:noiseless_expect_pgain} and~\eqref{eq:noiseless_expect_rgain} respectively can be simultaneously maximized by showing any of the papers with the minimum number of bids among $\diagset_{\revindex}$ in the highest position, followed by the remaining papers in $\diagset_{\revindex}$ in any arbitrary order, and then the papers in $\diagset_{\revindex}^c$ in any arbitrary order, we conclude this is the policy of \super with zero heuristic.

\paragraph{Proof of Lemma~\ref{lemma:bid_diag_sum_bound}.}\label{sec:bid_diag_sum_bound}
Recall that $\gainfunctionrev^{\ordering} = 1/\log_2(x+1)$. Moreover, fix $\numblocks\geq 2$ and $\blocksize\geq 2$. Simplifying the expression we seek to bound, we obtain
\begin{align}
\gainfunctionrev^{\ordering}(\blocksize)-\gainfunctionrev^{\ordering}(\numblocks\blocksize-\numblocks-\lfloor\numblocks\blocksize/4\rfloor+2) &= \frac{1}{\log_2(\blocksize+1)}-\frac{1}{\log_2(\numblocks\blocksize-\numblocks-\lfloor \numblocks\blocksize/4 \rfloor+2+1)} \notag \\
&= \frac{\log_2(\numblocks\blocksize-\numblocks-\lfloor \numblocks\blocksize/4 \rfloor+2+1)-\log_2(\blocksize+1)}{\log_2(\numblocks\blocksize-\numblocks-\lfloor \numblocks\blocksize/4 \rfloor+2+1)\log_2(\blocksize+1)}. \label{eq:bd1}
\end{align}
We now lower bound the numerator of the right-hand side of~\eqref{eq:bd1}. For any fixed $\numblocks\geq 2$ and $\blocksize\geq 2$, 
\begin{equation}
\log_2(\numblocks\blocksize-\numblocks-\lfloor \numblocks\blocksize/4 \rfloor+2+1)-\log_2(\blocksize+1) \geq \log_2(2\blocksize-\lfloor \blocksize/2 \rfloor+1)-\log_2(\blocksize+1).
\label{eq:bd3}
\end{equation}
To see why, observe that $\numblocks\blocksize-\numblocks-\lfloor \numblocks\blocksize/4 \rfloor$ is non-decreasing as a function of $\numblocks$ for $\numblocks\geq2$ and $\blocksize\geq 2$. The non-decreasing property follows from the fact that 
\begin{align*}
(\numblocks+1)\blocksize-(\numblocks+1)-\lfloor (\numblocks+1)\blocksize/4 \rfloor &= \numblocks\blocksize-\numblocks+\blocksize-1-\lfloor \numblocks\blocksize/4+\blocksize/4 \rfloor \\
&\geq \numblocks\blocksize-\numblocks+\blocksize-1-(\lfloor \numblocks\blocksize/4\rfloor+\lfloor\blocksize/4 \rfloor+1) \\
&= \numblocks\blocksize-\numblocks-\lfloor \numblocks\blocksize/4\rfloor+\blocksize-\lfloor\blocksize/4 \rfloor-2 \\
&\geq \numblocks\blocksize-\numblocks-\lfloor \numblocks\blocksize/4\rfloor.
\end{align*}
To get the final inequality, consider
\begin{equation}
\blocksize-\lfloor\blocksize/4 \rfloor-2=\blocksize-\blocksize/4 +(\blocksize \ \mathrm{mod} \ 4)/4-2 =3\blocksize/4 +(\blocksize \ \mathrm{mod} \ 4)/4-2
\label{eq:bd4}
\end{equation}
and notice that for $\blocksize=2$,~\eqref{eq:bd4} is zero, and for $\blocksize> 2$,~\eqref{eq:bd4} is positive.

Now, see that $\log_2(2\blocksize-\lfloor \blocksize/2 \rfloor+1)-\log_2(\blocksize+1)$ is increasing as a function of $\blocksize$ since $2\blocksize-\lfloor\blocksize/2\rfloor> \blocksize$ for $\blocksize\geq 2$. Accordingly, for every $\numblocks\geq 2$ and $\blocksize\geq 2$, 
\begin{equation}
\log_2(2\blocksize-\lfloor \blocksize/2 \rfloor+1)-\log_2(\blocksize+1) \geq \log_2(4)-\log_2(3)\geq 2/5.
\label{eq:bd5}
\end{equation}

To finish, we obtain a lower bound on~\eqref{eq:bd1} by finding an upper bound on the denominator in the right-hand side. Observe that $\log_2(\numblocks\blocksize-\numblocks-\lfloor \numblocks\blocksize/4 \rfloor+2+1)\leq \log_2(\numblocks\blocksize)$ since $\numblocks+\lfloor \numblocks\blocksize/4 \rfloor\geq 3$ and $\log_2(\blocksize+1)\leq \log_2(\numblocks\blocksize)$ for every $\numblocks\geq 2$ and $\blocksize\geq 2$. Then, combined with~\eqref{eq:bd1},~\eqref{eq:bd3}, and~\eqref{eq:bd5}, we obtain the stated result of 
\begin{equation*}
\gainfunctionrev^{\ordering}(\blocksize)-\gainfunctionrev^{\ordering}(\numblocks\blocksize-\numblocks-\lfloor\numblocks\blocksize/4\rfloor+2) \geq \Big(\frac{2}{5}\Big)\Big(\frac{1}{\log_2^2(\numblocks\blocksize)}\Big).
\end{equation*}

\paragraph{Proof of Lemma~\ref{lemma:binom}.}\label{sec:binom}
Given $n\geq 2$ and $p\in [0, 1]$, we need to prove the bound
\begin{equation*}
\sum_{k=0}^{n}{n \choose k}p^{k}(1-p)^{n-k}\sqrt{k} \leq np(1-p)^{n-1}\Big(1-\frac{\sqrt{2}}{2}\Big) +np\Big(\frac{\sqrt{2}}{2}\Big).
\end{equation*}
To begin, observe that
\begin{equation}
\sum_{k=0}^{n}{n \choose k}p^{k}(1-p)^{n-k}\sqrt{k} = \sum_{k=1}^{n}{n \choose k}p^{k}(1-p)^{n-k}\sqrt{k}.
\label{eq:binom1}
\end{equation}
Then, since
\begin{equation*}
{n \choose k}p^{k} = \Big(\frac{n!}{k!(n-k)!}\Big)p^{k} = \Big(\frac{np}{k}\Big)\Big(\frac{(n-1)!}{(k-1)!(n-k)!}\Big)p^{k-1} = \Big(\frac{np}{k}\Big){n-1 \choose k-1}p^{k-1},
\end{equation*}
we can simplify~\eqref{eq:binom1} to obtain
\begin{equation}
\sum_{k=0}^{n}{n \choose k}p^{k}(1-p)^{n-k}\sqrt{k} = np\sum_{k=1}^{n}\Big(\frac{\sqrt{k}}{k}\Big) {n-1 \choose k-1}p^{k-1}(1-p)^{n-k}.
\label{eq:binom2}
\end{equation}
From the fact that $\tfrac{\sqrt{k}}{k}\leq \tfrac{\sqrt{2}}{2}$ for $k\geq 2$, we bound~\eqref{eq:binom2} as follows:
\begin{align}
np\sum_{k=1}^{n}\Big(\frac{\sqrt{k}}{k}\Big) {n-1 \choose k-1}p^{k-1}(1-p)^{n-k} &= np(1-p)^{n-1}+ np\sum_{k=2}^{n}\Big(\frac{\sqrt{k}}{k}\Big) {n-1 \choose k-1}p^{k-1}(1-p)^{n-k} \notag \\
&\leq np(1-p)^{n-1}+ np\Big(\frac{\sqrt{2}}{2}\Big)\sum_{k=2}^{n} {n-1 \choose k-1}p^{k-1}(1-p)^{n-k}.
\label{eq:binom3}
\end{align}
Relating~\eqref{eq:binom3} back to~\eqref{eq:binom2}, we get
\begin{equation}
\sum_{k=0}^{n}{n \choose k}p^{k}(1-p)^{n-k}\sqrt{k} \leq np(1-p)^{n-1}+ np\Big(\frac{\sqrt{2}}{2}\Big)\sum_{k=2}^{n} {n-1 \choose k-1}p^{k-1}(1-p)^{n-k}.
\label{eq:binom_extra}
\end{equation}
From addition and subtraction of $np(1-p)^{n-1}\big(\tfrac{\sqrt{2}}{2}\big)$ into the right-hand side of~\eqref{eq:binom_extra}, we obtain
\begin{equation}
\sum_{k=0}^{n}{n \choose k}p^{k}(1-p)^{n-k}\sqrt{k}\leq np(1-p)^{n-1}\Big(1-\frac{\sqrt{2}}{2}\Big)+ np\Big(\frac{\sqrt{2}}{2}\Big)\sum_{k=1}^{n} {n-1 \choose k-1}p^{k-1}(1-p)^{n-k}.
\label{eq:binom4}
\end{equation}
Now, see that from an indexing manipulation
\begin{equation}
\sum_{k=1}^{n} {n-1 \choose k-1}p^{k-1}(1-p)^{n-k} = \sum_{k=0}^{n-1} {n-1 \choose k}p^{k}(1-p)^{n-1-k}.
\label{eq:binom5}
\end{equation}
Moreover, from the Binomial theorem, 
\begin{equation}
\sum_{k=0}^{n-1} {n-1 \choose k}p^{k}(1-p)^{n-1-k} = (p+(1-p))^{n-1} = 1.
\label{eq:binom6}
\end{equation}
Combining~\eqref{eq:binom4},~\eqref{eq:binom5}, and~\eqref{eq:binom6} gives the final result of 
\begin{equation*}
\sum_{k=0}^{n}{n \choose k}p^{k}(1-p)^{n-k}\sqrt{k}\leq np(1-p)^{n-1}\Big(1-\frac{\sqrt{2}}{2}\Big)+ np\Big(\frac{\sqrt{2}}{2}\Big).
\end{equation*}

\subsection{Proof of Theorem~\ref{thm:diagonal}: Noisy Community Model Result}
In this proof, we show for the noisy community model that \super with zero heuristic is near optimal and each of the baselines is significantly suboptimal with respect to \super with zero heuristic.
The organization of this proof is as follows. In Section~\ref{sec:diag_prelim}, we present notation and preliminary analysis that is needed throughout the proof. In Section~\ref{sec:diag_super}, we analyze \super with zero heuristic and compute the expected paper-side gain for the similarity matrix class. We prove the suboptimality bounds for the \simbase, \bidbase, and \randbase baselines with respect to \super with zero heuristic separately in Sections~\ref{sec:diag_sim},~\ref{sec:diag_bid}, and~\ref{sec:diag_rand} respectively. We finish the proof in Section~\ref{sec:diag_opt} by showing that \super with zero heuristic is near optimal. Combining the results in each section of this proof gives the stated result of the theorem. Proofs of technical lemmas needed only for this proof can be found in Section~\ref{sec:diag_lemmas}. The proofs of technical lemmas used in this proof, but introduced in the proof of Theorem~\ref{prop:diagonal}, are given in Section~\ref{sec:noiseless_lemmas}. Finally, we remark that a number of methods for proving this result are similar to that from the proof of Theorem~\ref{prop:diagonal} and we point out in several places where this is the case as well as where the techniques differ.

\subsubsection{Notation and Preliminaries}
\label{sec:diag_prelim}
The notation and terminology in this proof follow that from the proof of Theorem~\ref{prop:diagonal} in Section~\ref{sec:noiseless_notation} since the gain and bidding functions are shared between the results and the noisy community model is based on the noiseless community model. 
The primary adjustment is that any reference to a similarity matrix $\similarity$ refers to that from the noisy community model, which is generated by selecting some similarity matrix $\similarity'$ from the noiseless community model as given in in~\eqref{eq:noiseless_model}, and then adding noise in the manner described in~\eqref{eq:noisy_model}.
Recall that the noise in the similarity score for each reviewer-paper pair $(\revindex, \paperindex)$ denoted by $\nu_{\revindex, \paperindex}$ is drawn independently and uniformly from $(0, \unifnoise)$ where $\unifnoise \leq (1+\hyperparam)^{-1} e^{-e\numblocks\blocksize}$ for the given trade-off parameter $\hyperparam\geq 0$. We also follow the notation from the proof of Theorem~\ref{prop:diagonal} in Section~\ref{sec:noiseless_notation}, in terms of terminology of reviewers and papers on the block diagonal and keep the sets $\diagset_{\revindex}$ for all $\revindex\in \reviewerset$ and $\diagsetp_{\paperindex}$ for all $\paperindex\in \paperset$ from~\eqref{eq:diagset_i} and~\eqref{eq:diagset_j} defined in terms of the noiseless community model similarity matrix now given by $\similarity'$.

In an analogous manner to the preliminaries section of the proof of Theorem~\ref{prop:diagonal}, we present several characteristics of the reviewer bidding behavior and the similarity scores that are needed throughout the proof. This set of rather immediate results also enable a decomposition of the expected paper-side gain equivalent to that for the noiseless community model from the proof of Theorem~\ref{prop:diagonal} given in~\eqref{eq:noiseless_decoupled}. 

We begin by showing if a paper is on the block diagonal for a reviewer, then it is bid on almost surely when shown in the highest position of the paper ordering to the reviewer and almost never when it is not. 
\begin{lemma}
Under the assumptions of Theorem~\ref{thm:diagonal},
if the reviewer-paper pair $(\revindex, \paperindex)$ is on the block diagonal of the noiseless community model matrix $\similarity'$ so that $\revindex\in \diagsetp_{\paperindex}$ and $\paperindex\in \diagset_{\revindex}$, then in the noisy community model matrix $\similarity_{\revindex, \paperindex}>\simscalar/2$. Moreover, the paper $\paperindex \in \paperset$ is bid on by reviewer $\revindex\in \reviewerset$ almost surely when $\ordering_{\revindex}^{\algbase}(\paperindex)=1$ and almost never when $\ordering_{\revindex}^{\algbase}(\paperindex)\neq 1$.
\label{lemma:exceed}
\end{lemma}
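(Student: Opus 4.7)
The plan is to verify the two claims by direct computation from the noisy community model definition in~\eqref{eq:noisy_model} together with the assumed upper bound on the noise parameter $\unifnoise$. The first claim reduces to a simple inequality, and the second then follows immediately by substituting into the bidding function in~\eqref{eq:noiseless_bidmodel}.

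First I would handle the similarity bound. Since $(\revindex, \paperindex)$ lies on the block diagonal of $\similarity'$, we have $\similarity'_{\revindex, \paperindex} = \simscalar$, so by~\eqref{eq:noisy_model} the noisy similarity takes the form $\similarity_{\revindex, \paperindex} = \simscalar - \nu_{\revindex, \paperindex}$ with $\nu_{\revindex, \paperindex}$ drawn uniformly from $(0, \unifnoise)$. It therefore suffices to show that $\unifnoise < \simscalar/2$. By hypothesis, $\unifnoise \leq (1+\hyperparam)^{-1} e^{-e\numblocks\blocksize} \leq e^{-e\numblocks\blocksize}$, and since $\numblocks \geq 2$ and $\blocksize \geq 2$ we get $e^{-e\numblocks\blocksize} \leq e^{-4e} < 0.005$. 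On the other hand, the noiseless community model enforces $\simscalar \in [0.01, 1]$, so $\simscalar/2 \geq 0.005 > \unifnoise$. Consequently $\similarity_{\revindex, \paperindex} = \simscalar - \nu_{\revindex, \paperindex} > \simscalar - \unifnoise > \simscalar/2$ with probability one, establishing the first claim.

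Next, substituting this into the given bidding function from~\eqref{eq:noiseless_bidmodel} yields
\[
\bidfunction(\ordering_{\revindex}^{\algbase}(\paperindex), \similarity_{\revindex, \paperindex}) = \mathds{1}\{\ordering_{\revindex}^{\algbase}(\paperindex) = 1\}\,\mathds{1}\{\similarity_{\revindex, \paperindex} > \simscalar/2\} = \mathds{1}\{\ordering_{\revindex}^{\algbase}(\paperindex) = 1\}
\]
almost surely. Since $\randombid_{\revindex, \paperindex}$ is a Bernoulli random variable with this success probability, the bid is placed almost surely whenever $\ordering_{\revindex}^{\algbase}(\paperindex) = 1$ and almost never otherwise, which proves the second claim.

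There is no real obstacle here; the argument is essentially a plug-and-check. The only point that requires mild care is confirming the explicit numerical comparison $(1+\hyperparam)^{-1} e^{-e\numblocks\blocksize} < \simscalar/2$, which uses both the lower bound $\simscalar \geq 0.01$ built into the noiseless community model and the lower bound $\numblocks\blocksize \geq 4$ coming from the assumptions $\numblocks \geq 2$ and $\blocksize \geq 2$. All other steps are direct substitutions into the model and bidding-function definitions stated earlier.
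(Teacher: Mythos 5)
Your proposal is correct and follows essentially the same argument as the paper's proof: both reduce the claim to checking $\unifnoise \leq (1+\hyperparam)^{-1}e^{-e\numblocks\blocksize} \leq e^{-4e} < 0.005 \leq \simscalar/2$ (using $\simscalar \geq 0.01$ and $\numblocks\blocksize \geq 4$), conclude $\similarity_{\revindex,\paperindex} = \simscalar - \nu_{\revindex,\paperindex} > \simscalar/2$, and then read off the bid behavior directly from the indicator form of the bidding function. No gaps.
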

\begin{proof}[Proof of Lemma~\ref{lemma:exceed}] 
If the reviewer-paper pair $(\revindex, \paperindex)$ is on the block diagonal of the noiseless community model matrix $\similarity'$, then by definition  $\similarity_{\revindex, \paperindex}'=\simscalar$ and $\similarity_{\revindex, \paperindex} = \simscalar-\nu_{\revindex, \paperindex}$ where the noise $\nu_{\revindex, \paperindex}$ is drawn uniformly at random from the interval $(0, \unifnoise)$. Moreover, recall that $\simscalar \in [0.01, 1]$ and $\unifnoise \leq (1+\hyperparam)^{-1} e^{-e\numblocks\blocksize}$. Accordingly, for $\hyperparam\geq 0, \numblocks\geq 2$, and $\blocksize\geq 2$, we obtain 
\begin{equation}
\unifnoise \leq (1+\hyperparam)^{-1} e^{-e\numblocks\blocksize} \leq e^{-4e} < 0.01/2 \leq \simscalar/2.
\label{eq:simple_noise_bound}
\end{equation}
Since $\nu_{\revindex, \paperindex} \in (0, \unifnoise)$, we immediately get $\similarity_{\revindex, \paperindex}> \simscalar-\unifnoise$. Then applying~\eqref{eq:simple_noise_bound}, we conclude that $\similarity_{\revindex, \paperindex}> \simscalar/2$. Finally, since the probability of reviewer $\revindex$ bidding on paper $\paperindex$ is given by the quantity $\bidfunction(\ordering_{\revindex}^{\algbase}(\paperindex), \similarity_{\revindex, \paperindex}) = \mathds{1}\{\ordering_{\revindex}^{\algbase}(\paperindex)=1\}\mathds{1}\{\similarity_{\revindex, \paperindex} > \simscalar/2\}$, the reviewer bids on the paper almost surely when $\ordering_{\revindex}^{\algbase}(\paperindex)=1$ and almost never when $\ordering_{\revindex}^{\algbase}(\paperindex)\neq 1$.
\end{proof}

We now show that if a paper is not on the block diagonal for a given reviewer, then the reviewer bids on the paper almost never independent of the position the paper is shown.
\begin{lemma}
Under the assumptions of Theorem~\ref{thm:diagonal}, if the reviewer-paper pair $(\revindex, \paperindex)$ is not on the block diagonal of the noiseless community model matrix $\similarity'$ so that $\revindex\in \diagsetp_{\paperindex}^c$ and $\paperindex\in \diagset_{\revindex}^c$, then in the noisy community model matrix $\similarity_{\revindex, \paperindex}< \simscalar/2$. Moreover, paper $\paperindex \in \paperset$ is bid on almost never by reviewer $\revindex \in \reviewerset$ independent of $\ordering_{\revindex}^{\algbase}(\paperindex)$.
\label{lemma:under}
\end{lemma}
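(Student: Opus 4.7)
The plan is to mirror the proof of Lemma~\ref{lemma:exceed} just above, with the off-diagonal case replacing the on-diagonal case. First, I would invoke the definition of the noisy community model from~\eqref{eq:noisy_model}: since $(\revindex,\paperindex)$ is off the block diagonal of $\similarity'$, we have $\similarity'_{\revindex,\paperindex}=0$, and hence $\similarity_{\revindex,\paperindex}=\nu_{\revindex,\paperindex}$, where $\nu_{\revindex,\paperindex}$ is drawn uniformly from $(0,\unifnoise)$. In particular, $\similarity_{\revindex,\paperindex}<\unifnoise$ deterministically.

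Next, I would show $\unifnoise<\simscalar/2$ under the standing assumptions, exactly as in the computation at \eqref{eq:simple_noise_bound}. Using $\hyperparam\ge 0$, $\numblocks\ge 2$, $\blocksize\ge 2$, and $\simscalar\in[0.01,1]$, we obtain
\begin{equation*}
\unifnoise \;\le\; (1+\hyperparam)^{-1} e^{-e\numblocks\blocksize} \;\le\; e^{-4e} \;<\; 0.005 \;\le\; \simscalar/2.
\end{equation*}
Combining this with the deterministic bound from the previous step yields $\similarity_{\revindex,\paperindex}<\simscalar/2$, which is the first conclusion of the lemma.

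Finally, I would plug this into the given bidding function $\bidfunction(\ordering_{\revindex}^{\algbase}(\paperindex),\similarity_{\revindex,\paperindex}) = \mathds{1}\{\ordering_{\revindex}^{\algbase}(\paperindex)=1\}\,\mathds{1}\{\similarity_{\revindex,\paperindex}>\simscalar/2\}$. Since the second indicator is zero regardless of the first, the bid probability for the pair $(\revindex,\paperindex)$ is zero for every position $\ordering_{\revindex}^{\algbase}(\paperindex)\in\paperset$, so reviewer $\revindex$ bids on paper $\paperindex$ almost never, independent of the ordering chosen by any algorithm $\algbase$. There is no real obstacle here; the proof is a direct symmetric analogue of Lemma~\ref{lemma:exceed}, and the only quantitative step is verifying that the noise bound $\unifnoise\le(1+\hyperparam)^{-1}e^{-e\numblocks\blocksize}$ comfortably falls below $\simscalar/2$ for all admissible parameter values.
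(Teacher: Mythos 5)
Your proposal is correct and follows essentially the same route as the paper: off-diagonal entries satisfy $\similarity_{\revindex,\paperindex}=\nu_{\revindex,\paperindex}<\unifnoise$, the bound $\unifnoise\le(1+\hyperparam)^{-1}e^{-e\numblocks\blocksize}\le e^{-4e}<\simscalar/2$ is exactly the computation in~\eqref{eq:simple_noise_bound}, and the indicator structure of the bidding function then gives zero bid probability regardless of position. No gaps.
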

\begin{proof}[Proof of Lemma~\ref{lemma:under}] 
If the reviewer-paper pair $(\revindex, \paperindex)$ is not on the block diagonal of the noiseless community model matrix $\similarity'$, then by definition $\similarity_{\revindex, \paperindex}'=0$ and $\similarity_{\revindex, \paperindex} = \nu_{\revindex, \paperindex}$ where the noise $\nu_{\revindex, \paperindex}$ is drawn uniformly at random from the interval $(0, \unifnoise)$.
Since $\nu_{\revindex, \paperindex} \in (0, \unifnoise)$, we immediately get $\similarity_{\revindex, \paperindex}< \unifnoise$. Then applying~\eqref{eq:simple_noise_bound}, we conclude that $\similarity_{\revindex, \paperindex}< \simscalar/2$. 
Finally, since the probability of reviewer $\revindex$ bidding on paper $\paperindex$ is given by the quantity $\bidfunction(\ordering_{\revindex}^{\algbase}(\paperindex), \similarity_{\revindex, \paperindex}) = \mathds{1}\{\ordering_{\revindex}^{\algbase}(\paperindex)=1\}\mathds{1}\{\similarity_{\revindex, \paperindex} > \simscalar/2\}$, the reviewer bids on the paper almost never independent of the position the paper is shown to the reviewer given from $\ordering_{\revindex}^{\algbase}(\paperindex)$.
\end{proof}
 Observe that Lemmas~\ref{lemma:exceed} and~\ref{lemma:under} imply that each reviewer bids on at most one paper almost surely. Moreover, they can also be combined to determine that any paper on the block diagonal for a reviewer is guaranteed to have a higher similarity score than any paper not on the block diagonal for the reviewer.
\begin{lemma}
Under the assumptions of Theorem~\ref{thm:diagonal}, if the reviewer-paper pair $(\revindex, \paperindex)$ is on the block diagonal of the noiseless community model matrix $\similarity'$ so that $\revindex\in \diagsetp_{\paperindex}$ and $\paperindex\in \diagset_{\revindex}$, and the reviewer-paper pair $(\revindex, \paperindex')$ is not on the block diagonal of the noiseless community model matrix $\similarity'$ so that $\revindex\in \diagsetp_{\paperindex'}^c$ and $\paperindex'\in \diagset_{\revindex}^c$, then in the noisy community model matrix $\similarity_{\revindex, \paperindex}> \similarity_{\revindex, \paperindex'}$.
\label{lemma:compare}
\end{lemma}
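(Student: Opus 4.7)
The proof will be a one-line corollary of the two preceding lemmas, so the plan is simply to chain their conclusions through the pivot value $\simscalar/2$. Specifically, I will apply Lemma~\ref{lemma:exceed} to the on-diagonal pair $(\revindex,\paperindex)$ to obtain $\similarity_{\revindex,\paperindex} > \simscalar/2$ in the noisy community model, and separately apply Lemma~\ref{lemma:under} to the off-diagonal pair $(\revindex,\paperindex')$ to obtain $\similarity_{\revindex,\paperindex'} < \simscalar/2$ in the same noisy matrix. Transitivity then yields $\similarity_{\revindex,\paperindex} > \simscalar/2 > \similarity_{\revindex,\paperindex'}$.

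Both prerequisite lemmas already did the genuine work, namely verifying that the noise bound $\unifnoise \leq (1+\hyperparam)^{-1} e^{-e\numblocks\blocksize}$ is small enough to keep the perturbed similarity scores on the correct side of the threshold $\simscalar/2$ (see the inequality $\unifnoise \leq e^{-4e} < \simscalar/2$ that appears in their proofs, using $\simscalar \geq 0.01$). So there is nothing new to verify about the noise magnitude here. There is likewise no obstacle: the argument does not need the probabilistic structure of $\nu_{\revindex,\paperindex}$ beyond what the two cited lemmas already extracted. The only small thing to be careful about is to make the scope of each lemma explicit — the hypothesis of Lemma~\ref{lemma:exceed} is matched by the condition $\revindex \in \diagsetp_{\paperindex}$, $\paperindex \in \diagset_{\revindex}$, and symmetrically the hypothesis of Lemma~\ref{lemma:under} is matched by $\revindex \in \diagsetp_{\paperindex'}^c$, $\paperindex' \in \diagset_{\revindex}^c$, so both invocations are directly legal.

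I therefore expect the written proof to be two or three sentences, essentially: ``By Lemma~\ref{lemma:exceed}, $\similarity_{\revindex,\paperindex} > \simscalar/2$. By Lemma~\ref{lemma:under}, $\similarity_{\revindex,\paperindex'} < \simscalar/2$. Combining yields $\similarity_{\revindex,\paperindex} > \similarity_{\revindex,\paperindex'}$.'' The main ``obstacle'' is purely organizational — this lemma is a convenience statement used repeatedly in the subsequent suboptimality analyses (for \simbase in particular, where the ordering by similarity automatically places on-diagonal papers ahead of off-diagonal ones), so the value of stating it explicitly is to avoid re-deriving the two-sided bound at each later invocation rather than to prove anything new.
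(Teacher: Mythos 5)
Your proposal is correct and matches the paper exactly: the paper gives no separate proof of Lemma~\ref{lemma:compare}, noting only that Lemmas~\ref{lemma:exceed} and~\ref{lemma:under} ``can also be combined'' to yield it, which is precisely your two-sided bound $\similarity_{\revindex,\paperindex} > \simscalar/2 > \similarity_{\revindex,\paperindex'}$.
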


We now apply the preceding results to show that the expected paper-side gain from a given paper $\paperindex \in \paperset$ only depends on the positions it is shown to reviewers $\revindex \in \reviewerset$ by some algorithm $\algbase$ for which the reviewer-paper pair $(\revindex, \paperindex)$ is on the block diagonal of the similarity matrix. 
The expected paper-side gain for any paper $\paperindex \in \paperset$ simplifies to be
\begin{equation}
\mathbb{E}[\gainfunction(\numbids_{\paperindex})] 
=  \mathbb{E}\Big[\gainfunction\Big(\sum_{\revindex\in \reviewerset}\randombid_{\revindex, \paperindex}\Big)\Big] = \mathbb{E}\Big[\gainfunction\Big(\sum_{\revindex\in \diagsetp_{\paperindex}}\randombid_{\revindex, \paperindex}\Big)\Big] =  \sum_{\ell=0}^{\blocksize}\mathbb{P}\Big(\ell=\sum_{\revindex\in \diagsetp_{\paperindex}}\mathds{1}\{\ordering_{\revindex}^{\algbase}(\paperindex)=1\}\Big)\gainfunction(\ell).
\label{eq:decoupled}
\end{equation}
The above equation follows from Lemma~\ref{lemma:under}, which indicates that the the bid from any reviewer $\revindex \in \diagsetp_{\paperindex}^c$ is zero almost surely independent of the position the paper is shown, and from Lemma~\ref{lemma:exceed}, which guarantees any reviewer $\revindex \in \diagsetp_{\paperindex}$ bids on the paper $\paperindex\in \paperset$ almost surely if $\ordering_{\revindex}^{\algbase}(\paperindex)=1$ and almost never if $\ordering_{\revindex}^{\algbase}(\paperindex)\neq1$. 
As mentioned at the beginning of this section, this decomposition of the expected paper-side gain for the noisy community model in~\eqref{eq:decoupled} is equivalent to that for the noiseless community model given in~\eqref{eq:noiseless_decoupled}.

\subsubsection{Analyzing \super with Zero Heuristic}
\label{sec:diag_super}
In this section, we present a preliminary analysis of \super with zero heuristic for the noisy community model similarity matrix class with the given gain and bidding functions. We begin by characterizing the behavior of \super with zero heuristic. Following deriving the policy, the expected paper-side gain of the algorithm is computed. The analysis of the expected reviewer-side gain is deferred to the sections showing the suboptimality of the baselines with respect to \super with zero heuristic (specifically, see Sections \ref{sec:diag_sim} and \ref{sec:diag_bid}).
However, we do provide intuition in this section for why the expected reviewer-side gain is nearly optimal.

\paragraph*{Intuition and \super with zero heuristic policy.}
The given bidding function is such that only the paper shown in the highest position to a reviewer has a non-zero probability of being bid on. Intuitively Lemma~\ref{lemma:exceed} and Lemma~\ref{lemma:under} suggest that 
to optimize the expected paper-side gain, the algorithm should seek to show a paper on the block diagonal for the reviewer in the highest position. Moreover, since the given paper-side gain function exhibits diminishing returns in the number of bids, showing the paper on the block diagonal with fewest bids maximizes the immediate expected paper-side gain. 

The given reviewer-side gain function is decreasing in the position a paper is shown and increasing in the similarity score of the paper. This indicates that to maximize the immediate expected reviewer-side gain, papers should be shown in a decreasing order of the similarity scores to the reviewer. For the given noisy community model similarity class, the similarity scores of papers on the block diagonal for a given reviewer are significantly higher than the similarity scores of papers off the block diagonal for the given reviewer as was formalized in Lemma~\ref{lemma:compare}. Furthermore, the noise is bounded in a small interval. Consequently, the similarity scores for papers on the block diagonal for a reviewer are nearly identical and the similarity scores for papers off the block diagonal for a reviewer are also nearly identical. This suggests that as long as papers on the block diagonal are shown ahead of papers off the block diagonal for a reviewer, then the expected reviewer-side gain from a given reviewer should be close to the maximum that can be obtained. 

The high-level view of the objective the algorithm is optimizing indicates that the immediate expected paper-side gain can be maximized with minimal cost to the immediate expected reviewer-side gain. This can be achieved by showing the paper on the block diagonal for the reviewer with the minimum number of bids in the highest position and the remaining papers in a decreasing order of the similarity scores. The following lemma formalizes the intuition that has been given. 

\begin{lemma}
Under the assumptions of Theorem~\ref{thm:diagonal},
when reviewer $\revindex\in \reviewerset$ arrives, if there is a paper in $\diagset_{\revindex}$ with zero bids
and each paper in $\diagset_{\revindex}$ has at most one bid, then \super with zero heuristic shows the reviewer the paper with the maximum similarity score among the papers without a bid in $\diagset_{\revindex}$ at the highest position followed by the remaining papers in a decreasing order of the similarity scores.
\label{lemma:diag_super_policy}
\end{lemma}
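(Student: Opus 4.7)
The plan is to invoke Corollary~\ref{cor:local}, which guarantees that \super with zero heuristic chooses the ordering $\ordering_\revindex$ maximizing the immediate expected gain conditioned on the history, and to show that the ordering described in the lemma is the unique maximizer under the given hypotheses. From the proof of Corollary~\ref{cor:local}, this objective can be written as $F(\ordering_\revindex) = \sum_{\paperindex\in\paperset} \bidfunction(\ordering_\revindex(\paperindex),\similarity_{\revindex,\paperindex})(\gainfunction(\numbids_{\revindex-1,\paperindex}+1)-\gainfunction(\numbids_{\revindex-1,\paperindex})) + \hyperparam\sum_{\paperindex\in\paperset}\gainfunctionrev(\ordering_\revindex(\paperindex),\similarity_{\revindex,\paperindex})$. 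Using the indicator form of $\bidfunction$ together with Lemmas~\ref{lemma:exceed} and~\ref{lemma:under}, the paper-side sum collapses to a single term: only the paper placed at position $1$ can contribute, and only when it lies in $\diagset_\revindex$. Hence the objective decouples into a ``position-$1$ reward'' that depends only on which paper occupies the top slot, plus the reviewer-side sum $\hyperparam\sum_\paperindex (2^{\similarity_{\revindex,\paperindex}}-1)/\log_2(\ordering_\revindex(\paperindex)+1)$.

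Next I would argue that the top slot must go to a zero-bid paper from $\diagset_\revindex$. Under the hypothesis such a paper exists and yields the maximal marginal $\gainfunction(1)-\gainfunction(0)=1$, whereas any alternative yields at most $\gainfunction(2)-\gainfunction(1)=\sqrt{2}-1$ (a one-bid paper in $\diagset_\revindex$, using the at-most-one-bid hypothesis) or $0$ (anything in $\diagset_\revindex^c$). Thus deviating from a zero-bid paper in $\diagset_\revindex$ at position $1$ forfeits at least $2-\sqrt{2}$ in the position-$1$ reward. The main obstacle is to rule out that this loss is offset by a reviewer-side swing in the opposite direction. Here I would exploit the smallness of $\unifnoise\leq(1+\hyperparam)^{-1}e^{-e\numblocks\blocksize}$: within either block class, similarity scores lie in an interval of length at most $\unifnoise$, so by the mean value theorem $|(2^{\similarity_{\revindex,\paperindex_1}}-1)-(2^{\similarity_{\revindex,\paperindex_2}}-1)|\leq 2\unifnoise\ln 2$. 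Summing positional discrepancies crudely across all $\numpapers=\numblocks\blocksize$ positions bounds the total reviewer-side swing by $O(\hyperparam\numblocks\blocksize\unifnoise)\leq 2\numblocks\blocksize e^{-e\numblocks\blocksize}$, which is negligible compared to $2-\sqrt{2}$ for every $\numblocks,\blocksize\geq 2$. Consequently the paper-side advantage dominates, and any optimizer must place a zero-bid paper from $\diagset_\revindex$ at position $1$.

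Having pinned down the top slot, I would finish with two classical rearrangement steps. First, for any fixed position-$1$ paper, the reviewer-side sum across positions $2,\ldots,\numpapers$ is maximized by pairing the largest values of $2^{\similarity_{\revindex,\paperindex}}-1$ with the smallest positional weights $1/\log_2(k+1)$, i.e.\ by sorting the remaining papers in decreasing similarity (rearrangement inequality, since $1/\log_2(k+1)$ is strictly decreasing). Second, to single out $\paperindex^*$ among competing zero-bid candidates I would use a one-swap argument: if some other zero-bid paper $\paperindex_1\in\diagset_\revindex$ sat at position $1$ with $\paperindex^*$ at position $k\geq 2$, swapping them preserves the position-$1$ reward (both marginals equal $1$) while strictly increasing the reviewer-side sum by $\hyperparam((2^{\similarity_{\revindex,\paperindex^*}}-1)-(2^{\similarity_{\revindex,\paperindex_1}}-1))(1-1/\log_2(k+1))>0$, since $\similarity_{\revindex,\paperindex^*}>\similarity_{\revindex,\paperindex_1}$ by construction and $1/\log_2(k+1)<1$ for $k\geq 2$. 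Chaining these steps identifies the unique maximizer of $F$ as exactly the ordering claimed.
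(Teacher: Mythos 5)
Your proposal is correct and follows essentially the same route as the paper's own proof: both reduce to the one-step objective from Corollary~\ref{cor:local}, use the indicator bidding function (via Lemmas~\ref{lemma:exceed} and~\ref{lemma:under}) to collapse the paper-side term to the top position, and show that the concavity gap $(\gainfunction(1)-\gainfunction(0))-(\gainfunction(2)-\gainfunction(1))=2-\sqrt{2}$ dominates a reviewer-side slack of order $\blocksize e^{-e\numblocks\blocksize}$ (your mean-value-theorem estimate is the inline analogue of the paper's Lemma~\ref{lemma:bound_sim_sum}), finishing with a sort-by-similarity/rearrangement step. The only difference is organizational—the paper splits into cases on whether the maximum-similarity paper already has a bid and compares the best orderings in two groups, while you pin down the top slot by dominance and then swap—so it is substantively the same argument.
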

The proof of Lemma~\ref{lemma:diag_super_policy} is provided in Section~\ref{sec:diag_super_policy}. 
It turns out that the conditions
of Lemma~\ref{lemma:diag_super_policy}, namely the existence of a paper in $\diagset_{\revindex}$ with zero bids and each paper in $\diagset_{\revindex}$ having at most one bid upon the arrival of reviewer $\revindex \in \reviewerset$, are met using \super with zero heuristic almost surely. 
We now formally characterize this statement and then compute the expected paper-side gain of the algorithm. 

\paragraph*{Computing the expected paper-side gain.}
Consider a group of $\blocksize$ reviewers denoted by $\mathcal{R}$ for which $\diagset_{\revindex}=\diagset_{\revindex'}$ for every pair of reviewers $\revindex, \revindex' \in \mathcal{R}$, meaning that the papers on the block diagonal for the reviewers are equivalent. Observe that from the structure of the noisy community model, for every reviewer $\revindex\in \mathcal{R}$, it also holds that $\diagset_{\revindex} \subset \diagset_{\revindex''}^c$ for all $\revindex''\in \mathcal{R}^c$, meaning that the papers on the block diagonal for each reviewer in $\mathcal{R}$ are off the block diagonal for all reviewers in $\mathcal{R}^c$. Moreover, there are $\numblocks$ such blocks of reviewers analogous to the given group $\mathcal{R}$. 

Upon the initial arrival of a reviewer $\revindex$ from $\mathcal{R}$, from Lemma~\ref{lemma:under} each paper in $\diagset_{\revindex}$ has zero bids almost surely since they are off the block diagonal for all reviewers that arrived previously. From Lemma~\ref{lemma:diag_super_policy}, \super with zero heuristic shows this reviewer the paper in $\diagset_{\revindex}$ with the maximum similarity score in the highest position of the paper ordering. Lemma~\ref{lemma:exceed} guarantees that this paper is bid on by the reviewer almost surely and the rest of the papers in $\diagset_{\revindex}$ are bid on almost never by the reviewer. 

We now consider the next arrival of a reviewer $\revindex'$ from $\mathcal{R}$ and note that $\diagset_{\revindex'}=\diagset_{\revindex}$ by definition of this set of reviewers. Between the arrivals of reviewers $\revindex$ and $\revindex'$, none of the papers in $\diagset_{\revindex'}$ obtain any more bids almost surely since again from Lemma~\ref{lemma:under} any paper that is off the block diagonal for a reviewer is bid on almost never independent of the position the paper is shown. This means each paper in $\diagset_{\revindex'}$ has zero bids almost surely except for the paper that has a bid from reviewer $\revindex$ almost surely. Accordingly, we apply Lemma~\ref{lemma:diag_super_policy} to determine that \super with zero heuristic shows this reviewer the paper with the maximum similarity score among the papers without a bid within $\diagset_{\revindex'}$ in the highest position of the paper ordering. Then, Lemma~\ref{lemma:exceed} guarantees that this paper is bid on by the reviewer almost surely and the rest of the papers in $\diagset_{\revindex'}$ are bid on almost never by the reviewer.

Repeatedly applying this argument, upon the final arrival of a reviewer $\revindex''$ from $\mathcal{R}$, each paper in $\diagset_{\revindex''}$ has exactly one bid except for one paper that remains without a bid almost surely. We note again that by definition of this set of reviewers $\diagset_{\revindex''}=\diagset_{\revindex}$. From Lemma~\ref{lemma:diag_super_policy}, \super with zero heuristic shows the final paper without a bid within $\diagset_{\revindex''}$ in the highest position of the paper ordering and Lemma~\ref{lemma:exceed} ensures that this paper is bid on by the reviewer almost surely and the rest of the papers in $\diagset_{\revindex'}$ are bid on almost never by the reviewer. 

Following the arrival of reviewer $\revindex''$, the papers in $\diagset_{\revindex}$ never appear on the block diagonal for a reviewer again and finish with exactly one bid almost surely after each being shown in the highest position of the paper ordering exactly once to some reviewer for which they are on the block diagonal almost surely. The line of reasoning applied to the group of reviewers $\mathcal{R}$ can be duplicated for each of the $\numblocks$ blocks of reviewers which share papers on the block diagonal. In doing so, it immediately follows from the decomposition in~\eqref{eq:decoupled} that the expected paper-side gain of \super with zero heuristic for every $\numblocks\geq2, \blocksize\geq 2$, and $\hyperparam\geq 0$, is given by
\begin{equation}
\mathbb{E}[\gain_p^{\super}] =
\sum_{\paperindex\in \paperset}\mathbb{E}[\gainfunction(\numbids_{\paperindex})]  = \sum_{\paperindex\in \paperset}\gainfunction(1) = \numblocks\blocksize.
\label{eq:diag_super_pgain}
\end{equation}
Identically as in the derivation of the optimal expected paper-side gain for the noiseless community model given in Section~\ref{sec:noiseless_optimal}, this is the optimal expected paper-side gain that can be obtained since each reviewer bids on at most one paper almost surely and the given paper-side gain function is strictly concave so evenly distributing the bids over the papers maximizes the expected paper-side gain.

\paragraph*{Properties of \super with zero heuristic for the noisy community model similarity matrix.}
Since the conditions of Lemma~\ref{lemma:diag_super_policy} are satisfied for each reviewer almost surely, \super with zero heuristic shows each reviewer $\revindex\in \reviewerset$ the paper with the maximum similarity score among the papers without a bid in $\diagset_{\revindex}$ followed by the remaining papers in a decreasing order of the similarity scores. This fact leads to several properties of the algorithm for this similarity matrix class. 
From Lemma~\ref{lemma:compare}, $\similarity_{\revindex, \paperindex}>\similarity_{\revindex, \paperindex'}$ for $\paperindex\in \diagset_{\revindex}, \paperindex'\in \diagset_{\revindex}^c$. This means the paper with the maximum similarity score among the papers without a bid in $\diagset_{\revindex}$ is equivalently the paper with the maximum similarity score among the papers without a bid. This results in the following property of the algorithm.
\begin{property}
\super with zero heuristic presents the paper with the maximum similarity score among the papers without a bid in the highest position of the paper ordering and the remaining papers in a decreasing order of the similarity scores to each reviewer $\revindex\in \reviewerset$ almost surely. 
\label{property1}
\end{property}
Similarly, using Lemma~\ref{lemma:compare}, we can determine that the algorithm shows each paper that is on the block diagonal of the similarity matrix for the reviewer ahead of each paper off the block diagonal.
\begin{property}
\super with zero heuristic shows every paper in $\diagset_{\revindex}$ ahead of every paper in $\diagset_{\revindex}^c$ to each reviewer $\revindex\in \reviewerset$ almost surely. 
\label{property2}
\end{property}
The final property that again follows from Lemma~\ref{lemma:compare} is that the algorithm shows the papers off the block diagonal in a decreasing order of the similarity scores.
\begin{property}
\super with zero heuristic shows papers among $\diagset_{\revindex}^c$ in a decreasing order of the similarity scores to each reviewer $\revindex\in \reviewerset$ almost surely. 
\label{property3}
\end{property}

The properties of \super with zero heuristic provided are going to assist the comparison of the expected reviewer-side gain with that from the baselines and the optimal policy. As discussed previously, intuitively Property~\ref{property2} should guarantee that the algorithm obtains near-optimal expected reviewer-side gain since the noise in the similarity scores is bounded in a small interval and the similarity scores of papers on the block diagonal are much higher than that for papers off the block diagonal for a reviewer. 

\subsubsection{Suboptimality of \simbase} 
\label{sec:diag_sim}

In this section, we analyze \simbase for the noisy community model similarity matrix class with the given gain and bidding functions. 

\paragraph*{Intuition and \simbase policy.}
The \simbase algorithm presents papers in a decreasing order of the similarity scores. This approach maximizes the expected reviewer-side gain since the given reviewer-side gain function is decreasing in the position a paper is shown and increasing in the similarity score of the paper. However, for the noisy community model similarity matrix class, the algorithm is suboptimal for the combined objective since the expected paper-side gain is far from optimal. As shown in the analysis of \super with zero heuristic, to maximize the expected paper-side gain, each paper should only be shown in the highest position of the paper ordering to a reviewer once almost surely. 
Moreover, this must be when the paper is on the block diagonal for a reviewer so that it obtains a bid almost surely. The problem with \simbase for this similarity matrix class is that it is oblivious to the number of bids on papers. As a result, the algorithm may show a paper in the highest position of the paper ordering to a reviewer that has only marginally higher similarity score, but many more bids, than another option. While this may result in a scarce amount  more gain from the reviewer-side objective, we show it is costly in terms of the paper-side objective. We formalize this by showing \simbase is significantly suboptimal for the expected paper-side gain, and that it only achieves a marginal amount more expected reviewer-side gain than \super with zero heuristic.

\paragraph*{Bounding the expected paper-side gain.}
Recall from~\eqref{eq:decoupled} that the expected paper-side gain from any paper $\paperindex\in \paperset$ is given by
\begin{equation}
\mathbb{E}[\gainfunction(\numbids_{\paperindex})] 
=  \sum_{\ell=0}^{\blocksize}\mathbb{P}\Big(\ell=\sum_{\revindex\in \diagsetp_{\paperindex}}\mathds{1}\{\ordering_{\revindex}^{\simbase}(\paperindex)=1\}\Big)\gainfunction(\ell).
\label{eq:sim_diag_pgain}
\end{equation}
To bound this quantity for a given paper, we need to characterize the distribution of the number of times the paper is shown in the highest position of the paper ordering to reviewers for which it is on the block diagonal. 

Toward this goal, let us consider any reviewer $\revindex\in \reviewerset$ and the probability of each paper being shown in the highest position of the paper ordering for the reviewer. 
For the given reviewer, the set of papers on the block diagonal is given by $\diagset_{\revindex}$ and this set has cardinality $\blocksize$. Moreover, from Lemma~\ref{lemma:compare}, $\similarity_{\revindex, \paperindex}>\similarity_{\revindex, \paperindex'}$ for $\paperindex\in \diagset_{\revindex}, \paperindex'\in \diagset_{\revindex}^c$. 
This result says the similarity score of any paper on the block diagonal for the reviewer is greater than the similarity score of any paper off the block diagonal for the reviewer. 

The \simbase algorithm shows papers in a decreasing order of the similarity scores, which combined with Lemma~\ref{lemma:compare}, guarantees that the probability of any paper $\paperindex \in \diagset_{\revindex}^c$ being shown in the highest position of the paper ordering is zero. For any paper $\paperindex \in \diagset_{\revindex}$, the similarity score is given by $\similarity_{\revindex, \paperindex} = \simscalar-\nu_{\revindex, \paperindex}$. The noise $\nu_{\revindex, \paperindex}$ for each reviewer-paper pair $(\revindex, \paperindex)$ is drawn independently and uniformly at random from a bounded interval. This implies that the probability of any paper $\paperindex \in \diagset_{\revindex}$ being shown in the highest position to the reviewer is $1/\blocksize$ since there are $\blocksize$ papers in the set.

Recall that $\diagsetp_{\paperindex}$ for any paper $\paperindex \in \paperset$ denotes the reviewers $\revindex \in \reviewerset$ for which the reviewer-paper pair $(\revindex, \paperindex)$ is on the block diagonal of the similarity matrix. From the preceding reasoning, the probability of paper $\paperindex\in \paperset$ being shown in the highest position to each reviewer $\revindex\in \diagsetp_{\paperindex}$ is $1/\blocksize$. Consequently, the number of times paper $\paperindex\in \paperset$ is shown in the highest position to reviewers in the set $\diagsetp_{\paperindex}$ follows a Binomial distribution with $\blocksize$ trials since the cardinality of $\diagsetp_{\paperindex}$ is $\blocksize$ and a success probability of $1/\blocksize$. This means the expected paper-side gain from any paper $\paperindex\in \paperset$ given in~\eqref{eq:sim_diag_pgain} for \simbase, is equivalently expressed as 
\begin{equation}
\mathbb{E}[\gainfunction(\numbids_{\paperindex})]  = \sum_{\ell=0}^{\blocksize}{\blocksize \choose \ell}\Big(\frac{1}{\blocksize}\Big)^{\ell}\Big(1-\frac{1}{\blocksize}\Big)^{\blocksize-\ell}\gainfunction(\ell).
\label{eq:sim_diag_pgain1}
\end{equation}
To bound~\eqref{eq:sim_diag_pgain1}, we can directly apply Lemma~\ref{lemma:binom}, which bounds the expectation of the square root of a binomial random variable, 
and obtain 
\begin{equation}
\mathbb{E}[\gainfunction(\numbids_{\paperindex})]  = \sum_{\ell=0}^{\blocksize}{\blocksize \choose \ell}\Big(\frac{1}{\blocksize}\Big)^{\ell}\Big(1-\frac{1}{\blocksize}\Big)^{\blocksize-\ell}\gainfunction(\ell) \leq \Big(1-\frac{1}{\blocksize}\Big)^{\blocksize-1}\Big(1-\frac{\sqrt{2}}{2}\Big) +\frac{\sqrt{2}}{2}.
\label{eq:sim_diag_pgain2}
\end{equation}
The bound in~\eqref{eq:sim_diag_pgain2} is decreasing in $\blocksize$ for $\blocksize\geq 2$. This means for every $\blocksize\geq 2$ and any paper $\paperindex\in \paperset$, 
\begin{equation*} 
\mathbb{E}[\gainfunction(\numbids_{\paperindex})] \leq \frac{2+\sqrt{2}}{4}.
\end{equation*}
To get the expected paper-side gain of the algorithm, we sum this bound over the number of papers and obtain
\begin{equation}
\mathbb{E}[\gain_p^{\simbase}] =
 \sum_{\paperindex\in\paperset}\mathbb{E}[\gainfunction(\numbids_{\paperindex})]  \leq \Big(\frac{2+\sqrt{2}}{4}\Big)\numblocks\blocksize.
 \label{eq:sim_diag_pgain3}
\end{equation}
Combining~\eqref{eq:sim_diag_pgain3} with the expected paper-side gain of \super with zero heuristic from~\eqref{eq:diag_super_pgain}, this implies for every $\numblocks\geq2, \blocksize\geq 2$, and $\hyperparam\geq 0$, 
\begin{equation}
\mathbb{E}[\gain_p^{\super} - \gain_p^{\simbase}] \geq \numblocks\blocksize - \Big(\frac{2+\sqrt{2}}{4}\Big)\numblocks\blocksize.
\label{eq:sim_diag_pgain4}
\end{equation}

\paragraph*{Bounding the expected reviewer-side gain.}
We now turn our attention to comparing the expected reviewer-side gain of \super with zero heuristic and \simbase. We need to bound 
\begin{equation}
\hyperparam\mathbb{E}[\gain_r^{\super} - \gain_r^{\simbase}] = \hyperparam\mathbb{E}\Big[\sum_{\revindex\in \reviewerset}\sum_{\paperindex\in \paperset}(\gainfunctionrev(\ordering_{\revindex}^{\super}(\paperindex), \similarity_{\revindex, \paperindex})-\gainfunctionrev(\ordering_{\revindex}^{\simbase}(\paperindex), \similarity_{\revindex, \paperindex}))\Big].
\label{eq:diag_rgain_sim}
\end{equation}
Toward doing so, recall Property~\ref{property2}, which says \super with zero heuristic shows every paper in $\diagset_{\revindex}$ ahead of every paper in $\diagset_{\revindex}^c$ to each reviewer $\revindex\in \reviewerset$ almost surely. Moreover, Property~\ref{property3} says the papers among $\diagset_{\revindex}^c$ are shown in decreasing order of the similarity scores to each reviewer $\revindex \in \reviewerset$ almost surely. In comparison, \simbase shows papers in decreasing order of the similarity scores to each reviewer $\revindex \in \reviewerset$. From Lemma~\ref{lemma:compare}, the similarity scores of papers in $\diagset_{\revindex}$ are greater than the similarity scores of papers in $\diagset_{\revindex}^c$ for each reviewer $\revindex\in \reviewerset$. Combining this fact with the policy of \super with zero heuristic and \simbase, we can see that the algorithms show the papers among $\diagset_{\revindex}^c$ in identical positions almost surely. This means the reviewer-side gain from this set of papers is equivalent for each of the algorithms almost surely.

Since the noise is bounded in a small interval, we expect that the ordering among papers in $\diagset_{\revindex}$ would not impact the expected reviewer-side gain significantly as long as they are shown before the papers in $\diagset_{\revindex}^c$. The following result formalizes this intuition and provides a bound.
\begin{lemma}
Let $\ordering_{\revindex}^{\ell}$ denote the paper ordering that presents papers in decreasing order of the similarity scores. Moreover, denote by $\ordering_{\revindex}^{\ell'}$ any paper ordering that shows each paper in $\diagset_{\revindex}$ ahead of each paper in $\diagset_{\revindex}^c$ and papers among $\diagset_{\revindex}^c$ in a decreasing order of the similarity scores. Then, under the assumptions of Theorem~\ref{thm:diagonal}, the following bound holds for every $\numblocks\geq 2, \blocksize\geq 2$, and $\hyperparam\geq 0$:
\begin{equation*}
\setlength\belowdisplayskip{-0pt}
\hyperparam\sum_{\paperindex\in \paperset}\big(\gainfunctionrev(\ordering_{\revindex}^{\ell'}(\paperindex), \similarity_{\revindex, \paperindex})-\gainfunctionrev(\ordering_{\revindex}^{\ell}(\paperindex), \similarity_{\revindex, \paperindex})\big)\geq -\blocksize e^{-e\numblocks\blocksize}\log(4).
\end{equation*}
\label{lemma:bound_sim_sum}
\end{lemma}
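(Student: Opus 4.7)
The plan is to exploit a near-cancellation between the two orderings that is forced by the noisy community structure. First I would observe that Lemma~\ref{lemma:compare} guarantees every paper in $\diagset_{\revindex}$ has strictly larger similarity than every paper in $\diagset_{\revindex}^c$, so the similarity-sorted ordering $\ordering_{\revindex}^{\ell}$ necessarily places $\diagset_{\revindex}$ in positions $\{1,\ldots,\blocksize\}$ and $\diagset_{\revindex}^c$ in positions $\{\blocksize+1,\ldots,\numblocks\blocksize\}$. By construction, $\ordering_{\revindex}^{\ell'}$ does the same. Moreover, since the noise terms $\nu_{\revindex,\paperindex}$ are drawn from a continuous distribution, the similarities within $\diagset_{\revindex}^c$ are distinct almost surely, and both orderings sort them decreasingly — so $\ordering_{\revindex}^{\ell'}(\paperindex)=\ordering_{\revindex}^{\ell}(\paperindex)$ for every $\paperindex\in\diagset_{\revindex}^c$ almost surely, and those contributions to the sum vanish.

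Next, for $\paperindex\in\diagset_{\revindex}$, I would write $\gainfunctionrev(\ordering_{\revindex}(\paperindex),\similarity_{\revindex,\paperindex})=(2^{\similarity_{\revindex,\paperindex}}-1)\gainfunctionrev^{\ordering}(\ordering_{\revindex}(\paperindex))$ and decompose
\[
2^{\similarity_{\revindex,\paperindex}}-1 \;=\; (2^{\simscalar}-1) \;-\; (2^{\simscalar}-2^{\similarity_{\revindex,\paperindex}}).
\]
The ``constant'' piece $(2^{\simscalar}-1)$ contributes
\[
(2^{\simscalar}-1)\sum_{\paperindex\in\diagset_{\revindex}}\bigl(\gainfunctionrev^{\ordering}(\ordering_{\revindex}^{\ell'}(\paperindex))-\gainfunctionrev^{\ordering}(\ordering_{\revindex}^{\ell}(\paperindex))\bigr)=0,
\]
because both orderings send $\diagset_{\revindex}$ onto the same position set $\{1,\ldots,\blocksize\}$ (merely permuted), so the sums of $1/\log_2(k+1)$ over the two position multisets are identical.

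Only the perturbation remains. I would bound it termwise: since $\similarity_{\revindex,\paperindex}=\simscalar-\nu_{\revindex,\paperindex}$ with $\nu_{\revindex,\paperindex}\in(0,\unifnoise)$,
\[
0 \;\le\; 2^{\simscalar}-2^{\similarity_{\revindex,\paperindex}} \;\le\; 2^{\simscalar}\bigl(1-2^{-\unifnoise}\bigr) \;\le\; 2^{\simscalar}\unifnoise\ln 2 \;\le\; 2\unifnoise\ln 2,
\]
using $1-e^{-x}\le x$ and $\simscalar\le 1$. Also, for any two positions in $\{1,\ldots,\blocksize\}$ the values of $\gainfunctionrev^{\ordering}$ lie in $[1/\log_2(\blocksize+1),1]$, so their difference is at most $1$ in absolute value. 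Summing $\blocksize$ such terms gives
\[
\Bigl|\sum_{\paperindex\in\diagset_{\revindex}}(2^{\simscalar}-2^{\similarity_{\revindex,\paperindex}})\bigl(\gainfunctionrev^{\ordering}(\ordering_{\revindex}^{\ell'}(\paperindex))-\gainfunctionrev^{\ordering}(\ordering_{\revindex}^{\ell}(\paperindex))\bigr)\Bigr| \;\le\; 2\blocksize\unifnoise\ln 2 \;=\; \blocksize\unifnoise\ln 4.
\]

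Finally I would multiply by $\hyperparam$ and invoke the hypothesis $\unifnoise\le(1+\hyperparam)^{-1}e^{-e\numblocks\blocksize}$, which implies $\hyperparam\unifnoise\le e^{-e\numblocks\blocksize}$, yielding the claimed lower bound $-\blocksize e^{-e\numblocks\blocksize}\log 4$. There is no real obstacle here: the only delicate point is recognizing the exact position-set cancellation (which requires Lemma~\ref{lemma:compare} and the almost-sure distinctness of the noise-perturbed off-diagonal similarities), after which the estimate is a one-line application of the convex inequality $1-2^{-x}\le x\ln 2$ combined with the assumed noise budget.
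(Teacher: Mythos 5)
Your proof is correct and follows essentially the same route as the paper's: both arguments cancel the contributions of $\diagset_{\revindex}^c$ (shown in identical positions by the two orderings), exploit that $\ordering_{\revindex}^{\ell}$ and $\ordering_{\revindex}^{\ell'}$ place $\diagset_{\revindex}$ on the same position set $\{1,\ldots,\blocksize\}$, and charge the residual to the noise budget $\unifnoise\le(1+\hyperparam)^{-1}e^{-e\numblocks\blocksize}$. The only difference is bookkeeping in the last step: you isolate the perturbation via the split $(2^{\simscalar}-1)-(2^{\simscalar}-2^{\similarity_{\revindex,\paperindex}})$ together with $1-2^{-x}\le x\ln 2$ and $\hyperparam/(1+\hyperparam)\le 1$, whereas the paper bounds $2^{\similarity_{\revindex,\paperindex}}-1$ two-sidedly and then evaluates $\hyperparam\blocksize(2^{\simscalar-\unifnoise}-2^{\simscalar})$ by monotonicity in $\hyperparam$ and $\simscalar$ and a limit as $\hyperparam\to\infty$ --- both give the same constant $\blocksize e^{-e\numblocks\blocksize}\log(4)$ (with $\log$ the natural logarithm).
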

\noindent
The proof of Lemma~\ref{lemma:bound_sim_sum} is provided in Section~\ref{sec:bound_sim_sum}.

The result of Lemma~\ref{lemma:bound_sim_sum} immediately applies to~\eqref{eq:diag_rgain_sim} for each reviewer conditioned on the almost sure events given the characteristics of \super with zero heuristic and \simbase mentioned earlier and since it holds for any realization of the noise in the similarity scores. Combining~\eqref{eq:diag_rgain_sim} with Lemma~\ref{lemma:bound_sim_sum} and noting that there are $\numrev=\numblocks\blocksize$ reviewers, we obtain for every $\numblocks\geq 2, \blocksize\geq 2$, and $\hyperparam\geq 0$,
\begin{equation}
\hyperparam\mathbb{E}[\gain_r^{\super} - \gain_r^{\simbase}] = \hyperparam\mathbb{E}\Big[\sum_{\revindex\in \reviewerset}\sum_{\paperindex\in \paperset}(\gainfunctionrev(\ordering_{\revindex}^{\super}(\paperindex), \similarity_{\revindex, \paperindex})-\gainfunctionrev(\ordering_{\revindex}^{\simbase}(\paperindex), \similarity_{\revindex, \paperindex}))\Big] \geq -\numblocks\blocksize^2e^{-e\numblocks\blocksize}\log(4).
\label{eq:diag_rgain_sim2}
\end{equation}
Finally, see that for every $\numblocks\geq 2$ and $\blocksize\geq 2$,
\begin{equation}
-\numblocks\blocksize^2e^{-e\numblocks\blocksize}\log(4)\geq -8e^{-4e}\log(4)\geq -0.0001
\label{eq:diag_rgain_sim3}
\end{equation}
since $-\numblocks\blocksize^2e^{-e\numblocks\blocksize}\log(4)$ is negative and increasing as a function of $\numblocks$ and $\blocksize$ for $\numblocks\geq 2$ and $\blocksize\geq 2$. From~\eqref{eq:diag_rgain_sim2} and~\eqref{eq:diag_rgain_sim3}, we get that for every $\numblocks\geq 2, \blocksize\geq 2$, and $\hyperparam\geq 0$, 
\begin{equation}
\hyperparam\mathbb{E}[\gain_r^{\super} - \gain_r^{\simbase}]  \geq -0.0001.
\label{eq:supersimrevbound}
\end{equation}

\paragraph*{Completing the bound.}
Combining the bounds on the expected paper-side and reviewer-side gain between \super with zero heuristic and \simbase given in~\eqref{eq:sim_diag_pgain4} and~\eqref{eq:supersimrevbound}, we get that for every $\numblocks\geq2, \blocksize\geq 2$, and $\hyperparam\geq 0$,
\begin{align*}
\mathbb{E}[\gain^{\super}-\gain^{\simbase}] &= \mathbb{E}[\gain_p^{\super}-\gain_p^{\simbase}] + \hyperparam\mathbb{E}[\gain_r^{\super}-\gain_r^{\simbase}] \\
&\geq \numblocks\blocksize - \Big(\frac{2+\sqrt{2}}{4}\Big)\numblocks\blocksize-0.0001 \\
&\geq \numblocks\blocksize/10.
\end{align*}
We conclude that there is a constant $\constant >0$ such that for every $\numblocks\geq2, \blocksize\geq 2$, and $\hyperparam\geq 0$, \super with zero heuristic obtains an additive factor of at least $\constant\numblocks\blocksize$ more expected gain than \simbase in the noisy community model.

\subsubsection{Suboptimality of \bidbase} 
\label{sec:diag_bid}
We now analyze \bidbase for the noisy community model with the given gain and bidding functions. 
We remark that much of the analysis in this section follows very similarly to that for \bidbase given in Section~\ref{sec:noiseless_bid} from the proof of Theorem~\ref{prop:diagonal} regarding the noiseless community model since the reviewer bidding behavior is identical in the noisy community model as characterized by Lemmas~\ref{lemma:exceed},~\ref{lemma:under}, and~\ref{lemma:compare}. The primary adjustments are in the analysis of the expected reviewer-side gain with respect to \super with zero heuristic.

\paragraph*{Bounding the expected paper-side gain.}
For any given reviewer $\revindex \in \reviewerset$, the $\blocksize$ papers in $\diagset_{\revindex}$ are each in $\diagset_{\revindex'}$ for $\blocksize-1$ other reviewers $\revindex' \in \reviewerset$ and also in $\diagset_{\revindex''}^c$ for each of the remaining reviewers $\revindex''\in \reviewerset$.
For any reviewer $\revindex\in \reviewerset$, any paper $\paperindex \in \diagset_{\revindex}^c$ is bid on almost never from Lemma~\ref{lemma:under} and any paper $\paperindex \in \diagset_{\revindex}$ is only bid on with non-zero probability if shown in the highest position to the reviewer from Lemma~\ref{lemma:exceed}.
This means that upon the arrival of each reviewer $\revindex\in \reviewerset$, there is a paper in $\diagset_{\revindex}$ with zero bids that has not been shown in the highest position to any reviewer previously almost surely. 
Furthermore, from Lemma~\ref{lemma:compare}, the similarity score of any paper in $\diagset_{\revindex}$ is greater than the similarity score of any paper in $\diagset_{\revindex}^c$ for each reviewer $\revindex  \in \reviewerset$. Therefore, \bidbase shows a paper in $\diagset_{\revindex}$ with zero bids in the highest position to each reviewer $\revindex \in \reviewerset$ almost surely since papers are shown in increasing order of the number of bids and ties are broken in favor of the paper with the higher similarity score. Lemma~\ref{lemma:exceed} guarantees that if a paper in $\diagset_{\revindex}$ is shown in the highest position of the paper ordering to reviewer $\revindex \in \reviewerset$, then it is bid on by the reviewer almost surely.
Consequently, each paper $\paperindex \in \paperset$ is shown exactly once almost surely in the highest position to some reviewer to some reviewer $\revindex \in \diagsetp_{\paperindex}$. It then follows from the decomposition in~\eqref{eq:decoupled} that the expected paper-side gain of \bidbase for every $\numblocks\geq2, \blocksize\geq 2$, and $\hyperparam\geq 0$, is given by
\begin{equation*}
\mathbb{E}[\gain_p^{\bidbase}] =
\sum_{\paperindex\in \paperset}\mathbb{E}[\gainfunction(\numbids_{\paperindex})] 
 = \sum_{\paperindex\in \paperset}\gainfunction(1) = \numblocks\blocksize.
\end{equation*}

From the expected paper-side gain of \super with zero heuristic given in~\eqref{eq:diag_super_pgain}, we conclude that for every $\numblocks\geq2, \blocksize\geq2,$ and $\hyperparam\geq 0$,
\begin{equation}
\mathbb{E}[\gain_p^{\super}] - \mathbb{E}[\gain_p^{\bidbase}] = 0.
\label{eq:diag_bid_pgain}
\end{equation}

Before moving on to the expected reviewer-side gain, we point out that \super with zero heuristic and \bidbase show the same paper in the highest position to each reviewer almost surely as direct result of Lemma~\ref{lemma:diag_super_policy}, the definition of the \bidbase policy, and the derivations of the expected paper-side gain for each algorithm. 
\begin{property}
\super with zero heuristic and \bidbase show the same paper in the highest position of the paper ordering to each reviewer almost surely. 
\label{property1:bid}
\end{property}
As we now show, the suboptimality of \bidbase stems from the fact that after the paper shown in the highest position, the remaining papers are presented in increasing order of the number of bids, whereas \super with zero heuristic shows the remaining papers in decreasing order of the similarity scores.

\paragraph*{Bounding the expected reviewer-side gain.}
We now focus on showing \super with zero heuristic obtains significantly more expected reviewer-side gain than \bidbase. This requires deriving a suitable lower bound on the following expression:
\begin{equation}
\hyperparam\mathbb{E}[\gain_r^{\super} - \gain_r^{\bidbase}] = \hyperparam\mathbb{E}\Big[\sum_{\revindex\in \reviewerset}\sum_{\paperindex\in \paperset}(\gainfunctionrev(\ordering_{\revindex}^{\super}(\paperindex), \similarity_{\revindex, \paperindex})-\gainfunctionrev(\ordering_{\revindex}^{\bidbase}(\paperindex), \similarity_{\revindex, \paperindex}))\Big].
\label{eq:diag_rgain_bid}
\end{equation}
Let us begin by defining a ``good event'' for any reviewer and paper under which 
if the paper has probability zero of being bid on then it is not bid on and if the paper has probability one of being bid on then it is bid on. Formally, for any reviewer $k\in \reviewerset$, paper $\paperindex \in \paperset$, and paper ordering $\ordering_{k}^{\algbase}$ given by an algorithm \algbase, we define 
\begin{align*}
\mathcal{E}_{k, \paperindex}^{\algbase}=&\{\ordering_{k}^{\algbase}(\paperindex)=1, \similarity_{k, \paperindex}>\simscalar/2, \randombid_{k, \paperindex}=1\}\cup\{\ordering_{k}^{\algbase}(\paperindex)\neq 1,  \randombid_{k, \paperindex}=0\} \cup\{ \similarity_{k, \paperindex}<\simscalar/2, \randombid_{k, \paperindex}=0\}. 
\end{align*}
Moreover, for each reviewer $\revindex \in \reviewerset$, define the following event $\mathcal{E}_{\revindex}=\cup_{k=1}^{i-1}\cup_{\paperindex=1}^{\numpapers}\{\mathcal{E}_{k, \paperindex}^{\super}\cup \mathcal{E}_{k, \paperindex}^{\bidbase}\}$
which says the good event held for each reviewer that arrived previously for every paper and observe that the complement of this event occurs on a measure zero space by the structure of the bidding function given in~\eqref{eq:noiseless_bidmodel}.
Consequently, from the law of total expectation, an equivalent form of~\eqref{eq:diag_rgain_bid} is given by
\begin{equation}
\hyperparam\mathbb{E}[\gain_r^{\super} - \gain_r^{\bidbase}] = \hyperparam\mathbb{E}\Big[\sum_{\revindex\in \reviewerset}\mathbb{E}\Big[\sum_{\paperindex\in \paperset}(\gainfunctionrev(\ordering_{\revindex}^{\super}(\paperindex), \similarity_{\revindex, \paperindex})-\gainfunctionrev(\ordering_{\revindex}^{\bidbase}(\paperindex), \similarity_{\revindex, \paperindex}))\Big|\mathcal{E}_{\revindex}\Big]\Big].
\label{eq:diag_rgain_bid2}
\end{equation}

From Property~\ref{property1:bid}, \super with zero heuristic and \bidbase show the same paper in the highest position of the paper ordering to any reviewer $\revindex\in \reviewerset$ given the event $\mathcal{E}_{\revindex}$. Moreover, from Property~\ref{property1}, \super with zero heuristic presents the remainder of the papers in a decreasing order of the similarity scores given the event $\mathcal{E}_{\revindex}$. This implies that for each reviewer $\revindex \in \reviewerset$, \super with zero heuristic obtains at least as much reviewer-side gain as \bidbase given the event $\mathcal{E}_{\revindex}$ since the reviewer-side gain function is increasing in the similarity score and decreasing in the position a paper is shown. Define $\mathcal{F}$ as the initial set of $\lfloor \numblocks\blocksize/4\rfloor$ reviewers for which upon arrival of such a reviewer $\revindex\in \mathcal{F}$ at least one paper on the block diagonal for the reviewer given by $\diagset_{\revindex}$ has received a bid previously, where we recall that \super with zero heuristic and \bidbase each obtain exactly one bid from each reviewer and on each paper almost surely as proved in the analysis of the expected paper-side gains. From~\eqref{eq:diag_rgain_bid2} and the fact that \super with zero heuristic obtains at least as much expected reviewer-side gain as \bidbase from each reviewer given the event $\mathcal{E}$, we obtain 
\begin{equation}
\hyperparam\mathbb{E}[\gain_r^{\super} - \gain_r^{\bidbase}] \geq \hyperparam\mathbb{E}\Big[\sum_{\revindex\in \mathcal{F}}\mathbb{E}\Big[\sum_{\paperindex\in \paperset}(\gainfunctionrev(\ordering_{\revindex}^{\super}(\paperindex), \similarity_{\revindex, \paperindex})-\gainfunctionrev(\ordering_{\revindex}^{\bidbase}(\paperindex), \similarity_{\revindex, \paperindex}))\Big|\mathcal{E}_{\revindex}\Big]\Big].
\label{eq:diag_rgain_bid3}
\end{equation}

We now separate papers into relevant groups defined upon arrival for each reviewer $\revindex \in \mathcal{F}$. Let $T_{\revindex, 1}$ be the set of papers containing only the paper that each algorithm shows in the highest position that has zero bids and belongs to the set $\diagset_{\revindex}$. Let $T_{\revindex, 2}$ denote the remaining set of papers in $\diagset_{\revindex}$ with zero bids and $T_{\revindex, 3}$ be the set of papers in $\diagset_{\revindex}$ with one bid. Denote by $T_{\revindex, 4}$ the set of papers in $\diagset_{\revindex}^c$ with zero bids and $T_{\revindex, 5}$ as the papers in $\diagset_{\revindex}^c$ with one bid. Moreover, we let $N_{\revindex, k} = |T_{\revindex, k}|$ for $k \in \{1, 2, 3, 4, 5\}$ denote the number of papers in each set and define $\ell_{\revindex}=N_{\revindex, 1}+N_{\revindex, 2}+1$. Accordingly,~\eqref{eq:diag_rgain_bid3} is equivalently 
\begin{equation}
\hyperparam\mathbb{E}[\gain_r^{\super} - \gain_r^{\bidbase}] \geq \hyperparam\mathbb{E}\Big[\sum_{\revindex\in \mathcal{F}}\mathbb{E}\Big[\sum_{\paperindex\in T_{\revindex, 1}\cup T_{\revindex, 2}\cup T_{\revindex, 3}\cup T_{\revindex, 4}\cup T_{\revindex, 5}}(\gainfunctionrev(\ordering_{\revindex}^{\super}(\paperindex), \similarity_{\revindex, \paperindex})-\gainfunctionrev(\ordering_{\revindex}^{\bidbase}(\paperindex), \similarity_{\revindex, \paperindex}))\Big|\mathcal{E}_{\revindex}\Big]\Big].
\label{eq:diag_rgain_bid4}
\end{equation}
From Property~\ref{property1:bid}, \super with zero heuristic and \bidbase show the same paper in the highest position of the paper ordering to each reviewer $\revindex\in \reviewerset$ given $\mathcal{E}_{\revindex}$. This allows us to simplify~\eqref{eq:diag_rgain_bid4} and get that
\begin{equation}
\hyperparam\mathbb{E}[\gain_r^{\super} - \gain_r^{\bidbase}] \geq \hyperparam\mathbb{E}\Big[\sum_{\revindex\in \mathcal{F}}\mathbb{E}\Big[\sum_{\paperindex\in T_{\revindex, 2}\cup T_{\revindex, 3}\cup T_{\revindex, 4}\cup T_{\revindex, 5}}(\gainfunctionrev(\ordering_{\revindex}^{\super}(\paperindex), \similarity_{\revindex, \paperindex})-\gainfunctionrev(\ordering_{\revindex}^{\bidbase}(\paperindex), \similarity_{\revindex, \paperindex}))\Big|\mathcal{E}_{\revindex}\Big]\Big].
\label{eq:diag_rgain_bid5}
\end{equation}

Given event $\mathcal{E}_{\revindex}$, \super with zero heuristic shows papers among $T_{\revindex, 2}\cup T_{\revindex, 3}$ in decreasing order of the similarity scores followed by papers among $T_{\revindex, 4}\cup T_{\revindex, 5}$ in decreasing order of the similarity scores consequent of Properties~\ref{property1},~\ref{property2}, and~\ref{property3}. 

Now consider an algorithm \algbase that shows papers from $T_{\revindex, k}$ ahead of papers from $T_{\revindex, k+1}$ for each $k\in \{1, 2, 3,4\}$. Moreover, let this algorithm present papers among each group $T_{\revindex, k}$ for $k\in \{1, 2, 3, 4, 5\}$ in decreasing order of the similarity scores. The given reviewer-side gain function is decreasing in the position a paper is shown and increasing in the similarity score. This means the expected reviewer-side gain from any reviewer is maximized by showing papers in decreasing order of the similarity scores. Consequently, the expected reviewer-side gain of \super with zero heuristic from each reviewer is at least as much as that from \algbase. This fact leads to a lower bound on~\eqref{eq:diag_rgain_bid5} of 
\begin{equation}
\hyperparam\mathbb{E}[\gain_r^{\super} - \gain_r^{\bidbase}] \geq \hyperparam\mathbb{E}\Big[\sum_{\revindex\in \mathcal{F}}\mathbb{E}\Big[\sum_{\paperindex\in T_{\revindex, 2}\cup T_{\revindex, 3}\cup T_{\revindex, 4}\cup T_{\revindex, 5}}(\gainfunctionrev(\ordering_{\revindex}^{\algbase}(\paperindex), \similarity_{\revindex, \paperindex})-\gainfunctionrev(\ordering_{\revindex}^{\bidbase}(\paperindex), \similarity_{\revindex, \paperindex}))\Big|\mathcal{E}_{\revindex}\Big]\Big]
\label{eq:diag_rgain_bid6}
\end{equation}
where now $\mathcal{E}_{\revindex}=\cup_{k=1}^{i-1}\cup_{\paperindex=1}^{\numpapers}\{\mathcal{E}_{k, \paperindex}^{\algbase}\cup \mathcal{E}_{k, \paperindex}^{\bidbase}\}$.

The \bidbase policy shows papers in $T_{\revindex, 2} \cup T_{\revindex, 4}$ ahead of papers in $T_{\revindex, 3} \cup T_{\revindex, 5}$. Moreover, papers in $T_{\revindex, 2}$ are shown ahead of papers in $T_{\revindex, 4}$ and papers in $T_{\revindex, 3}$ ahead of papers in $T_{\revindex, 5}$. Papers among each group $T_{\revindex, k}$ for $k\in \{2, 3, 4, 5\}$ are shown in decreasing order of the similarity scores. 
This characterization of the \bidbase policy follows from definition, since papers are shown in increasing order of the number of bids with ties broken by the similarity scores. Recall that the similarity scores of papers in $T_{\revindex, 2} \cup T_{\revindex, 3}$  are greater than the similarity scores of papers in $T_{\revindex, 4}\cup T_{\revindex, 5}$ from Lemma~\ref{lemma:compare}. It is now clear that \algbase and \bidbase show papers among $T_{\revindex, 2}\cup T_{\revindex, 5}$ in identical positions given the event $\mathcal{E}_{\revindex}$. Combining this fact with~\eqref{eq:diag_rgain_bid6}, we get that 
\begin{equation}
\hyperparam\mathbb{E}[\gain_r^{\super} - \gain_r^{\bidbase}] \geq \hyperparam\mathbb{E}\Big[\sum_{\revindex\in \mathcal{F}}\mathbb{E}\Big[\sum_{\paperindex\in T_{\revindex, 3}\cup T_{\revindex, 4}}(\gainfunctionrev(\ordering_{\revindex}^{\algbase}(\paperindex), \similarity_{\revindex, \paperindex})-\gainfunctionrev(\ordering_{\revindex}^{\bidbase}(\paperindex), \similarity_{\revindex, \paperindex}))\Big|\mathcal{E}_{\revindex}\Big]\Big].
\label{eq:diag_rgain_bid7}
\end{equation}

We now separate the sum over papers in $T_{\revindex, 3}$ from the sums over papers in $T_{\revindex, 4}$ in~\eqref{eq:diag_rgain_bid7} to obtain
\begin{equation}
\begin{split}
\hyperparam\mathbb{E}[\gain_r^{\super} - \gain_r^{\bidbase}] &\geq \hyperparam\mathbb{E}\Big[\sum_{\revindex\in \mathcal{F}}\mathbb{E}\Big[\sum_{\paperindex\in T_{\revindex, 3}}(\gainfunctionrev(\ordering_{\revindex}^{\algbase}(\paperindex), \similarity_{\revindex, \paperindex})-\gainfunctionrev(\ordering_{\revindex}^{\bidbase}(\paperindex), \similarity_{\revindex, \paperindex}))\Big|\mathcal{E}_{\revindex}\Big]\\
&\qquad\quad+\sum_{\revindex\in \mathcal{F}}\mathbb{E}\Big[\sum_{\paperindex\in T_{\revindex, 4}}(\gainfunctionrev(\ordering_{\revindex}^{\algbase}(\paperindex), \similarity_{\revindex, \paperindex})-\gainfunctionrev(\ordering_{\revindex}^{\bidbase}(\paperindex), \similarity_{\revindex, \paperindex}))\Big|\mathcal{E}_{\revindex}\Big]\Big].
\end{split}
\label{eq:diag_rgain_bid8}
\end{equation}
The \algbase policy shows papers in $T_{\revindex, 3}$ followed by papers in $T_{\revindex, 4}$, with each group of papers being presented in decreasing order of the similarity scores to each reviewer given the event $\mathcal{E}_{\revindex}$. In contrast, the \bidbase policy shows papers in $T_{\revindex, 4}$ followed by papers in $T_{\revindex, 3}$, with each group of papers being presented in decreasing order of the similarity scores. This means that \bidbase shows each paper in $T_{\revindex, 3}$ later in the paper ordering by $N_{\revindex, 4}$ positions compared to \algbase to each reviewer given $\mathcal{E}_{\revindex}$. Analogously, \algbase shows each paper in $T_{\revindex, 4}$ later in the paper ordering by $N_{\revindex, 3}$ positions compared to \bidbase to each reviewer given $\mathcal{E}_{\revindex}$. This set of facts and continuing from~\eqref{eq:diag_rgain_bid4}, leads to the bound
\begin{equation}
\begin{split}
\hyperparam\mathbb{E}[\gain_r^{\super} - \gain_r^{\bidbase}] &\geq \hyperparam\mathbb{E}\Big[\sum_{\revindex\in \mathcal{F}}\mathbb{E}\Big[\sum_{\paperindex\in T_{\revindex, 3}}(\gainfunctionrev(\ordering_{\revindex}^{\algbase}(\paperindex), \similarity_{\revindex, \paperindex})-\gainfunctionrev(\ordering_{\revindex}^{\algbase}(\paperindex)+N_{\revindex, 4}, \similarity_{\revindex, \paperindex}))\Big|\mathcal{E}_{\revindex}\Big]\\
&\qquad\quad-\sum_{\revindex\in \mathcal{F}}\mathbb{E}\Big[\sum_{\paperindex\in T_{\revindex, 4}}(\gainfunctionrev(\ordering_{\revindex}^{\bidbase}(\paperindex), \similarity_{\revindex, \paperindex})-\gainfunctionrev(\ordering_{\revindex}^{\bidbase}(\paperindex)+N_{\revindex, 3}, \similarity_{\revindex, \paperindex}))\Big|\mathcal{E}_{\revindex}\Big]\Big].
\end{split}
\label{eq:diag_rgain_bid9}
\end{equation}
From the decomposed form of the reviewer-side gain function in~\eqref{eq:noiseless_revgain_ordering}, an equivalent form of~\eqref{eq:diag_rgain_bid9} is 
\begin{equation}
\begin{split}
\hyperparam\mathbb{E}[\gain_r^{\super} - \gain_r^{\bidbase}] &\geq \hyperparam\mathbb{E}\Big[\sum_{\revindex\in \mathcal{F}}\mathbb{E}\Big[\sum_{\paperindex\in T_{\revindex, 3}}(2^{\similarity_{\revindex, \paperindex}}-1)(\gainfunctionrev^{\ordering}(\ordering_{\revindex}^{\algbase}(\paperindex))-\gainfunctionrev^{\ordering}(\ordering_{\revindex}^{\algbase}(\paperindex)+N_{\revindex, 4}))\Big|\mathcal{E}_{\revindex}\Big]\\
&\qquad\quad-\sum_{\revindex\in \mathcal{F}}\mathbb{E}\Big[\sum_{\paperindex\in T_{\revindex, 4}}(2^{\similarity_{\revindex, \paperindex}}-1)(\gainfunctionrev^{\ordering}(\ordering_{\revindex}^{\bidbase}(\paperindex))-\gainfunctionrev^{\ordering}(\ordering_{\revindex}^{\bidbase}(\paperindex)+N_{\revindex, 3}))\Big|\mathcal{E}_{\revindex}\Big]\Big].
\end{split}
\label{eq:diag_rgain_bid10}
\end{equation}

The similarity scores of papers in $T_{\revindex, 3}$ are given by $\similarity_{\revindex, \paperindex} = \simscalar-\nu_{\revindex, \paperindex}$ and the similarity score of papers in $T_{\revindex, 4}$ are $\similarity_{\revindex, \paperindex} = \nu_{\revindex, \paperindex}$. Recall that the noise is bounded in the interval $(0, \unifnoise)$. Combining this with the fact that the function $\gainfunctionrev^{\ordering}$ from~\eqref{eq:noiseless_revgain_ordering} is decreasing on the domain $\mathbb{R}_{> 0}$, we bound~\eqref{eq:diag_rgain_bid10} as follows
\begin{equation}
\begin{split}
\hyperparam\mathbb{E}[\gain_r^{\super} - \gain_r^{\bidbase}] &\geq \hyperparam\mathbb{E}\Big[(2^{\simscalar-\unifnoise}-1)\sum_{\revindex\in \mathcal{F}}\mathbb{E}\Big[\sum_{\paperindex\in T_{\revindex, 3}}(\gainfunctionrev^{\ordering}(\ordering_{\revindex}^{\algbase}(\paperindex))-\gainfunctionrev^{\ordering}(\ordering_{\revindex}^{\algbase}(\paperindex)+N_{\revindex, 4}))\Big|\mathcal{E}_{\revindex}\Big]\\
&\qquad\quad-(2^{\unifnoise}-1)\sum_{\revindex\in \mathcal{F}}\mathbb{E}\Big[\sum_{\paperindex\in T_{\revindex, 4}}(\gainfunctionrev^{\ordering}(\ordering_{\revindex}^{\bidbase}(\paperindex))-\gainfunctionrev^{\ordering}(\ordering_{\revindex}^{\bidbase}(\paperindex)+N_{\revindex, 3}))\Big|\mathcal{E}_{\revindex}\Big]\Big].
\end{split}
\label{eq:diag_rgain_bid11}
\end{equation}
Now recall that \algbase shows the papers in $T_{\revindex, 3}$ after the papers in $T_{\revindex, 1}\cup T_{\revindex, 2}$. Similarly, \bidbase shows the papers in $T_{\revindex, 4}$ after the papers in $T_{\revindex, 1}\cup T_{\revindex, 2}$. Recall that $\ell_{\revindex} = N_{\revindex, 1}+N_{\revindex, 2}+1$. From this notation and~\eqref{eq:diag_rgain_bid11}, we obtain
\begin{equation}
\begin{split}
\hyperparam\mathbb{E}[\gain_r^{\super} - \gain_r^{\bidbase}] &\geq \hyperparam\mathbb{E}\Big[(2^{\simscalar-\unifnoise}-1)\sum_{\revindex\in \mathcal{F}}\mathbb{E}\Big[\sum_{\paperindex=\ell_{\revindex}}^{\ell_{\revindex}+N_{\revindex, 3}-1}(\gainfunctionrev^{\ordering}(\paperindex)-\gainfunctionrev^{\ordering}(\paperindex+N_{\revindex, 4}))\Big|\mathcal{E}_{\revindex}\Big]\\
&\qquad\quad-(2^{\unifnoise}-1)\sum_{\revindex\in \mathcal{F}}\mathbb{E}\Big[\sum_{\paperindex=\ell_{\revindex}}^{\ell_{\revindex}+N_{\revindex, 4}-1}(\gainfunctionrev^{\ordering}(\paperindex)-\gainfunctionrev^{\ordering}(\paperindex+N_{\revindex, 3}))\Big|\mathcal{E}_{\revindex}\Big]\Big].
\end{split}
\label{eq:diag_rgain_bid12}
\end{equation}

Following the exact techniques to prove Claim 1 in Section~\ref{sec:noiseless_bid} for the expected reviewer-side gain analysis of \bidbase in the noiseless community model, we get that for each reviewer $\revindex \in \mathcal{F}$ and conditioned on the event $\mathcal{E}_{\revindex}$,
\begin{equation}
N_{\revindex, 4}\geq \numblocks\blocksize-\blocksize-\numblocks-\lfloor\numblocks\blocksize/4\rfloor+N_{\revindex, 3}+1 \geq N_{\revindex, 3}\geq 1.
\label{eq:noisy_diag_rgain_bid19}
\end{equation}
Now, toward the goal of bounding~\eqref{eq:diag_rgain_bid12}, we perform the following indexing manipulations:
\begin{align}
\sum_{\paperindex=\ell_{\revindex}}^{\ell_{\revindex}+N_{\revindex, 4}-1} (\gainfunctionrev^{\ordering}(\paperindex)-\gainfunctionrev^{\ordering}(\paperindex+N_{\revindex, 3})) 
&= \sum_{\paperindex=\ell_{\revindex}}^{\ell_{\revindex}+N_{\revindex, 4}-1} \gainfunctionrev^{\ordering}(\paperindex) - \sum_{\paperindex=\ell_{\revindex}+N_{\revindex, 3}}^{\ell_{\revindex}+N_{\revindex, 3}+N_{\revindex, 4}-1} \gainfunctionrev^{\ordering}(\paperindex) \notag \\
&= \sum_{\paperindex=\ell_{\revindex}}^{\ell_{\revindex}+N_{\revindex, 3}-1} \gainfunctionrev^{\ordering}(\paperindex) - \sum_{\paperindex=\ell_{\revindex}+N_{\revindex, 4}}^{\ell_{\revindex}+N_{\revindex, 3}+N_{\revindex, 4}-1} \gainfunctionrev^{\ordering}(\paperindex) \label{eq:diag_rgain_bid13} \\
&= \sum_{\paperindex=\ell_{\revindex}}^{\ell_{\revindex}+N_{\revindex, 3}-1} (\gainfunctionrev^{\ordering}(\paperindex) - \gainfunctionrev^{\ordering}(\paperindex+N_{\revindex, 4})).  \label{eq:diag_rgain_bid14}
\end{align}
To obtain~\eqref{eq:diag_rgain_bid13}, we used the fact from~\eqref{eq:noisy_diag_rgain_bid19} that $N_{\revindex, 4}\geq N_{\revindex, 3}$ for any reviewer $\revindex \in \mathcal{F}$ given the event $\mathcal{E}_{\revindex}$. 

Then from~\eqref{eq:diag_rgain_bid11} and~\eqref{eq:diag_rgain_bid14}, we get 
\begin{equation}
\begin{split}
\hyperparam\mathbb{E}[\gain_r^{\super} - \gain_r^{\bidbase}] &\geq \hyperparam\mathbb{E}\Big[(2^{\simscalar-\unifnoise}-2^{\simscalar})\sum_{\revindex\in \mathcal{F}}\mathbb{E}\Big[\sum_{\paperindex=\ell_{\revindex}}^{\ell_{\revindex}+N_{\revindex, 3}-1}(\gainfunctionrev^{\ordering}(\paperindex)-\gainfunctionrev^{\ordering}(\paperindex+N_{\revindex, 4}))\Big|\mathcal{E}_{\revindex}\Big]\Big].
\end{split}
\label{eq:pre_min_step}
\end{equation}
Minimizing over $\revindex\in \mathcal{F}$ in~\eqref{eq:pre_min_step} and using the definition $|\mathcal{F}| = \lfloor\numblocks\blocksize/4\rfloor$, we have 
\begin{equation}
\hyperparam\mathbb{E}[\gain_r^{\optbase} - \gain_r^{\bidbase}] \geq \hyperparam\mathbb{E}\Big[(2^{\simscalar}-1)(\lfloor \numblocks\blocksize/4\rfloor)\min_{\revindex\in \mathcal{F}}\mathbb{E}\Big[\sum_{\paperindex=\ell_{\revindex}}^{\ell_{\revindex}+N_{\revindex, 3}-1}(\gainfunctionrev^{\ordering}(\paperindex)-\gainfunctionrev^{\ordering}(\paperindex+N_{\revindex, 4}))\Big|\mathcal{E}_{\revindex}\Big]\Big].
\label{eq:min_step}
\end{equation}
Moreover, for every $\numblocks\geq2$, $\blocksize\geq2$, it holds that
\begin{equation}
\lfloor \numblocks\blocksize/4 \rfloor=  \numblocks\blocksize/4 - (\numblocks\blocksize \ \mathrm{mod} \ 4)/4 \geq \numblocks\blocksize/8.
\label{eq:min_step1}
\end{equation}
By definition of the noisy community model and the given bound on $\unifnoise$, for every $\numblocks\geq2$, $\blocksize\geq2$, and $\hyperparam\geq 0$, we get
\begin{equation}
2^{\simscalar-\unifnoise}-2^{\unifnoise} = 2^{\simscalar-(1+\hyperparam)^{-1}e^{-e\numblocks\blocksize}}-2^{(1+\hyperparam)^{-1}e^{-e\numblocks\blocksize}}   \geq 2^{0.01-e^{-4e}}-2^{e^{-4e}} \geq 1/150.
\label{eq:min_step2}
\end{equation}
Combining~\eqref{eq:min_step},~\eqref{eq:min_step1}, and~\eqref{eq:min_step2}, we have
\begin{equation}
\hyperparam\mathbb{E}[\gain_r^{\super} - \gain_r^{\bidbase}] \geq \Big(\frac{\hyperparam}{1200}\Big)\mathbb{E}\Big[\min_{\revindex\in \mathcal{F}}\mathbb{E}\Big[\sum_{\paperindex=\ell_{\revindex}}^{\ell_{\revindex}+N_{\revindex, 3}-1}(\gainfunctionrev^{\ordering}(\paperindex)-\gainfunctionrev^{\ordering}(\paperindex+N_{\revindex, 4}))\Big|\mathcal{E}_{\revindex}\Big]\Big].
\label{eq:noisy_pre_final_thing}
\end{equation}
Then, applying exactly the same techniques to prove Claim 2 in Section~\ref{sec:noiseless_bid} for the expected reviewer-side gain analysis of \bidbase in the noiseless community model, for every $\revindex\in \mathcal{F}$ conditioned on the event $\mathcal{E}_{\revindex}$, we have
\begin{equation}
\min_{\revindex\in \mathcal{F}}\sum_{\paperindex=\ell_{\revindex}}^{\ell_{\revindex}+N_{\revindex, 3}-1}(\gainfunctionrev^{\ordering}(\paperindex)-\gainfunctionrev^{\ordering}(\paperindex+N_{\revindex, 4})) \geq \Big(\frac{2}{5}\Big)\Big(\frac{1}{\log_2^2(\numblocks\blocksize)}\Big).
\label{eq:noisy_final_thing}
\end{equation}
Finally, combining~\eqref{eq:noisy_final_thing} with~\eqref{eq:noisy_pre_final_thing}, 
for every $\numblocks\geq2, \blocksize\geq 2$, and $\hyperparam\geq 0$, the following bound holds
\begin{equation}
\hyperparam\mathbb{E}[\gain_r^{\optbase} - \gain_r^{\bidbase}] \geq \Big(\frac{1}{3000}\Big)\Big(\frac{\hyperparam\numblocks\blocksize}{\log_2^2(\numblocks\blocksize)}\Big).
\label{eq:diag_rgain_bid282}
\end{equation}
Observe that the expectation in the right-hand side of~\eqref{eq:diag_rgain_bid282} is dropped since it is not a random variable.

\paragraph*{Completing the bound.}
Combining the bounds on the expected paper-side and reviewer-side gain between \super with zero heuristic and \bidbase given in~\eqref{eq:diag_bid_pgain} and~\eqref{eq:diag_rgain_bid282}, we find for every $\numblocks\geq2, \blocksize\geq 2, \hyperparam\geq 0$,
\begin{align*}
\mathbb{E}[\gain^{\super}-\gain^{\bidbase}] &= \mathbb{E}[\gain_p^{\super}-\gain_p^{\bidbase}] + \hyperparam\mathbb{E}[\gain_r^{\super}-\gain_r^{\bidbase}] \\
&\geq \Big(\frac{1}{3000}\Big)\Big(\frac{\hyperparam\numblocks\blocksize}{\log_2^2(\numblocks\blocksize)}\Big).
\end{align*}
We conclude that there exists a constant $\constant >0$ such that for every $\numblocks\geq 2, \blocksize\geq 2$, and $\hyperparam\geq 0$, \super with zero heuristic obtains an additive factor of at least $\constant\hyperparam\numblocks\blocksize/\log_2^2(\numblocks\blocksize)$ more expected gain than \bidbase for the noisy community model.

\subsubsection{Suboptimality of \randbase}
\label{sec:diag_rand}
In this section, we analyze \randbase for the noisy community model similarity class with the given gain and bidding functions. A significant amount of the analysis in this section follows identically to that from analyzing \randbase in the noiseless community model from Section~\ref{sec:noiseless_rand} and the reason for the suboptimal behavior is identical. 

\paragraph*{Bounding the expected paper-side gain.}
Recall from~\eqref{eq:decoupled} that the expected paper-side gain from any paper $\paperindex\in \paperset$ is given by
\begin{equation}
\mathbb{E}[\gainfunction(\numbids_{\paperindex})] 
=  \sum_{\ell=0}^{\blocksize}\mathbb{P}\Big(\ell=\sum_{\revindex\in \diagsetp_{\paperindex}}\mathds{1}\{\ordering_{\revindex}^{\randbase}(\paperindex)=1\}\Big)\gainfunction(\ell).
\label{eq:rand_diag_pgain}
\end{equation}
We remark that the decomposition of the expected paper-side gain from any paper given in~\eqref{eq:rand_diag_pgain} for \randbase in the noisy community model is identical to that given in~\eqref{eq:nrand_diag_pgain} for \randbase in the noiseless community model. Since the \randbase policy is independent of the similarity scores and the reviewer bids, the distribution of the number of times a paper is shown in the highest position to reviewers for which it is on the block diagonal is identical in the noisy community model as it is in the noiseless community model. Accordingly, we directly bound the expected paper-side gain of \randbase in the noisy community model using the bound from~\eqref{eq:nrand_diag_pgain3} derived in Section~\ref{sec:noiseless_rand} for \randbase in the noiseless community model. 
Then, combining with the expected paper-side gain of \super with zero heuristic from~\eqref{eq:diag_super_pgain}, we get that for every $\numblocks\geq2, \blocksize\geq 2$, and $\hyperparam\geq 0$, 
\begin{equation}
\mathbb{E}[\gain_p^{\super} - \gain_p^{\randbase}] \geq \numblocks\blocksize - \Big(\frac{6+\sqrt{2}}{16}\Big)\numblocks\blocksize.
\label{eq:rand_diag_pgain4}
\end{equation}

\paragraph*{Bounding the expected reviewer-side gain.}
We now need to compare the expected reviewer-side gain of \super with zero heuristic and \randbase. The \simbase algorithm obtains the maximum expected reviewer-side gain that can be achieved since the reviewer-side gain function is increasing in the similarity score and decreasing in the position a paper is shown. Consequently, the bound on the expected reviewer-side gain from~\eqref{eq:supersimrevbound} between \super with zero heuristic and \simbase applies to \randbase. Using the bound from~\eqref{eq:supersimrevbound}, we get that for every $\numblocks\geq2, \blocksize\geq2,$ and $\hyperparam\geq 0$,
\begin{equation}
\hyperparam\mathbb{E}[\gain_r^{\super}-\gain_r^{\randbase}] \geq -0.0001.
\label{eq:diag_rand_rgain}
\end{equation}

\paragraph*{Completing the bound.}
Combining the bounds on the expected paper-side and reviewer-side gain between \super with zero heuristic and \randbase given in~\eqref{eq:rand_diag_pgain4} and~\eqref{eq:diag_rand_rgain}, for every $\numblocks\geq2, \blocksize\geq 2$, and $\hyperparam\geq 0$,
\begin{align*}
\mathbb{E}[\gain^{\super}-\gain^{\randbase}] &= \mathbb{E}[\gain_p^{\super}-\gain_p^{\randbase}] + \hyperparam\mathbb{E}[\gain_r^{\super}-\gain_r^{\randbase}] \\
&\geq \numblocks\blocksize - \Big(\frac{6+\sqrt{2}}{16}\Big)\numblocks\blocksize-0.0001\geq \numblocks\blocksize/2.
\end{align*}
We conclude that there exists a constant $\constant >0$ such that for every $\numblocks\geq 2, \blocksize\geq 2$, and $\hyperparam\geq 0$, \super with zero heuristic obtains an additive factor of at least $\constant \numblocks\blocksize$ more expected gain than \randbase in the noisy community model.

\subsubsection{Near-Optimality of \super with Zero Heuristic}
\label{sec:diag_opt}
In this section, we show that \super with zero heuristic is nearly optimal. We let \optbase denote the optimal algorithm for the expected gain. 

\paragraph*{Bounding the expected paper-side gain.}
As explained in Section~\ref{sec:diag_super}, the expected paper-side gain of \super with zero heuristic is the maximum that can be achieved. Indeed, this is consequent of the facts that each reviewer bids on at most one paper almost surely and the given paper-side gain function is strictly concave so evenly distributing the bids over the papers maximizes the expected paper-side gain. We conclude that for every $\numblocks\geq2, \blocksize\geq2,$ and $\hyperparam\geq 0$,
\begin{equation}
\mathbb{E}[\gain_p^{\super}] - \mathbb{E}[\gain_p^{\optbase}] \geq 0.
\label{eq:diag_opt_pgain}
\end{equation}

\paragraph*{Bounding the expected reviewer-side gain.}
The \simbase algorithm obtains the maximum expected reviewer-side gain that can be achieved since the reviewer-side gain function as given in~\eqref{eq:noiseless_rgainfunc} is increasing in the similarity score and decreasing in the position a paper is shown, which means showing papers in decreasing order of the similarity scores to each reviewer maximizes the expected reviewer-side gain. Consequently, the bound on the expected reviewer-side gain from~\eqref{eq:supersimrevbound} between \super with zero heuristic and \simbase applies to the optimal algorithm. Using the bound from~\eqref{eq:supersimrevbound}, we get that for  $\numblocks\geq2, \blocksize\geq2,$ and $\hyperparam\geq 0$,
\begin{equation}
\hyperparam\mathbb{E}[\gain_r^{\super}-\gain_r^{\optbase}] \geq -0.0001.
\label{eq:diag_opt_rgain}
\end{equation}

\paragraph*{Completing the bound.}
Combining the bounds on the expected paper-side and reviewer-side gain between \super with zero heuristic and \optbase given in~\eqref{eq:diag_opt_pgain} and~\eqref{eq:diag_opt_rgain}, we get that for every $\numblocks\geq2, \blocksize\geq 2$, and $\hyperparam\geq 0$,
\begin{equation*}
\mathbb{E}[\gain^{\super}-\gain^{\optbase}] = \mathbb{E}[\gain_p^{\super}-\gain_p^{\optbase}] + \hyperparam\mathbb{E}[\gain_r^{\super}-\gain_r^{\optbase}] 
\geq -0.0001.
\end{equation*}
We conclude \super with zero heuristic is always within at least an additive factor of $0.0001$ of the optimal in the noisy community model.

\subsubsection{Proofs of Lemmas~\ref{lemma:diag_super_policy}--\ref{lemma:bound_sim_sum}}
\label{sec:diag_lemmas}
In this section, we present the proofs of technical lemmas invoked in the primary proof of Theorem~\ref{thm:diagonal}.
\paragraph{Proof of Lemma~\ref{lemma:diag_super_policy}.}\label{sec:diag_super_policy}
In the proof of Corollary~\ref{sec:proof_local_col} given in Section~\ref{sec:proof_local_col}, we showed in~\eqref{eq:cor_opt2} that  
\super with zero heuristic solves the problem
\begin{equation}
\ordering_{\revindex}^{\super} = \argmax_{\ordering_{\revindex}\in \symgroup_\numpapers} \quad \sum_{\paperindex\in \paperset} \bidfunction(\ordering_{\revindex}(\paperindex), \similarity_{\revindex, \paperindex})(\gainfunction(\numbids_{\revindex-1, \paperindex}  + 1) - \gainfunction(\numbids_{\revindex-1, \paperindex})) + \hyperparam \sum_{\paperindex\in \paperset}\gainfunctionrev(\ordering_{\revindex}(\paperindex), \similarity_{\revindex, \paperindex}) 
\label{eq:diagonal_policy}
\end{equation}
in order to determine the ordering of papers $\ordering_{\revindex}^{\super}$ to present to reviewer $\revindex \in \reviewerset$ so that the immediate expected gain is maximized conditioned on the history of bids from reviewers that arrived previously. Recalling that the bidding function is $\bidfunction(\ordering_{\revindex}(\paperindex), \similarity_{\revindex, \paperindex}) = \mathds{1}\{\ordering_{\revindex}(\paperindex)=1\}\mathds{1}\{\similarity_{\revindex, \paperindex} > \simscalar/2\}$, the optimization problem in~\eqref{eq:diagonal_policy} is equivalent to 
\begin{equation}
\ordering_{\revindex}^{\super} = \argmax_{\ordering_{\revindex}\in \symgroup_\numpapers}  \sum_{\paperindex\in \paperset}\mathds{1}\{\ordering_{\revindex}(\paperindex)=1\}\mathds{1}\{\similarity_{\revindex, \paperindex} > \simscalar/2\}(\gainfunction(\numbids_{\revindex-1, \paperindex}  + 1) - \gainfunction(\numbids_{\revindex-1, \paperindex})) + \hyperparam \sum_{\paperindex\in \paperset}\gainfunctionrev(\ordering_{\revindex}(\paperindex), \similarity_{\revindex, \paperindex}).
\label{eq:diagonal_policy1}
\end{equation}
Observe that $\diagset_{\revindex}\cup \diagset_{\revindex}^c=\paperset$. Moreover, if $\paperindex\in \diagset_{\revindex}$, then $\similarity_{\revindex, \paperindex}>\simscalar/2$ from Lemma~\ref{lemma:exceed}. Analogously, if $\paperindex\in \diagset_{\revindex}^c$, then $\similarity_{\revindex, \paperindex}<\simscalar/2$ from Lemma~\ref{lemma:under}. This allows us to simplify~\eqref{eq:diagonal_policy1} to the following problem:
\begin{equation}
\ordering_{\revindex}^{\super} = \argmax_{\ordering_{\revindex} \in \symgroup_\numpapers} \quad \sum_{\paperindex\in \diagset_{\revindex}}\mathds{1}\{\ordering_{\revindex}(\paperindex)=1\}(\gainfunction(\numbids_{\revindex-1, \paperindex}  + 1) - \gainfunction(\numbids_{\revindex-1, \paperindex})) + \hyperparam \sum_{\paperindex\in \paperset}\gainfunctionrev(\ordering_{\revindex}(\paperindex), \similarity_{\revindex, \paperindex}).
\label{eq:diagonal_policy2}
\end{equation}

Given the assumption that there is a paper in $\diagset_{\revindex}$ with zero bids and each paper in $\diagset_{\revindex}$ has at most one bid, we need to prove \super with zero heuristic shows the paper with the maximum similarity score among the papers without a bid in $\diagset_{\revindex}$ followed by the remaining papers in a decreasing order of the similarity scores.
To do so, we analyze the solution to~\eqref{eq:diagonal_policy2} when the paper with the maximum similarity score has zero bids and when the paper with the maximum similarity score has one bid. 
For each scenario, we show \super with zero heuristic presents the paper with the maximum similarity score among the papers without a bid in the highest position and the remaining papers in a decreasing order of the similarity scores. This is equivalent to the stated result we seek to prove since from Lemma~\ref{lemma:compare}, $\similarity_{\revindex, \paperindex}>\similarity_{\revindex, \paperindex'}$ for $\paperindex\in \diagset_{\revindex}, \paperindex'\in \diagset_{\revindex}^c$, which guarantees the paper with the maximum similarity score belongs to the set $\diagset_{\revindex}$ and the paper with the maximum similarity score among the papers without a bid belongs to the set $\diagset_{\revindex}$.

Before analyzing each scenario, we recall some key properties of the functions in the optimization problem given in~\eqref{eq:diagonal_policy2} under the assumptions.
The given paper-side gain function $\gainfunction$ is such that the quantity $\gainfunction(\numbids_{\revindex-1, \paperindex}  + 1) - \gainfunction(\numbids_{\revindex-1, \paperindex})$ is decreasing as a function of the number of bids $\numbids_{\revindex-1, \paperindex}$. As a result, the expected paper-side gain term from~\eqref{eq:diagonal_policy2}, which is given by
\begin{equation}
\sum_{\paperindex\in \diagset_{\revindex}}\mathds{1}\{\ordering_{\revindex}(\paperindex)=1\}(\gainfunction(\numbids_{\revindex-1, \paperindex}  + 1) - \gainfunction(\numbids_{\revindex-1, \paperindex})),
\label{eq:expect_pgain}
\end{equation}
is maximized by showing the paper $\paperindex \in \diagset_{\revindex}$ with the minimum number of bids in the highest position of the paper ordering. Moreover, the given reviewer-side gain function $\gainfunctionrev$ from~\eqref{eq:noiseless_rgainfunc} is decreasing in the position $\ordering_{\revindex}(\paperindex)$ in which a paper is shown  and increasing in the similarity score $\similarity_{\revindex, \paperindex}$. Consequently, the expected reviewer-side gain term from~\eqref{eq:diagonal_policy2}, which is given by
\begin{equation}
\sum_{\paperindex\in \paperset}\gainfunctionrev(\ordering_{\revindex}(\paperindex), \similarity_{\revindex, \paperindex}),
\label{eq:expect_rgain}
\end{equation}
is maximized by showing papers in decreasing order of the similarity scores.

\paragraph*{Solution when the paper with the maximum similarity score has zero bids.} 
If the paper with the maximum similarity score has zero bids, then the solution to~\eqref{eq:diagonal_policy2} is to present the papers in decreasing order of the similarity scores. To see why this solution is optimal, observe that it maximizes each component of~\eqref{eq:diagonal_policy2} given in~\eqref{eq:expect_pgain} and~\eqref{eq:expect_rgain} since the paper with the maximum similarity score has the minimum number of bids among the set $\diagset_{\revindex}$ and papers are in decreasing order of the similarity scores. This solution is equivalent to presenting the paper with the maximum similarity score among the papers without a bid in the highest position and the remaining papers in a decreasing order of the similarity scores since the paper with the maximum similarity score has zero bids. 

\paragraph*{Solution when the paper with the maximum similarity score has one bid.}
To determine the solution to~\eqref{eq:diagonal_policy2} when the paper with the maximum similarity score has one bid, we consider groups of candidate solutions. We group potential solutions into the set of paper orderings that show a paper with at least one bid in the highest position (group 1) and the set of paper orderings that show a paper without a bid in the highest position (group 2). For each group of paper orderings, we find the solution that maximizes the objective of the optimization problem in~\eqref{eq:diagonal_policy2}. To resolve which is optimal, we compare the objective values of the solutions from each group.

\textbf{Analyzing Group 1.} For this group, the solution is constrained to the set of paper orderings that show a paper with at least one bid in the highest position. The solution among this group that maximizes the objective of~\eqref{eq:diagonal_policy2} is to show papers in decreasing order of the similarity scores. We call this candidate solution $\ordering_{\revindex}^{\ell}$. 

The candidate solution $\ordering_{\revindex}^{\ell}$ can be seen to be optimal among the group since it maximizes~\eqref{eq:expect_pgain} subject to the constraint of the group and it maximizes~\eqref{eq:expect_rgain}. Indeed, solution $\ordering_{\revindex}^{\ell}$ maximizes~\eqref{eq:expect_pgain} subject to the constraint of the group since 
the paper with the maximum similarity score has the minimum number of bids among the papers in $\diagset_{\revindex}$ with at least one bid. Moreover, solution $\ordering_{\revindex}^{\ell}$ maximizes~\eqref{eq:expect_rgain} since papers are shown in decreasing order of the similarity scores.

\textbf{Analyzing Group 2.} For this group, the solution is constrained to the set of paper orderings that show a paper without a bid in the highest position. The solution among this group that maximizes the objective of~\eqref{eq:diagonal_policy2} is to show the paper with the maximum similarity score  among the papers with zero bids in the highest position and then present the remaining papers in decreasing order of the similarity scores. We call this candidate solution $\ordering_{\revindex}^{\ell'}$. 

The candidate solution $\ordering_{\revindex}^{\ell'}$ can be seen to be optimal among the group since it maximizes~\eqref{eq:expect_pgain} and it maximizes~\eqref{eq:expect_rgain} subject to the constraint of the group as we now show. From assumption, there is at least one paper in $\diagset_{\revindex}$ with zero bids. The similarity score of any paper in $\diagset_{\revindex}$ is greater than the similarity score of any paper in $\diagset_{\revindex}^c$ from Lemma~\ref{lemma:compare}. This implies that the paper with the maximum similarity score among the papers with zero bids is in $\diagset_{\revindex}$. Therefore, we conclude solution $\ordering_{\revindex}^{\ell'}$ maximizes~\eqref{eq:expect_pgain} since the paper with the maximum similarity score among the papers with zero bids is in $\diagset_{\revindex}$ and it has the minimum number of bids among the papers in $\diagset_{\revindex}$. Moreover, solution $\ordering_{\revindex}^{\ell'}$ maximizes~\eqref{eq:expect_rgain} subject to the constraint of the group since the paper with maximum similarity among the set of papers with zero bids is shown in the highest position and the remaining papers are shown in decreasing order of the similarity scores.

\textbf{Comparing candidate solutions $\ordering_{\revindex}^\ell$ and $\ordering_{\revindex}^{\ell'}$.} We now compare the objective of~\eqref{eq:diagonal_policy2} for the candidate solutions $\ordering_{\revindex}^\ell$ and $\ordering_{\revindex}^{\ell'}$. Our goal is to show the objective given $\ordering_{\revindex}^{\ell'}$ is greater than the objective given $\ordering_{\revindex}^\ell$. This is to say, we wish to show the following quantity is positive
\begin{equation}
\sum_{\paperindex\in \diagset_{\revindex}}\big(\mathds{1}\{\ordering_{\revindex}^{\ell'}(\paperindex)=1\}-\mathds{1}\{\ordering_{\revindex}^{\ell}(\paperindex)=1\}\big)(\gainfunction(\numbids_{\revindex-1, \paperindex}  + 1) - \gainfunction(\numbids_{\revindex-1, \paperindex})) + \hyperparam \sum_{\paperindex\in \paperset}\big(\gainfunctionrev(\ordering_{\revindex}^{\ell'}(\paperindex), \similarity_{\revindex, \paperindex})-\gainfunctionrev(\ordering_{\revindex}^{\ell}(\paperindex), \similarity_{\revindex, \paperindex})\big).
\label{eq:obj_diff}
\end{equation}
To simplify notation, let the quantity in~\eqref{eq:obj_diff} be denoted by $\mathcal{C}$. Since $\ordering_{\revindex}^\ell$ shows a paper in $\diagset_{\revindex}$ with one bid in the highest position and $\ordering_{\revindex}^{\ell'}$ shows a paper in $\diagset_{\revindex}$ with zero bids in the highest position, we obtain
\begin{equation*}
\mathcal{C} = (\gainfunction(1) - \gainfunction(0))-(\gainfunction(2) - \gainfunction(1)) + \hyperparam \sum_{\paperindex\in \paperset}\big(\gainfunctionrev(\ordering_{\revindex}^{\ell'}(\paperindex), \similarity_{\revindex, \paperindex})-\gainfunctionrev(\ordering_{\revindex}^{\ell}(\paperindex), \similarity_{\revindex, \paperindex})\big).
\end{equation*}
Since $\ordering_{\revindex}^{\ell}$ presents papers in decreasing order of the similarity scores and $\ordering_{\revindex}^{\ell'}$ shows the papers in $\diagset_{\revindex}^c$ in a decreasing order of the similarity scores after the papers in $\diagset_{\revindex}$, we can apply Lemma~\ref{lemma:bound_sim_sum} to get 
\begin{equation}
\mathcal{C} \geq (\gainfunction(1) - \gainfunction(0))-(\gainfunction(2) - \gainfunction(1)) -\blocksize e^{-e\numblocks\blocksize}\log(4).
\label{eq:seq1}
\end{equation}
Observe that $-\blocksize e^{-e\numblocks\blocksize}\log(4)$ is negative and an increasing as a function of $\numblocks$ and $\blocksize$ on the domain $\numblocks\geq2$ and $\blocksize\geq 2$. This means for every $\numblocks\geq 2$ and $\blocksize\geq 2$,
\begin{equation}
-\blocksize e^{-e\numblocks\blocksize}\log(4) \geq -2 e^{-4e}\log(4) \geq -0.01.
\label{eq:seq2}
\end{equation}
Moreover, for the given paper-side gain function,
\begin{equation}
(\gainfunction(1) - \gainfunction(0))-(\gainfunction(2) - \gainfunction(1)) = 2-\sqrt{2}
\label{eq:seq3}
\end{equation}
Combining~\eqref{eq:seq1},~\eqref{eq:seq2}, and~\eqref{eq:seq3}, we obtain
\begin{equation*}
\mathcal{C} \geq 2-\sqrt{2}-0.01 > 0.
\end{equation*}
Since $\mathcal{C}>0$, we can conclude the objective of~\eqref{eq:diagonal_policy2} given $\ordering_{\revindex}^{\ell'}$ is greater than the objective of~\eqref{eq:diagonal_policy2} given $\ordering_{\revindex}^\ell$. This means that the solution when the paper with the maximum similarity score has one bid is to show the paper with the maximum similarity score among the papers with zero bids in the highest position and then present the remaining papers in decreasing order of the similarity scores.

\paragraph*{Combining the solutions.} We have now derived the solution to~\eqref{eq:diagonal_policy2} when the paper with the maximum similarity score has not obtained a bid previously and when the paper with the maximum similarity score has obtained exactly one bid previously. For each scenario, we showed \super with zero heuristic presents the paper with the maximum similarity score among the papers without a bid in the highest position and the remaining papers in a decreasing order of the similarity scores. This allows us to conclude that if there is a paper in $\diagset_{\revindex}$ with zero bids and each paper in $\diagset_{\revindex}$ has at most one bid, then \super with zero heuristic shows the paper with the maximum similarity score among the papers without a bid in $\diagset_{\revindex}$ followed by the remaining papers in a decreasing order of the similarity scores.

\paragraph{Proof of Lemma~\ref{lemma:bound_sim_sum}.}\label{sec:bound_sim_sum}
From the stated result, $\ordering_{\revindex}^{\ell}$ denotes the paper ordering that presents papers in decreasing order of the similarity scores. Recall that $\diagset_{\revindex}$ denotes the set of papers on the block diagonal of the similarity matrix for any reviewer $\revindex \in \reviewerset$. Moreover, $\ordering_{\revindex}^{\ell'}$ is any paper ordering that shows each paper in $\diagset_{\revindex}$ ahead of each paper in $\diagset_{\revindex}^c$ and papers among $\diagset_{\revindex}^c$ in a decreasing order of the similarity scores. Given this information, we need to bound
\begin{equation*}
\hyperparam\sum_{\paperindex\in \paperset}(\gainfunctionrev(\ordering_{\revindex}^{\ell'}(\paperindex), \similarity_{\revindex, \paperindex})-\gainfunctionrev(\ordering_{\revindex}^{\ell}(\paperindex), \similarity_{\revindex, \paperindex})).
\end{equation*}

Each paper ordering $\ordering_{\revindex}^\ell$ and $\ordering_{\revindex}^{\ell'}$ shows the papers in $\diagset_{\revindex}^c$ in a decreasing order of the similarity scores after the papers in $\diagset_{\revindex}$ since from Lemma~\ref{lemma:compare}, $\similarity_{\revindex, \paperindex}>\similarity_{\revindex, \paperindex'}$ for $\paperindex\in \diagset_{\revindex}, \paperindex'\in \diagset_{\revindex}^c$. This means papers in $\diagset_{\revindex}^c$ are shown in identical positions by each paper ordering, so we get 
\begin{equation*}
\hyperparam\sum_{\paperindex\in \paperset}(\gainfunctionrev(\ordering_{\revindex}^{\ell'}(\paperindex), \similarity_{\revindex, \paperindex})-\gainfunctionrev(\ordering_{\revindex}^{\ell}(\paperindex), \similarity_{\revindex, \paperindex}) =  \hyperparam \sum_{\paperindex\in \diagset_{\revindex}}(\gainfunctionrev(\ordering_{\revindex}^{\ell'}(\paperindex), \similarity_{\revindex, \paperindex})-\gainfunctionrev(\ordering_{\revindex}^{\ell}(\paperindex), \similarity_{\revindex, \paperindex})).
\end{equation*}
Equivalently, from the decomposed form of the given reviewer-side gain function from~\eqref{eq:noiseless_rgainfunc}, 
\begin{equation*}
\hyperparam\sum_{\paperindex\in \paperset}(\gainfunctionrev(\ordering_{\revindex}^{\ell'}(\paperindex), \similarity_{\revindex, \paperindex})-\gainfunctionrev(\ordering_{\revindex}^{\ell}(\paperindex), \similarity_{\revindex, \paperindex})  = \hyperparam \sum_{\paperindex\in \diagset_{\revindex}}(2^{\similarity_{\revindex, \paperindex}}-1)\gainfunctionrev^{\ordering}(\ordering_{\revindex}^{\ell'}(\paperindex))- \hyperparam \sum_{\paperindex\in \diagset_{\revindex}}(2^{\similarity_{\revindex, \paperindex}}-1)\gainfunctionrev^{\ordering}(\ordering_{\revindex}^{\ell}(\paperindex)).
\end{equation*}
By definition, the similarity score of each paper $\paperindex \in \diagset_{\revindex}$ is given by $\similarity_{\revindex, \paperindex} = \simscalar-\nu_{\revindex, \paperindex}$. 
Moreover, $\nu_{\revindex, \paperindex}$ is bounded in $(0, \unifnoise)$, so $\simscalar-\unifnoise < \simscalar-\nu_{\revindex, \paperindex} < \simscalar$. This fact leads to the lower bound
\begin{equation*}
\hyperparam\sum_{\paperindex\in \paperset}(\gainfunctionrev(\ordering_{\revindex}^{\ell'}(\paperindex), \similarity_{\revindex, \paperindex})-\gainfunctionrev(\ordering_{\revindex}^{\ell}(\paperindex), \similarity_{\revindex, \paperindex})  \geq  \hyperparam(2^{\simscalar-\unifnoise}-1) \sum_{\paperindex\in \diagset_{\revindex}}\gainfunctionrev^{\ordering}(\ordering_{\revindex}^{\ell'}(\paperindex))- \hyperparam (2^{\simscalar}-1) \sum_{\paperindex\in \diagset_{\revindex}}\gainfunctionrev^{\ordering}(\ordering_{\revindex}^{\ell}(\paperindex)).
\end{equation*}
Each paper ordering $\ordering_{\revindex}^\ell$ and $\ordering_{\revindex}^{\ell'}$ shows the papers in $\diagset_{\revindex}$ in some order among the set of positions $\{1,\dots, \blocksize\}$ since there are $\blocksize$ papers in $\diagset_{\revindex}$ by definition and each paper in $\diagset_{\revindex}$ is shown ahead of each paper in $\diagset_{\revindex}^c$. From this observation, we obtain
\begin{equation*}
\hyperparam\sum_{\paperindex\in \paperset}(\gainfunctionrev(\ordering_{\revindex}^{\ell'}(\paperindex), \similarity_{\revindex, \paperindex})-\gainfunctionrev(\ordering_{\revindex}^{\ell}(\paperindex), \similarity_{\revindex, \paperindex})  \geq  \hyperparam(2^{\simscalar-\unifnoise}-2^{\simscalar}) \sum_{\paperindex\in [\blocksize]}\gainfunctionrev^{\ordering}(\paperindex),
\end{equation*}
which is equivalently
\begin{equation*}
\hyperparam\sum_{\paperindex\in \paperset}(\gainfunctionrev(\ordering_{\revindex}^{\ell'}(\paperindex), \similarity_{\revindex, \paperindex})-\gainfunctionrev(\ordering_{\revindex}^{\ell}(\paperindex), \similarity_{\revindex, \paperindex})  \geq \hyperparam(2^{\simscalar-\unifnoise}-2^{\simscalar}) \sum_{\paperindex\in [\blocksize]}\frac{1}{\log_2(\paperindex+1)}
\end{equation*}
from the definition of $\gainfunctionrev^{\ordering}$ given in~\eqref{eq:noiseless_revgain_ordering}. Since $\hyperparam(2^{\simscalar-\unifnoise}-2^{\simscalar})\leq 0$ and $1/\log_2(\paperindex+1)\leq 1$ for each $\paperindex\in [\blocksize]$, we obtain
\begin{equation*}
\hyperparam\sum_{\paperindex\in \paperset}(\gainfunctionrev(\ordering_{\revindex}^{\ell'}(\paperindex), \similarity_{\revindex, \paperindex})-\gainfunctionrev(\ordering_{\revindex}^{\ell}(\paperindex), \similarity_{\revindex, \paperindex})  \geq  \hyperparam\blocksize(2^{\simscalar-\unifnoise}-2^{\simscalar}).
\end{equation*}
Recall that $\unifnoise\leq (1+\hyperparam)^{-1}e^{-e\numblocks\blocksize}$, which means
\begin{equation*}
\hyperparam\sum_{\paperindex\in \paperset}(\gainfunctionrev(\ordering_{\revindex}^{\ell'}(\paperindex), \similarity_{\revindex, \paperindex})-\gainfunctionrev(\ordering_{\revindex}^{\ell}(\paperindex), \similarity_{\revindex, \paperindex})  \geq  \hyperparam\blocksize(2^{\simscalar-(1+\hyperparam)^{-1}e^{-e\numblocks\blocksize}}-2^{\simscalar}).
\end{equation*}
Now, see that 
$\hyperparam(2^{\simscalar-(1+\hyperparam)^{-1}e^{-e\numblocks\blocksize}}-2^{\simscalar})$ is non-positive and a decreasing function of $\lambda$ and $\simscalar$ on the domain $\hyperparam\geq0$ and $\simscalar\geq 0.01$. This means for every $\hyperparam\geq0$ and $\simscalar\geq 0.01$, the following relation holds 
\begin{align*}
\hyperparam(2^{\simscalar-(1+\hyperparam)^{-1}e^{-e\numblocks\blocksize}}-2^{\simscalar}) &\geq \hyperparam(2^{1-(1+\hyperparam)^{-1}e^{-e\numblocks\blocksize}}-2) \\
&\geq \lim_{\hyperparam'\rightarrow\infty}\hyperparam'(2^{1-(1+\hyperparam')^{-1}e^{-e\numblocks\blocksize}}-2) \\
& =  -e^{-e\numblocks\blocksize}\log(4).
\end{align*}
Consequently, we conclude
\begin{equation*}
\hyperparam\sum_{\paperindex\in \paperset}(\gainfunctionrev(\ordering_{\revindex}^{\ell'}(\paperindex), \similarity_{\revindex, \paperindex})-\gainfunctionrev(\ordering_{\revindex}^{\ell}(\paperindex), \similarity_{\revindex, \paperindex})  \geq - \blocksize e^{-e\numblocks\blocksize}\log(4).
\end{equation*}

\section{Additional Results}
In this section, we formally state and prove a pair of results that were mentioned informally in the main paper. We characterize the time complexity per-reviewer of the \super algorithm for the general model and for a selected set of gain and bidding functions that admit a computationally efficient solution. Moreover, we show that \super with any heuristic is globally optimal given a linear paper-side gain. This result is a corollary of the fact that \super is locally optimal as shown in Theorem~\ref{prop:local}.

\subsection{Time Complexity of \super}
\label{sec:proof_time}
The following proposition characterizes the time complexity of the \super algorithm for each reviewer given the evaluations of the heuristic for the general form and a relevant class of gain and bidding functions that admits a computational efficient solution.

\begin{proposition}\label{prop:time}
\super has a time complexity of $\mathcal{O}(\numpapers^3)$ per-reviewer given the evaluations of the heuristic function. The time complexity of \super improves to $\mathcal{O}(\numpapers\log(\numpapers))$ given a bidding function that can be decomposed into the form $\bidfunction(\ordering_{\revindex}(\paperindex), \similarity_{\revindex, \paperindex}) = \bidfunction^{\ordering}(\ordering_{\revindex}(\paperindex))\bidfunction^{\similarity}(\similarity_{\revindex, \paperindex})$ where $\bidfunction^{\ordering}: \paperset \rightarrow [0, 1]$ is non-increasing and $\bidfunction^{\similarity}: [0, 1] \rightarrow [0, 1]$ is non-decreasing, along with a reviewer-side gain function that can be decomposed into the form $\gainfunctionrev(\ordering_{\revindex}(\paperindex), \similarity_{\revindex, \paperindex}) = \bidfunction^{\ordering}(\ordering_{\revindex}(\paperindex))\gainfunctionrev^{\similarity}(\similarity_{\revindex, \paperindex})$ where $\gainfunctionrev^{\similarity}: [0, 1]\rightarrow \reals_{\geq 0}$ is non-decreasing. 
\end{proposition}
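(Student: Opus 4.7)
The plan is to analyze each of the two per-reviewer regimes separately, bounding the cost of building the relevant subproblem and of solving it. Both reduce to the per-reviewer subroutine call ($\fpo$ for the general case, $\fpoe$ for the specialized case) inside the inner loop of Algorithm~\ref{alg:super}, since all other per-reviewer work in \super (initialization, updating bid counts, presenting the ordering) takes $\mathcal{O}(\numpapers)$ time and the heuristic evaluations are treated as given.

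For the general $\mathcal{O}(\numpapers^3)$ bound, I would simply cost the three steps of Algorithm~\ref{alg:subprocedure}: (i) populating the $\numpapers \times \numpapers$ weight matrix $\weight$ requires $\numpapers^2$ entries, each computed with a constant number of evaluations of $\gainfunction$, $\gainfunctionrev$, and $\bidfunction$, so this step is $\mathcal{O}(\numpapers^2)$; (ii) solving the linear program, which (by the argument in the proof of Theorem~\ref{prop:local}) is equivalent to a linear sum assignment problem on a totally unimodular polytope and is solvable by the Hungarian algorithm in $\mathcal{O}(\numpapers^3)$; and (iii) reading off the permutation from the integral optimum in $\mathcal{O}(\numpapers^2)$. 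The dominant step is the assignment solve.

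For the $\mathcal{O}(\numpapers \log \numpapers)$ bound, the substantive task is to show that under the stated decomposition assumptions, the per-reviewer subproblem~\eqref{eq:subproblem_genreviewer} collapses to~\eqref{eq:super_efficient_main}. Starting from~\eqref{eq:subproblem_genreviewer}, I would expand the Bernoulli expectation exactly as in~\eqref{eq:prob_simplify}, drop the additive term $\gainfunction(\numbids_{\revindex-1,\paperindex}+\proxy_{\revindex,\paperindex})$ which is independent of $\ordering_\revindex$, and then substitute the decompositions $\bidfunction(\ordering_\revindex(\paperindex), \similarity_{\revindex,\paperindex}) = \bidfunction^{\similarity}(\similarity_{\revindex,\paperindex})\bidfunction^{\ordering}(\ordering_\revindex(\paperindex))$ and $\gainfunctionrev(\ordering_\revindex(\paperindex), \similarity_{\revindex,\paperindex}) = \gainfunctionrev^{\similarity}(\similarity_{\revindex,\paperindex})\bidfunction^{\ordering}(\ordering_\revindex(\paperindex))$. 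Factoring $\bidfunction^{\ordering}(\ordering_\revindex(\paperindex))$ out of each summand produces $\max_{\ordering_\revindex \in \symgroup_\numpapers}\sum_{\paperindex \in \paperset} \sortvar_{\revindex,\paperindex}\, \bidfunction^{\ordering}(\ordering_\revindex(\paperindex))$ with the weights $\sortvar_{\revindex,\paperindex}$ defined exactly as in Algorithm~\ref{alg:subprocedure_efficient}, and non-negativity follows because $\gainfunction$ is non-decreasing and $\bidfunction^{\similarity}, \gainfunctionrev^{\similarity}$ are non-negative.

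The main (minor) obstacle is then verifying that plain sorting solves this reduced problem. Because $\bidfunction^{\ordering}$ is non-increasing on $\paperset$ and the weights $\sortvar_{\revindex,\paperindex}$ are non-negative, the rearrangement inequality implies that pairing the largest weights with the smallest positions is optimal, so the permutation $\ordering_\revindex = \rankfunction(\sortvar_\revindex)$ returned by Algorithm~\ref{alg:subprocedure_efficient} is an optimizer. Computing the $\numpapers$ weights $\sortvar_{\revindex,\paperindex}$ requires $\mathcal{O}(\numpapers)$ work given the heuristic values, and sorting takes $\mathcal{O}(\numpapers \log \numpapers)$, giving the claimed per-reviewer complexity and completing the proof.
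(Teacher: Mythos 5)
Your proposal is correct and follows essentially the same route as the paper: the general case is handled by reducing the per-reviewer subproblem to a linear sum assignment solved by the Hungarian algorithm in $\mathcal{O}(\numpapers^3)$, and the specialized case by expanding the Bernoulli expectation, dropping the constant term, factoring out $\bidfunction^{\ordering}$ to reach the weighted form~\eqref{eq:super_efficient_main}, and sorting the weights in $\mathcal{O}(\numpapers\log\numpapers)$. Your explicit appeal to the rearrangement inequality (the paper simply asserts that sorting is optimal since $\bidfunction^{\ordering}$ is non-increasing) and your accounting of the $\mathcal{O}(\numpapers^2)$ weight-matrix construction are harmless refinements, not a different argument.
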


\begin{proof}[Proof of Proposition~\ref{prop:time}]
We partition this proof by first examining the time complexity under the general model and then after which we consider the time complexity for the special case.
\paragraph*{General time complexity.}
We begin by showing the time complexity of \super for a reviewer given the heuristic evaluations under the general class of gain and bidding functions. 
The general form of the $\super$ algorithm calls Algorithm~\ref{alg:subprocedure} upon the arrival of a reviewer to determine the ordering of papers to show the reviewer. The optimization problem in Algorithm~\ref{alg:subprocedure} is in the form of the linear assignment problem. It is well known that the Hungarian algorithm can solve for the optimal solution of a linear assignment problem with a time complexity of $\mathcal{O}({\numpapers}^3)$ (see, e.g., Chapter 8 in~\citealp{lawler1976combinatorial}). 
As a result, \super has a time complexity of $\mathcal{O}(\numpapers^3)$ for the general class of gain and bidding functions under consideration for each reviewer given the evaluations of the heuristic function. 
\paragraph*{Special case time complexity.}
In the proof of Theorem~\ref{prop:local}, we showed that the optimal paper ordering to present the final reviewer could be obtained by solving the linear program given in~\eqref{eq:opt_linear} with the weights from~\eqref{eq:opt_final_weights}. The general version of \super determines the ordering of papers to present any reviewer by calling Algorithm~\ref{alg:subprocedure}, which solves the linear program given in~\eqref{eq:opt_linear} using the weights from~\eqref{eq:superweights}. We now show an equivalence between that solution method and a sorting algorithm for the class of gain and bidding functions given in the claim.

Prior to deriving the linear program in~\eqref{eq:opt_linear} as a method to obtain the optimal solution for the final reviewer in the proof of Theorem~\ref{prop:local}, we showed in~\eqref{eq:opt_preinteger} that the optimization problem for the final reviewer was of the form
\begin{equation*}
\max_{\ordering_{\numrev}\in \symgroup_\numpapers} \quad \sum_{\paperindex\in \paperset} \bidfunction(\ordering_{\numrev}(\paperindex), \similarity_{\numrev, \paperindex})(\gainfunction(\numbids_{\numrev-1, \paperindex} + 1) - \gainfunction(\numbids_{\numrev-1, \paperindex})) + \hyperparam \sum_{\paperindex \in \paperset}\gainfunctionrev(\ordering_{\numrev}(\paperindex), \similarity_{\numrev, \paperindex}).
\end{equation*}
Given the function forms $\bidfunction(\ordering_{\numrev}(\paperindex), \similarity_{\numrev, \paperindex}) = \bidfunction^{\ordering}(\ordering_{\numrev}(\paperindex))\bidfunction^{\similarity}(\similarity_{\numrev, \paperindex})$ where $\bidfunction^{\ordering}: \paperset \rightarrow [0, 1]$ is non-increasing and $\bidfunction^{\similarity}: [0, 1] \rightarrow [0, 1]$ is non-decreasing, and $\gainfunctionrev(\ordering_{\numrev}(\paperindex), \similarity_{\numrev, \paperindex}) = \bidfunction^{\ordering}(\ordering_{\numrev}(\paperindex))\gainfunctionrev^{\similarity}(\similarity_{\numrev, \paperindex})$ where $\gainfunctionrev^{\similarity}: [0, 1]\rightarrow \reals_{\geq 0}$ is non-decreasing, the problem can be reformulated as
\begin{equation}
\max_{\ordering_{\numrev}\in \symgroup_\numpapers} \quad \sum_{\paperindex\in \paperset} \sortvar_{\numrev, \paperindex}\bidfunction^{\ordering}(\ordering_{\numrev}(\paperindex)) 
\label{eq:super_efficient}
\end{equation}
where
\[\sortvar_{\numrev, \paperindex} = \bidfunction^{\similarity}(\similarity_{\numrev, \paperindex})(\gainfunction(\numbids_{\numrev-1, \paperindex} + 1) - \gainfunction(\numbids_{\numrev-1, \paperindex})) + \hyperparam \gainfunctionrev^{\similarity}(\similarity_{\numrev, \paperindex}) \ \forall \ \paperindex \in \paperset.\]
Consequently, for the class of gain and bidding functions given in the claim, an equivalent form of the general \super algorithm that calls Algorithm~\ref{alg:subprocedure} to determine the ordering of papers to show any reviewer $\revindex \in \reviewerset$ instead solves the problem in~\eqref{eq:super_efficient} 
using weights 
\begin{equation}
\sortvar_{\revindex, \paperindex} = \bidfunction^{\similarity}(\similarity_{\revindex, \paperindex})(\gainfunction(\numbids_{\revindex-1, \paperindex} +\proxy_{\revindex, \paperindex}+ 1) - \gainfunction(\numbids_{\revindex-1, \paperindex} +\proxy_{\revindex, \paperindex})) + \hyperparam \gainfunctionrev^{\similarity}(\similarity_{\revindex, \paperindex}) \ \forall \ \paperindex \in \paperset.
\label{eq:super_efficient_weights}
\end{equation}
The optimal solution to a problem of the form in~\eqref{eq:super_efficient} is simply to present the papers in decreasing order of their corresponding values of $\alpha_{\revindex, \paperindex}$ since the function $\bidfunction^{\ordering}$ is non-increasing.
The sorting procedure requires a time complexity of just $\mathcal{O}(\numpapers\log(\numpapers))$. 
Since Algorithm~\ref{alg:subprocedure_efficient} solves the problem in~\eqref{eq:super_efficient} using the weights in~\eqref{eq:super_efficient_weights}, we conclude that the per-reviewer time complexity of \super for the given class of gain and bidding functions and given the evaluations of the heuristic is $\mathcal{O}(\numpapers\log(\numpapers))$.
\end{proof}

\subsection{\super Optimality for Linear Paper-Side Gain}
\label{sec:proof_linear}
In this section, we show that \super with any heuristic is optimal when the paper-side gain function is linear. This property of the algorithm follows rather directly from the local optimality result in Theorem~\ref{prop:local} since for this type of paper-side gain function, the global optimization problem is decoupled between each reviewer. 
\begin{restatable}{proposition}{proplinear}\label{prop:linear}
\super, with any heuristic, is optimal when the paper-side gain function is linear. 
\end{restatable}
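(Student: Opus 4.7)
The plan is to exploit linearity of $\gainfunction$ to decouple the global objective across reviewers, and then invoke the one-step optimization that \super solves to conclude that maximizing each reviewer's individual contribution is equivalent to the global optimum.

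First I would write $\gainfunction(x) = c x$ for some $c \geq 0$ (non-decreasing by assumption) and expand
\[
\Exs[\gain] \;=\; c \sum_{\paperindex \in \paperset} \Exs\Big[\sum_{\revindex \in \reviewerset} \randombid_{\revindex,\paperindex}\Big] + \hyperparam \sum_{\revindex \in \reviewerset} \sum_{\paperindex \in \paperset} \gainfunctionrev(\ordering_\revindex(\paperindex),\similarity_{\revindex,\paperindex}).
\]
Since $\Exs[\randombid_{\revindex,\paperindex}] = \bidfunction(\ordering_\revindex(\paperindex),\similarity_{\revindex,\paperindex})$ depends only on $\ordering_\revindex$, swapping the order of summation gives
\[
\Exs[\gain] \;=\; \sum_{\revindex \in \reviewerset} \underbrace{\Big( c \sum_{\paperindex} \bidfunction(\ordering_\revindex(\paperindex),\similarity_{\revindex,\paperindex}) + \hyperparam \sum_{\paperindex} \gainfunctionrev(\ordering_\revindex(\paperindex),\similarity_{\revindex,\paperindex}) \Big)}_{=: \Phi_\revindex(\ordering_\revindex)}.
\]
Thus the global expected gain is a sum of per-reviewer functionals $\Phi_\revindex$ each depending only on its own ordering, so the global optimum is attained by choosing $\ordering_\revindex \in \argmax \Phi_\revindex$ independently for every reviewer $\revindex$, regardless of the history or the other reviewers.

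Next I would show that \super, for any heuristic vector $\heuristicf_\revindex$, picks an ordering that is a maximizer of $\Phi_\revindex$. Looking at the subproblem~\eqref{eq:subproblem_genreviewer} that \super solves upon the arrival of reviewer $\revindex$, linearity of $\gainfunction$ collapses the expectation:
\[
\Exs[\gainfunction(\numbids_{\revindex-1,\paperindex} + \proxy_{\revindex,\paperindex} + \randombid_{\revindex,\paperindex})] \;=\; c(\numbids_{\revindex-1,\paperindex} + \proxy_{\revindex,\paperindex}) + c\, \bidfunction(\ordering_\revindex(\paperindex),\similarity_{\revindex,\paperindex}).
\]
The terms $c(\numbids_{\revindex-1,\paperindex} + \proxy_{\revindex,\paperindex})$ are constants with respect to the optimization variable $\ordering_\revindex$, so the objective \super maximizes coincides with $\Phi_\revindex(\ordering_\revindex)$ up to an additive constant. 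In particular the choice of heuristic $\heuristicf_\revindex$ shifts the objective by a constant and therefore cannot alter the argmax. Combined with the first step, this shows that \super (with any heuristic) independently attains $\argmax \Phi_\revindex$ for every reviewer, hence achieves the global optimum.

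There is essentially no obstacle here beyond bookkeeping: the key content is the linearity-induced decoupling, which is why the heuristic becomes irrelevant. Note also that this argument subsumes and is consistent with Corollary~\ref{cor:local}, which already handles the zero-heuristic case; the added value here is that for linear $\gainfunction$ the heuristic's additive role in the \super objective is innocuous.
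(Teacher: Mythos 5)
Your proposal is correct and follows essentially the same route as the paper's proof: linearity decouples the expected gain into per-reviewer objectives, and the per-reviewer problem \super solves in~\eqref{eq:subproblem_genreviewer} differs from the decoupled objective only by additive constants involving $\numbids_{\revindex-1,\paperindex}+\proxy_{\revindex,\paperindex}$, so the heuristic cannot affect the argmax. Your version is marginally more careful in carrying an arbitrary slope $c$ through both sides, whereas the paper implicitly takes slope one, but the argument is the same.
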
 
\begin{proof}[Proof of Proposition~\ref{prop:linear}]
The optimization objective over the set of reviewers is defined as 
\begin{equation*}
\max_{\ordering_{1},\dots,\ordering_{\numrev} \in \symgroup_\numpapers} \quad \sum_{\paperindex\in \paperset} \mathbb{E}[\gainfunction(\numbids_{\paperindex})]+\hyperparam\sum_{\revindex \in \reviewerset}\sum_{\paperindex\in \paperset}\mathbb{E}[\gainfunctionrev(\ordering_{\revindex}(\paperindex), \similarity_{\revindex,\paperindex})],
\end{equation*}
where the expectation is taken over the randomness in the bids made by the reviewers. Under a linear paper-side gain function, the problem is equivalently formulated as  
\begin{equation*}
\max_{\ordering_{1},\dots,\ordering_{\numrev}\in \symgroup_\numpapers} \quad \sum_{\paperindex\in \paperset} \mathbb{E}\Big[\sum_{\revindex\in \reviewerset}\randombid_{\revindex, \paperindex}\Big]+\hyperparam\sum_{\revindex \in \reviewerset}\sum_{\paperindex\in \paperset}\gainfunctionrev(\ordering_{\revindex}(\paperindex), \similarity_{\revindex,\paperindex}),
\end{equation*}
where $\randombid_{\revindex, \paperindex}$ denotes the random bid of reviewer $\revindex$ on paper $\paperindex$ and the expectation on the reviewer-side gain went away since it is deterministic given a paper ordering for any reviewer. Using the structure of the bidding model, we can simplify the expectation over the paper-side gain to obtain the objective function
\begin{equation*}
\max_{\ordering_{1},\dots,\ordering_{\numrev}\in \symgroup_\numpapers} \quad \sum_{\revindex\in \reviewerset}\sum_{\paperindex\in \paperset}  \bidfunction(\ordering_{\revindex}(\paperindex), \similarity_{\revindex, \paperindex})+\hyperparam\sum_{\revindex \in \reviewerset}\sum_{\paperindex\in \paperset}\gainfunctionrev(\ordering_{\revindex}(\paperindex), \similarity_{\revindex,\paperindex}).
\end{equation*}
The paper-side and reviewer-side gains are now decoupled between the ordering presented to each reviewer. Consequently, the optimal paper-ordering to present to each reviewer $\revindex \in \reviewerset$ is given by the solution to the optimization problem 
\begin{equation}
\max_{\ordering_{\revindex}\in \symgroup_\numpapers.} \quad \sum_{\paperindex\in \paperset} \bidfunction(\ordering_{\revindex}(\paperindex), \similarity_{\revindex, \paperindex})+\hyperparam\sum_{\paperindex\in \paperset}\gainfunctionrev(\ordering_{\revindex}(\paperindex), \similarity_{\revindex,\paperindex}).
\label{eq:linear_sep}
\end{equation}

The \super algorithm solves the following problem to determine the ordering of papers to present each reviewer $\revindex \in \reviewerset$:
\begin{equation*}
\max_{\ordering_{\revindex}\in \symgroup_\numpapers} \quad \sum_{\paperindex\in \paperset} \bidfunction(\ordering_{\revindex}(\paperindex), \similarity_{\revindex, \paperindex})(\gainfunction(\numbids_{\revindex-1, \paperindex} + \proxy_{\revindex, \paperindex} + 1) - \gainfunction(\numbids_{\revindex-1, \paperindex}+ \proxy_{\revindex, \paperindex})) + \hyperparam \sum_{\paperindex \in \paperset}\gainfunctionrev(\ordering_{\revindex}(\paperindex), \similarity_{\revindex, \paperindex}).
\end{equation*}
Under a linear paper-side gain, the optimization problem the \super algorithm solves for each reviewer simplifies to the problem
\begin{equation*}
\max_{\ordering_{\revindex}\in \symgroup_\numpapers} \quad \sum_{\paperindex\in \paperset}\bidfunction(\ordering_{\revindex}(\paperindex), \similarity_{\revindex, \paperindex}) + \hyperparam \sum_{\paperindex \in \paperset}\gainfunctionrev(\ordering_{\revindex}(\paperindex), \similarity_{\revindex, \paperindex}) 
\end{equation*}
since the number of bids and the heuristic cancels.
We showed in the proof of Proposition~\ref{prop:local} that the \super algorithm solves this problem efficiently, and exactly. Since the problem is equivalent to that in~\eqref{eq:linear_sep} which gives the optimal solution for each reviewer, the \super algorithm is optimal with a linear paper-side gain function. 
\end{proof}

\end{appendices}
\end{document}